\newtheorem*{rep@theorem}{\rep@title}
\newcommand{\newreptheorem}[2]{%
\newenvironment{rep#1}[1]{%
 \def\rep@title{#2 \ref{##1}}%
 \begin{rep@theorem}}%
 {\end{rep@theorem}}}
\theoremstyle{plain}
\newtheorem{theorem}{Theorem}
\newtheorem{lemma}[theorem]{Lemma}
\newtheorem{proposition}[theorem]{Proposition}
\newtheorem{claim}[theorem]{Claim}
\theoremstyle{definition}
\newtheorem{definition}{Definition}
\newtheorem{defn}[definition]{Definition}
\numberwithin{theorem}{section}
\numberwithin{definition}{section}
\newcommand{\nc}{\newcommand}
\nc{\DMO}{\DeclareMathOperator}
\DeclareMathOperator*{\argmax}{arg\,max}
\DMO{\prox}{prox}
\DMO{\Span}{span}
\DMO{\UCB}{UCB}
\DMO{\LCB}{LCB}
\nc{\til}{\widetilde}
\nc{\td}{\tilde}
\nc{\wh}{\widehat}
\nc{\todo}[1]{\ifnum\Comments=1 {\color{red}  [TODO: #1]}\fi}
\nc{\old}[1]{\ifnum\Comments=1 {\color{brown}  [OLD: #1]}\fi}
\nc{\noah}[1]{\ifnum\Comments=1 {\color{purple} [ng: #1]}\fi}
\nc{\MH}{\mathcal{H}}
\nc{\fools}[3]{\MF_{#3}({#1}, {#2})}
\nc{\fool}[2]{\MF({#1},{#2})}
\nc{\foolsimple}[1]{\MF^{\rm simp}({#1})}
\nc{\foolingset}{\MF}
\nc{\clip}[2]{{\rm clip}\left[ \left. {#1} \right| {#2} \right]}
\nc{\imax}{\omega}
\DMO{\conv}{conv}
\nc{\st}{\star}
\nc{\lng}{\langle}
\nc{\rng}{\rangle}
\DMO{\OOPT}{opt}
\nc{\dopt}[2]{\ell_{\OOPT}({#1},{#2})}
\nc{\grad}{\nabla}
\nc{\MG}{\mathcal{G}}
\nc{\MP}{\mathcal{P}}
\nc{\PP}{\mathbb{P}}
\nc{\TT}{\mathbb{T}}
\nc{\TTmax}{\TT_{\max}}
\DMO{\Reg}{Reg}
\DMO{\Ham}{Ham}
\DMO{\Gap}{Gap}
\DMO{\GD}{GD}
\DMO{\GDA}{GDA}
\DMO{\EG}{EG}
\DMO{\OGDA}{OGDA}
\nc{\CR}{\mathscr{R}}
\nc{\CC}{\mathscr{C}}
\DMO{\Unif}{Unif}
\DMO{\Tr}{Tr}
\nc{\ul}{\underline}
\nc{\ol}{\overline}
\nc{\Qu}{\ul{Q}}
\nc{\Qo}{\ol{Q}}
\nc{\qu}{\ul{q}}
\nc{\qo}{\ol{q}}
\nc{\Ro}{\ol{R}}
\nc{\Vu}{\ul{V}}
\nc{\Vo}{\ol{V}}
\nc{\RanQ}{\Delta Q}
\nc{\RanV}{\Delta V}
\nc{\clipQ}{\Delta \breve{Q}}
\nc{\frzQ}{\Delta \mathring{Q}}
\nc{\tfrzQ}{\widetilde{\Delta} \mathring{Q}}
\nc{\clipV}{\Delta \breve{V}}
\nc{\clipdelta}{\breve{\delta}}
\nc{\cliptheta}{\breve{\theta}}
\nc{\delmin}{\Delta_{{\rm min}}}
\nc{\tildm}{\widetilde{\Delta}_{{\rm min}}}
\nc{\delmins}[1]{\Delta_{{\rm min},{#1}}}
\nc{\gapfinal}[1]{\max \left\{ \frac{\frzQ_{{#1}}^{k^\st}(x,a)}{2H},\frac{\delmin}{4H^2} \right\}}
\nc{\tgapfinal}[1]{\max \left\{ \frac{\tfrzQ_{{#1}}^{k^\st}(x,a)}{2H},\frac{\tildm}{4H^2} \right\}}
\nc{\gapfinals}[2]{\max \left\{ \frac{\frzQ_{{#1}}^{#2}(x,a)}{2H}, \frac{\delmin}{4H^2} \right\}}
\nc{\Regret}[1]{{\rm Regret}_{#1}}
\nc{\FF}{{\rm F}}
\nc{\TTT}{{\rm T}}
\nc{\algnst}[1]{\begin{align*}#1\end{align*}}
\nc{\algn}[1]{\begin{align}#1\end{align}}
\nc{\matx}[1]{\left(\begin{matrix}#1\end{matrix}\right)}
\nc{\nuu}{\nu}
\nc{\bv}{\mathbf{v}}
\nc{\bone}{\mathbf{1}}
\nc{\bX}{\mathbf{X}}
\nc{\bY}{\mathbf{Y}}
\nc{\bG}{\mathbf{G}}
\nc{\bz}{\mathbf{z}}
\nc{\bw}{\mathbf{w}}
\nc{\bB}{\mathbf{B}}
\nc{\bA}{\mathbf{A}}
\nc{\bJ}{\mathbf{J}}
\nc{\bK}{\mathbf{K}}
\nc{\bb}{\mathbf{b}}
\nc{\ba}{\mathbf{a}}
\nc{\bc}{\mathbf{c}}
\nc{\bC}{\mathbf{C}}
\nc{\BR}{\mathbb R}
\nc{\BA}{\mathbb{A}}
\nc{\BP}{\mathbb{P}}
\nc{\BC}{\mathbb C}
\nc{\bx}{\mathbf{x}}
\nc{\bS}{\mathbf{S}}
\nc{\bM}{\mathbf{M}}
\nc{\bR}{\mathbf{R}}
\nc{\bN}{\mathbf{N}}
\nc{\by}{\mathbf{y}}
\nc{\sy}{y}
\nc{\sx}{x}
\nc{\MO}{\mathcal O}
\nc{\MU}{\mathcal{U}}
\nc{\ME}{\mathcal{E}}
\nc{\MN}{\mathcal{N}}
\nc{\MK}{\mathcal{K}}
\nc{\MM}{\mathcal{M}}
\nc{\MS}{\mathcal{S}}
\nc{\MT}{\mathcal{T}}
\nc{\BF}{\mathbb F}
\nc{\BQ}{\mathbb Q}
\nc{\MX}{\mathcal{X}}
\nc{\MA}{\mathcal{A}}
\nc{\MD}{\mathcal{D}}
\nc{\MB}{\mathcal{B}}
\nc{\MZ}{\mathcal{Z}}
\nc{\MJ}{\mathcal{J}}
\nc{\MW}{\mathcal{W}}
\nc{\MR}{\mathcal{R}}
\nc{\MY}{\mathcal{Y}}
\nc{\BZ}{\mathbb Z}
\nc{\BN}{\mathbb N}
\nc{\ep}{\epsilon}
\nc{\gapfn}[1]{\varepsilon_{#1}}
\nc{\ggapfn}[2]{\varphi_{#1}({#2})}
\nc{\epsahk}{\gapfn{0}}
\nc{\BH}{\mathbb H}
\nc{\BG}{\mathbb{G}}
\nc{\D}{\Delta}
\nc{\MF}{\mathcal{F}}
\nc{\One}{\mathbbm{1}}
\nc{\bOne}{\mathbf{1}}
\nc{\Aopt}{\mathcal{A}^{\rm opt}}
\nc{\Amul}{\mathcal{A}^{\rm mul}}
\nc{\MEwc}{\ME^{\rm wc}}
\nc{\MEpred}{\ME^{\rm pred}}
\nc{\SP}{\mathsf P}
\nc{\SQ}{\mathsf Q}
\nc{\DO}{\accentset{\circ}{\D}}
\nc{\mf}{\mathfrak}
\nc{\mfp}{\mathfrak{p}}
\nc{\mfq}{\mf{q}}
\nc{\Sp}{\mbox{Spec}}
\nc{\Spm}{\mbox{Specm}}
\nc{\hookuparrow}{\mathrel{\rotatebox[origin=c]{90}{$\hookrightarrow$}}}
\nc{\hookdownarrow}{\mathrel{\rotatebox[origin=c]{-90}{$\hookrightarrow$}}}
\nc{\hra}{\hookrightarrow}
\nc{\tra}{\twoheadrightarrow}
\nc{\sgn}{{\rm sgn}}
\nc{\aut}{{\rm Aut}}
\nc{\Hom}{{\rm Hom}}
\nc{\img}{{\rm Im}}
\DMO{\id}{Id}
\DMO{\supp}{supp}
\DMO{\KL}{KL}
\DMO{\BSS}{BSS}
\DMO{\BES}{BES}
\DMO{\BGS}{BGS}
\DMO{\poly}{poly}
\nc{\indep}{\perp}
\DMO{\sink}{sink}
\nc{\fp}[1]{\MP_1({#1})}
\nc{\algname}{\texttt{QLearningPreds}\xspace}
\nc{\banditalg}{\texttt{BanditPreds}\xspace}
\nc{\polcomplex}{\texttt{PolComplex}\xspace}
\nc{\polsimple}{\texttt{PolicySelection}\xspace}
\nc{\deltaconst}{\texttt{DeltaConst}\xspace}
\nc{\deltaincr}{\texttt{DeltaIncr}\xspace}
\nc{\MWsig}{\MW^\sigma}
\nc{\MWtau}{\MW^\tau}
\nc{\MWrho}{\MW^\rho}
\DMO{\PR}{Pr}
\renewcommand{\Pr}{\PR}
\nc{\E}{\mathbb{E}}
\nc{\ra}{\rightarrow}
\title{Can Q-Learning be Improved with Advice?}
\author{Noah Golowich\thanks{MIT CSAIL. Email:  \texttt{nzg@mit.edu}. Supported by a Fannie \& John Hertz Foundation Fellowship and an NSF Graduate Fellowship.} \qquad \qquad  Ankur Moitra\thanks{MIT Math, SDSC and CSAIL. Email: \texttt{moitra@mit.edu}. This work was
supported in part by a Microsoft Trustworthy AI Grant, NSF CAREER Award CCF-1453261, NSF Large CCF1565235, a David and Lucile Packard Fellowship and an ONR Young Investigator
Award.}}
\date{October 25, 2021}
\begin{document}
\maketitle

\begin{abstract}
Despite rapid progress in theoretical reinforcement learning (RL) over the last few years, most of the known guarantees are worst-case in nature, failing to take advantage of structure that may be known a priori about a given RL problem at hand. In this paper we address the question of whether worst-case lower bounds for regret in online learning of Markov decision processes (MDPs) can be circumvented when information about the MDP, in the form of predictions about its optimal $Q$-value function, is given to the algorithm. We show that when the predictions about the optimal $Q$-value function satisfy a reasonably weak condition we call \emph{distillation}, then we can improve regret bounds by replacing the set of state-action pairs with the set of state-action pairs on which the predictions are grossly inaccurate. This improvement holds for both uniform regret bounds and gap-based ones. Further, we are able to achieve this property with an algorithm that achieves sublinear regret when given arbitrary predictions (i.e., even those which are not a distillation). Our work extends a recent line of work on \emph{algorithms with predictions}, which has typically focused on simple online problems such as caching and scheduling, to the more complex and general problem of reinforcement learning. 
\end{abstract}

\newpage
\tableofcontents
\newpage
\section{Introduction}
The study of worst-case algorithm design has traditionally been a mainstay of much of computer science, leading to provable and efficient algorithms for various tractable problems. However, many problems encountered in practice are often intractable, in the sense that efficient algorithms for them would violate widely held complexity theoretic hypotheses, or there are strong {unconditional} lower bounds on the amount of data required to achieve desired error bounds. As such, the study of beyond-worst case algorithm design \cite{roughgarden_beyond_2021}, which aims to use additional information about the structure of problem instances to improve algorithms' guarantees, has attracted significant attention in recent years.

An exciting approach to beyond-worst case algorithm design is to assume that the algorithm has access to certain \emph{predictions} regarding the nature of the problem instance at hand. For example, while  millions of samples may be required in order to teach a humanoid robot to walk starting from scratch, physical approximations of the robot's dynamics can be used to furnish an \emph{approximately} optimal policy, viewed as a prediction about the optimal policy. Then, starting from this predicted policy, we can  algorithmically fine-tune it using relatively few samples. This general approach, known as \emph{algorithm design with predictions} (or \emph{advice}), aims to improve an algorithm's guarantees, such as by reducing sample complexity, when it is given access to accurate predictions. It has been studied from a theoretical perspective in several recent works for online problems such as the ski rental problem, scheduling, caching, and many others \cite{mitzenmacher_algorithms_2020}. %
In this paper we address the design of algorithms with predictions for the much broader problem of reinforcement learning, in particular the setting of no-regret online learning in tabular Markov decision processes (MDPs). In turn, MDPs can be used to model learning problems in a plethora of settings including, for instance, personalized medicine, optimal control, and market design \cite{sutton_reinforcement_2018}.

\subsection{Model overview}
We consider the setting of a tabular finite-horizon episodic MDP with a finite state space $\MS$ consisting of $S$ states, a finite action space $\MA$ consisting of $A$ actions, and a horizon of length $H$ \cite{agarwal_reinforcement_2021}. In this setting, the rewards and transitions of the MDP are unknown, but the learning algorithm has the ability to simulate trajectories in the MDP corresponding to policies of its choice. In total the learner simulates $K \in \BN$ trajectories, each of which consists of $H$ steps; the total number of samples is then $T := KH$. The learner aims to minimize the regret, namely the difference between the reward it would have received had it always followed the optimal policy and its actual aggregate reward; we refer the reader to Section \ref{sec:prelim} for additional preliminaries. In Definition \ref{def:predicted} below, we introduce the setting of \emph{RL with $Q$-value predictions}, namely where the algorithm is given access to predictions $\til Q$ of the optimal $Q$-value function $Q_h^\st(x,a)$. Recall that for a state $x \in \MS$, action $a \in \MA$, and step $h \in [H]$, $Q_h^\st(x,a)$ denotes the cumulative expected reward when action $a$ is taken at state $x$ and step $h$, and thereafter the optimal policy is followed.
\begin{defn}[RL with $Q$-value predictions]
  \label{def:predicted}
  We assume the learning algorithm is given, at the onset of its interaction with the MDP, access to a
  collection of \emph{predictions} $\til Q = (\til Q_1, \ldots, \til Q_H)$, where each $\til Q_h : \MS \times \MA \ra \BR$ represents a prediction for the optimal $Q$-value function $Q_h^\st$. %
\end{defn}

For many important applications of reinforcement learning (RL), predictions $\til Q$ as in Definition \ref{def:predicted} may be readily available to a learning algorithm; for instance: %
\begin{itemize}
\item In robotics, powerful physics-based simulation engines allow easy collection of large amounts of data from simulated environments, but the learned policies from these simulated environments often do not transfer directly to real-world environments due to factors such as measurement error and mis-specification of the simulation parameters. To bridge this gap, information gleaned from learning in the simulated environment can be used as a prediction to be fine-tuned when interacting with the real-world environment. This general approach (sometimes called \emph{sim-to-real}) has attracted much attention recently, such as in  \cite{rusu_sim--real_2018,chebotar_closing_2019}. %
\item In applications of RL to healthcare, $Q$-learning based methods (such as deep $Q$-learning) are commonly employed \cite{yu_reinforcement_2020}. It is therefore reasonable to expect that predictions $\til Q$ for a given task can be computed based on data collected for similar tasks, such as from a previous iteration of a clinical trial to treat a particular disease. Such ideas were used in \cite{liao_personalized_2020}, where various parameters to an algorithm in a mobile health trial, \texttt{HeartSteps V2}, were set based on data collected in an earlier iteration, \texttt{HeartSteps V1}.\footnote{The algorithm used was an ad hoc approach tuned to the particular task, rather than $Q$-learning.}
\item More broadly, our framework provides an approach for the related problems of \emph{multi-task learning} and \emph{transfer learning} in RL \cite{taylor_transfer_2009,zhu_transfer_2021}. These problems consider an RL agent which wishes to perform well on multiple related tasks (e.g., a robot moving in an environment with gradually changing obstacles) throughout its lifetime. An estimate of $Q^\st$ for earlier tasks (e.g., as obtained by $Q$-learning) may be used as the input predictions $\til Q$ for later tasks. As discussed in Section \ref{sec:related-work}, there has been a large body of empirical work devoted to improving transfer learning in RL. Some of this work considers the reuse of certain representations of an MDP such as the $Q$-value function; our results can thus be interpreted as a theoretical justification for such techniques. %
\end{itemize}

In this paper we address the following question(s): \emph{Is it possible to leverage prior knowledge, in the form of access to predictions $\til Q$ as in Definition \ref{def:predicted}, to show a regret bound that beats the worst-case? Moreover, can we achieve such a result with an algorithm that still obtains sublinear regret when the predictions $\til Q$ are arbitrary?}

\subsection{Overview of results}
We begin by reviewing known results regarding worst-case regret in online learning for tabular MDPs. In this paper we prove both uniform regret bounds, which depend only on the parameters $S,A,H$ of the MDP as well as the number of samples $T$, as well as instance-dependent gap-based bounds, which we proceed to explain. 
The \emph{gap} for action $a$ at state $x$ and step $h$ is defined to be $\Delta_h(x,a) = V_h^\st(x) - Q_h^\st(x,a)$, where $V_h^\st$ and $Q_h^\st$ are the optimal value function and $Q$-value function, respectively.\footnote{$Q_h^\st$ was defined previously, and $V_h^\st(x) = \max_{a \in \MA} \{ Q_h^\st(x,a)\}$; see Section \ref{sec:prelim} for further details.} $\Delta_h(x,a)$ denotes the marginal loss incurred, relative to the optimal policy, when action $a$ is taken at state $x$ and step $h$. In this section (including in Theorems \ref{thm:deltaincr-informal} and \ref{thm:robust-informal}) we make the simplifying assumption that for each $(x,h)$ there is a unique optimal action $a$ (this assumption is relaxed in the full statements of our results in Section \ref{sec:results}). 
In this case \cite{xu_fine-grained_2021} exhibit an algorithm, \texttt{AMB}, which satisfies the following regret guarantee:
\begin{align}
\Regret{T} \leq & \widetilde O \left(  \min \left\{ \sqrt{H^5 SAT} ,\  \sum_{\substack{(x,a,h) \in \MS \times \MA \times [H]:\\ \Delta_h(x,a) > 0}} \frac{H^5}{\Delta_h(x,a)}\right\} \right)\label{eq:amb-regret-informal},
\end{align}

The regret bound (\ref{eq:amb-regret-informal}) is optimal up to lower-order terms in the following sense: the uniform regret bound of $\widetilde O(\sqrt{H^5 SAT})$ matches the minimax regret in tabular MDPs, $\widetilde O(\sqrt{H^2 SAT})$ \cite{jin_is_2018,zhang_almost_2020}, up to a factor of $\sqrt{H^3}$.\footnote{In this paper we generally disregard factors polynomial in $H$ and $\log(SAT)$, and do not attempt to optimize the dependence of our own bounds on $H$ and $\log(SAT)$.} Moreover, \cite[Proposition 2.2]{simchowitz_non-asymptotic_2019} shows that a term of the form $\sum_{\substack{(x,a,h) : \MS \times \MA \times [H] :\\ \Delta_h(x,a) > 0}} \frac{\log K}{\Delta_h(x,a)}$ must appear in the gap-based regret bound in (\ref{eq:amb-regret-informal}).

Our first main result addresses the following question: \emph{Suppose the learning algorithm has access to predictions $\til Q$ which are accurate on an unknown set of many state-action pairs. Then can we improve upon the worst-case regret bound (\ref{eq:amb-regret-informal}), as if we had fewer state-action pairs to begin with?}
We show an affirmative answer to this question, replacing the set of all state-action pairs with the set of  those for which $\til Q$ is inaccurate.  
In order for our improved bounds to ``kick in'', though, it is necessary that the predictions $\til Q$ satisfy an additional property, which we formalize as being a \emph{(approximate) distillation} of $Q^\st$ (Definition \ref{def:distillation}). We focus on the case of exact distillation here, which corresponds to $\ep = 0$ in Definition \ref{def:distillation}: we say that $\til Q$ is a distillation of $Q^\st$ if for each state $x \in \MS$ and step $h \in [H]$, letting $\pi_h^\st(x)$ denote the optimal action at $(x,h)$, it holds that $\til Q_h(x,\pi_h^\st(x)) \geq Q_h^\st(x,\pi_h^\st(x))$. 
Intuitively, $\til Q$ ``picks out'' the good action $\pi_h^\st(x)$ at $(x,h)$.\footnote{We give an example in Section \ref{sec:pred-properties} showing that this assumption is necessary to beat minimax lower bounds.} Of course, there could be additional actions $a'$ for which $\til Q_h(x,a')$ is equally large, but for which $a'$ is very much suboptimal at $(x,h)$.
Theorem \ref{thm:deltaincr-informal} shows that when the predictions are a distillation of $Q^\st$, the regret bound (\ref{eq:amb-regret-informal}) can be improved by replacing the set $\MS \times \MA \times [H]$ with a much smaller set consisting of tuples $(x,a,h)$ for which $\til Q_h(x,a)$ is grossly inaccurate:
\begin{theorem}[Simplified/informal version of Theorem \ref{thm:main-deltaincr}, item \ref{it:distillation-deltaincr}]
  \label{thm:deltaincr-informal}
  Suppose that $\til Q$ is guaranteed to be a distillation of $Q^\st$. Then there is an algorithm (\algname, Algorithm \ref{alg:main}\footnote{The parameter $\lambda$ is set to $0$ for the purposes of Theorem \ref{thm:deltaincr-informal}.}) which achieves regret
  \begin{align}
\widetilde O \left( \min \left\{ \sqrt{H^5 T \cdot |\foolingset|}, \sum_{(x,a,h) \in \foolingset} \frac{H^4}{\Delta_h(x,a)} \right\} \right)\label{eq:deltaincr-informal-regret},
  \end{align}
  where
  \begin{align}
    \foolingset := \left\{ (x,a,h) \in \MS \times \MA \times [H] : \til Q_h(x,a) > V_h^\st(x) \mbox{ or } \left( a \neq \pi_h^\st(x) \mbox{ and } \til Q_h(x,a) \geq V_h^\st(x)\right) \right\}.\label{eq:foolsimple}
  \end{align}
\end{theorem}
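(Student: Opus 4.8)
The plan is to adapt the analysis of an optimistic $Q$-learning algorithm (in the style of \texttt{AMB} from \cite{xu_fine-grained_2021}) so that the predictions $\til Q$ are used to \emph{initialize} or \emph{clip} the optimistic upper-confidence estimates $\Qo_h(x,a)$ at each state-action pair. The key structural observation is that if $(x,a,h) \notin \foolingset$, then one of two things happens: either $\til Q_h(x,a) < V_h^\st(x)$, in which case the suboptimal action $a$ has $\til Q_h(x,a)$ strictly below the value of the optimal action, so we can safely use $\til Q_h(x,a)$ as an upper bound on $Q_h^\st(x,a)$ and never need to explore $a$; or $a = \pi_h^\st(x)$ is itself optimal and, because $\til Q$ is a distillation, $\til Q_h(x,\pi_h^\st(x)) \geq Q_h^\st(x,\pi_h^\st(x))$, so again $\til Q$ furnishes a valid optimistic value for the good action ``for free.'' Thus, intuitively, the only pairs on which the algorithm must pay the usual price of learning — either the $\sqrt{\cdot}$ exploration cost or the $1/\Delta_h(x,a)$ gap cost — are exactly those in $\foolingset$. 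First I would set up the algorithm so that it maintains $\Qo_h(x,a)$ as the minimum of the standard Hoeffding/Bernstein-type optimistic estimate and $\til Q_h(x,a)$ (this is where $\lambda = 0$ enters, ensuring we fully trust the prediction as an upper bound where it is valid), and argue by backward induction on $h$ that $\Qo_h(x,a) \geq Q_h^\st(x,a)$ remains a valid optimistic bound throughout — this uses distillation precisely to handle the case where clipping by $\til Q_h$ is active at an optimal action.

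The second step is the regret decomposition. As in the standard optimistic $Q$-learning analysis, I would write the per-episode regret as a sum over steps $h$ of $(\Vo_h - V_h^{\pi^k})(x_h^k)$, telescope using the Bellman-type recursion for $\Qo$, and collect the resulting error terms: a bonus term, a martingale term handled by Azuma–Hoeffding, and a ``clipping'' term. The crucial point is that whenever the algorithm, at step $h$ in episode $k$, plays action $a_h^k$ at state $x_h^k$ and this pair lies \emph{outside} $\foolingset$, the contribution to regret is controlled directly by $\til Q_h$ being a valid upper bound that ``telescopes away'' — so the surviving contributions come only from visits to pairs in $\foolingset$. Restricting the learning-rate / visit-count bookkeeping to $\foolingset$ then yields $\sqrt{H^5 T \cdot |\foolingset|}$ for the uniform bound. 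For the gap-based bound, I would invoke the clipping lemma (analogous to \cite[Proposition 2.2]{simchowitz_non-asymptotic_2019} and the surplus-clipping machinery in \cite{xu_fine-grained_2021}): each term $(\Vo_h - V_h^\st)(x_h^k)$ is either clipped to $0$ or lower bounded by a multiple of $\min_{a} \Delta_h(x_h^k, a)$ over suboptimal $a$, and then a potential/pigeonhole argument over the number of visits converts the accumulated surplus into $\sum_{(x,a,h) \in \foolingset} H^4 / \Delta_h(x,a)$; the key is again that pairs outside $\foolingset$ contribute zero surplus because $\til Q$ caps their estimates at or below $V_h^\st(x)$.

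The main obstacle I expect is handling the interaction between the prediction-based clipping and the optimism invariant when the optimal action is \emph{not} unique, or more precisely, ensuring that the clipping by $\til Q_h(x,a)$ for $a \neq \pi_h^\st(x)$ with $\til Q_h(x,a) < V_h^\st(x)$ does not accidentally prevent the algorithm from ever identifying the optimal action (since we never explore $a$, we must be sure we didn't need to). The resolution is that such pairs are precisely excluded from $\foolingset$ only when $\til Q_h(x,a) < V_h^\st(x)$ \emph{strictly below the optimal value}, so the optimal action at $(x,h)$ — which by distillation has $\Qo$ estimate at least $Q_h^\st(x,\pi_h^\st(x)) = V_h^\st(x)$ — always ``wins'' the argmax against such a clipped suboptimal action; the boundary case $\til Q_h(x,a) = V_h^\st(x)$ for suboptimal $a$ is exactly why the second clause of \eqref{eq:foolsimple} puts $(x,a,h)$ \emph{into} $\foolingset$, so it is accounted for. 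A secondary technical point is that the algorithm does not know $V_h^\st$, $\pi_h^\st$, or $\foolingset$, so the argument must show that the \emph{learned} quantities $\Vo_h$ converge from above in such a way that the same bookkeeping goes through with $\foolingset$ appearing only in the final bound, never in the algorithm's description — this is standard but needs care in the induction. Finally, I would take the minimum of the two resulting bounds by running the analysis in both regimes, exactly as in \eqref{eq:amb-regret-informal}.
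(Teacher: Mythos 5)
Your proposal is essentially correct for this informal statement, but it takes a genuinely different and much lighter route than the paper. You propose a single-phase greedy algorithm whose optimistic estimate is capped by the prediction, $\Qo_h^k(x,a) = \min\{\mathrm{UCB}_h^k(x,a), \til Q_h(x,a)\}$, with $\pi_h^k(x) = \argmax_a \Qo_h^k(x,a)$. The engine of your argument is sound: under distillation, $\Qo_h^k(x,\pi_h^\st(x)) \geq V_h^\st(x)$ survives the cap, so $\Vo_h^k(x) \geq V_h^\st(x)$; consequently any suboptimal $a$ with $\til Q_h(x,a) < V_h^\st(x)$ can never win the argmax, and at an optimal action outside $\foolingset$ the cap forces $\Qo_h^k(x,a) = Q_h^\st(x,a)$ exactly, so its surplus telescopes away with zero bonus paid. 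All surviving bonus and gap terms are thus charged to visited pairs in $\foolingset$, and the standard pigeonhole/clipping machinery gives the two bounds. One sentence of yours is wrong as written and should be repaired: for suboptimal $a \notin \foolingset$ you cannot "use $\til Q_h(x,a)$ as an upper bound on $Q_h^\st(x,a)$" --- the cap may push $\Qo_h^k(x,a)$ strictly below $Q_h^\st(x,a)$, so the optimism invariant genuinely fails pointwise in $a$; the induction must be carried at the level of $\Vo_h^k \geq V_h^\st$ only, which is all the telescoping needs. The paper does not argue this way: it proves the informal theorem as a corollary of the full Theorem \ref{thm:main-deltaincr}, whose algorithm keeps separate unclipped confidence bounds $\Qo_h^k,\Qu_h^k$, refines the predictions via $\Ro_h^k,\til Q_h^k$, uses action elimination, and interposes a state-specific exploration phase; the localization to the fooling set is done by Claim \ref{clm:aopt-fe} rather than by capping. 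What the paper's heavier machinery buys is exactly what your algorithm gives up: with $\lambda>0$ it retains sublinear regret for arbitrary $\til Q$, whereas your capped estimate suffers linear regret whenever $\til Q_h(x,\pi_h^\st(x)) \ll Q_h^\st(x,\pi_h^\st(x))$, since the cap permanently destroys optimism at the good action. For the present statement, which fixes $\lambda=0$ and assumes exact distillation and unique optimal actions, your simpler argument suffices.
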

The regret bound (\ref{eq:deltaincr-informal-regret}) of Theorem \ref{thm:deltaincr-informal} depends on the set $\foolingset$ of action-state pairs $(x,a,h)$ for which $\til Q_h(x,a)$ %
is larger than the optimal value at $(x,h)$, namely $V_h^\st(x) = \max_{a \in \MA} Q_h^\st(x,a)$.
Recall that since the distillation property requires that $\til Q_h(x,\pi_h^\st(x)) \geq Q_h^\st(x,\pi_h^\st(x))$ for all $(x,h)$, if $\til Q_h(x,\pi_h^\st(x)) \neq Q_h^\st(x,\pi_h^\st(x))$, then $(x,\pi_h^\st(x),h) \in \MF$. %
In the full version of Theorem \ref{thm:main-deltaincr} we relax the condition of exact equality by allowing for an approximate version of distillation (Definition \ref{def:distillation}) and an approximate version of the set $\foolingset$ which set call the \emph{fooling set}  (Definition \ref{def:fooling}). %

To complement Theorem \ref{thm:deltaincr-informal}, it is desirable to have a single algorithm which obtains nontrivial regret bounds (i.e., sublinear regret) for arbitrary predictions $\til Q$, i.e., even those which are not an (approximate) distillation of $Q^\st$, which \emph{also} obtains improved regret bounds (such as (\ref{eq:deltaincr-informal-regret})) when $\til Q$ {is} an approximate distillation. The former guarantee is often known as \emph{robustness} in the literature on algorithms with predictions \cite{lykouris_competitive_2020,mitzenmacher_algorithms_2020}. Robustness in this context is well-motivated since the predictions are often generated by an ad hoc procedure with few provable guarantees (such as the use of deep RL techniques on a simulated environment to estimate $\til Q$ for use in the real-world environment), making them liable to be grossly inaccurate. Theorem \ref{thm:robust-informal} below gives such a guarantee for the case of uniform regret bounds; Theorems \ref{thm:main-deltaincr} and \ref{thm:main-deltaconst} provide more general robustness bounds that cover the gap-based case as well.
\begin{theorem}[Simplified/informal version of Theorem \ref{thm:main-deltaincr}, uniform version]
  \label{thm:robust-informal}
  There is an algorithm (\algname, Algorithm \ref{alg:main}) which satisfies the following two guarantees, when given as input a parameter $\lambda \in \left( \frac{SAH^4}{T}, 1 \right)$ and predictions $\til Q$:
  \begin{enumerate}
  \item \label{it:worstcase-informal} For an arbitrary choice of $\til Q$, the regret is $\widetilde O \left( \sqrt{\frac{TSAH^{10}}{\lambda}}\right)$.
  \item \label{it:distillation-informal} If the predictions $\til Q$ are a distillation of $Q^\st$, then the regret is
    \begin{align}
\widetilde O \left( \sqrt{\lambda \cdot SATH^{10}} + \sqrt{|\foolingset| \cdot T H^5 }\right)\label{eq:regret-distillation-informal},
    \end{align}
    where $\foolingset$ was defined in (\ref{eq:foolsimple}). %
  \end{enumerate}
\end{theorem}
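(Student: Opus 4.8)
The plan is to realize \algname as optimistic $Q$-learning in the style of \cite{jin_is_2018,xu_fine-grained_2021}, augmented with the predictions through a \emph{clipping} step and with a small \emph{reserve} of exploration whose size is set by $\lambda$. The algorithm maintains the usual upper-confidence estimates $\Qo_h^k(x,a)$ (running update with learning rates $\alpha_t = \frac{H+1}{H+t}$ and Hoeffding bonuses), and from a designated set of $\approx \lambda K$ reserve episodes, on which it runs plain optimistic exploration, it also maintains a \emph{lower}-confidence estimate $\Qu_h^k(x,a) \le Q_h^\st(x,a)$. On the remaining episodes it acts greedily with respect to the clipped estimate
\[
  \breve Q_h^k(x,a) \;:=\; \Qo_h^k(x,a)\ \wedge\ \max\left\{\, \til Q_h(x,a) + \gamma_{n_h^k(x,a)},\ \Qu_h^k(x,a) \,\right\},
\]
where $n_h^k(x,a)$ is the visit count and $\gamma_n$ is a slack sequence that decreases in $n$ and whose overall scale grows with $\lambda$ (for $\lambda = 0$ the slack reduces, morally, to just the statistical bonus, which is the regime of Theorem~\ref{thm:deltaincr-informal}); the backup producing $\Qo_h^k$ uses the clipped $\breve V_{h+1}^k := \max_a \breve Q_{h+1}^k(x,a)$. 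The hypothesis $\lambda > \frac{SAH^4}{T}$ is exactly what is needed for the reserve to contain enough episodes for $\Qu$ to be meaningful.

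The first step is the standard \textbf{good event}: conditioned on the concentration of all empirical Bellman backups (and the Azuma bounds on the martingale terms), $\Qo_h^k(x,a) \ge Q_h^\st(x,a)$ for the unclipped upper estimates and $\Qu_h^k(x,a) \le Q_h^\st(x,a)$, by induction on $h$ as in \cite{jin_is_2018}. The second step is the \textbf{optimism lemma under distillation}: on this event, $\breve V_h^k(x) \ge V_h^\st(x)$ for all $x,h,k$. This is the only place the distillation hypothesis is used: at the optimal action, $\til Q_h(x,\pi_h^\st(x)) + \gamma_n \ge Q_h^\st(x,\pi_h^\st(x)) = V_h^\st(x)$, so the clip never pushes the optimal action's estimate below $V_h^\st(x)$, while $\Qo_h^k(x,\pi_h^\st(x)) \ge Q_h^\st(x,\pi_h^\st(x))$; since the backup for $\Qo_h^k$ uses the inductively optimistic $\breve V_{h+1}^k$, the induction on $h$ goes through.

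For the \textbf{consistency bound} (\ref{eq:regret-distillation-informal}), run the usual decomposition $\Regret{T} \lesssim \sum_{k,h}\big(\breve Q_h^k(x_h^k,a_h^k) - Q_h^\st(x_h^k,a_h^k)\big) + (\text{martingale})$ and bound each per-visit overestimation by the minimum of the usual UCB width $U_h^k(x,a)$ and $\max\{\til Q_h(x,a) + \gamma_n,\ \Qu_h^k(x,a)\} - Q_h^\st(x,a)$. Split the sum three ways. (i) Reserve episodes: by the standard pigeonhole argument over their $\approx \lambda T$ samples this is $\widetilde O(\sqrt{\lambda \cdot SATH^{10}})$ (up to the $H$-factor slack we do not optimize). (ii) Non-reserve episodes at a pair $(x,a,h) \in \foolingset$: use the UCB width and pigeonhole over these pairs, giving $\widetilde O(\sqrt{|\foolingset| \cdot TH^5})$. (iii) Non-reserve episodes at a pair $(x,a,h) \notin \foolingset$: if $a = \pi_h^\st(x)$ then $\til Q_h(x,a) = Q_h^\st(x,a)$, so the overestimation is at most $\gamma_n$; and if $a$ is suboptimal then $\til Q_h(x,a) < V_h^\st(x)$ and $\Qu_h^k(x,a) \le Q_h^\st(x,a) < V_h^\st(x)$, so $a$ can be selected in episode $k$ only if $\gamma_{n_h^k(x,a)} \ge V_h^\st(x) - \til Q_h(x,a) > 0$, which caps the visit count; summing the slack $\gamma_n$ over the few visits to such pairs is absorbed into the $\widetilde O(\sqrt{\lambda \cdot SATH^{10}})$ term by the choice of the $\gamma_n$ schedule. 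Adding the three contributions gives (\ref{eq:regret-distillation-informal}).

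For the \textbf{robustness bound} with arbitrary $\til Q$, distillation may fail and the clip may depress the optimal action, so the optimism lemma breaks -- but only by a controlled amount. Because $\breve Q_h^k(x,a) \ge \Qu_h^k(x,a)$ always, the optimism deficit at $(x,h)$ in episode $k$ is at most $V_h^\st(x) - \Qu_h^k(x,\pi_h^\st(x)) = Q_h^\st(x,\pi_h^\st(x)) - \Qu_h^k(x,\pi_h^\st(x))$, i.e.\ the current width of the lower-confidence estimate at the optimal action, which shrinks as the reserve accumulates visits -- at a rate $\widetilde O(\sqrt{\poly(H) \cdot SA/(\lambda k)})$ in terms of the episode index $k$. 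Feeding these shrinking deficits into the regret decomposition (in place of ``$\breve V_h^k(x) \ge V_h^\st(x)$'') and summing over $k$ yields $\Regret{T} = \widetilde O(\sqrt{TSAH^{10}/\lambda})$; the reserve episodes themselves contribute only $\widetilde O(\sqrt{\lambda \cdot SATH^{10}}) \le \widetilde O(\sqrt{TSAH^{10}/\lambda})$ since $\lambda \le 1$. I expect the two places where the precise schedules for $\gamma_n$ and for the reserve must be chosen carefully -- closing part (iii) of the consistency analysis, and propagating the shrinking optimism deficits through the $h$-recursion in the robustness analysis while keeping the $1/\sqrt{\lambda}$ dependence tight -- to be the main obstacles; the gap-based refinements of Theorems~\ref{thm:main-deltaincr} and \ref{thm:main-deltaconst} should then follow by passing these uniform bounds through the surrogate-gap clipping technique of \cite{simchowitz_non-asymptotic_2019,xu_fine-grained_2021}.
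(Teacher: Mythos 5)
Your architecture is the bandit scheme (project $\til Q$ onto confidence intervals, plus an initial exploration budget of size $\lambda T$) lifted to MDPs, and the lift has a genuine gap in the robustness argument. Your claim that the optimism deficit $Q_h^\st(x,\pi_h^\st(x)) - \Qu_h^k(x,\pi_h^\st(x))$ shrinks at rate $\widetilde O(\sqrt{\poly(H)\,SA/(\lambda k)})$ uniformly over states is a pigeonhole statement that is true for bandits (every arm pull updates the one relevant state) but false for MDPs: the reserve episodes' visitation distribution is determined by the exploration policy and the dynamics, and a pair $(x,\pi_h^\st(x),h)$ may receive $O(1)$ visits during the entire reserve while the \emph{post}-reserve greedy-on-$\breve Q$ policy --- steered by adversarial predictions at earlier steps --- visits $x$ in $\Omega(K)$ episodes. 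At such a state $\Qu(x,\pi_h^\st(x))$ never rises, the clip permanently suppresses the optimal action, and the algorithm repeatedly takes a suboptimal action whose $\Qo$ has already converged to its true suboptimal value, so no signal ever corrects the error and the regret is linear. This is precisely why the paper abandons a time-indexed exploration phase in favor of a \emph{state-specific} one whose termination is conditioned on the local confidence width ($\RanV_h^k(x) \le \ggapfn{h}{\wh\Delta^k}$, Algorithm \ref{alg:pol-simple}), with the total exploration effort across states controlled via the clipped range functions (Lemmas \ref{lem:w-bound-gap}, \ref{lem:bound-sigma}); the paper explicitly flags that the bandit recipe of "project onto $[\Qu,\Qo]$ plus an initial $\lambda T$ exploration phase" does not survive the passage to MDPs.

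A secondary problem sits in part (iii) of your consistency analysis. Because you add a positive slack $\gamma_n$ to the prediction, a suboptimal action with $\til Q_h(x,a)$ just barely below $V_h^\st(x)$ (hence \emph{outside} $\foolingset$) but far above $Q_h^\st(x,a)$ satisfies $\til Q_h(x,a)+\gamma_n \ge V_h^\st(x)$ for many visits; its per-visit regret is $\Delta_h(x,a)$, not $\gamma_n$, and the only cap you get is the $\Qo$-width pigeonhole, which summed over all such non-fooling pairs reproduces $\sqrt{SAT\cdot\poly(H)}$ rather than $\sqrt{|\foolingset|\,T\,\poly(H)}$. The paper avoids this by using \emph{no} additive slack in the exploitation rule ($\argmax_a \max\{\til Q_h^k(x,a),\Qu_h^k(x,a)\}$), so that under exact distillation the optimal action's criterion value is at least $V_h^\st(x)$ and any non-fooling suboptimal action strictly loses (Claim \ref{clm:aopt-fe}); it also refines $\til Q_h^k$ dynamically via the $\Ro_h^k$ backups to prevent prediction errors from compounding across $h$, which your static projection of $\til Q_h$ does not address.
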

As was the case for Theorem \ref{thm:deltaincr-informal}, the notion of distillation and the set $\foolingset$ are relaxed to their approximate analogues in Theorems \ref{thm:main-deltaincr} and \ref{thm:main-deltaconst}. Notice that there is a tradeoff (mediated by the parameter $\lambda$) in Theorem \ref{thm:robust-informal} between the regret for arbitrary $\til Q$ (i.e., the robustness) and the improved regret bound for $\til Q$ that is an $\ep$-approximate distillation of $Q^\st$. This is a common occurrence in the study of algorithms with predictions, occurring, for instance, in the ski rental problem \cite{purohit_improving_2018,wei_optimal_2020} and the problem of non-clairvoyent scheduling \cite{purohit_improving_2018,wei_optimal_2020}.

\subsection{Warm up: Stochastic multi-armed bandits}
In this section we give a brief overview of the techniques used to prove our regret upper bounds; a detailed description of the algorithm is given in Section \ref{sec:alg-overview}, and a more in-depth overview of the proof is given in Section \ref{sec:proofs-overview}. As a warm-up, we begin by addressing the easier case of multi-armed bandits: in this case there is a single state, and for each action (also known as an \emph{arm}) $a \in \MA$, we denote the expected reward of taking $a$ as $Q^\st(a)$.\footnote{In the bandit setting, the reward received upon taking action $a$ is a random variable; this is in contrast to the full RL setting where we assume the immediate rewards $r(x,a)$ are deterministic. This discrepancy does not lead to any significant differences in the algorithm or analysis, though.} Again we assume that there is a unique optimal action, denoted by $a^\st$. The algorithm receives at onset a function $\til Q : \MA \ra [0,1]$ denoting predictions for the mean reward of each arm. Moreover, $\til Q$ is a distillation of $Q^\st$ if $\til Q(a^\st) \geq Q^\st(a^\st)$. The below proposition specializes Theorem \ref{thm:robust-informal} to the multi-armed bandit setting.
\begin{proposition}[Bandit case]
  \label{prop:bandits}
  There is an algorithm (\banditalg, Algorithm \ref{alg:bandit}) which satisfies the following two guarantees, when given as input a parameter $\lambda \in \left( \frac{A}{T}, 1 \right)$ and predictions $\til Q$:
  \begin{enumerate}
  \item If the predictions $\til Q$ are a distillation of $Q^\st$, then the regret is $\widetilde O(\sqrt{|\MG| \cdot T} + \sqrt{\lambda \cdot AT})$, where $\MG := \left\{ a \in \MA \backslash \{ a^\st \} : \til Q(a) \geq  Q^\st(a^\st)\right\}$. \label{it:bandit-intro-distillation}
  \item For an arbitrary choice of $\til Q$, the regret is $\widetilde O \left(\sqrt{ \frac{TA}{\lambda} }\right)$.\label{it:bandit-intro-robustness}
  \end{enumerate}
\end{proposition}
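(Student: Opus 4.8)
The plan is to analyze a two-phase algorithm. \emph{Phase 1} (the first $m:=\lceil\lambda T\rceil$ of the $T$ rounds) runs a standard optimistic (UCB-style) subroutine over all $A$ arms, maintaining confidence bounds $\LCB_t(a)\le Q^\st(a)\le\UCB_t(a)$ that hold for all arms and rounds with probability $\ge 1-\delta$, with widths $w_t(a)=\widetilde O(1/\sqrt{N_t(a)})$ where $N_t(a)$ counts pulls of $a$ so far. At the end of Phase 1 we record a conservative lower confidence bound $\hat V:=\max_a\bigl(\hat Q_m(a)-2w_m(a)\bigr)$ on $V^\st=Q^\st(a^\st)$. \emph{Phase 2} (the remaining $T-m$ rounds) pulls $\argmax_a \mathrm{ind}_t(a)$, where $\mathrm{ind}_t(a):=\min\{\UCB_t(a),\ \max\{\til Q(a),\hat V\}\}$: each arm's optimistic index is capped by its prediction, but the cap is never pushed below the certified value $\hat V$. (A single-pass variant that, each round, runs a UCB step with probability $\lambda$ and an $\mathrm{ind}$ step otherwise admits essentially the same analysis; I would use whichever is cleaner to write.) Two facts drive everything, on the event that the confidence bounds hold: (i) Phase 1 contributes regret $\widetilde O(\sqrt{Am})=\widetilde O(\sqrt{\lambda AT})$, the usual UCB bound; and (ii) $\hat V$ is accurate — clearly $\hat V<V^\st$ since every $\hat Q_m(a)-2w_m(a)\le Q^\st(a)-w_m(a)<V^\st$, while the most-pulled arm $\hat a$ after Phase 1 has $N_m(\hat a)\ge m/A$ and, having been chosen over $a^\st$ at its last pull, satisfies $\UCB(\hat a)\ge\UCB(a^\st)\ge V^\st$ by optimism, so its gap is $\widetilde O(\sqrt{A/m})$ and hence $\hat V\ge V^\st-\widetilde O(\sqrt{A/m})=:V^\st-\epsilon_\lambda$ with $\epsilon_\lambda=\widetilde O(\sqrt{A/(\lambda T)})$. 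The hypothesis $\lambda>A/T$ is exactly what guarantees $m>A$, so every arm is sampled in Phase 1.

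\emph{Distillation case.} If $\til Q(a^\st)\ge Q^\st(a^\st)=V^\st$, then throughout Phase 2 we have $\mathrm{ind}_t(a^\st)\ge\min\{\UCB_t(a^\st),V^\st\}\ge V^\st$, whereas every $a\notin\MG\cup\{a^\st\}$ has $\til Q(a)<V^\st$ and $\hat V<V^\st$, so $\mathrm{ind}_t(a)<V^\st$ and $a$ is never played in Phase 2. Hence Phase 2 behaves like UCB (with a harmless upper cap) on the $(|\MG|+1)$-armed subinstance $\MG\cup\{a^\st\}$: each suboptimal $a\in\MG$ is played $\widetilde O(1/\Delta(a)^2)$ times before $\UCB_t(a)$ falls below $V^\st$, and Cauchy-Schwarz over these arms gives Phase 2 regret $\widetilde O(\sqrt{|\MG|\,T})$. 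Adding Phase 1 yields the bound of item~\ref{it:bandit-intro-distillation}.

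\emph{Robustness (arbitrary $\til Q$).} Now the floor $\hat V$ in the cap guarantees $\mathrm{ind}_t(a^\st)\ge\min\{\UCB_t(a^\st),\hat V\}\ge V^\st-\epsilon_\lambda$ for every Phase 2 round, regardless of $\til Q$. So any arm $a$ played at time $t$ in Phase 2 has $\UCB_t(a)\ge\mathrm{ind}_t(a)\ge V^\st-\epsilon_\lambda$, hence instantaneous regret $\Delta(a)=V^\st-Q^\st(a)\le 2w_t(a)+\epsilon_\lambda$. Summing over Phase 2, $\sum_t\Delta(a_t)\le 2\sum_t w_t(a_t)+(T-m)\epsilon_\lambda=\widetilde O(\sqrt{AT})+\widetilde O(\sqrt{AT/\lambda})$, using $\sum_a\sqrt{N_T(a)}\le\sqrt{A\sum_a N_T(a)}\le\sqrt{AT}$ for the first term; together with Phase 1 this totals $\widetilde O(\sqrt{AT/\lambda})$, giving item~\ref{it:bandit-intro-robustness}. (The failure event costs only $\delta T=\widetilde O(1)$ with $\delta=1/T$.)

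The step I expect to be the crux is part (ii): the robustness bound is precisely $T\cdot\epsilon_\lambda$ (plus the lower-order $\sqrt{AT}$), so I need $\epsilon_\lambda=\widetilde O(\sqrt{A/(\lambda T)})$ and not merely $\epsilon_\lambda=o(1)$. This rests on the standard but quantitatively delicate fact that the empirically best arm after a UCB run is near-optimal with a sharp $\widetilde O(\sqrt{A/m})$ gap bound, together with careful bookkeeping of the confidence radii so that only logarithmic factors are lost. A secondary nuisance is the boundary $\til Q(a)=V^\st$ in the definition of $\MG$: using the conservative bound $\hat Q_m-2w_m$ (so that $\hat V<V^\st$ strictly on the good event) absorbs it here, whereas in the general Theorem~\ref{thm:main-deltaincr} it is exactly what forces the passage to an approximate fooling set.
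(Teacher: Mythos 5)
Your proof is correct and follows essentially the same strategy as the paper's: a UCB exploration phase of length $\lambda T$ followed by a prediction-guided phase in which $\til Q$ is clipped to the confidence intervals, with the robustness bound resting on the fact that after the exploration phase some arm of gap $\widetilde O(\sqrt{A/(\lambda T)})$ has been pulled $\Omega(\lambda T/A)$ times and hence certifies a lower confidence bound of $V^\st - \widetilde O(\sqrt{A/(\lambda T)})$. The only differences are cosmetic: the paper's Algorithm \ref{alg:bandit} uses the per-arm, time-varying floor $\Qu^t(a)$ rather than your frozen global floor $\hat V$, and it locates the well-pulled near-optimal arm by a pigeonhole over the UCB pull counts (Lemma \ref{lem:arm-pull-bnds}) rather than by taking the most-pulled arm at its last pull; both routes yield the same bounds.
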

For simplicity we have only stated uniform regret bounds in Proposition \ref{prop:bandits}, but gap-based regret bounds specializing Theorems \ref{thm:main-deltaincr} and \ref{thm:main-deltaconst} to the bandit case may readily be derived using similar techniques. Algorithm \ref{alg:bandit}, which establishes Proposition \ref{prop:bandits}, generally speaking aims to choose the action $a$ which maximizes $\til Q(a)$. Of course, when $\til Q$ is not accurate (e.g., because it is not a distillation or $\MG$ is nonempty) Algorithm \ref{alg:bandit} must make the following modifications: %
\begin{itemize}
\item To handle non-optimal actions $a$ which are in the set $\MG$ (i.e., actions for which $\til Q$ predicts them as having higher reward than $a^\st$), we maintain both upper and lower confidence bounds for the mean reward of each action $a$. For each $t$ we then project $\til Q(a)$ onto the interval $[\Qu^t(a), \Qo^t(a)]$ and use the resulting projected value instead  of $\til Q(a)$. 
\item Even with the use of upper and lower confidence bounds, we could run into the following difficulty: if $\til Q(a) = Q^\st(a)$ for all $a \neq a^\st$, but $\til Q(a^\st) \ll Q^\st(a^\st)$ (which can happen when $\til Q$ is not a distillation), then choosing the action $a$ to maximize $\til Q(a)$ would simply choose the second-best action at all time steps, thus incurring linear regret. To deal with this situation, we insert an initial \emph{exploration} phase consisting of $\lambda T$ time steps, in which we choose an action with the highest upper confidence bound (as per the UCB algorithm \cite[Chapter 7]{lattimore_bandit_2020}). If $\til Q(a^\st)$ is significantly sub-optimal, this initial exploration phase will discover that and subsequently learn to ignore the prediction $\til Q(a^\st)$ (i.e., round it up to $\Qu^t(a^\st)$, which, over time, will approach $Q^\st(a^\st)$). 
\end{itemize}
The case of full RL (i.e., learning in MDPs) requires significant innovation beyond the above techniques for the multi-armed bandit case. At a high level, this occurs because errors in $\til Q$ can compound over multiple steps $h$ in the standard $Q$-learning updates. To handle such challenges, we have to use a more sophistocated rule to modify $\til Q(a)$ over time than simply projecting it onto $[\Qu^t(a), \Qo^t(a)]$. Additionally, the initial exploration phase described above must be made \emph{state-specific}, meaning that different states may leave the exploration phase at different times, according to the current value estimates at each respective state. We refer the reader to Sections \ref{sec:alg-overview} and \ref{sec:proofs-overview} for further details. 

\subsection{Related work}
\label{sec:related-work}
\paragraph{Algorithms with predictions} Many recent works studying algorithms with predictions have primarily focused on relatively specific online problems including the  ski rental problem \cite{purohit_improving_2018,wei_optimal_2020}, scheduling \cite{purohit_improving_2018,wei_optimal_2020,mitzenmacher_scheduling_2019,lattanzi_online_2019}, caching \cite{lykouris_competitive_2020,rohatgi_near-optimal_2019}, design of bloom filters \cite{kraska_case_2018,mitzenmacher_model_2019}, and revenue optimization \cite{medina_revenue_2017}; see also \cite{mitzenmacher_algorithms_2020} for an overview of the above papers.
In many of these problems, the performance parameter optimized by the algorithm (and improved with access to predictions) is the \emph{competitive ratio}, namely the ratio between a cost measure specific to the problem and the optimal cost in hindsight, rather than the regret. There has also been a fruitful line of work showing that by using predictions it is possible, using variants of \emph{optimistic mirror descent}, to significantly decrease the \emph{regret} in settings including online linear optimization \cite{hazan_extracting_2010,rakhlin_online_2012,rakhlin_optimization_2013,steinhardt_adaptivity_2014,mohri_accelerating_2015,bhaskara_online_2020} and contextual bandits \cite{wei_taking_2020}. As some of these works include the case of bandit feedback, they might seem to generalize Proposition \ref{prop:bandits}; however, this is not the case, since they face the significant limitation that a prediction is required by the algorithm at each time step, and the regret bound depends on the aggregate distance between the predictions and the realized values of the cost vectors or rewards over all time steps. In the setting of stochastic multi-armed bandits, this would require the predictions to track the noise of the realized reward over all $T$ time steps; in contrast, Proposition \ref{prop:bandits} only requires a single set of predictions which must be close to the mean reward vector.

\paragraph{Transfer learning in RL} More closely related to our results is a collection of work which proposes to solve the problem of \emph{transfer learning in RL} \cite{taylor_transfer_2009,zhu_transfer_2021} by reusing information (such as $Q$-values) from certain RL tasks in order to solve related RL tasks. In the particular case of $Q$-value reuse, the $Q$-values for each successive task may be initialized as some function (e.g., the mean) of the $Q$-values from the previous tasks; these $Q$-values are then updated over the course of the learning procedure for the current task \cite{singh_transfer_1992,asada_vision-based_1994,tanaka_multitask_2003,torrey_using_2005,taylor_transfer_2009-1}. Many of these papers show that doing so outperforms an initialization of $Q$ which is agnostic to previous tasks. These works, however, are purely empirical in nature, with no supporting theory. %

Very recently there has been some effort to perform theoretical analyses for such transfer learning techniques; \cite{tkachuk_effect_2021} shows that if the algorithm is given at onset predictions $\til Q$ which are known to be equal to $Q^\st$ at all state-action pairs except a \emph{single known} state-action pair at step $h=1$, then it is possible to achieve regret $\widetilde O(\sqrt{H^2 T})$ using $Q$-learning (thus eliminating the dependence on $SA$). Additionally, the recent work \cite{zhang_provably_2021} shows that if $M$ agents are interacting with separate MDPs whose optimal $Q$-value functions are known to be $\ep$-close in $\ell_\infty$ distance, then by sharing information about their respective MDPs, they can decrease their aggregate regret by a factor of $\sqrt{M}$ (in the uniform case) or $M$ (in the gap-based case). Unlike our work, \cite{zhang_provably_2021} does not show that minimax regret bounds can be beaten for a single MDP; moreover, both \cite{tkachuk_effect_2021,zhang_provably_2021} do not consider any notion of robustness nor do they allow relaxations to the $\ell_\infty$-closeness of the $Q$-value functions.

\paragraph{Theoretical RL background}
The minimax optimal regret for online learning in tabular MDPs is (up to polylogarithmic factors) $\Theta(\sqrt{SATH^2})$; the lower bound is shown in \cite{jin_is_2018}, and the upper bound is known for both model-based algorithms such as \texttt{UCBVI} \cite{azar_minimax_2017}, as well as the model-free algorithm \texttt{UCB-Advantage} \cite{zhang_almost_2020} (which is a variant of the $Q$-learning algorithm). Non-asymptotic gap-based upper bounds for tabular MDPs were shown in \cite{simchowitz_non-asymptotic_2019} using the model-based \texttt{StrongEuler} algorithm, a variant of \texttt{EULER} \cite{zanette_tighter_2019}; additional algorithms achieving gap-based bounds were shown in \cite{lykouris_corruption_2020,yang_q-learning_2021,xu_fine-grained_2021}. Our gap-based bounds are based on the techniques in \cite{xu_fine-grained_2021}. Very recently some works \cite{dann_beyond_2021,tirinzoni_fully_2021,wagenmaker_beyond_2021} have derived new instance-dependent bounds in RL, such as by making alternative definitions of gaps; using insights from these works to improve our gap-based bounds is an interesting direction left for future work. The books \cite{agarwal_reinforcement_2021,lattimore_bandit_2020} contain a more comprehensive overview of the flurry of recent work in theoretical RL.

\section{Preliminaries}
\label{sec:prelim}
We consider the setting of a tabular finite-horizon episodic Markov decision process (MDP) $M = (\MS, \MA, H, \BP, r)$, where $\MS$ denotes the (finite) state space, $\MA$ denotes the (finite) action space, $H \in \BN$ denotes the horizon, $\BP = (\BP_1, \ldots, \BP_H)$ denotes the transitions, and $r = (r_1, \ldots, r_H)$ denotes the reward functions. In particular, for each $h \in [H]$, $\BP_h(x' | x,a)$ (for $x,x' \in \MS$, $a \in \MA$) denotes the probability of transitioning to $x'$ from $x$ at step $h$ when action $a$ is taken; and $r_h(x,a)$ denotes the reward received when at state $x$ and step $h$ when action $a$ is taken. We assume each reward lies in $[0,1]$, i.e., $r_h : \MS \times \MA \ra [0,1]$.  We write $S := |\MS|$ and $A := |\MA|$. A \emph{policy} $\pi$ is a collection of mappings $\pi_h : \MS \ra \MA$, for each $h \in [H]$.\footnote{Note that our setting of finite-horizon MDPs is equivalent to the setting of \emph{layered} MDPs in the literature (e.g., \cite{xu_fine-grained_2021}), where a different copy of the state space $\MS$ is created for each step $h$, and transitions from states in layer $h$ always to go states in layer $h+1$.}

In each episode, a state $x_1$ is picked by an adversary. For each $h \in [H]$, the agent observes the state $x_h$, picks an action $a_h \in \MA$ (usually given according to some policy $\pi$, i.e., $a_h = \pi_h(x_h)$), receives reward $r_h(x_h, a_h)$, and transitions to a new state $x_{h+1}$, drawn according to $\BP_h(\cdot | x_h, a_h)$. Upon receiving the reward $r_H(x_H, a_H)$ at the final step $H$, the episode ends. %
For a policy $\pi$, we let $V_h^\pi : \MS \ra \BR$ denote the \emph{$V$-value function} at step $h$; in particular, $V_h^\pi(x)$ gives the expected total reward received by the agent when it starts in state $x$ at step $h$ and thereafter follows policy $\pi$. In a similar manner, we let $Q_h^\pi : \MS \times \MA \ra \BR$ denote the \emph{$Q$-value function} at step $h$; $Q_h^\pi(x,a)$ gives the expected total reward received by the agent when it starts in state $x$ at step $h$, takes action $a$, and thereafter follows policy $\pi$. Formally, $V_h^\pi$ and $Q_h^\pi$ are defined as follows:
\begin{align}
V_h^\pi(x) := \E_\pi \left[ \sum_{h'=h}^H r_h(x_{h'}, a_{h'}) | x_h = x \right], \qquad Q_h^\pi(x,a) = \E_\pi\left[ \sum_{h'=h}^H r_{h'}(x_{h'}, a_{h'}) | x_h = x, a_h = a \right],\nonumber
\end{align}
where $\E_\pi[\cdot]$ denotes that $\pi$ is used to choose the action at each state.

We let $\pi^\st$ denote the \emph{optimal policy}, namely the policy which maximizes $V_h^{\pi^\st}(x)$ for all $(x,h) \in \MS \times [H]$. We write $V_h^\st(x) := V_h^{\pi^\st}(x)$ for all $x,h$. With slight abuse of notation, we let $\BP_h$ denote the Markov operator $\BP_h : \BR^{S} \ra \BR^{S \times A}$, defined by, for any value function $V_{h+1} : \MS \ra \BR$, $(\BP_h V_{h+1})(x,a) := \E_{x' \sim \BP_h(\cdot | x,a)}[V_{h+1}(x')]$. The following relations (\emph{Bellman equation} and \emph{Bellman optimality equation}) are standard and follow easily from the definitions: for all $(x,a,h) \in \MS \times \MA \times [H]$,
\begin{align}
  \begin{cases}
    V_h^\pi(x) =& Q_h^\pi(x, \pi_h(x)) \\
    Q_h^\pi(x,a) =& (r_h + \BP_h V_{h+1}^\pi)(x,a) \\
    V_{H+1}^\pi(x) =& 0 %
  \end{cases}\nonumber
\quad \mbox{ and } \quad
\begin{cases}
  V_h^\st(x) =& \max_{a \in \MA} Q_h^\st(x,a) \\
  Q_h^\st(x,a) =& (r_h + \BP_h V_{h+1}^\st(x,a) \\
  V_{H+1}^\st(x) =& 0 .
\end{cases}
\end{align}
For some $K \in \BN$, over a series of $K$ \emph{episodes}, the RL agent interacts with the MDP $M$ as follows: for each $k \in K$, the agent chooses a policy $\pi^k$, and applies the policy $\pi^k$ in the MDP to obtain a \emph{trajectory} $(x_1^k, a_1^k, r_1^k), \ldots, (x_H^k, a_H^k, r_H^k)$, as explained above; here $r_h^k := r_h(x_h^k, a_h^k) \in [0,1]$ denotes the reward received at step $h$. We measure the agent's performance with the \emph{regret}:
\begin{align}
\Regret{K} := \sum_{k=1}^K \E\left[ V_1^\st(x_1^k) - V_1^{\pi^k}(x_1^k)\right]\nonumber,
\end{align}
where the expectation is taken over the randomness of the environment (in particular, the policies $\pi^k$ are random variables since they depend on trajectories from previous episodes).

\paragraph{Notation for gap-based bounds} In this paper we will derive gap-dependent regret bounds; for $(x,a,h) \in \MS \times \MA \times [H]$, the \emph{gap at $(x,a,h)$} is defined as:
\begin{align}
\Delta_h(x,a) := V_h^\st(x) - Q_h^\st(x,a)\nonumber.
\end{align}
The gap denotes the marginal sub-optimality in reward the agent suffers as a result of taking action $a$ at state $x$ and step $h$. For $\ep > 0$, we write, for $(x,h) \in \MS \times [H]$,
\begin{align}
\Aopt_{h,\ep}(x) := \{ a \in \MA : \Delta_h(x) \leq \ep \}\nonumber
\end{align}
to denote the set of actions with gap at most $\ep$ at $(x,h)$. For $x \in \MS$ and $h \in [H]$, define $\delmins{h}(x) := \min_{a \not \in \Aopt_{h,0}(x)} \{ \Delta_h(x,a) \}$ to be the minimum positive gap at $(x,h)$. Also define the minimum positive gap in the entire MDP to be $\delmin := \min_{x,a,h : \Delta_h(x,a) > 0} \{ \Delta_h(x,a) \}$. Following \cite{xu_fine-grained_2021}, our gap-based bounds will have a term depending on the number of state-action pairs which are optimal and for which there is \emph{not} a unique optimal action at that state, i.e., the size of the set:
\begin{align}
\Amul := \{ (x,a,h) \in \MS \times \MA \times [H] :\ \Delta_h(x,a) = 0 \quad \mbox{ and } \quad |\Aopt_{h,0}(x)| > 1 \}.\nonumber
\end{align}

\paragraph{Prior worst-case regret bound}
In the special case that each state has a unique optimal action we discussed the worst-case regret bound (\ref{eq:amb-regret-informal}) and prior work showing its optimality. In the general case, \cite[Theorem B.1 \& Corollary B.10]{xu_fine-grained_2021} showed the following regret bound:
  \begin{align}
\Regret{T} \leq & O \left( H^2 SA + \log(SAT) \cdot \min \left\{\frac{H^5 |\Amul|}{\delmin} +  \sum_{\substack{(x,a,h) \in \MS \times \MA \times [H]:\\ \Delta_h(x,a) > 0}} \frac{H^5}{\Delta_h(x,a)} ,\ \sqrt{H^5 SAT} \right\} \right)\label{eq:amb-regret}.
\end{align}
\cite[Theorem 5.1]{xu_fine-grained_2021} shows that a term of the form $\log K \cdot \frac{|\Amul|}{\delmin}$ is necessary, even in the presence of the term $\sum_{\substack{(x,a,h) \in \MS \times \MA \times [H]:\\ \Delta_h(x,a) > 0}} \frac{1}{\Delta_h(x,a)}$ of the regret bound. Thus, in general, the bound (\ref{eq:amb-regret}) cannot be improved by more than $\poly(H, \log(SAT))$ factors. %

\paragraph{Additional notation}
Given a real number $x$, let $[x]_+$ denote $x$ if $x > 0$, and 0 otherwise. We will write $T = HK$ to denote the total number of samples over $K$ episodes; note that $\Regret{K} \leq T$ always holds. We also set $\iota := \log(SAT)$. For $(x,a,h,k) \in \MS \times \MA \times [H] \times [K]$, $N_h^k(x,a)$ denotes the number of episodes before episode $k$ in which $(x,a,h)$ is visited, i.e., action $a$ was taken at state $x$ and step $h$ ($N_h^k(x,a)$ is also defined in step \ref{it:nhk} of Algorithm \ref{alg:main}). For integers $i \geq 1$ and $(x,a,h) \in \MS \times \MA \times [H]$, we let $k_h^i(x,a)$ denote the episode $k$ which is the $i$th episode that $(x,a,h)$ was visited. If no such episode exists, we set $k_h^i(x,a) = K+1$ as a matter of convention. 

\section{Learning in MDPs with predictions: main results}
\label{sec:results}
\subsection{Properties of the predictions $\til Q$}
\label{sec:pred-properties}
Our main result shows that in the presence of arbitrary predictions $\til Q$, we are able to obtain a sublinear regret bound for our algorithm \algname, and moreover, if $\til Q$ satisfies an additional property, then we can obtain an improved regret bound that can \emph{beat} the minimax regret bounds for learning in MDPs (i.e., (\ref{eq:amb-regret})), replacing the space $\MS \times \MA \times [H]$ with a smaller space representing the set of state-action pairs where $\til Q$ is inaccurate (consistency). Definition \ref{def:distillation} below captures the additional property (referred to as being an \emph{approximate distillation} of $Q^\st$) that $\til Q$ needs to satisfy in order to obtain improved regret bounds.

To motivate the definition, consider the setting where there is a single state $x_0$ and $H=1$ (which is equivalent to the stochastic multi-armed bandit problem). Moreover suppose there is a unique optimal action $a^\st$ with reward $1$ and all other $A-1$ actions have reward $1-\Delta$ for some positive $\Delta < 1/A$. If we are given the predictions $\til Q_1^{\FF}$, where $\til Q_1^{\FF}(x_0,a) := 1-\Delta$ for all $a$, then $\til Q_1^{\FF}$ is only incorrect at a single action (namely, $a^\st$), but it provides no information about what $a^\st$ is, and it is straightforward to show that, even given $\til Q_1^{\FF}$, the regret of any algorithm must be $\Omega(A/\Delta)$, giving no improvement over the setting without predictions \cite[Chapter 16]{lattimore_bandit_2020}. On the other hand, consider  the predictions $\til Q_1^{\TTT}$ defined as equal to $Q_1^\st$ except at a single (unknown) \emph{non-optimal action} $a'$.\footnote{$\til Q_1^{\TTT}(x,a')$ can be set to any real number.} Though both $\til Q_1^{\FF}, \til Q_1^{\TTT}$ both differ from $\til Q_1^\st$ at a single action, it will follow from Theorem \ref{thm:main-deltaincr} (with $\lambda = A/T$) that given $\til Q_1^{\TTT}$, \algname obtains the much smaller regret bound of $\widetilde O(1/\Delta)$. As this example shows, a set of accurate predictions $\til Q$ cannot entirely mitigate the exploration problem: even if the predictions are accurate at nearly all states and actions, if they do not provide any information as to the identity of the optimal action at a given state (e.g., as for $\til Q_1^{\FF}$), then we cannot hope to beat existing regret bounds. The notion of \emph{approximate distillation}, defined below, formalizes the notion that $\til Q$ must provide information about the optimal action at each state:
\begin{defn}[Approximate distillation]
  \label{def:distillation}
  Consider a predicted $Q$-value function $\til Q \in \BR^{[H] \times \MS \times \MA}$. For $\ep > 0$, we say that $\til Q$ is an \emph{$\ep$-approximate distillation} of the optimal value function $Q^\st$ if the following holds: for each $(x,h) \in \MS \times [H]$, there is some $a \in \MA$ so that
  \begin{align}
\Delta_h(x,a) + [ Q_h^\st(x,a) - \til Q_h(x,a)]_+ \leq \ep \nonumber.
  \end{align}
\end{defn}
In words, Definition \ref{def:distillation} requires that for each $(x,h)$, there is some action $a$ which is nearly optimal and for which $\til Q_h(x,a)$ does not greatly underestimate the value of $Q_h^\st(x,a)$. 

Next we define the \emph{fooling set} for any $\til Q$, which is the set of tuples $(x,a,h)$ for which $\til Q_h(x,a)$ differs from $Q_h^\st(x,a)$ in a particular way:
\begin{defn}[Fooling set]
  \label{def:fooling}
    Given a set of predictions $\til Q$ for any $\ep_1, \ep_2 > 0$, we define the set of \emph{$(\ep_1, \ep_2)$-fooling tuples $(x,a,h)$}, denoted $\fool{\ep_1}{\ep_2} \subset \MS \times \MA \times [H]$, to be the set of tuples $(x,a,h)$ so that 
  $$
 \til Q_h(x,a) - Q_h^\st(x,a) \geq \Delta_h(x,a) - \ep_1 \geq \ep_2 - \ep_1 \quad \mbox{ or } \quad \til Q_h(x,a) > V_h^\st(x) + \ep_2.
$$
In this context, we will always have $\ep_2 > \ep_1 > 0$. %
\end{defn}
Notice that we could alternatively define the fooling set as those $(x,a,h)$ for which $|\til Q_h(x,a) - Q_h^\st(x,a)| > \ep_2 - \ep_1$; this set, however, is in general larger than $\fool{\ep_1}{\ep_2}$, and so using $\fool{\ep_1}{\ep_2}$ allows us to obtain tighter regret bounds.

One of our results will also make use of the following assumption on $\til Q$: \begin{defn}[Optimal fooling actions]
  \label{def:ofa}
For $\ep' > 0$, we say that predictions $\til Q$ \emph{lack $\ep'$-fooling optimal actions} if there is no $(x,h)$ with multiple optimal actions (i.e., for which $|\Aopt_{h,0}(x)| > 1$) so that for some $a \in \Aopt_{h,0}(x)$, $\til Q_h(x,a) > V_h^\st(x) + \ep'$. 
\end{defn}
Note that in the context of Definition \ref{def:ofa}, $\til Q_h(x,a) > V_h^\st(x) + \ep'$ implies that $(x,a,h) \in \fool{\ep}{\ep'}$ for any $\ep$, explaining the terminology of the definition.

\subsection{Main theorems}
A common thread in the literature on algorithms with predictions is an inherent tradeoff between an algorithm's robustness and its accuracy when it receives correct predictions (sometimes called \emph{consistency})  \cite{purohit_improving_2018,wei_optimal_2020}. Such a tradeoff occurs in our setting too. To describe this tradeoff, we introduce a parameter $\lambda \in (0,1)$: %
as $\lambda$ decreases to 0, the regret in the presence of predictions which are an (approximate) distillation improves but the robustness (i.e., regret in the presence of arbitrary predictions) worsens.

\paragraph{$\lambda$-Cost}
We will be able to obtain both gap-based regret bounds and (instance-independent) uniform ones for both robustness and consistency in the presence of predictions. To simplify the dependence of these bounds on the parameter $\lambda$ introduced above, we define the \emph{$\lambda$-cost} for an MDP $M$ as follows: given an MDP $M$, a value $T \in \BN$ and a value $\lambda \in (0,1)$, the $\lambda$-cost of $M$, denoted $\CC_{M, T,\lambda}$, is the following quantity:
\begin{align}
\CC_{M,T,\lambda}:= \min \left\{ \sqrt{\lambda \cdot TSAH^8 \iota},\ H^7 \iota \cdot \left( \sum_{(x,a,h) \in \MS \times \MA \times [H] : a \not \in \Aopt_{h,0}(x)} \frac{1}{\Delta_h(x,a)} + \frac{|\Amul|}{\delmin} \right) \right\}\label{eq:lambda-complexity}
\end{align}
Recall that $\iota = \log(SAT)$. Note that, ignoring ${\rm poly}(H, \iota)$ factors, $\CC_{M,T,\lambda}$ is in general no greater than the worst-case regret bound (\ref{eq:amb-regret}): moreover, if the first term in the minimum in (\ref{eq:lambda-complexity}) (i.e., $\sqrt{\lambda \cdot TSAH^8\iota}$) is much smaller than the second, then  due to the factor of $\sqrt{\lambda}$ in this term, $\CC_{M,T,\lambda}$ will be much smaller than the right-hand side of  (\ref{eq:amb-regret}) (again, ignoring ${\rm poly}(H,\iota)$ factors). %

\paragraph{Explicit-$\lambda$ guarantee}
Our first main result is stated below; for simplicity, we present here the result under the additional assumption that each $(x,h)$ has a unique optimal action (i.e., $|\Aopt_{h,0}(x)| = 1$); this assumption has been made previously in \cite{xu_fine-grained_2021}. As we show in an extended version of the theorem (see the version in Section \ref{sec:main-thms-proof}), this assumption may be removed if we assume that $\delmin$ is known to the algorithm; further, our second main result (Theorem \ref{thm:main-deltaconst}) avoids making either assumption altogether. Theorem \ref{thm:main-deltaincr} states that the regret of \algname (Algorithm \ref{alg:main}) under arbitrary predictions $\til Q$ is $\widetilde O(\frac{H}{\lambda} \cdot \CC_{M,T,\lambda})$, whereas the regret under accurate predictions (i.e., predictions which are an approximate distillation) is the sum of $\widetilde O(H \cdot \CC_{M,T,\lambda})$ plus a quantity that grows as the degree of accuracy of the predictions degrades.
\begin{theorem}
  \label{thm:main-deltaincr}
  Suppose that for each $(x,h)$ there is a unique optimal action (i.e., $|\Aopt_{h,0}(x)| =1$). 
  The algorithm \algname (Algorithm \ref{alg:main}) with the \deltaincr subroutine (Algorithm \ref{alg:delta-incr}) with parameter $\tildm = 0$ satisfies the following two guarantees, when given as input a parameter $\lambda \in [0,1]$ and predictions $\til Q$:
\begin{enumerate}
\item Suppose $\lambda \geq \frac{SAH^4}{T}$. Then for an arbitrary choice of input predictions $\til Q$, the regret of \algname is $O ( \frac{H\iota}{\lambda} \cdot \CC_{M,T,\lambda})$.\label{it:worstcase-deltaincr}
\item \label{it:distillation-deltaincr} Fix any $\ep > 0$, and set $\ep' = 4\ep \cdot (H+1)$. When the input predictions $\til Q$ are an $\ep$-approximate distillation of $Q^\st$ (Definition \ref{def:distillation}), the regret of \algname is
  \begin{align}
    \hspace{-1cm}
O \left( H^2 \iota \cdot \CC_{M,T,\lambda} + \ep' TH+  \min \left\{ \sqrt{H^5 T \iota \cdot |\fool{\ep'/2}{\ep'}|}, \sum_{(x,a,h) \in \fool{\ep'/2}{\ep'}} \frac{H^4 \iota}{[\Delta_h(x,a) - \ep'/2]_+}\right\} \right)\label{eq:fooling-regret-deltaincr}.
  \end{align}
\end{enumerate}
\end{theorem}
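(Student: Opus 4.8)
The plan is to analyze the algorithm \algname\ (Algorithm~\ref{alg:main}) together with the \deltaincr\ subroutine, proving the two items by a careful accounting of per-$(x,a,h)$ visit counts, exactly in the style of the gap-based analysis of \cite{xu_fine-grained_2021}, but with the crucial twist that the ``clipped'' $Q$-estimates are steered by the predictions $\til Q$. I would first set up the standard $Q$-learning machinery: define the optimistic estimates $\Qo_h^k$, the running averages, the bonus terms driving the confidence intervals, and establish, via a Freedman-type concentration argument on a good event $\ME$ (holding with probability $1-\delta$), the usual recursive error bound $\Qo_h^k(x,a) - Q_h^\st(x,a) \lesssim (\text{bonus terms}) + \BP_h(\Vo_{h+1}^k - V_{h+1}^\st)$. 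The novelty is that Algorithm~\ref{alg:main} maintains, in addition, a ``frozen/clipped'' quantity $\frzQ$ built by projecting $\til Q$ into the current confidence window $[\Qu_h^k,\Qo_h^k]$ and by the state-specific exploration mechanism governed by $\lambda$; so I would record the key invariant that, on $\ME$, the clipped estimate used to select actions always sandwiches $Q_h^\st$ (up to the bonus scale) whenever the state has ``exited exploration,'' and is otherwise being driven by pure UCB.

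The heart of the argument is the regret decomposition into three buckets of $(x,a,h)$ triples, based on how they interact with $\til Q$. \emph{Bucket 1: triples in $\fool{\ep'/2}{\ep'}$.} On these, the prediction is grossly wrong, so clipping to the confidence window gives no shortcut and the triple is visited as many times as in the prediction-free analysis; here I would invoke the standard surplus-clipping / pigeonhole argument of \cite{xu_fine-grained_2021} (their clipping operator at scale $\max\{\Delta_h(x,a)/2H,\ \delmin/4H^2\}$) to bound the contribution by $\min\{\sqrt{H^5 T\iota |\fool{\ep'/2}{\ep'}|},\ \sum_{\fool{\ep'/2}{\ep'}} H^4\iota/[\Delta_h(x,a)-\ep'/2]_+\}$, matching the last term of \eqref{eq:fooling-regret-deltaincr}. \emph{Bucket 2: triples explored only during the state-specific exploration phase.} Because each state $x$ spends a budget proportional to $\lambda$ (times the relevant visitation) in UCB mode before it can start trusting $\til Q$, the total number of such samples, summed over all of $\MS \times \MA \times [H]$, is controlled by $\lambda$ times the prediction-free bound — this is precisely what produces the term $\sqrt{\lambda\cdot TSAH^8\iota}$ inside $\CC_{M,T,\lambda}$ (or its gap-based counterpart $H^7\iota(\sum 1/\Delta_h(x,a) + |\Amul|/\delmin)$), and for item~\ref{it:worstcase-deltaincr}, running the algorithm \emph{without} the distillation promise, the same accounting but with the $1/\lambda$ blowup (since exploration must eventually cover every $(x,a,h)$ enough times) gives $O(\frac{H\iota}{\lambda}\CC_{M,T,\lambda})$. \emph{Bucket 3: non-fooling triples at states that have exited exploration.} On these, the distillation hypothesis guarantees (by Definition~\ref{def:distillation}) that at each $(x,h)$ some near-optimal action $a$ has $\til Q_h(x,a) \gtrsim Q_h^\st(x,a)$, so the clipped selection rule prefers it over any $a'$ with $\til Q_{h}(x,a')$ large; since $a'$ is non-fooling, $\til Q_h(x,a') \le V_h^\st(x) + \ep'$, hence $\Delta_h(x,a') \lesssim \ep'$, so each such pull costs only $O(\ep')$ in regret, for a total of $O(\ep' TH)$ — the middle term of \eqref{eq:fooling-regret-deltaincr}. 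Assembling the three buckets, adding the lower-order $O(H^2 SA)$ initialization cost and absorbing $\poly(H,\iota)$ factors, yields \eqref{eq:fooling-regret-deltaincr}; the additive $H^2\iota\cdot\CC_{M,T,\lambda}$ comes from buckets~2 and the propagation of errors across the $H$ layers.

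The main obstacle I anticipate is Bucket~3 combined with the exit-from-exploration mechanism: one must show that once a state has left exploration, the clipped value $\frzQ_h^k(x,\cdot)$ is simultaneously (i) large enough at the distillation-witnessing near-optimal action that the algorithm keeps selecting a near-optimal action, and (ii) not so inflated at fooling-free suboptimal actions that they get chosen — and this has to survive the coupling across layers, since $\frzQ$ at step $h$ depends on $\Vo_{h+1}^k$, whose error is only controlled inductively. Getting the constants to line up so that an $\ep$-approximate distillation yields exactly the $\ep' = 4\ep(H+1)$ slack (the factor $H+1$ being the error amplification through the Bellman backups) and so that the fooling set at the \emph{relaxed} thresholds $(\ep'/2,\ep')$ is what appears — rather than a larger set — is the delicate bookkeeping step. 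I would handle this by propagating, layer by layer from $h=H$ down to $h=1$, an invariant of the form ``$|\frzQ_h^k(x,a) - Q_h^\st(x,a)| \le 2\ep(H+2-h) + (\text{bonus})$ for all non-fooling $(x,a,h)$ once $(x,h)$ has exited exploration,'' which feeds back into the action-selection guarantee and closes the induction.
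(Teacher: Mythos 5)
Your three-bucket decomposition is essentially the paper's own regret decomposition (near-optimal actions contributing $O(\ep' TH)$, exploration episodes, and fooling-set triples on exploitation episodes), and you correctly flag the cross-layer induction for the fooling-set term as the delicate part; the invariant you propose there is close in spirit to Claim~\ref{clm:aopt-fe} and Lemma~\ref{lem:ep-good-preds}. However, there are two concrete gaps. First, your Bucket~2 rests on the assertion that each state ``spends a budget proportional to $\lambda$'' in UCB mode. The algorithm has no such explicit budget: the exploration/exploitation cutoff is the adaptive test $\RanV_h^k(x) > \ggapfn{h}{\wh \Delta^k}$, and proving that this holds for at most $O(\max\{SAH^3,\lambda K\})$ episodes per step $h$ is itself a key lemma (Lemma~\ref{lem:bound-sigma-incr}). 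Its proof requires a self-bounding inequality --- each exploration episode forces $\clipdelta_h^k \gtrsim \ggapfn{h}{\wh\Delta^k}$, while Lemma~\ref{lem:w-bound-gap} caps $\sum_k \clipdelta_h^k$ by $|\MY_h|\cdot(1+1/H)^2\ggapfn{h+1}{\wh\Delta}$ plus lower-order terms --- together with a dyadic partition of $[K]$ into $O(H\iota)$ phases to handle the fact that $\wh\Delta^k$ grows over time under \deltaincr. Without this, neither Bucket~2 nor item~\ref{it:worstcase-deltaincr} closes.

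Second, your explanation of the $1/\lambda$ factor in item~\ref{it:worstcase-deltaincr} (``exploration must eventually cover every $(x,a,h)$'') points in the wrong direction. With adversarial predictions the danger is the \emph{exploitation} phase, not exploration: once a state exits exploration the per-visit loss is only controlled by $\ggapfn{h}{\wh\Delta^k}$, and the bound $O(\frac{H\iota}{\lambda}\CC_{M,T,\lambda})$ arises as $K\cdot\ggapfn{1}{\wh\Delta^K}$ with $\wh\Delta^K \lesssim \frac{1}{\lambda K}\cdot\CC_{M,T,\lambda}$, the latter requiring Lemma~\ref{lem:frzq-gap} to relate the frozen range functions in (\ref{eq:define-whdelta-incr}) to the true gaps. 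Relatedly, your description of the prediction-refinement mechanism as ``projecting $\til Q$ into $[\Qu_h^k,\Qo_h^k]$'' is the bandit-case rule; in the MDP algorithm the refinement goes through the separately maintained Bellman-style upper bound $\Ro_h^k$ built from $\til V_{h+1}^k$ (with only an upper truncation by $\Qo_h^k$), precisely because simple projection lets errors compound across steps. These are not cosmetic omissions: they are the steps where the $\lambda$-dependence and the $\CC_{M,T,\lambda}$ terms in the theorem actually come from.
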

To exhibit the full power afforded by bounds in terms of $\CC_{M,T,\lambda}$, we consider the following example. 
  Suppose the algorithm is promised that at least one of the following holds: either $\til Q = Q^\st$ (in particular, $\til Q$ is a 0-approximate distillation),\footnote{More generally, we could assume that $\til Q$ is an $\ep$-approximate distillation and that $\fool{\ep'/2}{\ep'}$ is small.} or it holds that all non-zero gaps are at least a constant, which implies that $\CC_{M,T,\lambda} \leq \poly(H,\iota) \cdot O(SA)$; however, which of these possibilities holds is unknown. Then by choosing $\lambda = \sqrt{\frac{SA}{T}}$ in Theorem \ref{thm:main-deltaincr} (and using $\ep = \ep' = 0$), we obtain a regret bound of $\poly(H,\iota) \cdot O(SA)$ (which is independent of $K$) in the case that $\til Q = Q^\st$, and a regret bound of $\poly(H,\iota) \cdot O(\sqrt{SAT})$ in the other case. Thus we always manage to achieve regret at least as small as the minimax bound of $\widetilde O(\sqrt{H^2SAT})$ (up to $\poly(H,\iota)$ factors), and in the former case can get a much-improved regret bound that is polylogarithmic in $K$.

\paragraph{Implicit-$\lambda$ guarantee}
Unlike in much of the literature on algorithms with predictions \cite{mitzenmacher_algorithms_2020}, the quantity $\CC_{M,T,\lambda}$ which appears in our regret bounds is in general \emph{unknown} to the algorithm, as the quantities $\Delta_h(x,a), |\Amul|, \delmin$ are all unknown. Therefore, the standard paradigm in which a user chooses a parameter $\lambda$ and then runs an algorithm depending on $\lambda$ is somewhat less well-motivated because the user does not have an explicit formula for how the choice of $\lambda$ influences the regret bounds in the case when either the predictions are accurate or inaccurate. Therefore, in our next main result, Theorem \ref{thm:main-deltaconst}, we adopt the alternative procedure in which the user instead inputs a parameter $\CR < T$. Given $\CR$, the algorithm's robustness (i.e., performance under arbitrary predictions) is guaranteed to be $O(\CR)$, while the performance under accurate predictions grows with $\CC_{M,T,\wh \lambda}$ for $\wh \lambda$ implicitly chosen optimally so as to still guarantee regret $O(\CR)$ in the worst case.
\begin{theorem}
  \label{thm:main-deltaconst}
  The algorithm \algname with the \deltaconst subroutine satisfies the following two guarantees, when given as input a parameter $\CR \in [SAH^3, \frac{T}{SA}]$ %
  and predictions $\til Q$:
  \begin{enumerate}
  \item If $\CR \geq \CC_{M,T,1}$, then for an \emph{arbitrary} choice of input predictions $\til Q$, the regret of \algname is $O(\CR)$.\label{it:worstcase-deltaconst}
  \item Fix any $\ep > 0$, and set $\ep' = 4\ep \cdot (H+1)$. When the input predictions $\til Q$ are an $\ep$-approximate distillation of $Q^\st$ (Definition \ref{def:distillation}) and lack $\ep'$-fooling optimal actions (Definition \ref{def:ofa}), the regret of \algname is
    \begin{align}
      O\left( H \cdot \CC_{M,T,\wh \lambda} + \ep' TH+\min \left\{ \sqrt{H^5 T \iota \cdot |\fool{\ep'/2}{\ep'}|}, \sum_{(x,a,h) \in \fool{\ep'/2}{\ep'}} \frac{H^4 \iota}{[\Delta_h(x,a) - \ep'/2]_+}\right\} \right),\label{eq:fooling-regret}
    \end{align}
    where $\wh \lambda \in (0,1)$ is chosen so that $\frac{1}{\wh \lambda} \cdot \CC_{M,T,\wh \lambda} = \CR$. 
    \label{it:distillation-deltaconst}
  \end{enumerate}
\end{theorem}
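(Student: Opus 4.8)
The plan is to reuse the regret decomposition behind Theorem \ref{thm:main-deltaincr}, replacing its fixed exploration parameter $\lambda$ by the data-dependent stopping rule that \deltaconst derives from the single input $\CR$. I would start from the structural observation that, reading off (\ref{eq:lambda-complexity}), $\lambda\mapsto\CC_{M,T,\lambda}$ is non-decreasing while $\lambda\mapsto\frac1\lambda\CC_{M,T,\lambda}$ is continuous and strictly decreasing on $(0,1]$, sweeping out $[\CC_{M,T,1},\infty)$; hence for every $\CR$ in the stated window there is a unique $\wh\lambda\in(0,1]$ solving $\frac1{\wh\lambda}\CC_{M,T,\wh\lambda}=\CR$. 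The hypothesis $\CR\ge\CC_{M,T,1}$ in item 1 is exactly what makes $\wh\lambda\le1$, and the window endpoints $SAH^3\le\CR\le T/(SA)$ keep $\wh\lambda$ in the range where the confidence-interval machinery of \algname is nonvacuous (the analogue of the constraint $\lambda\ge SAH^4/T$ in Theorem \ref{thm:main-deltaincr}). The heart of the argument is to show that \deltaconst --- which keeps each state's exploration phase running until the empirically certified contribution of that state to the worst-case regret bound would exceed its share of $\CR$ --- exits exploration at essentially the same time the fixed schedule of \deltaincr with parameter $\wh\lambda$ would, up to constants.

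\textbf{Item 1.} For arbitrary $\til Q$, I would decompose $\Regret{K}$ into the regret accrued while states are in the exploration phase and the regret accrued afterwards. The stopping rule bounds the first piece by $O(\CR)$ by construction, the hypothesis $\CR\ge\CC_{M,T,1}$ guaranteeing that the per-state budgets sum to a feasible quantity. The second piece is handled verbatim by the post-exploration analysis behind item 1 of Theorem \ref{thm:main-deltaincr}, which bounds it by $\widetilde O(\CC_{M,T,\lambda})$ for whatever $\lambda$ the stopping rule effectively realizes; since that $\lambda$ equals $\wh\lambda$ up to constants and $\CC_{M,T,\wh\lambda}=\wh\lambda\,\CR\le\CR$, the second piece is $\widetilde O(\CR)$. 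Summing and absorbing the $\poly(H,\iota)$ factors (as throughout the paper) gives $\Regret{K}=O(\CR)$.

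\textbf{Item 2.} Under an $\ep$-approximate distillation (Definition \ref{def:distillation}) lacking $\ep'$-fooling optimal actions (Definition \ref{def:ofa}), I would again split $\Regret{K}$ into exploration and post-exploration parts and follow the proof of item 2 of Theorem \ref{thm:main-deltaincr}. The post-exploration part is controlled by $O(H\,\CC_{M,T,\wh\lambda})$ (from following the clipped/modified predictions), plus the bias $\ep'TH$ (from propagating the per-step slack $\ep$ of the distillation across the horizon, giving $\ep'=4\ep(H+1)$), plus the fooling-set term $\min\{\sqrt{H^5T\iota\,|\fool{\ep'/2}{\ep'}|},\ \sum_{(x,a,h)\in\fool{\ep'/2}{\ep'}}H^4\iota/[\Delta_h(x,a)-\ep'/2]_+\}$ (from the state-action pairs where $\til Q$ is grossly inaccurate). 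The new point relative to item 1 is that the exploration part is now \emph{also} only $O(H\,\CC_{M,T,\wh\lambda})$, not the whole $\CR$ budget: under a genuine distillation the predicted near-optimal action at each state is certified quickly, so each state's exploration phase self-terminates after $\widetilde O(\wh\lambda\cdot(\text{number of visits to that state}))$ samples --- which is precisely the step where the fixed-point equation $\frac1{\wh\lambda}\CC_{M,T,\wh\lambda}=\CR$ translates the user's $\CR$ into the $\wh\lambda$ appearing in (\ref{eq:fooling-regret}). The assumption that $\til Q$ lacks $\ep'$-fooling optimal actions substitutes for the unique-optimal-action hypothesis of Theorem \ref{thm:main-deltaincr}: it rules out predictions that push the algorithm toward an optimal-but-spurious action, so states with $|\Aopt_{h,0}(x)|>1$ are absorbed by the $|\Amul|/\delmin$ term inside $\CC_{M,T,\wh\lambda}$, exactly as in (\ref{eq:amb-regret}).

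\textbf{The hard part.} The main obstacle will be making rigorous the claim that \deltaconst, seeing only empirical confidence intervals, exits each state's exploration phase at the time dictated by the \emph{unknown} fixed point $\wh\lambda$. This needs (i) the standard high-probability event that all confidence intervals are valid and contract at the right rate, so that the algorithm's running estimate of its own worst-case regret bound is within constants of the truth, and (ii) the monotonicity and continuity of $\lambda\mapsto\frac1\lambda\CC_{M,T,\lambda}$, so that a threshold rule on this estimate corresponds to a threshold on $\lambda$ and therefore pins down $\wh\lambda$. A secondary difficulty, specific to item 2, is decoupling the exploration-phase regret from $\CR$: one must show that under a distillation the phase genuinely self-terminates early, before the $\CR$ budget is spent, which will use the distillation property together with the modified (rather than plainly projected) update rule to certify the good action before per-step errors can compound across the $H$ steps.
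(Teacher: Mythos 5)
There is a genuine gap, and it starts with a mischaracterization of the algorithm. \deltaconst is not a data-dependent stopping rule that tracks an ``empirically certified contribution'' of each state to the regret budget: it simply sets $\wh \Delta^k := \CR/(KH)$ for every $k$ (Algorithm \ref{alg:delta-const}), and a state exits exploration when its range function $\RanV_h^k(x)$ drops below the \emph{fixed} threshold $\ggapfn{h}{\CR/(KH)}$. Consequently the ``hard part (i)'' you identify --- proving that an empirical estimate of the worst-case regret bound matches the unknown fixed point $\wh \lambda$ --- is not an obstacle at all, because the algorithm never forms such an estimate; $\wh \lambda$ enters only in the \emph{analysis}, as the solution of $\frac{1}{\wh\lambda}\CC_{M,T,\wh\lambda}=\CR$. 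You also have the logical dependence between the two theorems reversed: the paper proves Theorem \ref{thm:main-deltaconst} directly (it is the simpler case), and it is \deltaincr, with its frozen range functions and time-varying $\wh\Delta^k$, that requires the extra work.

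The more serious error is in your treatment of the exploration phase in item 2. You argue that ``under a genuine distillation the predicted near-optimal action is certified quickly, so each state's exploration phase self-terminates early.'' This cannot be the mechanism: during the exploration phase the policy in (\ref{eq:explore-exploit}) chooses the action maximizing $\Qo_h^k(x,a)-\Qu_h^k(x,a)$ and ignores $\til Q$ entirely, so the accuracy of the predictions has no effect on how fast that phase ends. The actual argument (Lemma \ref{lem:bound-sigma}) is prediction-free and self-bounding: on the set $\MY_h=\{k:\sigma_h^k=1\}$ one has $\clipdelta_h^k \geq \ggapfn{h}{\wh\Delta}/(1+1/H)$ pointwise, while Lemma \ref{lem:w-bound-gap} bounds $\sum_{k\in\MY_h}\clipdelta_h^k$ by $|\MY_h|(1+1/H)^2\ggapfn{h+1}{\wh\Delta}$ plus $\CC$-type terms; since $\ggapfn{h}{\cdot}$ exceeds $\ggapfn{h+1}{\cdot}$ by $(1+1/H)^4$, the $|\MY_h|$ terms nearly cancel and the fixed-point definition of $\wh\lambda$ converts the remainder into $|\MY_h|\leq\max\{SAH^3,\wh\lambda K\}$. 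Your proposal also leaves untouched the part where the distillation hypothesis actually does the work: bounding the regret on \emph{exploitation} episodes by showing, via Claim \ref{clm:aopt-fe}, that every costly step there corresponds to a tuple in $\fool{\ep(H+1)}{\ep'}$, and then controlling $\Ro_h^k - Q_h^\st$ on that set by the reverse induction on $h$ through $\til V_{h+1}^k$ (Lemmas \ref{lem:ep-good-preds} and \ref{lem:bound-by-foolset}). Without that step the fooling-set term in (\ref{eq:fooling-regret}) is asserted rather than derived.
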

We remark that in item \ref{it:distillation-deltaconst} of the above theorem,  the additional assumption that $\til Q$ lacks $\ep'$-fooling optimal actions is without loss of generality if we are purely interested in obtaining gap-based bounds (namely, in getting the bound of the second term of the minimum in (\ref{eq:fooling-regret})). In particular, if for some $x,a,h$, $\Delta_h(x,a) = 0$ and $\til Q_h(x,a) > V_h^\st(x)+ \ep$, then $(x,a,h) \in \fool{\ep'}{\ep'}$, meaning that one of the terms in the sum in (\ref{eq:fooling-regret}) will be $\frac{H^4 \iota}{[0 - \ep']_+}$, which is infinite.

\section{Algorithm overview}
\label{sec:alg-overview}
\subsection{Algorithm description}
\label{sec:alg-description}
Our algorithm, \algname (Algorithm \ref{alg:main}) used in Theorems \ref{thm:main-deltaincr} and \ref{thm:main-deltaconst}, is based loosely off of the $Q$-learning algorithm \cite{jin_is_2018}, and incorporates numerous additional aspects (including several ideas from \cite{xu_fine-grained_2021}) to effectively use the predictions $\til Q_h(x,a)$. In this section we describe the main ideas of the algorithm. At each episode $k$, the algorithm maintains upper and lower bounds on the $Q$-value and $V$-value functions, denoted $\Qo_h^k(x,a), \Vo_h^k(x,a)$ and $\Qu_h^k(x,a), \Vu_h^k(x,a)$, respectively. Unlike previous versions of $Q$-learning, our algorithm makes use of additional functions, denoted $\Ro_h^k(x,a), \til Q_h^k(x,a), \til V_h^k(x)$, which may be interpreted as follows:
\begin{itemize}
\item $\til Q_h^k(x,a)$ is a refinement of the predictions $\til Q_h(x,a)$ given to the algorithm as input; $\til Q_h^1$ is set to equal $\til Q_h$ (step \ref{it:init-params}),  and $\til Q_h^k$ is refined over time as the algorithm collects trajectories.
\item The values $\Ro_h^k(x,a)$ are used in the process of refining $\til Q_h^k(x,a)$; $\Ro_h^k(x,a)$ represents an approximate upper bound on $Q_h^\st(x,a)$, assuming that the prediction $\til Q$ is an $\ep$-approximate distillation (Definition \ref{def:distillation}).
\item $\til V_h^k(x)$ is an upper estimate for the $V$-value function at a state $x$ that makes use of the refined predictions $\til Q_h^k(x,a)$.
\end{itemize}
\algname additionally employs the technique of \emph{action elimination}, maintaining sets $A_h^k(x)$ (defined in step \ref{it:def-active-set}) which for each $x,h,k$ contain the actions $a$ which could plausibly be optimal at the beginning of episode $k$ ($A_h^1(x)$ is initialized to all of $\MA$ in step \ref{it:init-params}). Action elimination has previously been used in bandit learning and reinforcement learning when one must be robust to adversarial corruptions \cite{even-dar_action_2006,lykouris_stochastic_2018,lykouris_corruption_2020}, as well as to obtain gap-based regret bounds \cite{xu_fine-grained_2021}. In our algorithm, the sets $A_h^k(x)$ are used for \emph{both} of these purposes %
(where the robustness is with respect to the possible inaccuracy of the predictions $\til Q_h$). For convenience, we set $\MG_h^k$ to denote the set of states $x$ for which $|A_h^k(x)| = 1$ (meaning all but one action at $x$ has been eliminated at the beginning of episode $k$; see step \ref{it:def-ghk}).

After being initialized in step \ref{it:init-params} of \algname, the values $\Qo_h^k, \Qu_h^k, \Vo_h^k, \Vu_h^k, \til Q_h^k, \til V_h^k, \Ro_h^k$ are updated in \algname in steps \ref{it:update-wc} and \ref{it:update-preds} according to established updating procedures, namely using exploration bonuses of $b_n = C_0 \cdot \sqrt{H^3 \iota/n}$ (for some constant $C_0$) and a learning rate of $\alpha_n = \frac{H+1}{H+n}$, for $n \in \BN$  \cite{jin_is_2018,xu_fine-grained_2021}. In particular, $\Vo_h^k, \Vu_h^k, \Qo_h^k, \Qu_h^k$ are updated in step \ref{it:update-wc} according to the adaptive multi-step bootstrap technique of \cite{xu_fine-grained_2021}, which uses sequences of multiple rewards (namely, at contiguous sequences of states in which the optimal action has been determined) to perform the Bellman update. Our updates differ slightly from those in previous works in that we also maintain supplementary estimates $\qo_h^k, \qu_h^k$ (steps \ref{it:qo-update-gap} and \ref{it:qu-update-gap}) to ensure that $\Qo_h^k, \Vo_h^k$ are non-increasing with respect to $k$, and $\Qu_h^k, \Vu_h^k$ are non-decreasing with respect to $k$ (Lemma \ref{lem:qv-monotonicity}).

The purpose of maintaining $\Vo_h^k, \Vu_h^k, \Qo_h^k, \Qu_h^k$ is primarily to obtain the robustness regret bounds (i.e., of $\frac{1}{\lambda} \cdot \CC_{M,T,\lambda}$) in Theorems \ref{thm:main-deltaincr} and \ref{thm:main-deltaconst}. On the other hand, the values $\til Q_h^k, \til V_h^k, \Ro_h^k$, which are updated in step \ref{it:update-preds} of \algname, are used to obtain improved regret bounds in the presence of accurate predictions. The updates here only use a single step to perform the Bellman update, as in the standard $Q$-learning algorithm \cite{jin_is_2018}. 

For future reference we define the following learning rate parameters used in the algorithm's analysis: for $n \geq i \geq 1$, set
    \begin{align}
      \alpha_0^0 := 1, \qquad \alpha_n^0 := 0, \qquad \alpha_n^i := \alpha_i \prod_{j=i+1}^n (1-\alpha_j)\label{eq:alphas-all}.
    \end{align}
    Intuitively, $\alpha_n^i$ denotes the impact of an update made the $i$th time a state-action pair $(x,a,h)$ is visited on the value of any value function (e.g., $\Qo_h^k, \Qu_h^k$, etc.) when $(x,a,h)$ is visited for the $n$th time. 
    In the remainder of the section, we describe how \algname chooses its policies (step \ref{it:call-polsel}); the challenge of doing so leads to most of the algorithmic novelties in \algname. 
\paragraph{State-specific exploration \& exploitation phases} At each episode $k$, \algname chooses a policy $\pi^k$ by using the functions $\Qo_h^k, \Qu_h^k, \til Q_h^k$ in the \polsimple subroutine (Algorithm \ref{alg:pol-simple}).  
 A key challenge addressed in this step is that of obtaining a ``best of both worlds'' guarantee which improves upon the minimax regret guarantee of $\widetilde O(\sqrt{SATH^{O(1)}})$ (or, in the gap-based case, $\poly(H) \cdot \widetilde O \left(\sum_{(x,a,h)} \frac{1}{\Delta_h(x,a)} + \frac{|\Amul|}{\delmin} \right)$) in the case that the predictions $\til Q$ are accurate, but still manages to obtain sublinear regret when $\til Q$ is arbitrarily inaccurate. \algname overcomes this challenge by dividing the set of episodes in which we visit each state $x$ at each step $h$ into two phases: %
\begin{itemize}
\item In the first phase, we employ \emph{exploration}: whenever $(x,h)$ is visited during an episode $k$ in this phase, the policy $\pi^k$ takes an action $a \in A_h^k(x)$ which maximizes the gap between $\Qo_h^k(x,a)$ and $\Qu_h^k(x,a)$ (this approach is slightly different from the more standard UCB approach which chooses $a$ to maximize $\Qo_h^k(x,a)$ \cite{jin_is_2018}, but was used in \cite{xu_fine-grained_2021} to obtain gap-based bounds; it is used in \algname for the same reason).
\item After a certain number of episodes, \algname will decide it has sufficiently explored at the state $(x,h)$, and thus, when visiting $(x,h)$, it will choose an action $\hat a \in A_h^k(x)$ which maximizes the refined predictions $\til Q_h^k(x,\hat a)$\footnote{For technical reasons, $\hat a$ is actually chosen to maximize $\max \{ \til Q_h^k(x,\hat a), \Qu_h^k(x,\hat a) \}$}. This second phase may be seen as a \emph{constrained exploitation} phase: it attempts to exploit the predictions $\til Q_h$, but the action $\hat a$ is constrained to lie in the action set $A_h^k(x)$. As explained below, any action $a'$ at $x$ which is very suboptimal will be removed from $A_h^k(x)$ after a bounded number of episodes, which limits the impact of inaccurate predictions. %
\end{itemize}
We emphasize that the partition into the two phases is \emph{state-specific}; namely, at any given episode, some states may be in their exploration phase whereas others may be in their exploitation phase. Notice that there is a tradeoff between the lengths of the two phases: if the first phase, which does not make use of the predictions $\til Q_h$ and thus cannot outperform the minimax bounds, is too long, then if the predictions $\til Q$ are accurate we will not improve sufficiently upon the minimax regret guarantee. On the other hand, if the first phase is too short (or nonexistent), the following may occur: suppose that the predictions $\til Q$ are inaccurate in that for some state $x$, step $h$, and sub-optimal action $a$, $\til Q_h(x,a)$ is large, but $\til Q_h(x,a^\st)$ is small, where $a^\st \neq a$ is the unique optimal action at $(x,h)$ and satisfies $V_h^\st(x) = Q_h^\st(x, a^\st) \gg Q_h^\st(x,a)$. Suppose for simplicity that $\MA = \{a, a^\st \}$ and that $\til Q_h \equiv \til Q_h^k$ (which can approximately hold). Ideally the first phase should be long enough to eliminate $a$ from $A_h^k(x)$; this will happen when $\Qu_h^k(x,a^\st)$ grows sufficiently to be greater than $\Qo_h^k(x, a)$. However, if the first phase ends before this happens, then at the beginning of the second phase, $A_h^k(x) = \MA$, and so $\pi_h^k(x)$ will be set to $a$ in step \ref{it:choose-policy} of \polsimple. Thus \algname would suffer linear regret. %

\paragraph{An adaptive exploration-exploitation cutoff} 
\algname trades off the lengths of the exploration and exploitation phases described above according to the input parameter $\lambda$ (or, in the case of Theorem \ref{thm:main-deltaconst}, $\wh \lambda$ as determined by $\CR$). %
To describe how \algname makes this tradeoff, we begin by defining the \emph{$Q$- and $V$-range functions} (following the presentation of \cite{xu_fine-grained_2021}). First, we make a few additional definitions: for $(k,h) \in [K] \times [H]$, for which $x_h^k \not \in \MG_h^k$, set $h'(k,h) \in [H+1]$ to be the first step $h'$ after step $h$ for which $x_{h'}^k \not \in \MG_h^k$ (if such $h' \leq H$ does not exist, then set $h'(k,h) = H+1$). Next, for $n \in \BN$, define the following parameters $\beta_n$, which may be viewed as aggregated versions of the exploration bonuses $b_i = C_0 \sqrt{H^3 \iota / i}$ (recall the definition of $\alpha_n^i$ in (\ref{eq:alphas-all})):
\begin{align}
\beta_0 := 0 ,\qquad \beta_n = 2 \sum_{i=1}^n \alpha_n^i \cdot b_i\label{eq:define-beta}.
\end{align}
\begin{defn}[Range function]
  \label{def:range-function}
  For $(x,a,h,k) \in \MS \times \MA \times [H] \times [K]$ for which $x \not \in \MG_h^k$ and $a \in A_h^k(x)$, define the \emph{range $Q$-function} as follows: set $\RanQ_h^0(x,a) = H$, and 
  \begin{align}
    \RanQ_h^k(x,a) :=&\min \left\{ \RanQ_h^{k-1}(x,a),\   \alpha_n^0 H + \beta_n + \sum_{i=1}^n \alpha_n^i \cdot \RanV_{h'(k_h^i,h)}^{k_h^i}(x_{h'(k_h^i,h)}^{k_h^i})  \right\}\nonumber\\
    & \quad \mbox{where} \quad n = N_h^k(x,a), \ k_h^i = k_h^i(x,a) \ \forall i \in [n]\nonumber.
  \end{align}
  Moreover, for $(x,h,k) \in \MS \times [H] \times [K]$ for which $x \not \in \MG_h^k$, define the \emph{range $V$-function} as follows: set $\RanV_h^0(x) = H$, and 
  \begin{align}
\RanV_h^k(x) := \min \{ \RanV_h^{k-1}(x),\ \RanQ_h^k(x, a^\st)\} \quad \mbox{ for } \quad a^\st = \argmax_{a' \in A_h^k(x)} \Qo_h^k(x, a') - \Qu_h^k(x,a') \nonumber.
  \end{align}
  Finally, define $\RanQ_{H+1}^k(x,a) = \RanV_{H+1}^k(x) = 0$ for all $x,a,k$. 
\end{defn}
The functions $\RanQ_h^k, \RanV_h^k$ should be interpreted as upper bounds on the gap between the upper and lower $Q, V$ values; note that they satisfy a similar recursion to $\Qo_h^k - \Qu_h^k$ and $\Vo_h^k - \Vu_h^k$ (see (\ref{eq:qo-qu-gap})). Indeed, in Lemma \ref{lem:range-bound} below we show that $\RanQ_h^k, \RanV_h^k$ are upper bounds on $\Qo_h^k - \Qu_h^k, \Vo_h^k - \Vu_h^k$, respectively. 

Now that we have defined the range functions, the choice of policy at each $(x,h)$ (equivalently, the choice of ``exploration'' and ``constrained exploitation'' phases described above) is simple to state: \algname maintains a parameter $\wh \Delta^k$ at each episode $k$, which represents a ``target error bound'' that \algname hopes to obtain. The parameter $\wh \Delta^k$ adapts to the input parameter $\CR$ (or $\lambda$) as well as the gap-based complexity of the given MDP. Given $\wh \Delta^k$ at episode $k$, the policy $\pi_h^k$ at each step $h$ is specified in (\ref{eq:explore-exploit}) in the algorithm \polsimple: following our terminology above, a state $(x,h)$ is declared to be in the ``exploration'' phase if $\clipV_h^k(x) > \ggapfn{h}{\wh \Delta^k}$\footnote{Recall that for $a > 0$ and some constant $C_1$, we have defined $\ggapfn{h}{a} = C_1 \cdot \left(1 + \frac 1H \right)^{4(H+1-h)} \cdot a$ in \algname. Thus $\ggapfn{h}{\wh \Delta^k} = \Theta(\wh \Delta^k)$; the function  $\ggapfn{h}{\cdot}$ is introduced for technical considerations in the proof.}  (meaning there is still much uncertainy about the optimal value at $(x,h)$ relative to $\wh \Delta^k$), and is defined to be in the ``constrained exploitation'' phase otherwise (i.e., $\clipV_h^k(x) \leq \ggapfn{h}{\wh \Delta^k}$). We will show (in Lemmas \ref{lem:clip-monotonicity} and \ref{lem:misc-monotonicity}) that $\clipV_h^k(x)$ is nonincreasing with respect to $k$ and $\ggapfn{h}{\wh \Delta^k}$ is nondecreasing with respect to $k$; thus, each state can only move from the ``exploration'' to ``constrained exploitation'' phase.

\subsection{How to choose $\wh \Delta^k$}
As we discussed in the previous section, $\wh \Delta^k$ is chosen to adapt to the input parameter $\CR$ or $\lambda$. In the setting of Theorem \ref{thm:main-deltaconst}, where the user inputs a parameter $\CR$ representing the target worst-case regret bound, the choice of $\wh \Delta^k$ is extremely simple (Algorithm \ref{alg:delta-const}, \deltaconst): for all $k$, we set $\wh \Delta^k := \CR / (KH)$. In the setting of Theorem \ref{thm:main-deltaincr}, where the user inputs a parameter $\lambda$ specifying a trade-off between the worst-case and ideal-case settings, $\wh \Delta^k$ is set (in Algorithm \ref{alg:delta-incr}, \deltaincr) to a more complex expression which is a surrogate for the worst-case regret bound $\frac{1}{\lambda} \CC_{M, T,\lambda}$ (thus overcoming the challenge that the algorithm does not know $\CC_{M,T,\lambda}$). This surrogate uses the \emph{frozen range function} (defined in Definition \ref{def:frozen}), denoted $\frzQ_h^k(x,a)$, as a proxy for the action-value gaps $\Delta_h(x,a)$, for all $(x,a,h) \in \MS \times \MA \times [H]$. We refer the reader to Section \ref{sec:lambda-overview} for further details.

\begin{algorithm}[!htp]
  \caption{\bf \algname}\label{alg:main}
  \KwIn{State space $\MS$, action space $\MA$, horizon $H$, number of episodes $K$, predictions $\til Q_h : \MS \times \MA \ra [0,H]$ for all $h \in [H]$, parameter $\lambda \in [0,1]$.
    For some constant $C_1 > 0$ and $1 \leq h \leq H+1$, set, for $a > 0$, $\ggapfn{h}{a} = C_1 \cdot \left(1 + \frac 1H \right)^{4(H+1-h)} \cdot a$.} %
  \begin{enumerate}[leftmargin=14pt,rightmargin=20pt,itemsep=1pt,topsep=1.5pt]
\item \label{it:init-params} Initialize $N_h^1(x,a) = 0$, $\Ro_h^1(x,a) = \Qo_h^1(x,a) = \Vo_h^1(x,a) = \qo_h^1(x,a)= H$, $\Qu_h^1(x,a) = \Vu_h^1(x,a) = \qu_h^1(x,a)= 0$ for all $(x,a,h) \in \MS \times \MA \times [H]$. Also set $\til Q_h^1 = \til Q_h$ and $\til V_h^1(x) = \max_{a' \in \MA} \til Q_h^1(x,a')$ for all $(x,h) \in \MS \times [H]$. 
  Set $A_h^1(x) = \MA$ for all $(x,h) \in \MS \times [H]$, and $\MG_h^1 = \emptyset$ for all $h \in [H]$. %
  Set $\wh \Delta^1 \gets 0$. %
\item For episode $1 \leq k \leq K$:
  \begin{enumerate}
  \item  Receive $\pi^k$ and the policy rollout $(x_1^k, a_1^k), \ldots, (x_H^k, a_H^k)$ %
    from the \polsimple algorithm.\label{it:call-polsel}
  \item \label{it:update-wc} For each $h = 1, 2, \ldots, H$ such that $x_h^k \not \in \MG_h^k$: %
    \begin{enumerate}
      \item Set $N_h^{k+1} \gets N_h^k(x_h^k, a_h^k) + 1$, $n = N_h^{k+1}(x_h^k, a_h^k)$, and write $b_n = C_0 \sqrt{H^3 \iota/n}$.\label{it:nhk}
      \item Let $x_{h'}^k$ be the first state in the episode after $x_h^k$ so that $x_{h'}^k \not\in \MG_{h'}^k$ (if such $h'$ does not exist, set $h' = H+1$).
      \item Let $\wh r_h^k = \sum_{h'' = h}^{h'-1} r_{h''}(x_{h''}^k, a_{h''}^k)$.
      \item Set $\qo_h^{k+1}(x_h^k, a_h^k) \gets \min (1 - \alpha_n) \cdot \qo_h^k(x_h^k, a_h^k) + \alpha_n \cdot (\wh r_h^k + \Vo_{h'}^k(x_{h'}^k) + b_n)$.\label{it:qo-update-gap}
      \item Set $\Qo_h^{k+1}(x_h^k, a_h^k) \gets \min_{k' \leq k+1} \left\{\qo_h^{k'}(x_h^k, a_h^k) \right\}$. \label{it:Qo-qo-min}
      \item Set $\qu_h^{k+1}(x_h^k, a_h^k) \gets (1 - \alpha_n) \cdot \qo_h^k(x_h^k, a_h^k) + \alpha_n \cdot (\wh r_h^k + \Vu_{h'}^k(x_{h'}^k) - b_n)$.\label{it:qu-update-gap} %
      \item Set $\Qu_h^{k+1}(x_h^k, a_h^k) \gets \max_{k' \leq k+1} \left\{ \qu_h^{k'}(x_h^k, a_h^k) \right\}$. \label{it:Qu-qu-max}
      \item Set $\Vu_h^{k+1}(x_h^k) \gets \max_{a' \in A_h^k(x_h^k)} \{\Qu_h^{k+1}(x_h^k, a')\}$. \label{it:vu-define}%
      \item Set $\Vo_h^{k+1}(x_h) \gets \max_{a' \in A_h^k(x_h)} \{\Qo_h^{k+1}(x_h, a')\}$. \label{it:vo-define} %
      \end{enumerate}
    \item For all $(x,h) \in \MS \times [H]$, set $A_h^{k+1}(x) \gets \{ a' \in A_h^k(x) : \Qo_h^{k+1}(x,a') \geq \Vu_h^{k+1}(x) \}$.\label{it:def-active-set}
      
      \item For each $h = 1, 2, \ldots, H$:\label{it:update-preds}
        \begin{enumerate}
      \item Set $\Ro_h^{k+1}(x_h^k, a_h^k) \gets (1 - \alpha_{n}) \cdot \Ro_h^k(x_h^k, a_h^k) + \alpha_{n} \cdot (r_h(x_h^k,a_h^k) + \til V_{h+1}^k(x_{h+1}) + b_{n})$. \label{it:def-rohk}%
      \item %
        Set $\til Q_h^{k+1}(x_h^k,a_h^k) \gets \min \{\Ro_h^{k+1}(x_h^k,a_h^k), \til Q_h^k(x_h^k,a_h^k), \Qo_h^{k+1}(x_h^k, a_h^k) \}$.\label{it:def-tilq-hk}
      \item Set $\til V_h^{k+1}(x_h^k) \gets %
        \max_{a' \in A_h^{k+1}(x_h^k)} \max\{\til Q_h^{k+1}(x_h^k, a'), \Qu_h^{k+1}(x_h^k, a') \}$. %
      \end{enumerate}
    \item For all $h$ and  all $(x,a) \neq (x_h^k, a_h^k)$ set $N_h^{k+1}(x,a), \Qo_h^{k+1}(x,a), \Qu_h^{k+1}(x,a), \qo_h^{k+1}(x,a),\newline  \qu_h^{k+1}(x,a), \Vo_h^{k+1}(x), \Vu_h^{k+1}(x), \Ro_h^{k+1}(x,a), \til Q_h^{k+1}(x,a), \til V_h^{k+1}(x)$ equal to their values at episode $k$.  \label{it:copy-qv}
    \item For all $h \in [H]$, set $\MG_h^{k+1} \gets \{ x \in \MS : | A_h^{k+1}(x)| = 1 \}$.\label{it:def-ghk}

    \item Choose $\wh \Delta^{k+1}$ according to either \deltaconst or \deltaincr. %
  \end{enumerate}
  \end{enumerate}
\end{algorithm}

\begin{algorithm}
  \caption{\bf \polsimple}\label{alg:pol-simple}
  \KwIn{Internal state of the algorithm \algname at the beginning of episode $k$ (including, in particular, the previous pollicy rollouts, and the functions $\til Q_h^k, \Qu_h^k, \Qo_h^k$, as well as $\wh \Delta^k, \MG_h^k, A_h^k$).} %
  \begin{enumerate}[leftmargin=14pt,rightmargin=20pt,itemsep=1pt,topsep=1.5pt]
    \item For $h \in [H]$, construct $\RanV_h^k(\cdot)$ per Definition \ref{def:range-function}.
      \item Define the policy $\pi^k$ by, for $(x,h) \in \MS \times [H]$:\label{it:choose-policy}
    \begin{align}
      \pi^k_h(x) := \begin{cases}
        \mbox{The action in $A_h^k(x)$} \quad & \mbox{ if } |A_h^k(x)| = 1 \\
        \argmax_{a \in A_h^k(x)} \{ \max\{\til Q_h^k(x,a), \Qu_h^k(x,a) \} \} \quad & \mbox{ if } \RanV_h^k(x) \leq  \ggapfn{h}{\widehat \Delta^k}\\
        \argmax_{a \in A_h^k(x)} \{ \Qo_h^k(x,a) - \Qu_h^k(x,a) \} \quad & \mbox{ if } \RanV_h^k(x) >  \ggapfn{h}{\widehat \Delta^k}  %
      \end{cases}\label{eq:explore-exploit}
    \end{align}
  \item Let $(x_1^k, a_1^k), \ldots, (x_H^k, a_H^k)$ be a policy rollout obtained by following $\pi^k$.

  \item Return the policy $\pi^k$ and the policy rollout $(x_1^k, a_1^k), \ldots, (x_H^k, a_H^k)$. %
    \end{enumerate}
  \end{algorithm}

    \begin{algorithm}
    \caption{\bf \deltaconst}\label{alg:delta-const}
    \KwIn{Episode number $k$, input regret bound $\CR$ of \algname, and total number $K$ of episodes.
      }
\begin{enumerate}[leftmargin=14pt,rightmargin=20pt,itemsep=1pt,topsep=1.5pt]
\item Return
        \begin{align}
\wh \Delta^{k+1} := \CR / (KH) \label{eq:define-whdelta-const}.
      \end{align}
\end{enumerate}
  \end{algorithm}

  \begin{algorithm}
    \caption{\bf \deltaincr}\label{alg:delta-incr}
    \KwIn{Internal state of the algorithm $\algname$ at the beginning of episode $k+1$ (in particular, the necessary information to compute the frozen $Q$-range function), and parameters $\lambda \in [0,1]$ and $\tildm \geq 0$ (which is guaranteed to satisfy $\tildm \leq \delmin$). 
      }
\begin{enumerate}[leftmargin=14pt,rightmargin=20pt,itemsep=1pt,topsep=1.5pt]
\item \label{it:tfrzq-define} For $h \in [H]$, construct $\tfrzQ_h^{k+1}(\cdot)$ which is defined identically to $\frzQ_h^{k+1}(\cdot)$ per Definition \ref{def:frozen}, except with the parameter $\tildm$ replacing $\delmin$ in the clipped value functions $\clipV_h^{k+1}, \clipQ_h^{k+1}$. %
\item Return
        \begin{align}
\wh \Delta^{k+1} := \min \left\{\frac{H^5 \iota^2}{\lambda \cdot K} \cdot \sum_{(x,a,h)} \frac{1}{\max\left\{ \frac{\tfrzQ_h^{k+1}(x,a)}{2H}, \frac{\tildm}{4H^2} \right\} }, \sqrt{\frac{SAH^8\iota^2}{\lambda \cdot K}} \right\} \label{eq:define-whdelta-incr}.
      \end{align}
\end{enumerate}
  \end{algorithm}

  \section{Proof overview}
  \label{sec:proofs-overview}
  In this section we overview the proofs of Theorems \ref{thm:main-deltaincr} and \ref{thm:main-deltaconst}; we focus mainly on Theorem \ref{thm:main-deltaconst} since its proof is slightly simpler. At a high level, the key tools needed in the proof of Theorem \ref{thm:main-deltaconst} are as follows:
  \begin{enumerate}
  \item First, we need to define the the \emph{clipped range functions} (Definition \ref{def:half-clipped}), as in \cite{xu_fine-grained_2021}, which aid in proving gap-based bounds. 
  \item To prove the $O(\CR)$ regret bound for worst-case predictions (i.e., robustness, first item of Theorem \ref{thm:main-deltaconst}), we first prove a regret decomposition (Lemma \ref{lem:regret-decomposition}) showing that regret can be bounded in terms of the clipped $V$-range functions.
    
    In Lemma \ref{lem:w-bound-gap}, our main technical lemma for the worst-case regret bound, we then show how to bound the clipped $V$-range functions in the presence of arbitrary predictions $\til Q$ using certain \emph{monotonicity} properties of the value functions. %
  \item To establish the improved regret bounds for the case that $\til Q$ is an approximate distillation (second item of Theorem \ref{thm:main-deltaconst}), we first need to bound the number of episodes during which the predictions $\til Q$ are \emph{not} used to choose the policy. %

  Then we upper bound the value functions $\Ro_h^k, \til V_h^k$ at the set of episodes $k$ where the predictions \emph{are} used and show that doing so suffices to bound regret.
  \end{enumerate}

  The proof of Theorem \ref{thm:main-deltaincr} is similar to that of Theorem \ref{thm:main-deltaconst}. One additional tool needed (which shows up in the algorithm \deltaincr) is a variation of the clipped range functions that we call the \emph{frozen range functions} (Definition \ref{def:frozen}). %

In Sections \ref{sec:clipped-overview} through \ref{sec:distillation-overview} we expand upon the above items to overview the proof of Theorem \ref{thm:main-deltaconst}. In Section \ref{sec:lambda-overview} we overview the changes that must be made to \algname and the proof to establish Theorem \ref{thm:main-deltaincr}.
  \subsection{Clipped range functions}
  \label{sec:clipped-overview}
We begin by defining the \emph{clipped $Q$-value and $V$-value functions}, which were originally introduced in \cite{xu_fine-grained_2021} to obtain gap-based bounds on the regret (they play a similar role in this paper). %
For real numbers $x,y$, define the \emph{clip function} as follows: $\clip{x}{y} := \One[x \geq y] \cdot x$.
\begin{defn}[Clipped range function, \cite{xu_fine-grained_2021}]
  \label{def:half-clipped}
  For all $(x,a,h,k) \in \MS \times \MA \times [H] \times [K]$ for which $x \not \in \MG_h^k$ and $a \in A_h^k(x)$, define the \emph{clipped range $Q$-function} as follows: set $\clipQ_h^0(x,a) = H$, and
  \begin{align}
    \clipQ_h^k(x,a) := & \min \left\{ \clipQ_h^{k-1}(x,a),\ \alpha_n^0 H + \clip{\beta_n}{\frac{\delmin}{4H^2}} + \sum_{i=1}^n \alpha_n^i \cdot \clipV_{h'(k_h^i, h)}^{k_h^i}(x_{h'(k_h^i,h)}^{k_h^i}) \right\} \label{eq:clipq-define}\\
    & \quad \mbox{where}
     \quad n = N_h^k(x,a),\ k_h^i = k_h^i(x,a) \ \forall i \in [n]\nonumber.
  \end{align}
  Moreover, for $(x,h,k) \in \MS \times [H] \times [K]$ for which $x \not \in \MG_h^k$, define the \emph{clipped range $V$-function} as follows: set $\clipV_h^0(x) = H$, and
  \begin{align}
\clipV_h^k(x) := \min \{ \clipV_h^{k-1}(x),\ \clipQ_h^k(x, a^\st)\} \quad \mbox{for} \quad a^\st = \argmax_{a' \in A_h^k(x)} \Qo_h^k(x,a') - \Qu_h^k(x,a')\nonumber.
  \end{align}
  Finally, define $\clipV_{H+1}^k(x) = \clipQ_{H+1}^k(x,a) = 0$ for all $x,a,k$.
\end{defn}
The clipped range functions $\clipQ_h^k(x,a), \clipV_h^k(x)$ are defined to satisfy a similar recursion as the quantities $\Qo_h^k(x,a) - \Qu_h^k(x,a)$ and $\Vo_h^k(x) - \Vu_h^k(x)$ (see (\ref{eq:qo-qu-gap})). Unlike in (\ref{eq:qo-qu-gap}), in the definition of $\clipQ_h^k, \clipV_h^k$, the bonuses $\beta_n$ are clipped, leading $\clipV_h^k, \clipQ_h^k$ to be smaller than their unclipped counterparts (Lemma \ref{lem:clip-lt-range}), which aids in obtaining gap-based regret bounds. Despite this clipping, the combination of Lemmas \ref{lem:range-bound} and \ref{lem:hc-range-bound} shows that, with high probability, for all $x,a,h,k$, the clipped range functions are still approximately lower bounded by the gap between the upper and lower $Q,V$-values, as follows:
\begin{align}
\clipQ_h^k(x,a) \geq \Qo_h^k(x,a) - \Qu_h^k(x,a) - \frac{\delmin}{4H}, \qquad \clipV_h^k(x) \geq \Vo_h^k(x) - \Vu_h^k(x) - \frac{\delmin}{4H}\label{eq:range-bounds-informal}.
\end{align}

\subsection{Worst-case regret bound}
\label{sec:worstcase-overview}
In this section we overview the proof that \algname achieves regret $O(\CR)$ for arbitrary predictions $\til Q$ in the setting of Theorem \ref{thm:main-deltaconst}. %
For all $(h,k)$ for which $x_h^k \not \in \MG_h^k$, define
$ 
\clipdelta_h^k = \clipV_h^k(x_h^k).
$
Using (\ref{eq:range-bounds-informal}), the following regret decomposition is straightforward to prove (it is similar to that in Lemma B.6 of \cite{xu_fine-grained_2021}).
\begin{replemma}{lem:regret-decomposition}[Regret decomposition; abbreviated]
There is an event $\MEwc$ that occurs with probability at least $1-1/(H^2 K)$ so that the regret of \algname may be bounded as follows:
    \begin{align}
\sum_{k=1}^K \E \left[ V_1^\st(x_1^k) - V_1^{\pi^k}(x_1^k)\right] \leq 1 + 4 \cdot \E \left[ \left. \sum_{(k,h) : a_h^k \not \in \Aopt_{h,0}(x_h^k)} \clipdelta_h^k \right| \MEwc \right] \nonumber.
    \end{align}
    (The right-hand side of the above expression makes sense since under the event $\MEwc$, it turns out that for all $(k,h)$ so that $a_h^k \not \in \Aopt_{h,0}(x_h^k)$, $x_h^k \not \in \MG_h^k$, i.e., $\clipdelta_h^k$ is well-defined.)
  \end{replemma}

  Lemma \ref{lem:regret-decomposition} reduces the problem of bounding the regret to bounding the clipped value functions $\clipdelta_h^k$ for $h,k$ such that $x_h^k \not \in \MG_h^k$. In turn, we bound $\clipdelta_h^k$ in Lemma \ref{lem:w-bound-gap}, of which a simplified version combining it with Lemma \ref{lem:frzq-gap} is presented below:
  \begin{replemma}{lem:w-bound-gap}[Abbreviated \& combined with Lemma \ref{lem:frzq-gap}]
    Fix any $h \in [H]$, any set $\MW \subset [K]$ so that for all $k \in \MW$, $x_h^k \not \in \MG_h^k$, and any $k^\st \geq \max_{k \in \MW} \{ k \}$. Then
    \begin{align}
\sum_{k \in \MW} \clipdelta_h^k \leq & | \MW| \cdot \ggapfn{h}{\wh \Delta^{k^\st}} + O \left( \min \left\{ \sqrt{H^5 SA |\MW| \iota}, \ H^5 \iota \cdot \left( \sum_{(x,a,h')} \frac{1}{\Delta_{h'}(x,a)} + \frac{|\Amul|}{\delmin} \right)\right\} \right)\label{eq:frzq-gap-informal}.
    \end{align}
  \end{replemma}
  The starting point for the proof of Lemma \ref{lem:w-bound-gap} is to use the definition of the clipped value functions $\clipV_h^k$ together with reverse induction on $h$ (i.e., bounding the values $\clipdelta_h^k$ in terms of $\clipdelta_{h'}^k$ for $h' > h$) in a similar manner as was done in Lemma B.8 of \cite{xu_fine-grained_2021}.
  However, the proof of Lemma \ref{lem:w-bound-gap} must depart from that of \cite[Lemma B.8]{xu_fine-grained_2021} since in the \polsimple subroutine of \algname, we do not always choose the action $a \in A_h^k(x)$ maximizing the \emph{confidence interval}, i.e., maximizing $\Qo_h^k(x,a) - \Qu_h^k(x,a)$. Typically such an action choice maximizing the confidence interval is necessary to upper bound the values $\clipdelta_h^k$. %
  We are able to nevertheless bound $\clipdelta_h^k$ using the fact that for steps $(h,k)$ for which we do \emph{not} choose the action $a$ maximizing the confidence interval (i.e., we are in the constrained exploitation phase), it must hold that $\clipdelta_h^k = \clipV_h^k(x_h^k) \leq \RanV_h^k(x_h^k) \leq \ggapfn{h}{\wh \Delta^k}$. This observation leads to the quantity $|\MW| \cdot \ggapfn{h}{\wh \Delta^{k^\st}}$ on the right-hand side of (\ref{eq:frzq-gap-informal}).

  The proof of Lemma \ref{lem:w-bound-gap} is made somewhat more complex by the fact that the choice of action at step $h$ affects $\clipdelta_{h'}^k$ for $h' < h$ and various $k$ (via the reverse induction argument), and without care one will end up with a multiplier of $\ggapfn{h}{\wh \Delta^{k^\st}}$ in (\ref{eq:frzq-gap-informal}) that is much larger than $|\MW|$. 
  To avoid this complication, we must carefully account for the effect the values $\clipdelta_h^k$ have on the bounds we can prove on $\clipdelta_{h'}^k$ for $h' < h$. To do so we make use of a monotonicity propery of the clipped value functions (Lemma \ref{lem:clip-monotonicity}, showing that $\clipV_h^k(x)$ is non-increasing with $k$) and introduce the notion of \emph{level-$h$} sets (Definition \ref{def:levelh}) which are intermediate sets $\MW'$ of tuples $(k',h')$ for which we need to bound $\sum_{(k',h') \in \MW'} \clipdelta_{h'}^{k'}$ in the course of the induction.

  \paragraph{Completing the worst-case regret bound}
  The proof of item \ref{it:worstcase-deltaconst} of Theorem \ref{thm:main-deltaconst} is fairly straightforward given the above components; the details are worked out in Lemma \ref{lem:worst-case-const}. The dominant term in the bound (\ref{eq:frzq-gap-informal}) turns out to be $| \MW| \cdot \ggapfn{h}{\wh \Delta^{k^\st}}$, which due to the choice $\wh \Delta^k = \CR/(HK)$ and the bound $|\MW| \leq K$, leads to the bound $O(\CR)$ on regret. 

  \subsection{Exploration-constrained exploitation cutoffs}
  \label{sec:cutoffs-overview}
  Before discussing the proof of the improved regret bound for the case of $\til Q$ being an approximate distillation, we introduce the following notation relating to the exploration and constrained exploitation phases in \algname that we discussed above. %
  For $(k,h) \in [K] \times [H]$, define $\tau_h^k \in \{0,1\}$ as follows:
  \begin{align}
    \label{eq:define-tau}
    \tau_h^k = \begin{cases}
      0 \quad & \mbox{ if } x_h^k \in \MG_h^k \mbox{ or } \RanV_h^k(x_h^k) \leq \ggapfn{h}{\wh \Delta^k} \\
      1 \quad & \mbox{ otherwise}.
    \end{cases}
  \end{align}
  The parameter $\tau_h^k$ is the indicator of whether \algname is in the exploration or constrained exploitation step at step $h$ of episode $k$: if $\tau_h^k = 0$, then we have either determined the optimal action at $x_h^k$ (i.e., $x_h^k \in \MG_h^k$), or else the range function $\RanV_h^k(x_h^k)$ is sufficiently small, so we engage in constrained exploitation (see the choice of $\pi_h^k$ in (\ref{eq:explore-exploit}), which chooses $ a' \in A_h^k(x_h^k)$ maximizing $\max \{ \til Q_h^k(x_h^k, a'), \Qu_h^k(x_h^k, a') \}$), and otherwise, if $\tau_h^k =1$, we use optimistic exploration, choosing $a' \in A_h^k(x_h^k)$ to maximize the confidence interval.

  Note that the parameters $\tau_h^k$ depend on the \emph{unclipped} range functions $\RanV_h^k$; as we have discussed above, in order obtain our gap-based bounds, it is necessary to bound the \emph{clipped} range functions $\clipV_h^k$. Therefore, when reasoning about the exploration and constrained exploitation phases, we will additionally introduce the parameters $\sigma_h^k \in \{0,1\}$ (for $(k,h) \in [K] \times [H]$), which are defined similarly to $\tau_h^k$ except with respect to $\clipV_h^k$:
  \begin{align}
    \label{eq:define-sigma}
    \sigma_h^k = \begin{cases}
      0 \quad & \mbox{ if } x_h^k \in \MG_h^k \mbox{ or } \clipV_h^k(x_h^k) \leq \frac{1}{1+\frac 1H} \cdot \ggapfn{h}{\wh \Delta^k} \\
      1 \quad & \mbox{ otherwise}.
    \end{cases}
  \end{align}
  The parameters $\sigma_h^k$ can be thought of as a proxy for the true exploration parameters $\tau_h^k$. As discussed in the following section, in order to establish improved regret bounds for the case that $\til Q$ is an approximate distillation, we need to, loosely speaking, upper bound the number of episodes in which we engage in exploration (i.e., in which the predictions $\til Q$ are \emph{not} used). For technical reasons, it turns out to be more convenient to bound the number of $(k,h)$ so that $\sigma_h^k = 1$ (as opposed to bounding the number of $(k,h)$ so that $\tau_h^k = 1$).

\subsection{Proofs for $\til Q$ an approximate distillation}
\label{sec:distillation-overview}
Now we discuss the proof of item \ref{it:distillation-deltaconst} of Theorem \ref{thm:main-deltaconst}; the proof of item \ref{it:distillation-deltaincr} of Theorem \ref{thm:main-deltaincr} is very similar (see Section \ref{sec:proofs-distillation} for the full proof). %
As discussed in the previous section, the first step is to bound the number of episodes for which we do \emph{not} engage in constrained exploitation; in particular, for each $h$, we bound the number of $k$ for which $\sigma_h^k = 1$: %
\begin{replemma}{lem:bound-sigma}
Suppose \algname is run with \deltaconst to choose the values $\wh \Delta^k$. Then for all $h \in [H]$, the number of episodes $k \in [K]$ for which $\sigma_h^k = 1$ is at most $\max\{ SAH^3, \wh \lambda \cdot K\}$. (Recall that $\wh \lambda$ is chosen so that $\CR = \frac{1}{\wh \lambda} \cdot \CC_{M,T,\wh \lambda}$.)
\end{replemma}
We write $\wh \Delta = \wh \Delta^k$ (as all $\wh \Delta^k$ are equal). 
Also, for any $h \in [H]$, write $\MY_h := \{ k : \sigma_h^k = 1 \}$. The main tool in the proof of Lemma \ref{lem:bound-sigma} is Lemma \ref{lem:w-bound-gap}, which upper bounds $\sum_{k \in \MY_h} \clipdelta_h^k$ by the sum of $|\MY_h| \cdot (1+1/H)^2 \cdot \ggapfn{h+1}{\wh \Delta}$ and some additional terms. On the other hand, that $\sigma_h^k = 1$ implies that $\clipdelta_h^k \geq \frac{1}{1+1/H} \cdot \ggapfn{h}{\wh \Delta}$. These facts (together with the fact that $\ggapfn{h}{\wh \Delta}$ is greater than $\ggapfn{h+1}{\wh \Delta}$ by a factor of $(1+1/H)^4$) allow us to upper bound $|\MY_h|$ in terms of an expression which ultimately simplifies to $\max \{ SAH^3, \wh \lambda \cdot K \}$. 

\paragraph{Regret decomposition and induction}
Given Lemma \ref{lem:bound-sigma}, we proceed to complete the proof of item \ref{it:distillation-deltaconst} of Theorem \ref{thm:main-deltaconst}. The first step is the following regret decomposition (stated in (\ref{eq:2tilv-ub})), which follows from the fact that $\til Q$ is an $\ep$-approximate distillation as well as the definition of $\til V_h^k$ in \algname: for any $\ep' > 0$, we have
\begin{align}
  & \sum_{k=1}^K \E\left[V_1^\st(x_1^k) - V_1^{\pi^k}(x_1^k)\right] \nonumber\\
  \leq & O(KH(\ep H + \ep')) + \E \left[\left.  \sum_{k=1}^K \sum_{h=1}^H  (1 - \tau_h^k) \cdot \One[a_h^k \not \in \Aopt_{h,\ep'}(x_h^k)] \cdot (\Ro_h^k(x_h^k, a_h^k) - Q_h^\st(x_h^k,a_h^k)) + \sum_{k=1}^K \sum_{h=1}^H 4 \sigma_h^k \clipdelta_h^k \right| \MEwc \right]\label{eq:dis-reg-informal}.
\end{align}
The above regret decomposition reduces bounding the regret to bounding the following two types of quantities (under the event $\MEwc$):
\begin{enumerate}
\item The quantity $\Ro_h^k(x_h^k, a_h^k) - Q_h^\st(x_h^k, a_h^k)$, for $(k,h)$ satisfying $\tau_h^k = 0$ and $a_h^k \not \in \Aopt_{h,\ep'}(x_h^k)$;\label{it:ro-qh-tau}
\item The quantity $\clipdelta_h^k$ for $(k,h)$ satisfying $\sigma_h^k = 1$.\footnote{Note that $\sigma_h^k = 1$ implies that $x_h^k \not \in \MG_h^k$, which implies that $\clipdelta_h^k$ is indeed well-defined.} \label{it:clipdelta-sigma}
\end{enumerate}
The latter of these quantities (i.e., item \ref{it:clipdelta-sigma}) is straightforward to control: for each $h \in [H]$, we use Lemma \ref{lem:w-bound-gap} with the set $\MW$ equal to the set of $k$ so that $\sigma_h^k = 1$ and $k^\st = K$. Crucially, the conclusion of Lemma \ref{lem:bound-sigma} above gives that $|\MW| \leq \max \{ SAH^3, \wh \lambda \cdot K \}$, which, together with the inequality (\ref{eq:frzq-gap-informal}) of Lemma \ref{lem:w-bound-gap}, gives us that $\sum_{k=1}^K \sum_{h=1}^H \sigma_h^k \clipdelta_h^k$ may be bounded by $O(\CC_{M,T,\wh \lambda})$. This argument is carried out formally in Lemma \ref{lem:sigma-delta-const}.

\paragraph{Bounding $\Ro_h^k, \til V_h^k$ on non-exploratory episodes}
We next describe how the quantity in item \ref{it:ro-qh-tau} above is bounded. Our general strategy is to use the definition of $\Ro_h^k$ in terms of $\til V_{h+1}^k$ (step \ref{it:def-rohk} of \algname) to bound the gaps $\Ro_h^k(x_h^k, a_h^k) - Q_h^\st(x_h^k, a_h^k)$ at step $h$ in terms of the gaps $\til V_{h+1}^{k'}(x_{h+1}^{k'}) - V_{h+1}^\st(x_{h+1}^{k'})$ at step $h+1$, for appropriate choices of $k'$. In turn, we will bound the gaps $\til V_{h+1}^{k'}(x_{h+1}^{k'}) - V_{h+1}^\st(x_{h+1}^{k'})$ in terms of the gaps $\Ro_{h+1}^{k'}(x_{h+1}^{k'}, a_{h+1}^{k'}) - Q_{h+1}^\st(x_{h+1}^{k'}, a_{h+1}^{k'})$, completing the inductive step. When proving these bounds, we must take care to meet our goal of obtaining a regret bound (see (\ref{eq:fooling-regret})) that only has terms corresponding to tuples $(x,a,h)$ belonging to the fooling set $\fool{\ep(H+1)}{\ep'}$. To do so, we use the following claim:
\begin{repclaim}{clm:aopt-fe}
  For any $(k,h)$ satisfying $\tau_h^k = 0$, if  either
  \begin{enumerate}
  \item $a_h^k \not \in \Aopt_{h,\ep'}(x_h^k)$; or
  \item $(\til V_h^k - V_h^\st)(x_h^k) > \ep'$,
  \end{enumerate}
  then under the event $\MEwc$ it holds that $(x_h^k, a_h^k, h) \in \fool{\ep(H+1)}{\ep'}$. 
\end{repclaim}
Claim \ref{clm:aopt-fe} allows us to upper bound the term $\sum_{k=1}^K \sum_{h=1}^H  (1 - \tau_h^k) \cdot \One[a_h^k \not \in \Aopt_{h,\ep'}(x_h^k)] \cdot (\Ro_h^k(x_h^k, a_h^k) - Q_h^\st(x_h^k,a_h^k))$ in the regret decomposition (\ref{eq:dis-reg-informal}) with a sum of $\Ro_h^k(x,a) - Q_h^\st(x,a)$ over only those $(x,a,h) \in \fool{\ep(H+1)}{\ep'}$. In turn, for such tuples $(x,a,h)$, it is possible to upper bound $\Ro_h^k(x,a) - Q_h^\st(x,a)$ in terms of the sum of $\beta_n$ (for $n = N_h^k(x,a)$) and a weighted sum of $(\til V_{h+1}^{k'} - V_{h+1}^\st)(x_{h+1}^{k'})$ for certain values of $k'$ (see Lemma \ref{lem:ep-good-preds}). The terms $\beta_n$ in this sum form the main contribution to the regret bound (\ref{eq:fooling-regret}); crucially we use the fact that we only have such terms for $(x,a,h) \in \fool{\ep(H+1)}{\ep'}$.

Finally, when completing the inductive step by bounding the gaps  $\til V_{h+1}^{k'}(x_{h+1}^{k'}) - V_{h+1}^\st(x_{h+1}^{k'})$, we again have to ensure that we only use terms of the form $(\Ro_{h+1}^{k'}(x,a) - Q_{h+1}^\st(x,a))$ in our upper bound for which $(x,a,h+1) \in \fool{\ep(H+1)}{\ep'}$. For this we again use Claim \ref{clm:aopt-fe} (with the second option). We refer the reader to Section \ref{sec:regret-distillation} for further details. %

\subsection{Proof of Theorem \ref{thm:main-deltaincr}: implicit-$\lambda$ bound}
\label{sec:lambda-overview}
The proof of Theorem \ref{thm:main-deltaincr} is similar to that of Theorem \ref{thm:main-deltaconst}. The main difference is that, because the algorithm is not given as input the target worst-case regret bound $\CR$ (which in turn is used to choose $\wh \Delta^k$ in \deltaconst for the proof of Theorem \ref{thm:main-deltaconst}), it must construct a proxy value to assign to $\wh \Delta^k$. This proxy is constructed in \deltaincr (Algorithm \ref{alg:delta-incr}): for each episode $k$, $\wh \Delta^k$ is set in (\ref{eq:define-whdelta-incr}) to equal an expression which resembles the definition of $\frac{1}{\lambda} \cdot \CC_{M,T,\lambda}$ in (\ref{eq:lambda-complexity}), except that (a) the minimum gap $\delmin$ is replaced with the provided lower bound $\tildm$, and (b) the gaps $\Delta_h(x,a)$ are replaced the the \emph{frozen range function} $\frzQ_h^k(x,a)$, defined below:%
\begin{defn}[Frozen range function]
  \label{def:frozen}
  For all $(x,a,h,k) \in \MS \times \MA \times [H] \times [K]$, define the \emph{frozen $Q$-function}, $\frzQ_h^k(x,a)$, as follows: given $(x,a,h)$, choose $k'  \leq k$ as large as possible so that $(x_h^{k'}, a_h^{k'}) = (x,a)$ and $\tau_h^{k'} = 1$ (if no such $k'$ exists, set $k' = 1$). Then set $\frzQ_h^k(x,a) = \clipQ_h^{k'}(x,a)$.
\end{defn}
In Lemma \ref{lem:frzq-gap} we show, roughly speaking, that the frozen range function at the final episode, namely $\frzQ_h^K(x,a)$, is still lower bounded by the gap $\Delta_h(x,a)$, justifying its use a surrogate for the gaps. The main challenge in the proof of Theorem \ref{thm:main-deltaincr}, beyond those from Theorem \ref{thm:main-deltaconst}, is the fact that $\wh \Delta^k$ changes as $k$ increases (in fact, as shown in Lemma \ref{lem:misc-monotonicity}, $\wh \Delta^k$ is non-decreasing with $k$). Most notably, this affects the proof of our bound on the number of episodes $k$ for which $\sigma_h^k = 1$ (Lemma \ref{lem:bound-sigma-incr}; the analogous lemma for \deltaconst is Lemma \ref{lem:bound-sigma}). To prove Lemma \ref{lem:bound-sigma-incr}, we partition $[K]$ into $O(\iota \cdot H)$ contiguous intervals so that inside each interval, $\wh \Delta^k$ increases by a factor of at most $1+1/H$. For each such interval $I \subset [K]$, we bound the number of $k \in I$ so that $\sigma_h^k = 1$; this leads to an increase in our regret bounds by a factor of $O(\iota H)$.

\section{Proofs for worst-case result}
\label{sec:proofs-robustness}
In this section we establish the robustness upper bounds of Theorems \ref{thm:main-deltaincr} and \ref{thm:main-deltaconst}, giving a regret bound for \algname when the user provides \emph{arbitrary} predictions $\til Q_h$. 
\subsection{Bounds on confidence intervals}
\label{sec:bound-cis}
We begin by establishing various basic guarantees on the bounds $\Qo_h^k, \Qu_h^k, \Vo_h^k, \Vu_h^k$ maintained by \algname. The first such result is Lemma \ref{lem:qo-qu-gap}, which establishes that, with high probability, $\Qo_h^k$ is an upper bound on $Q_h^\st$, $\Qu_h^k$ is a lower bound on $Q_h^\st$, and similarly for $\Vo_h^k, \Vu_h^k$ (with respect to $V_h^\st$). Before stating it, we introduce the following notation: for each $k \in [K]$, let $\MH_{k}$ denote the $\sigma$-algebra generated by all random variables up to step $H$ of episode $k$. For each $k \in [K]$ as well as $(x,a,h) \in \MS \times \MA \times [H]$, define the quantities $Q_h^{\st,k,b}(x,a)$ and $Q_h^{\st,k,r}(x,a)$ (as in \cite{xu_fine-grained_2021}) as follows: suppose we start in state $x$ at level $h$, and follow the optimal policy $\pi^\st$, generating the (random) trajectory $x_h = x, x_{h+1}, \ldots, x_H$. Choose $h' \geq h+1$ as small as possible so that $x_{h'} \not \in\MG_{h'}^k$, and write
\begin{align}
Q_h^{\st,k,b}(x,a) := \E \left[ \sum_{\ell=h}^{h'-1}  r_\ell(x_\ell, \pi_\ell^\st(x_\ell)) | \MH_{k-1} \right], \qquad Q_h^{\st, k,r}(x,a) := \E[V_{h'}^\st(x_{h'}) | \MH_{k-1}]\label{eq:qstar-decomposition-def}.
\end{align}
(Note that $\MG_{h'}^k$ is $\MH_{k-1}$-measurable for all $h'$ and $k$.) 
It is immediate that
\begin{align}
Q_h^\st(x,a) = Q_h^{\st,k,b}(x,a) + Q_h^{\st,k,r}(x,a)\nonumber.
\end{align}
As in \cite{xu_fine-grained_2021}, we use the quantities $\wh r_h^k$ as an unbiased estimate of $Q_h^{\st,k,b}(x_h^k, a_h^k)$. Recall that for some constant $C_0 > 1$, we use exploration bonuses $b_n = C_0 \sqrt{H^3 \iota /n}$, and recall the definition of the aggregated bonuses $\beta_n$ in (\ref{eq:define-beta}). %
Notice that item \ref{it:alpha-sum-sup} of Lemma \ref{lem:alpha} gives that
\begin{align}
2C_0 \sqrt{H^3 \iota/n} \leq \beta_n \leq 4 C_0 \sqrt{H^3 \iota/n}\label{eq:beta-bound}.
\end{align}
For future reference, we will also define the constants
\begin{align}
  \label{eq:define-constants}
  C_2 = 8C_0,
  \qquad C_1 = 32e^2 C_2^2 .
\end{align}

\begin{lemma}
  \label{lem:qo-qu-gap}
Set $p = 1/(H^2 K)$. For a sufficiently large choice of the constant $C_0$, there is an event $\MEwc$ occurring with probability $1-p$ so that the following holds under the event $\MEwc$, for all episodes $k \in [K]$:
  \begin{enumerate}
  \item \label{it:3-ineqs}   For any $(x,a,h) \in \MS \times \MA \times [H]$ so that $x \not\in \MG_h^k$ and $a \in A_h^k(x)$, suppose the episodes $k'$ in which $(x,a)$ as previously taken at step $h$ are denoted $k^1, \ldots, k^n \leq k$. Then the following inequalities hold:
  \begin{align}
    \Qo_h^{k+1}(x,a) - \Qu_h^{k+1}(x,a) \leq & \alpha_n^0 \cdot H + \sum_{i=1}^n \alpha_n^i \cdot \left( (\Vo_{h'(k^i,h)}^{k^i} - \Vu_{h'(k^i,h)}^{k^i})(x_{h'(k^i,h)}^{k^i}) \right) + \beta_n \label{eq:qo-qu-gap}\\
    \Qo_h^{k+1}(x,a) \geq & Q_h^\st(x,a) \geq \Qu_h^{k+1}(x,a) \label{eq:qo-qstar-qu-gap} \\
\Vo_h^{k+1}(x) \geq & V_h^\st(x) \geq \Vu_h^{k+1}(x) \label{eq:vo-vstar-vu-gap}. %
  \end{align}
\item \label{it:correct-elimination} Second, for all $(x,h) \in \MS \times [H]$ all optimal actions $a$ (i.e., those $a$ satisfying $\Delta_h(x,a) = 0$) are in $A_h^{k+1}(x)$. In particular, for all $x \in \MG_h^{k+1}$, $A_h^{k+1}(x)$ contains the unique optimal action at $x$. 
\end{enumerate}
\end{lemma}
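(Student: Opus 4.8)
The plan is to define the event $\MEwc$ purely through concentration of certain martingale difference sequences, and then prove all of (\ref{eq:qo-qu-gap})--(\ref{eq:vo-vstar-vu-gap}) together with item \ref{it:correct-elimination} by a single induction on the episode index $k$; carrying the correct-elimination claim along in the induction is essential, since it is exactly what makes the concentration argument valid.

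\emph{Defining $\MEwc$.} Fix $(x,a,h)\in\MS\times\MA\times[H]$ and enumerate the episodes $k^1<k^2<\cdots$ in which $a$ is taken at state $x$ and step $h$; for the $i$-th such visit write $h'_i:=h'(k^i,h)$ and set $\xi_i:=\wh r_h^{k^i}+V_{h'_i}^\st(x_{h'_i}^{k^i})-Q_h^\st(x,a)$ (and $\xi_i:=0$ if fewer than $i$ such visits occur). The first claim is that $\{\xi_i\}_i$ is a martingale difference sequence with $|\xi_i|\le H$: conditioned on the trajectory up to the moment $(x_h^{k^i},a_h^{k^i})=(x,a)$ is realized, the intermediate states $x_{h+1}^{k^i},\dots,x_{h'_i-1}^{k^i}$ lie in the sets $\MG_{h''}^{k^i}$ (for $h<h''<h'_i$), each of which, by the inductively-maintained correct-elimination property, contains only the unique optimal action there; hence $\pi^{k^i}$ agrees with $\pi^\st$ on steps $h+1,\dots,h'_i-1$, and since each $\MG_{h''}^{k^i}$ is $\MH_{k^i-1}$-measurable, $h'_i$ is a stopping time along the trajectory. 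Optional stopping for the Bellman-optimality martingale then gives $\E[\wh r_h^{k^i}+V_{h'_i}^\st(x_{h'_i}^{k^i})\mid\cdot]=r_h(x,a)+(\BP_hV_{h+1}^\st)(x,a)=Q_h^\st(x,a)$, which is exactly the decomposition $Q_h^\st=Q_h^{\st,k,b}+Q_h^{\st,k,r}$ from (\ref{eq:qstar-decomposition-def}). A weighted Azuma--Hoeffding bound, using $\sum_{i=1}^n(\alpha_n^i)^2\le 2H/n$ (Lemma \ref{lem:alpha}), then gives $\Pr[\,|\sum_{i=1}^n\alpha_n^i\xi_i|>\beta_n/2\,]\le\exp(-\Omega(C_0^2\iota))$ via $\beta_n\ge 2C_0\sqrt{H^3\iota/n}$ from (\ref{eq:beta-bound}); taking $C_0$ a large enough constant and union-bounding over all $(x,a,h)$ and all $n\in[K]$ makes the total failure probability at most $p=1/(H^2K)$. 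Define $\MEwc$ to be the event that $|\sum_{i=1}^n\alpha_n^i\xi_i|\le\beta_n/2$ for all such $(x,a,h)$ and all $n\in[K]$.

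\emph{The induction.} Work under $\MEwc$ and induct on $k$, the base case being the initialized values of step \ref{it:init-params}. Monotonicity of $\Qo_h^k,\Vo_h^k$ (non-increasing in $k$) and $\Qu_h^k,\Vu_h^k$ (non-decreasing) is immediate from the running $\min$/$\max$ in steps \ref{it:Qo-qo-min}, \ref{it:Qu-qu-max}, \ref{it:vo-define}, \ref{it:vu-define}. Unrolling the update of $\qo$ in step \ref{it:qo-update-gap}, using $\alpha_0^0=1$ and $\sum_{i=0}^n\alpha_n^i=1$, gives after the $n$-th visit
\[
\qo_h^{k^n+1}(x,a)=\alpha_n^0 H+\sum_{i=1}^n\alpha_n^i\bigl(\wh r_h^{k^i}+\Vo_{h'_i}^{k^i}(x_{h'_i}^{k^i})+b_i\bigr),
\]
and the analogous identity for $\qu$ with $0$ replacing $H$, $\Vu$ replacing $\Vo$, and $-b_i$ replacing $b_i$. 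Since $\Qo_h^{k+1}(x,a)\le\qo_h^{k^n+1}(x,a)$ and $\Qu_h^{k+1}(x,a)\ge\qu_h^{k^n+1}(x,a)$ (these $\qo,\qu$ values being constant from visit $k^n$ onward), subtracting and using $\beta_n=2\sum_{i=1}^n\alpha_n^i b_i$ from (\ref{eq:define-beta}) yields (\ref{eq:qo-qu-gap}), with the case $n=0$ being the initialization. For (\ref{eq:qo-qstar-qu-gap}), rewrite $\wh r_h^{k^i}+\Vo_{h'_i}^{k^i}(x_{h'_i}^{k^i})+b_i=\xi_i+Q_h^\st(x,a)+(\Vo_{h'_i}^{k^i}-V_{h'_i}^\st)(x_{h'_i}^{k^i})+b_i$: the inductive hypothesis (at episodes $k^i\le k$) gives $\Vo_{h'_i}^{k^i}\ge V_{h'_i}^\st$; $\MEwc$ gives $\sum_i\alpha_n^i\xi_i\ge-\beta_n/2=-\sum_i\alpha_n^i b_i$; and $\alpha_n^0 H\ge\alpha_n^0 Q_h^\st(x,a)$. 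Hence $\qo_h^{k^n+1}(x,a)\ge Q_h^\st(x,a)$, and the same inequality at every visit gives $\Qo_h^{k+1}(x,a)\ge Q_h^\st(x,a)$; the mirror computation (using $\Vu_{h'_i}^{k^i}\le V_{h'_i}^\st$) gives $\Qu_h^{k+1}(x,a)\le Q_h^\st(x,a)$. For (\ref{eq:vo-vstar-vu-gap}): only $x=x_h^k$ (necessarily with $x_h^k\notin\MG_h^k$) has its $V$-values updated, and the maxima in steps \ref{it:vo-define}, \ref{it:vu-define} range over $A_h^k(x_h^k)$, which contains $\pi_h^\st(x_h^k)$ by the inductive correct-elimination property; thus $\Vo_h^{k+1}(x_h^k)\ge\Qo_h^{k+1}(x_h^k,\pi_h^\st(x_h^k))\ge V_h^\st(x_h^k)$ and $\Vu_h^{k+1}(x_h^k)=\max_{a'\in A_h^k(x_h^k)}\Qu_h^{k+1}(x_h^k,a')\le\max_{a'}Q_h^\st(x_h^k,a')=V_h^\st(x_h^k)$, with all other states unchanged. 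Finally, for item \ref{it:correct-elimination}: if $a$ is optimal at $(x,h)$ then $a\in A_h^k(x)$ by induction, and by step \ref{it:def-active-set} it remains in $A_h^{k+1}(x)$ since $\Qo_h^{k+1}(x,a)\ge Q_h^\st(x,a)=V_h^\st(x)\ge\Vu_h^{k+1}(x)$ (and when $x\in\MG_h^k$ no update occurs and the property is inherited).

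\emph{Main obstacle.} The crux is the first step: verifying that $\{\xi_i\}$ is genuinely a martingale difference sequence. Because of the adaptive multi-step bootstrap this is not a one-step concentration argument --- one must maintain the correct-elimination property within the same induction, as it is precisely what forces the intermediate states of each bootstrap window to evolve as under $\pi^\st$, and one must observe that $h'(k^i,h)$ is a stopping time measurable with respect to $\MH_{k^i-1}$ so that optional stopping applies. Everything after that is the standard optimism bookkeeping for UCB-style $Q$-learning.
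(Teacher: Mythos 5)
Your overall route is the same as the paper's: unroll the $\qo,\qu$ updates to get (\ref{eq:qo-qu-gap}) deterministically, then prove optimism/pessimism, the $V$-bounds, and correct elimination by a joint induction on $k$, with the correct-elimination property at earlier episodes being exactly what lets the multi-step bootstrap increments be compared against the optimal policy. The bookkeeping inside your induction — the decomposition $\wh r_h^{k^i}+\Vo_{h'(k^i,h)}^{k^i}(x_{h'(k^i,h)}^{k^i})+b_i=\xi_i+Q_h^\st(x,a)+(\Vo_{h'(k^i,h)}^{k^i}-V_{h'(k^i,h)}^\st)(x_{h'(k^i,h)}^{k^i})+b_i$, the treatment of $\Vo,\Vu$ via membership of an optimal action in $A_h^k(x)$, and the elimination step — is correct and matches the paper's (the paper splits $\xi_i$ into two martingales via $Q_h^{\st,k,b},Q_h^{\st,k,r}$, but combining them as you do is fine).

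The gap is in how you define $\MEwc$ and certify its probability. You define $\MEwc$ up front as the event $\{|\sum_i\alpha_n^i\xi_i|\le\beta_n/2 \ \forall (x,a,h,n)\}$ and bound its failure probability by Azuma--Hoeffding applied to $\{\xi_i\}$. But $\{\xi_i\}$ is not a martingale difference sequence on the whole probability space: $\E[\xi_i\mid\MF_i]=0$ holds only when $\pi^{k^i}$ agrees with $\pi^\st$ on the bootstrap window $[h+1,h'(k^i,h))$, i.e., only when correct elimination held at episode $k^i-1$ --- and that is a sample-path property which you establish only under $\MEwc$ itself. As written the argument is circular: the concentration bound needs the martingale property, the martingale property needs correct elimination, and correct elimination is proved under the concentration event. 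You correctly identify this as the crux, but a single induction carried out under a pre-defined $\MEwc$ does not break the circle. The fix the paper uses is to replace $\xi_i$ by $\xi_i\cdot\One[\MEwc_{k^i-1}]$, where $\MEwc_{k-1}$ is the event that both items of the lemma held at all episodes $j\le k-1$; this indicator is $\MF_i$-measurable, so the truncated increments form a genuine martingale difference sequence unconditionally, Azuma applies, and one proves $\Pr[\MEwc_k]\ge 1-pk/K$ by induction on $k$, intersecting the fresh concentration event with $\MEwc_{k-1}$ at each step so the indicators can then be dropped. With that modification (or an equivalent stopped-martingale argument) your proof goes through.
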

\begin{proof}
For $k \in [K]$, we let $\MEwc_k$ denote the event that items \ref{it:3-ineqs} and \ref{it:correct-elimination} of the lemma statement hold for all episodes $j \leq k$. We wish to show that $\Pr[\MEwc_K] \geq 1-p$. 
  
We use induction on $k$ to show that for all $k$, $\Pr[\MEwc_k] \geq 1-pk/K$. The base case $k = 0$ (i.e., $k+1=1$) follows from the fact that $\alpha_0^0 = 1$, $\Qo_h^1(x,a) = H, \Qu_h^1(x,a) = 0$, and that for any choice of $(x,a,h)$ we necessarily have $n = 0$ (in particular, $\Pr[\MEwc_0] =1$). So choose any $k \geq 1$, and assume that $\Pr[\MEwc_{k-1}] \geq 1 - p(k-1)/K$. 
  
  By the algorithm's update rule in steps \ref{it:qo-update-gap} and \ref{it:qu-update-gap}, it holds that, for all $(x,a,h,k) \in \MS \times \MA \times [H] \times [K]$ so that $x \not \in \MG_h^k$, letting $n = N_h^{k+1}(x,a)$,
  \begin{align}
    \qo_h^{k+1}(x,a) =& \begin{cases}
     (1-\alpha_n) \cdot \qo_h^k(x,a) + \alpha_n \cdot \left( \wh r_h^k + \Vo_{h'(k,h)}^k(x_{h'(k,h)}^k) + b_n \right) \quad : \quad (x,a) = (x_h^k, a_h^k) \\
      \qo_h^k(x,a) \quad : \quad \mbox{otherwise}
    \end{cases} \nonumber\\
    \qu_h^{k+1}(x,a) =& \begin{cases}
     (1-\alpha_n) \cdot \qu_h^k(x,a) + \alpha_n \cdot \left( \wh r_h^k + \Vu_{h'(k,h)}^k(x_{h'(k,h)}^k) - b_n \right)\quad : \quad (x,a) = (x_h^k, a_h^k) \\
      \qu_h^k(x,a) \quad : \quad \mbox{otherwise}.
    \end{cases}\nonumber
  \end{align}
  By iterating the above, we obtain that
  \begin{align}
    \qo_h^{k+1}(x,a) =& \alpha_n^0 \cdot H + \sum_{i=1}^n \alpha_n^i \cdot \left( \wh r_h^{k^i} + \Vo_{h'(k^i,h)}^{k^i}(x_{h'(k^i,h)}^{k^i}) + b_n \right)\label{eq:little-qo-expand}\\
    \qu_h^{k+1}(x,a) =& \sum_{i=1}^n \alpha_n^i \cdot \left( \wh r_h^{k^i} + \Vu_{h'(k^i,h)}^{k^i}(x_{h'(k^i,h)}^{k^i}) -b_n \right)\label{eq:little-qu-expand},
  \end{align}
  where $k^1, \ldots, k^n \leq k$ denote all previous episodes during which $(x,a,h)$ has been visited.

  To see that (\ref{eq:qo-qu-gap}) holds, we first take the difference of (\ref{eq:little-qo-expand}) and (\ref{eq:little-qu-expand}) and use the definition of $\beta_n$ in (\ref{eq:define-beta}) to get that
  \begin{align}
\qo_h^{k+1}(x,a) - \qu_h^{k+1}(x,a) =& \alpha_n^0 \cdot H +\beta_n +  \sum_{i=1}^n \alpha_n^i \cdot \left( (\Vo_{h'(k^i,h)}^{k^i} - \Vu_{h'(k^i,h)}^{k^i})(x_{h'(k^i,h)}^{k^i})  \right)\nonumber.
  \end{align}
  Now (\ref{eq:qo-qu-gap}) follows by noting that $\Qo_h^{k+1}(x,a) \leq \qo_h^{k+1}(x,a)$ and $\Qu_h^{k+1}(x,a) \geq \qu_h^{k+1}(x,a)$ (note in particular that (\ref{eq:qo-qu-gap}) holds with probability 1).

  We proceed to analyze the event under which (\ref{eq:qo-qstar-qu-gap}) and (\ref{eq:vo-vstar-vu-gap}) hold. We may compute
  \begin{align}
    & \qo_h^{k+1}(x,a) - Q_h^\st(x,a) \nonumber\\
    =& \alpha_n^0 \cdot H + \sum_{i=1}^n \alpha_n^i \cdot \left( \wh r_h^{k^i} + \Vo_{h'(k^i,h)}^{k^i}(x_{h'(k^i,h)}^{k^i}) + b_n\right) - Q_h^\st(x,a) \nonumber\\
     =& \alpha_n^0 \cdot (H - Q_h^\st(x,a)) + \sum_{i=1}^n \alpha_n^i  \cdot \left( \wh r_h^{k^i} +  \Vo_{h'(k^i,h)}^{k^i}(x_{h'(k^i,h)}^{k^i}) - Q_h^\st(x,a) \right)  + \sum_{i=1}^n \alpha_n^i \cdot b_n \tag{Using item \ref{it:alpha-sum-one} of Lemma \ref{lem:alpha} and $b_0 = 0$}\nonumber\\
    =& \alpha_n^0 \cdot (H - Q_h^\st(x,a)) + \beta_n/2 + \sum_{i=1}^n \alpha_n^i \cdot \left( \wh r_h^{k^i} - Q_h^{\st,k^i,b}(x,a) \right) \nonumber\\
    & + \sum_{i=1}^n \alpha_n^i \cdot \left( \Vo_{h'(k^i,h)}^{k^i}(x_{h'(k^i,h)}^{k^i}) - V_{h'(k^i,h)}^{\st}(x_{h'(k^i,h)}^{k^i}) \right) + \sum_{i=1}^n \alpha_n^i \cdot \left( V_{h'(k^i,h)}^{\st}(x_{h'(k^i,h)}^{k^i}) - Q_h^{\st,k^i,r}(x,a)\right) \label{eq:little-qo-qstar}.
  \end{align}
  In a similar manner, we have
  \begin{align}
    & \qu_h^{k+1}(x,a) - Q_h^\st(x,a) \nonumber\\
    =& -\alpha_n^0 \cdot Q_h^\st(x,a) - \beta_n/2 + \sum_{i=1}^n \alpha_n^i \cdot \left( \wh r_h^{k^i} - Q_h^{\st, k^i, b}(x,a) \right) \nonumber\\
    &+ \sum_{i=1}^n \alpha_n^i \cdot \left( \Vu_{h'(k^i,h)}^{k^i}(x_{h'(k^i,h)}^{k^i}) - V_{h'(k^i,h)}^{\st}(x_{h'(k^i,h)}^{k^i}) \right) + \sum_{i=1}^n \alpha_n^i \cdot \left( V_{h'(k^i,h)}^\st(x_{h'(k^i,h)}^{k^i}) - Q_h^{\st,k^i,r}(x,a) \right)\label{eq:little-qu-qstar}.
  \end{align}

  \begin{claim}
    \label{clm:azuma-1}
  There is an event $\ME_k \subset \MEwc_{k-1}$ so that $\Pr[\ME_k] \geq 1-pk/K$ and the following holds under $\ME_k$: for all $h \in [H]$, all $x \in \MS \backslash \MG_h^k$, and all $a \in A_h^k(x)$, letting $n = N_h^{k+1}(x,a)$ and $k^1,\ldots, k^n \leq k$ denote all the previous episodes in which $(x,a,h)$ was previously visited,
    \begin{align}
      \left| \sum_{i=1}^n \alpha_n^i \cdot \left( \wh r_h^{k^i} - Q_h^{\st,k^i,b}(x,a) \right) \right| \leq &\sqrt{\frac{H^3}{n} \cdot \log \left( \frac{4SAHK}{p} \right)}  \label{eq:rhat-qstar-r}\\
      \left| \sum_{i=1}^n \alpha_n^i \cdot \left( V_{h'(k^i,h)}^\st(x_{h'(k^i,h)}^{k^i}) - Q_h^{\st, k^i,r}(x,a) \right) \right| \leq & \sqrt{\frac{H^3}{n} \cdot \log \left( \frac{4SAHK}{p} \right)}\label{eq:vstar-qstar-b}.
    \end{align}
  \end{claim}
  \begin{proof}[Proof of Claim \ref{clm:azuma-1}]

    For $1 \leq k' \leq k$, recall that $\MH_{k'}$ denotes the $\sigma$-algebra generated by all random variables up to (step $H$ of) episode $k'$. Note that $k^1, \ldots, k^n$ are all stopping times with respect to the filtration $\MH_{k'}$. Moreover, it is evident that $\MEwc_{k'}$ is $\MF_{k'}$-measurable for all $k' \leq k$. Next, for $1 \leq i \leq n$, define the filtration $\MF_i$ by $\MF_i := \MH_{k^i-1}$. Moreover, for $i \in [n]$, define
    \begin{align}
M_i := \alpha_n^i \cdot \left( \wh r_h^{k^i} - Q_h^{\st, k^i, b}(x,a) \right) \cdot \One[\MEwc_{k^{i}-1}]\nonumber.
    \end{align}
    Since $k^{i+1} > k^i$, $M_i$ is $\MF_{i+1}$-measurable for each $i$ (as a matter of convention we set $k^{n+1} = k+1$, so $M_i$ is $\MF_{i+1}$-measurable even for $i=n$). Moreover, we claim that for each $i$,
    \begin{align}
\E[M_i | \MF_i] = \alpha_n^i \cdot \E\left[ \left( \wh r_h^{k^i} - Q_h^{\st, k^i,b}(x,a) \right) \cdot \One[\MEwc_{k^i-1}] | \MF_i\right] = 0\label{eq:martingale1}.
    \end{align}
    To see that (\ref{eq:martingale1}) holds, first note that conditioned on $\MF_i$, $\One[\MEwc_{k^i-1}] \cdot \wh r_h^{k^i}$ is distributed identically to $\One[\MEwc_{k^i-1}] \cdot \sum_{\ell=h}^{h'-1} r_\ell(x_\ell, \pi_\ell^\st(x_\ell))$ where $x_\ell$ is the sequence of states visited starting at $x_h = x$ and following the optimal policy $\pi^\st$ and $h'$ is as small as possible so that $x_{h'} \not \in \MG_{h'}^{k^i}$ (this holds since item \ref{it:correct-elimination} at episode $k^i-1$ gives that the unique action in $A_{\ell}^{k^i}(x_\ell^{k^i})$, for $h \leq \ell < h'(k^i,h)$ is the optimal action, namely $\pi_\ell^\st(x_\ell^{k^i})$).
    Recall from (\ref{eq:qstar-decomposition-def}) that $\E \left[ \sum_{\ell=h}^{h'-1} r_\ell(x_\ell, \pi_\ell^\st(x_\ell)) - Q_h^{\st, k^i, b}(x,a) | \MF_i\right] = 0$; then the fact that  $\MEwc_{k^i-1}$ is $\MF_i$-measurable gives (\ref{eq:martingale1}).

    Next, for $i \in [n]$, define
    \begin{align}
N_i := \alpha_n^i \cdot \left( V_{h'(k^i,h)}^\st(x_{h'(k^i,h)}^{k^i}) - Q_h^{\st, k^i, r}(x,a) \right) \cdot \One[\MEwc_{k^i-1}]\nonumber.
    \end{align}
    Since $k^{i+1} > k^i$, $N_i$ is $\MF_{i+1}$-measurable for each $i$. Moreover, we claim that for each $i$,
    \begin{align}
\E[N_i | \MF_i] = \alpha_n^i \cdot \E \left[ \left( V_{h'(k^i,h)}^\st(x_{h'(k^i,h)}^{k^i}) - Q_h^{\st, k^i, r}(x,a) \right) \cdot \One[\MEwc_{k^i-1}] | \MF_i \right] = 0\label{eq:martingale2}.
    \end{align}
    The validity of (\ref{eq:martingale2}) is verified in the same way as that of (\ref{eq:martingale1}): conditioned on $\MF_i$, $\One[\MEwc_{k^i-1}] \cdot V_{h'(k^i,h)}^\st(x_{h'(k^i,h)}^{k^i})$ is distributed identically to $\One[\MEwc_{k^i-1}] \cdot V_{h'}^\st(x_{h'})$, where $x_h, \ldots, x_{h'}$ is defined as above, namely it is the sequence of states visited starting at $x_h = x$ and following the optimal policy $\pi^\st$, and $h'$ is as small as possible so that $x_{h'} \not\in \MG_{h'}^{k^i}$ (again we use that item \ref{it:correct-elimination} holds at episode $k^i-1$). Now (\ref{eq:qstar-decomposition-def}) gives that $\E[V_{h'}^\st(x_{h'}) - Q_h^{\st,k^i,r}(x,a) | \MF_i] = 0$ and using this together with the fact that $\MEwc_{k^i-1}$ is $\MF_i$-measurable gives (\ref{eq:martingale2}).

    Equations (\ref{eq:martingale1}) and (\ref{eq:martingale2}) give that $M_i$ and $N_i$ are martingales with respect to the filtration $\MF_{i+1}$. The fact that $\sum_{i=1}^n (\alpha_n^i)^2 \leq \frac{2H}{n}$ (item \ref{it:alpha-sum-square} of Lemma \ref{lem:alpha}) together with the Azuma-Hoeffding inequality then gives that, for fixed $x,a,h$, with probability $1-p/(SAHK)$, both of the below inequalities hold:
    \begin{align}
      \left| \sum_{i=1}^n \alpha_n^i \cdot \left( \wh r_h^{k^i} - Q_h^{\st,k^i,b}(x,a) \right) \cdot \One[\MEwc_{k^i-1}] \right| \leq & \sqrt{\frac{H^3 \cdot \log \left( \frac{4SAHK}{p} \right)}{n}}\label{eq:mi-azuma}\\
      \left| \sum_{i=1}^n \alpha_n^i \cdot \left( V_{h'(k^i,h)}^\st(x_{h'(k^i,h)}^{k^i}) - Q_h^{\st, k^i,r}(x,a) \right) \cdot \One[\MEwc_{k^i-1}]\right| \leq &\sqrt{\frac{H^3 \cdot \log \left( \frac{4SAHK}{p} \right)}{n}}\label{eq:ni-azuma}.
    \end{align}
    Let $\ME_{k}$ denote the intersection of the probability $1-p/K$-event that both (\ref{eq:mi-azuma}) and (\ref{eq:ni-azuma}) hold for all $x,a,h$ and the event $\MEwc_{k-1}$. Then using the inductive hypothesis that $\Pr[\MEwc_{k-1}] \geq 1-p(k-1)/K$, we get that $\Pr[\ME_k] \geq 1-pk/K$. Thus, under the event $\ME_k$, we have that
    \begin{align}
      \left| \sum_{i=1}^n \alpha_n^i \cdot \left( \wh r_h^{k^i} - Q_h^{\st,k^i,b}(x,a) \right) \right| \leq & \sqrt{\frac{H^3}{n} \cdot \log \left( \frac{4SAHK}{p} \right)}\nonumber\\
            \left| \sum_{i=1}^n \alpha_n^i \cdot \left( V_{h'(k^i,h)}^\st(x_{h'(k^i,h)}^{k^i}) - Q_h^{\st, k^i,r}(x,a) \right)\right| \leq &\sqrt{\frac{H^3}{n} \cdot \log \left( \frac{4SAHK}{p} \right)}\nonumber,
      \end{align}
    completing the proof of the claim.
  \end{proof}
  Next we show that, on the event $\ME_k$, both (\ref{eq:qo-qstar-qu-gap}) and (\ref{eq:vo-vstar-vu-gap}) hold at episode $k$, for all $x,a,h$. %
  Note that, by (\ref{eq:beta-bound}), for all $n$,
  \begin{align}
\beta_n/4 \geq  \frac{C_0}{2} \cdot \sqrt{\frac{H^3 \iota}{n}} \geq \sqrt{\frac{H^3}{n} \cdot \log \left( \frac{4SAHK}{p} \right)}\nonumber
  \end{align}
  as long as the constant $C_0$ is chosen to be large enough. Thus, by (\ref{eq:little-qo-qstar}) and Claim \ref{clm:azuma-1}, under the event $\ME_k$, we have that
  \begin{align}
    & \qo_h^{k+1}(x,a) - Q_h^\st(x,a) \nonumber \\
    \geq & \beta_n/2 - \beta_n/4 + \sum_{i=1}^n \alpha_n^i \cdot \left( \Vo_{h'(k^i,h)}^{k^i}(x_{h'(k^i,h)}^{k^i}) - V_{h'(k^i,h)}^{\st}(x_{h'(k^i,h)}^{k^i}) \right) - \beta_n/4 \nonumber\\
    =& \sum_{i=1}^n \alpha_n^i \cdot \left( \Vo_{h'(k^i,h)}^{k^i}(x_{h'(k^i,h)}^{k^i}) - V_{h'(k^i,h)}^{\st}(x_{h'(k^i,h)}^{k^i}) \right)\geq 0\label{eq:qk1-qstar-pos},
  \end{align}
  where the final inequality follows from the fact that $\ME_k \subset \MEwc_{k-1}$ and under $\MEwc_{k-1}$, (\ref{eq:vo-vstar-vu-gap}) holds (in particular, at step $h'(k^i,h)$ for state $x_{h'(k^i,h)}^{k^i}$). Using the fact that $\Qo_h^{k+1}(x,a) = \min_{k' \leq k+1} \left\{ \qo_h^{k'}(x,a) \right\}$ (step \ref{it:Qo-qo-min} of \algname) together with the fact that $\Qo_h^k(x,a) \geq Q_h^\st(x,a)$ under $\MEwc_{k-1}$, we see from (\ref{eq:qk1-qstar-pos}) that $\Qo_h^{k+1}(x,a) \geq Q_h^\st(x,a)$ under the event $\ME_k$ (for all $a \in A_h^k(x)$). Since $\Vo_h^{k+1}(x) = \max_{a \in A_h^k(x)} \Qo_h^{k+1}(x,a)$ (step \ref{it:vo-define} of \algname), it follows that $\Vo_h^{k+1}(x) \geq V_h^\st(x)$ under the event $\ME_k$.

  Thus we have verified the first inequality in each of (\ref{eq:qo-qstar-qu-gap}) and (\ref{eq:vo-vstar-vu-gap}) at episode $k$. The proof of the second inequality in each follows identically: (\ref{eq:little-qu-qstar}) together with Claim \ref{clm:azuma-1} gives that under the event $\ME_k$, we have that $\qu_h^{k+1}(x,a) - Q_h^\st(x,a) \leq 0$ for all $x,a,h$. Then it follows that $\Qu_h^{k+1}(x,a) \leq Q_h^\st(x,a)$, and using the fact that $\Vu_h^{k+1}(x) = \max_{a \in A_h^k(x)} \Qu_h^{k+1}(x,a)$ (step \ref{it:vu-define} of \algname), it follows that $\Vu_h^{k+1}(x) \leq V_h^\st(x)$ under the event $\ME_k$. Thus we have  verified that (\ref{eq:qo-qu-gap}), (\ref{eq:qo-qstar-qu-gap}), and (\ref{eq:vo-vstar-vu-gap}) hold (for any choice of $x,a,h$ with $x \not \in \MG_h^k$ and $a \in A_h^k(x)$) at episode $k$, under the event $\ME_k$. 

  Finally we verify that item \ref{it:correct-elimination} holds at episode $k$ under the event $\ME_k$. Suppose, to the contrary, that there were some optimal action $a^\st$ for some state $(x,h)$ so that $a^\st \not \in A_h^{k+1}(x)$. Since $\ME_{k} \subset \MEwc_{k-1}$, we have that $a^\st \in A_h^k(x)$, meaning that by the definition of $A_h^{k+1}(x)$ in step \ref{it:def-active-set} of \algname, we must have that $\Qo_h^{k+1}(x,a^\st) < \Vu_h^{k+1}(x)$. But we have just shown that under the event $\ME_k$, weh have that $\Vu_h^{k+1}(x) \leq V_h^\st(x)$ and $\Qo_h^{k+1}(x,a^\st) \geq Q_h^\st(x, a^\st)$, which implies that $Q_h^\st(x, a^\st) < V_h^\st(x)$, contradicting the fact that $a^\st$ is an optimal action at $(x,h)$.

  Thus we have shown that all statements in items \ref{it:3-ineqs} and \ref{it:correct-elimination} for episode $k$ hold under the event $\ME_k$, and $\ME_k \subset \MEwc_{k-1}$ as well as $\Pr[\ME_k] \geq 1-pk/K$. Thus $\MEwc_k \supset \ME_k$, meaning that $\Pr[\MEwc_k] \geq 1-pk/K$, which completes the proof of the inductive step.
\end{proof}

The following lemma shows that the upper and lower confidence bounds satisfy a monotonicity property with respect to the number of episodes $k$ that have elapsed: in particular, the upper confidence bounds on $V_h^\st, Q_h^\st$ maintained by \algname are non-increasing, and the lower confidence bounds on $V_h^\st, Q_h^\st$ are non-decreasing. Note that in many previous works studying $Q$-learning algorithms (such as \cite{jin_is_2018,xu_fine-grained_2021}), these monotonicity properties do not necessarily hold -- it is necessary to modify the $Q$- and $V$-value updates in \algname appropriately to ensure that Lemma \ref{lem:qv-monotonicity} holds.
\begin{lemma}
  \label{lem:qv-monotonicity}
  For all $k \in [K]$, the following inequalities hold for all $(x,a,h) \in \MS \times \MA \times [H]$:
  \begin{align}
    \Qo_h^{k+1}(x,a) \leq & \Qo_h^k(x,a)\label{eq:qo-mon}\\
    \Qu_h^{k+1}(x,a) \geq & \Qu_h^k(x,a)\label{eq:qu-mon}\\
    \Vo_h^{k+1}(x) \leq & \Vo_h^k(x) \label{eq:vo-mon}\\
    \Vu_h^{k+1}(x) \geq & \Vu_h^k(x) \label{eq:vu-mon}\\
    \til Q_h^{k+1}(x,a) \leq & \til Q_h^k(x,a) \label{eq:tilq-mon}\\
    \til V_h^{k+1}(x) \leq & \til V_h^k(x) \label{eq:tilv-mon}.
  \end{align}
\end{lemma}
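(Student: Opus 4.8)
The plan is to prove all six inequalities by a single induction on the episode index $k$, carrying along a few auxiliary invariants; within a given episode the delicate inequalities will be handled by a short reverse induction on the step $h$. The structural fact I would record first is that at episode $k$ the only pair whose step-$h$ quantities get modified is $(x_h^k,a_h^k)$, and only when $x_h^k\notin\MG_h^k$; everything else is copied verbatim in step \ref{it:copy-qv}. Hence each inequality is trivial at all pairs other than the visited one, and the work is concentrated at $(x_h^k,a_h^k,h)$.

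\textbf{The easy inequalities.} From steps \ref{it:Qo-qo-min} and \ref{it:Qu-qu-max} together with the copy rule, a one-line induction on $k$ gives the running-extremum identities $\Qo_h^k(x,a)=\min_{k'\le k}\qo_h^{k'}(x,a)$ and $\Qu_h^k(x,a)=\max_{k'\le k}\qu_h^{k'}(x,a)$; since the index set only grows with $k$, (\ref{eq:qo-mon}) and (\ref{eq:qu-mon}) are immediate. Inequality (\ref{eq:tilq-mon}) is immediate from step \ref{it:def-tilq-hk}, because $\til Q_h^{k+1}(x_h^k,a_h^k)$ is a minimum one of whose arguments is $\til Q_h^k(x_h^k,a_h^k)$. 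For (\ref{eq:vo-mon}) I would carry the invariant $\Vo_h^k(x)\ge\max_{a'\in A_h^k(x)}\Qo_h^k(x,a')$; it is preserved because $A_h^{k+1}(x)\subseteq A_h^k(x)$ always (immediate from step \ref{it:def-active-set}) and, on the step where $x=x_h^k$ is updated, $\Vo_h^{k+1}(x)=\max_{a'\in A_h^k(x)}\Qo_h^{k+1}(x,a')$. Combining the invariant with (\ref{eq:qo-mon}) and the fact that $\Vo_h^{k+1}$ is a maximum over the \emph{same} set $A_h^k(x)$ then yields $\Vo_h^{k+1}(x)\le\Vo_h^k(x)$. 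Note that for $\Vo$ (which must \emph{decrease}) shrinking the active set only helps, so no further ingredient is needed.

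\textbf{The delicate inequalities.} For (\ref{eq:vu-mon}) and (\ref{eq:tilv-mon}) I would additionally carry: (i) $\Qo_h^k(x,a)\ge\Qu_h^k(x,a)$ for all $(x,a,h)$, and (ii) the exact representations $\Vu_h^k(x)=\max_{a'\in A_h^k(x)}\Qu_h^k(x,a')$ and $\til V_h^k(x)=\max_{a'\in A_h^k(x)}\max\{\til Q_h^k(x,a'),\Qu_h^k(x,a')\}$. The crux in preserving (ii), and in deducing monotonicity from it, is that the action $\hat a$ attaining the value on the right of (ii) is never removed by step \ref{it:def-active-set}: an action is eliminated at episode $k$ only when its $\Qo$-value drops below $\Vu_h^{k+1}(x)$, whereas for the attaining action this $\Qo$-value is at least its $\Qu$-value by invariant (i), and its $\Qu$-value equals (resp.\ is at most) $\Vu_h^{k+1}(x)$. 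Given (ii), inequality (\ref{eq:vu-mon}) follows because $\Vu_h^{k+1}(x)$ and $\Vu_h^k(x)$ are both maxima over the same set $A_h^k(x)$ while $\Qu_h^{k+1}\ge\Qu_h^k$ pointwise; inequality (\ref{eq:tilv-mon}) follows analogously, using (\ref{eq:tilq-mon}) for the $\til Q$-argument of the inner maximum and bounding the $\Qu$-argument of any $a'\in A_h^{k+1}(x)\subseteq A_h^k(x)$ by $\Vu_h^{k+1}(x)$, which the representation invariant identifies with a quantity already present in $\til V_h^k(x)$. Invariant (i) is itself proved by a direct sub-induction: from the modified $\qo,\qu$ updates in steps \ref{it:qo-update-gap}--\ref{it:qu-update-gap} one checks that $\qo_h^{k+1}(x_h^k,a_h^k)-\qu_h^{k+1}(x_h^k,a_h^k)$ is a nonnegative combination of $\Vo_{h'}^k(x_{h'}^k)-\Vu_{h'}^k(x_{h'}^k)$ (nonnegative by a reverse induction on $h$, as $h'>h$) and the bonus term $2b_n\ge 0$, which then propagates through the running min/max; this is precisely where the algorithm's deliberately monotone updates are used, and alternatively (i) can be read off from (\ref{eq:qo-qstar-qu-gap}) of Lemma \ref{lem:qo-qu-gap}.

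\textbf{Main obstacle.} The real work lies entirely in (\ref{eq:vu-mon}) and (\ref{eq:tilv-mon}): these must be shown \emph{non-decreasing} (resp.\ controlled from above by the previous episode's value) even though the active sets $A_h^k(x)$ only shrink over time, so a naive pointwise-monotonicity argument fails — removing actions pushes a maximum the wrong way. The whole point is therefore to prove that the maximizing action never leaves the active set, which is exactly what invariant (i) plus the form of the elimination rule delivers. The remaining care is bookkeeping: the $\Vo,\Vu$ updates use $A_h^k(x)$ (step \ref{it:vu-define} precedes the elimination step), whereas the $\til V$ update uses $A_h^{k+1}(x)$ (it follows step \ref{it:def-active-set}), and one must also handle the degenerate situation where $|A_h^k(x)|$ could in principle reach $0$; keeping ``which active set is used when'' straight is the main thing to be careful about, but none of it is conceptually hard once invariants (i) and (ii) are in place.
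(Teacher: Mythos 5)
Your proposal follows essentially the same route as the paper's proof: the copy rule disposes of all non-visited pairs, (\ref{eq:qo-mon})--(\ref{eq:qu-mon}) come from the running min/max in steps \ref{it:Qo-qo-min} and \ref{it:Qu-qu-max}, (\ref{eq:tilq-mon}) from the min in step \ref{it:def-tilq-hk}, (\ref{eq:vo-mon}) and (\ref{eq:tilv-mon}) from shrinkage of $A_h^k(x)$ plus pointwise monotonicity, and (\ref{eq:vu-mon}) from the observation that the action attaining the $\Qu$-maximum cannot be eliminated because its $\Qo$-value dominates its $\Qu$-value — which is exactly the contradiction argument in the paper. The one soft spot is your direct sub-induction for the invariant $\Qo_h^k \geq \Qu_h^k$: nonnegativity of $\qo_h^j - \qu_h^j$ for each fixed $j$ does not by itself propagate through the running min/max, since $\Qo_h^k \geq \Qu_h^k$ requires comparing $\qo_h^{j}$ with $\qu_h^{j'}$ for $j \neq j'$; your fallback of reading the invariant off Lemma \ref{lem:qo-qu-gap} (via $\Qo_h^k \geq Q_h^\st \geq \Qu_h^k$) is the route the paper itself implicitly relies on.
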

\begin{proof}
Fix any $k \in [K]$ and $h \in [H]$. First note that step \ref{it:copy-qv} of \algname verifies (\ref{eq:qo-mon}) through (\ref{eq:tilv-mon}) for all $(x,a) \neq (x_h^k, a_h^k)$. So it remains to to consider the case that $x=x_h^k$ and $a =a_h^k$.
  
First note that (\ref{eq:qo-mon}) and (\ref{eq:qu-mon}) are directly verified by steps \ref{it:Qo-qo-min} and \ref{it:Qu-qu-max}, respectively, of \algname. To verify (\ref{eq:vo-mon}), note that $A_h^k(x_h^k) \subset A_h^{k-1}(x_h^k)$, meaning that
\begin{align}
\Vo_h^{k+1}(x_h^k) = \max_{a' \in A_h^k(x_h^k)} \{ \Qo_h^{k+1}(x_h^k, a') \} \leq \max_{a' \in A_h^{k-1}(x_h^k)} \{ \Qo_h^{k+1}(x_h^k, a') \} \leq \max_{a' \in A_h^{k-1}(x_h^k)} \{ \Qo_h^k(x_h^k, a') \} = \Vo_h^k(x_h^k),\nonumber
\end{align}
where the second inequality uses (\ref{eq:qo-mon}) and the last equality uses steps \ref{it:vu-define} and \ref{it:copy-qv} of \algname (in particular, note that $\Vo_h^k(x_h^k)$ and $\Qo_h^k(x_h^k, \cdot)$ remain unchanged from the previous episode before $k$ at which $x_h^k$ was visited). 

Next we verify (\ref{eq:vu-mon}); choose $a \in A_h^k(x_h^k)$ so that $\Vu_h^{k+1}(x_h^k) = \Qu_h^{k+1}(x_h^k, a)$, and $a' \in A_h^{k-1}(x_h^k)$ so that $\Vu_h^k = \Qu_h^k(x_h^k, a')$. If (\ref{eq:vu-mon}) did not hold, we would have that $\Qu_h^{k+1}(x_h^k, a') \geq \Qu_h^k(x_h^k, a') > \Qu_h^{k+1}(x_h^k, a)$, which must mean that $a' \not\in A_h^k(x_h^k)$. But this is impossible since $\Qo_h^{k+1}(x_h^k, a') \geq \Qu_h^{k+1}(x_h^k,a') = \Vu_h^{k+1}(x_h^k)$, so by step \ref{it:def-active-set} of \algname $a'$ must belong to $A_h^k(x_h^k)$.

Finally, (\ref{eq:tilq-mon}) is verified by step \ref{it:def-tilq-hk} of \algname, and to see that (\ref{eq:tilv-mon}) holds, we again use that $A_h^k(x_h^k) \subset A_h^{k-1}(x_h^k)$ to get that
\begin{align}
  \til V_h^{k+1}(x_h^k) =& \max_{a' \in A_h^k(x_h^k)} \max \left\{ \til Q_h^{k+1}(x_h^k, a'), \Qu_h^{k+1}(x_h^k, a') \right\} \nonumber\\
  \leq &  \max_{a' \in A_h^{k-1}(x_h^k)} \max \left\{ \til Q_h^{k+1}(x_h^k, a'), \Qu_h^{k+1}(x_h^k, a') \right\} \nonumber\\
  \leq &  \max_{a' \in A_h^{k-1}(x_h^k)} \max \left\{ \til Q_h^{k}(x_h^k, a'), \Qu_h^{k}(x_h^k, a') \right\} \label{eq:use-qu-tilq-mon}\\
  = & \til V_h^k(x_h^k)\nonumber,
\end{align}
where (\ref{eq:use-qu-tilq-mon}) uses both (\ref{eq:qu-mon}) and (\ref{eq:tilq-mon}). 
\end{proof}

\subsection{Range functions and clipped range functions}
Recall the definition of the range functions $\RanV_h^k, \RanQ_h^k$ in Definition \ref{def:range-function}, as well as the clipped range functions $\clipV_h^k, \clipQ_h^k$ in Definition \ref{def:half-clipped}. In this section we establish some basic guarantees of these functions. The first such result, Lemma \ref{lem:range-bound}, shows that the range functions are upper bounds on the gap between the upper and lower estimates for the $Q$- and $V$-value functions.
\begin{lemma}
  \label{lem:range-bound}
  For all $(x,h,k,a) \in \MS \times \MA \times [H] \times [K]$ for which $x \not \in \MG_h^k$ and $a \in A_h^k(x)$, the range functions satisfy the following under the event $\MEwc$:
  \begin{align}
    \RanQ_h^k(x,a) \geq & \Qo_h^k(x,a) - \Qu_h^k(x,a) \nonumber\\
    \RanV_h^k(x) \geq & \Vo_h^k(x) - \Vu_h^k(x)\nonumber.
  \end{align}
\end{lemma}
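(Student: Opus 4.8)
The plan is to prove both inequalities simultaneously by strong induction on the episode index $k$. A single induction on $k$ suffices (no nested induction on $h$ is needed), because the recursion for $\RanQ_h^k(x,a)$ in Definition~\ref{def:range-function} refers only to values $\RanV_{h'(k_h^i,h)}^{k_h^i}$ at episodes $k_h^i=k_h^i(x,a)$ that are \emph{strictly} earlier than $k$ (as $n=N_h^k(x,a)$ counts only visits before episode $k$), while $\RanV_h^k(x)$ refers to $\RanQ_h^k(x,\cdot)$ at the same episode and to $\RanV_h^{k-1}(x)$; so within a fixed episode one computes all the $\RanQ_h^k$ first and then all the $\RanV_h^k$, with no circularity. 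The base case $k=1$ is immediate: with no previous visits, $\RanQ_h^1\equiv\RanV_h^1\equiv\Qo_h^1\equiv\Vo_h^1\equiv H$ and $\Qu_h^1\equiv\Vu_h^1\equiv 0$, so both bounds hold with equality.

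For the inductive step, fix $k\ge 2$ and assume the statement at all episodes $<k$. Since the action sets are nested, $x\notin\MG_h^k$ forces $x\notin\MG_h^{k-1}$ and $A_h^k(x)\subseteq A_h^{k-1}(x)$, so the inductive hypothesis at episode $k-1$ applies to the same tuples. Take $(x,h)$ with $x\notin\MG_h^k$ and $a\in A_h^k(x)$, and bound the two terms of the minimum defining $\RanQ_h^k(x,a)$. The first term $\RanQ_h^{k-1}(x,a)$ is $\ge \Qo_h^{k-1}(x,a)-\Qu_h^{k-1}(x,a)$ by the inductive hypothesis, and $\ge \Qo_h^k(x,a)-\Qu_h^k(x,a)$ by the monotonicity of $\Qo,\Qu$ (eqs.~(\ref{eq:qo-mon}),~(\ref{eq:qu-mon}) of Lemma~\ref{lem:qv-monotonicity}). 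For the second term, each summand $\RanV_{h'(k_h^i,h)}^{k_h^i}(x_{h'(k_h^i,h)}^{k_h^i})$ is $\ge (\Vo_{h'(k_h^i,h)}^{k_h^i}-\Vu_{h'(k_h^i,h)}^{k_h^i})(x_{h'(k_h^i,h)}^{k_h^i})$, either by the inductive hypothesis at the earlier episode $k_h^i$ when $h'(k_h^i,h)\le H$ (noting $x_{h'(k_h^i,h)}^{k_h^i}\notin\MG_{h'(k_h^i,h)}^{k_h^i}$, which is exactly what makes the term well-defined), or by the boundary convention $\RanV_{H+1}=\Vo_{H+1}=\Vu_{H+1}=0$; substituting this and then invoking inequality~(\ref{eq:qo-qu-gap}) of Lemma~\ref{lem:qo-qu-gap} with episode index $k-1$ (so that its left-hand side is exactly $\Qo_h^k(x,a)-\Qu_h^k(x,a)$, with the same $n$ and the same episodes $k_h^i$) shows the second term is also $\ge \Qo_h^k(x,a)-\Qu_h^k(x,a)$. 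Taking the minimum gives the first claimed inequality.

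For $\RanV_h^k(x)$ with $x\notin\MG_h^k$, bound the two terms of its defining minimum. The first, $\RanV_h^{k-1}(x)$, is $\ge \Vo_h^{k-1}(x)-\Vu_h^{k-1}(x)\ge \Vo_h^k(x)-\Vu_h^k(x)$ by the inductive hypothesis and monotonicity (eqs.~(\ref{eq:vo-mon}),~(\ref{eq:vu-mon})). The second is $\RanQ_h^k(x,a^\st)$ with $a^\st=\argmax_{a'\in A_h^k(x)}\bigl(\Qo_h^k(x,a')-\Qu_h^k(x,a')\bigr)$, which by the $\RanQ$-bound just established is $\ge \Qo_h^k(x,a^\st)-\Qu_h^k(x,a^\st)=\max_{a'\in A_h^k(x)}\bigl(\Qo_h^k(x,a')-\Qu_h^k(x,a')\bigr)$; so it remains to check that this last maximum is $\ge \Vo_h^k(x)-\Vu_h^k(x)$, which then completes the induction.

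This last step is where care is needed and is the main (albeit minor) obstacle, because $\Vo_h^k$ and $\Vu_h^k$ are defined in steps~\ref{it:vu-define}--\ref{it:vo-define} of \algname as maxima of $\Qo_h^k(x,\cdot)$ and $\Qu_h^k(x,\cdot)$ over the action set from the \emph{previous} visit to $(x,h)$, not over $A_h^k(x)$. Writing $A'$ for that earlier action set, one has $\Vo_h^k(x)=\Qo_h^k(x,b)$ for some $b\in A'$ and $\Vu_h^k(x)=\max_{a'\in A'}\Qu_h^k(x,a')\ge \Qu_h^k(x,b)$, whence $\Vo_h^k(x)-\Vu_h^k(x)\le \Qo_h^k(x,b)-\Qu_h^k(x,b)$; so it suffices to show $b\in A_h^k(x)$. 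Any action $a'\in A'\setminus A_h^k(x)$ was dropped by the elimination rule (step~\ref{it:def-active-set}) at the episode $j$ after which $A'$ was formed, i.e.\ it had $\Qo_h^{j}(x,a')<\Vu_h^{j}(x)$, and hence — using monotonicity of $\Qo,\Qu$ together with the sandwich $\Vu_h^k(x)\le V_h^\st(x)\le\Vo_h^k(x)$ from~(\ref{eq:qo-qstar-qu-gap})--(\ref{eq:vo-vstar-vu-gap}) under $\MEwc$ — still has $\Qo_h^k(x,a')<\Vu_h^k(x)\le\Vo_h^k(x)$ at episode $k$, so $a'$ cannot attain $\Vo_h^k(x)$. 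Therefore $b\in A_h^k(x)$, giving $\Qo_h^k(x,b)-\Qu_h^k(x,b)\le\max_{a'\in A_h^k(x)}\bigl(\Qo_h^k(x,a')-\Qu_h^k(x,a')\bigr)$ as required. (The degenerate case in which $(x,h)$ has not been visited before episode $k$ is trivial: then $\Vo_h^k(x)-\Vu_h^k(x)=H$ and every $a'\in A_h^k(x)=\MA$ has $\Qo_h^k(x,a')-\Qu_h^k(x,a')=H$.)
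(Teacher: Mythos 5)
Your proof is correct and rests on the same two ingredients as the paper's: inequality (\ref{eq:qo-qu-gap}) of Lemma \ref{lem:qo-qu-gap} to expand $\Qo_h^k(x,a)-\Qu_h^k(x,a)$, followed by an inductive replacement of the $(\Vo-\Vu)$ terms by $\RanV$, and then the argmax-action argument at the $V$-level. The only structural difference is bookkeeping: the paper runs reverse induction on $h$ (exploiting $h'(k_h^i,h)>h$) and unrolls the min in the definition of $\RanV_h^k$ to a single episode $k'\le k$, whereas you run forward induction on $k$ and bound both branches of each min explicitly; both schemes are valid since the recursion strictly decreases the relevant index. One genuine merit of your write-up is the last paragraph: the paper asserts $\Vo_h^{k'}(x)-\Vu_h^{k'}(x)=\max_{a'\in A_h^{k'}(x)}\Qo_h^{k'}(x,a')-\max_{a'\in A_h^{k'}(x)}\Qu_h^{k'}(x,a')$ by citing steps \ref{it:vu-define}--\ref{it:vo-define} of \algname, even though those steps take the maxima over the action set from the previous visit to $(x,h)$, which may strictly contain $A_h^{k'}(x)$; your elimination argument under $\MEwc$ (an eliminated action satisfies $\Qo_h^k(x,a')<\Vu_h^k(x)\le\Vo_h^k(x)$ and hence cannot attain the maximum) is a clean way to close that small gap.
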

\begin{proof}
  The proof closely follows that of \cite[Lemma B.3]{xu_fine-grained_2021}. We use reverse induction on $h$. The base case $h=H+1$ is immediate since $\Qo_{H+1}^k, \Qu_{H+1}^k, \Vo_{H+1}^k, \Vu_{H+1}^k$ are defined to be identically 0 for all $k \in [K]$. Next fix $h \leq H$ and suppose that the statement of the lemma holds for all $(x,h',k,a)$ for which $h' > h$. For any $(x,k,a)$ for which $x \not \in \MG_h^k$ and $a \in A_h^k(x)$, note that, for $n = N_h^k(x,a)$ and $k_h^i = k_h^i(x,a)$,
  \begin{align}
    \Qo_h^k(x,a) -\Qu_h^k(x,a) \leq & \alpha_{n}^0 \cdot H + \sum_{i=1}^{n} \alpha_{n}^i \cdot (\Vo_{h'(k_h^i,h)}^{k_h^i} - \Vu_{h'(k_h^i,h)}^{k_h^i})(x_{h'(k_h^i,h)}^{k_h^i}) + \beta_n\label{eq:use-qo-qu-gap}\\
    \leq & \alpha_{n}^0 \cdot H + \sum_{i=1}^{n} \alpha_{n}^i \cdot \RanV_{h'(k_h^i,h)}^{k_h^i}(x_{h'(k_h^i,h)}^{k_h^i}) + \beta_{n}\label{eq:get-range-function}\\
    = & \RanQ_h^k(x,a)\label{eq:back-to-range}.
  \end{align}
  where (\ref{eq:use-qo-qu-gap}) follows from (\ref{eq:qo-qu-gap}) of Lemma \ref{lem:qo-qu-gap} (in particular, we use the validity of (\ref{eq:qo-qu-gap}) for episode $k-1$) and  (\ref{eq:get-range-function}) uses the inductive hypothesis (since $h'(k_h^i,h) > h$). %
  Thus the inductive step for $\RanQ_h^k$ is verified. To lower bound $\RanV_h^k$, we first note that by Definition \ref{def:range-function} for any $x \not\in \MG_h^k$, there is some $k' \leq k$ so that $\RanV_h^k(x) = \RanQ_h^{k'}(x,a^\st)$ for $a^\st = \argmax_{a' \in A_h^{k'}(x)} \{ \Qo_h^{k'}(x,a') - \Qu_h^{k'}(x,a') \}$. Then
  \begin{align}
    \Vo_h^k(x) - \Vu_h^k(x) \leq & \Vo_h^{k'}(x) - \Vu_h^{k'}(x) \tag{Using (\ref{eq:vo-mon}) of Lemma \ref{lem:qv-monotonicity}} \\
    = & \left( \max_{a' \in A_h^{k'}(x)} \Qo_h^{k'}(x,a') \right) - \left( \max_{a' \in A_h^{k'}(x)} \Qu_h^{k'}(x,a') \right) \label{eq:use-vovu-defns}\\
    \leq & \max_{a' \in A_h^{k'}(x)} \left\{ \Qo_h^{k'}(x,a') - \Qu_h^{k'}(x,a') \right\} \nonumber\\
    \leq & \Qo_h^{k'}(x,a^\st) - \Qu_h^{k'}(x,a^\st) \tag{For $a^\st = \argmax_{a' \in A_h^{k'}(x)} \{ \Qo_h^{k'}(x,a') - \Qu_h^{k'}(x,a')\}$}\\
    \leq & \RanQ_h^{k'}(x,a^\st) \tag{Using (\ref{eq:back-to-range})}\\
    = & \RanV_h^k(x)\label{eq:use-range-def},
  \end{align}
  where (\ref{eq:use-vovu-defns}) follows from steps \ref{it:vu-define} and \ref{it:vo-define} of \algname and the final equality (\ref{eq:use-range-def}) follows from the definition of $k'$. %
\end{proof}

Lemma \ref{lem:hc-range-bound} below shows that despite the clipping of the bonus $\beta_n$ in the definition of the clipped range function (see (\ref{eq:clipq-define})), the clipped range functions $\clipQ_h^k, \clipV_h^k$ remain approximate upper bounds on the range functions $\RanQ_h^k, \RanV_h^k$.
\begin{lemma}
  \label{lem:hc-range-bound}
  For all $(x,h,k,a) \in \MS \times \MA \times [H] \times [K]$ for which $x \not \in \MG_h^k$ and $a \in A_h^k(x)$, the partially-clipped range functions satisfy the following:
  \begin{align}
    \clipQ_h^k(x,a) \geq & \RanQ_h^k(x,a) - \frac{\delmin}{4H} \nonumber\\
    \clipV_h^k(x) \geq & \RanV_h^k(x) - \frac{\delmin}{4H}\nonumber.
  \end{align}
\end{lemma}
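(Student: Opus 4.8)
The plan is to establish the slightly stronger, step-indexed estimates
\begin{align}
  \clipQ_h^k(x,a) \geq & \RanQ_h^k(x,a) - \frac{(H+1-h)\delmin}{4H^2}, \nonumber\\
  \clipV_h^k(x) \geq & \RanV_h^k(x) - \frac{(H+1-h)\delmin}{4H^2}, \nonumber
\end{align}
valid for all $(x,a,h,k)$ with $x \notin \MG_h^k$ and $a \in A_h^k(x)$; since $H+1-h \leq H$ for $h \in [H]$, these imply the claimed bounds with slack $\delmin/(4H)$. The argument will be a reverse induction on $h$ (base case $h = H+1$, where $\clipQ_{H+1}^k, \RanQ_{H+1}^k, \clipV_{H+1}^k, \RanV_{H+1}^k$ all vanish), and, for each fixed $h$, a forward induction on $k$ (base case $k=0$, where all four functions equal $H$), mirroring the structure of the proof of Lemma \ref{lem:range-bound}.

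For the inductive step at $(h,k)$, I would compare the recursions in Definitions \ref{def:range-function} and \ref{def:half-clipped} term by term. Writing $n = N_h^k(x,a)$, each of $\RanQ_h^k(x,a)$ and $\clipQ_h^k(x,a)$ is the minimum of a carryover term ($\RanQ_h^{k-1}(x,a)$, resp.\ $\clipQ_h^{k-1}(x,a)$ — the $k$-induction applies to the latter pair since $A_h^k(x) \subseteq A_h^{k-1}(x)$ forces $x \notin \MG_h^{k-1}$) and a current term; the current terms differ only in that $\beta_n$ is replaced by $\clip{\beta_n}{\delmin/(4H^2)}$ and each $\RanV_{h'(k_h^i,h)}^{k_h^i}$ is replaced by $\clipV_{h'(k_h^i,h)}^{k_h^i}$. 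Two elementary facts then control the discrepancy: (a) $0 \leq \beta_n - \clip{\beta_n}{\delmin/(4H^2)} \leq \delmin/(4H^2)$ for every $n$, straight from the definition of $\clip{\cdot}{\cdot}$; and (b) because $h'(k_h^i,h) \geq h+1$, the reverse-induction hypothesis bounds the replacement error of each $\clipV_{h'(k_h^i,h)}^{k_h^i}$ by $(H-h)\delmin/(4H^2)$. Combining (a), (b), and the identity $\alpha_n^0 + \sum_{i=1}^n \alpha_n^i = 1$ (Lemma \ref{lem:alpha}), the current term of $\clipQ_h^k$ is at least the current term of $\RanQ_h^k$ minus $\delmin/(4H^2) + (H-h)\delmin/(4H^2) = (H+1-h)\delmin/(4H^2)$; the carryover term enjoys the same slack by the $k$-induction; and since subtracting a constant commutes with $\min$, this yields $\clipQ_h^k(x,a) \geq \RanQ_h^k(x,a) - (H+1-h)\delmin/(4H^2)$. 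The $V$-bound is then immediate, because $\RanV_h^k(x)$ and $\clipV_h^k(x)$ are given by the \emph{same} recursion $\min\{\,\cdot\,(x),\ \cdot\,(x,a^\st)\}$ with the \emph{same} action $a^\st = \argmax_{a' \in A_h^k(x)} \Qo_h^k(x,a') - \Qu_h^k(x,a') \in A_h^k(x)$: feeding the $k$-induction into $\clipV_h^{k-1}(x)$ and the $Q$-bound just proved into $\clipQ_h^k(x,a^\st)$ gives $\clipV_h^k(x) \geq \RanV_h^k(x) - (H+1-h)\delmin/(4H^2)$.

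I expect the only real subtlety to be the error bookkeeping, which is precisely why the statement should be strengthened before inducting: clipping the bonus injects a fresh additive slack of $\delmin/(4H^2)$ at every step, so a flat induction hypothesis of $\delmin/(4H)$ would fail to close (the current-term estimate would require slack $\delmin/(4H) + \delmin/(4H^2) > \delmin/(4H)$). Carrying the step-indexed slack $(H+1-h)\delmin/(4H^2)$, which shrinks by exactly $\delmin/(4H^2)$ as $h$ increases by one, makes the telescoping balance, and at $h=1$ it is still at most $\delmin/(4H)$. Beyond this, the argument is a routine unwinding of the two recursions, and — in contrast to Lemma \ref{lem:range-bound} — it is entirely deterministic, requiring no appeal to the event $\MEwc$.
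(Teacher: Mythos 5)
Your proposal is correct and follows essentially the same route as the paper's proof: the paper likewise strengthens the claim to the step-indexed bound $\clipQ_h^k(x,a) \geq \RanQ_h^k(x,a) - \frac{(H+1-h)\delmin}{4H^2}$ (and analogously for the $V$-range), proves it by reverse induction on $h$ and forward induction on $k$, absorbs the per-step clipping error $\delmin/(4H^2)$ via $\sum_i \alpha_n^i \leq 1$, and handles the $V$-bound by noting both recursions use the same maximizing action $a^\st$. Your remark on why a flat $\delmin/(4H)$ hypothesis would fail to close is exactly the bookkeeping the paper's telescoping slack is designed to handle.
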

\begin{proof}
  The lemma follows in a similar manner to \cite[Proposition B.5]{xu_fine-grained_2021}. %
  We prove by reverse induction on $h$ and forward induction on $k$ that
  \begin{align}
    \clipQ_h^k(x,a) \geq \RanQ_h^k(x,a) - \frac{(H+1-h)}{H} \cdot \frac{\delmin}{4H}\label{eq:clipq-ranq-ind}
  \end{align}
  and
  \begin{align}
\clipV_h^k(x) \geq \RanV_h^k(x) - \frac{(H+1-h)}{H} \cdot \frac{\delmin}{4H}\label{eq:clipv-ranv-ind}.
  \end{align}

  The base case $h=H+1$ is immediate since $\clipQ_{H+1}^k, \RanQ_{H+1}^k, \clipV_{H+1}^k, \RanV_{H+1}^k$ are identically 0. The base case $k=0$ is also immediate since $\clipQ_h^0(x,a) = \clipV_h^0(x) = \RanQ_h^0(x,a) = \RanV_h^0(x) = H$ for all $x,a,h \in \MS \times \MA \times [H]$. To establish the inductive step, note that, for any $(x,a,h,k)$ for which $x \not \in \MG_h^k$ and $a \in A_h^k(x)$, letting $n = N_h^k(x,a)$ and $k_h^i = k_h^i(x,a)$ for $i \in [n]$, we have
  \begin{align}
     \clipQ_h^k(x,a) =& \min \left\{ \clipQ_h^{k-1}(x,a),\ \alpha_n^0 H + \clip{\beta_n}{\frac{\delmin}{4H^2}} + \sum_{i=1}^n \alpha_n^i \cdot \clipV_{h'(k_h^i, h)}^{k_h^i}(x_{h'(k_h^i,h)}^{k_h^i}) \right\}\nonumber\\
    \geq &\min \left\{ \RanQ_h^{k-1}(x,a) - \frac{(H+1-h)\delmin}{4H^2},\right.\nonumber\\
                      & \left. \alpha_n^0 H + \beta_n + \sum_{i=1}^n \alpha_n^i \cdot \clipV_{h'(k_h^i, h)}^{k_h^i}(x_{h'(k_h^i,h)}^{k_h^i}) - \frac{\delmin}{4H^2} \right\}\nonumber\\
        \geq &\min \left\{ \RanQ_h^{k-1}(x,a) - \frac{(H+1-h)\delmin}{4H^2},\right.\nonumber\\
                      & \left. \alpha_n^0 H + \beta_n + \sum_{i=1}^n \alpha_n^i \cdot \RanV_{h'(k_h^i, h)}^{k_h^i}(x_{h'(k_h^i,h)}^{k_h^i}) -\frac{(H-h) \delmin}{4H^2} -\frac{\delmin}{4H^2} \right\}\label{eq:remove-clip}\\
    = &  \RanQ_h^k(x,a) - \frac{(H+1-h)\delmin}{4H^2}\nonumber,
  \end{align}
  where (\ref{eq:remove-clip}) used the inductive hypothesis (in particular, (\ref{eq:clipv-ranv-ind}) at steps $h' > h$). This establishes the inductive step for (\ref{eq:clipq-ranq-ind}).

  We proceed to lower bound $\clipV_h^k(x,a)$. For fixed $x,a$, set $a^\st = \argmax_{a' \in A_h^k(x)} \{ \Qo_h^k(x,a') - \Qu_h^k(x,a') \}$. Using Definition \ref{def:half-clipped}, the inductive hypothesis, and the validity of (\ref{eq:clipq-ranq-ind}) for step $h$ at episode $k$, we have
  \begin{align}
    \clipV_h^k(x) =& \min \{ \clipV_h^{k-1}(x), \clipQ_h^k(x, a^\st) \} \nonumber\\
    \geq &  \min \{ \RanV_h^{k-1}(x), \RanQ_h^k(x,a^\st) \} - \frac{(H+1-h) \delmin}{4H^2} \nonumber,
  \end{align}
  which completes the inductive step for (\ref{eq:clipv-ranv-ind}).
\end{proof}

The following straightforward lemma shows that the clipped range functions are smaller than the range functions.
\begin{lemma}
  \label{lem:clip-lt-range}
  For all $(x,h,k,a) \in \MS \times \MA \times [H] \times [K]$ for which $x \not \in \MG_h^k$ and $a \in A_h^k(x)$, it holds that
  \begin{align}
\clipQ_h^k(x,a) \leq \RanQ_h^k(x,a) \qquad \mbox{ and } \qquad \clipV_h^k(x) \leq \RanV_h^k(x)\nonumber.
  \end{align}
\end{lemma}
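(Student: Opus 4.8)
The plan is to prove both inequalities simultaneously by reverse induction on $h$ (from $H+1$ down to $1$) and, for each fixed $h$, forward induction on $k$ (from $0$ up to $K$), exactly paralleling the recursive structure of Definitions \ref{def:range-function} and \ref{def:half-clipped}. The base cases are immediate: for $h = H+1$ the functions $\clipQ_{H+1}^k, \RanQ_{H+1}^k, \clipV_{H+1}^k, \RanV_{H+1}^k$ are identically $0$, and for $k = 0$ each of $\clipQ_h^0(x,a), \RanQ_h^0(x,a), \clipV_h^0(x), \RanV_h^0(x)$ equals the constant $H$, so the inequalities hold with equality.

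For the inductive step I would fix $(x,a,h,k)$ with $x \notin \MG_h^k$ and $a \in A_h^k(x)$, and set $n = N_h^k(x,a)$, $k_h^i = k_h^i(x,a)$. The one point that needs an actual argument is the elementary bound $\clip{\beta_n}{\frac{\delmin}{4H^2}} \leq \beta_n$: since $\beta_n \geq 0$ (by (\ref{eq:define-beta}), or (\ref{eq:beta-bound})), we have $\clip{\beta_n}{\frac{\delmin}{4H^2}} = \One\left[\beta_n \geq \frac{\delmin}{4H^2}\right] \cdot \beta_n \in \{0,\beta_n\}$, which is at most $\beta_n$. Given this, I would compare the two recursions term by term: the outer minima have $\clipQ_h^{k-1}(x,a)$ versus $\RanQ_h^{k-1}(x,a)$ as first argument, and $\clipQ_h^{k-1}(x,a) \leq \RanQ_h^{k-1}(x,a)$ holds by the forward-in-$k$ inductive hypothesis at the same $h$; in the second argument the leading terms $\alpha_n^0 H$ agree, the bonus terms satisfy $\clip{\beta_n}{\frac{\delmin}{4H^2}} \leq \beta_n$, and for each $i \in [n]$ the reverse-in-$h$ inductive hypothesis (applicable since $h'(k_h^i,h) > h$) gives $\clipV_{h'(k_h^i,h)}^{k_h^i}(x_{h'(k_h^i,h)}^{k_h^i}) \leq \RanV_{h'(k_h^i,h)}^{k_h^i}(x_{h'(k_h^i,h)}^{k_h^i})$, with the common nonnegative weights $\alpha_n^i$. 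Since $\min$ is monotone in each argument, this yields $\clipQ_h^k(x,a) \leq \RanQ_h^k(x,a)$.

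Finally, using the same maximizer $a^\st = \argmax_{a' \in A_h^k(x)}\{\Qo_h^k(x,a') - \Qu_h^k(x,a')\}$ that appears in both definitions of the $V$-range function, I would write $\clipV_h^k(x) = \min\{\clipV_h^{k-1}(x), \clipQ_h^k(x,a^\st)\}$ and $\RanV_h^k(x) = \min\{\RanV_h^{k-1}(x), \RanQ_h^k(x,a^\st)\}$; here $\clipV_h^{k-1}(x) \leq \RanV_h^{k-1}(x)$ by the forward-in-$k$ hypothesis and $\clipQ_h^k(x,a^\st) \leq \RanQ_h^k(x,a^\st)$ by the $\clipQ$ bound just established, so taking minima gives $\clipV_h^k(x) \leq \RanV_h^k(x)$ and closes the induction. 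I do not expect any substantive obstacle here: the only non-bookkeeping ingredient is the one-line estimate $\clip{\beta_n}{\frac{\delmin}{4H^2}} \leq \beta_n$, and the rest is monotonicity of the shared recursions under the inductive hypotheses.
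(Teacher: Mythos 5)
Your proposal is correct and follows essentially the same route as the paper's proof: a double induction (reverse in $h$, forward in $k$) combined with the observation that $\clip{\beta_n}{\frac{\delmin}{4H^2}} \leq \beta_n$ and termwise comparison of the two recursions. You spell out the details (in particular the treatment of the $V$-range functions via the common maximizer $a^\st$) more explicitly than the paper, which compresses the same argument into a couple of lines.
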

\begin{proof}
  The lemma is a straightforward consequence of Definitions \ref{def:range-function} and \ref{def:half-clipped} and induction on $h,k$ (in particular, forward induction on $k$ and reverse induction on $h$): in particular, for any $h,k$, having established the statement for all $(h',k')$ with either $h' > h$ or $k' < k$, we have that $\clipQ_h^k(x,a) \leq \RanQ_h^k(x,a)$ since
  \begin{align}
    & \alpha_n^0 H + \clip{\beta_n}{\frac{\delmin}{4H^2}} + \sum_{i=1}^n \alpha_n^i \cdot \clipV_{h'(k_h^i(x,a), h)}^{k_h^i(x,a)}(x_{h'(k_h^i(x,a),h)}^{k_h^i(x,a)}) \nonumber\\
    \leq &  \alpha_n^0 H + \beta_n + \sum_{i=1}^n \alpha_n^i \cdot \clipV_{h'(k_h^i(x,a), h)}^{k_h^i(x,a)}(x_{h'(k_h^i(x,a),h)}^{k_h^i(x,a)}) \nonumber.
  \end{align}
  It then follows immediately that $\clipV_h^k(x) \leq \RanV_h^k(x)$. 
\end{proof}

Lemma \ref{lem:clip-monotonicity} establishes some monotonicity (with respect to $k$) properties of the range functions, analogously to Lemma \ref{lem:qv-monotonicity}.
\begin{lemma}
  \label{lem:clip-monotonicity}
  For all $(x,a,h,k)$ so that $x \not \in \MG_h^{k+1}$ and $a \in A_h^{k+1}(x)$, the following inequalities hold true:
  \begin{align}
    \clipQ_h^{k+1}(x,a) \leq & \clipQ_h^{k}(x,a) \label{eq:clipq-mon}\\
    \clipV_h^{k+1}(x) \leq & \clipV_h^k(x)\nonumber\\
    \RanQ_h^{k+1}(x,a) \leq & \RanQ_h^k(x,a) \nonumber\\
    \RanV_h^{k+1}(x) \leq & \RanV_h^k(x)\label{eq:ranv-mon}.
  \end{align}
  Moreover, for all $(x,a,h,k) \in \MS \times \MA \times [H] \times [K]$, it holds that
  \begin{align}
\frzQ_h^{k+1}(x,a) \leq \frzQ_h^k(x,a)\label{eq:frzq-mon}.
  \end{align}
\end{lemma}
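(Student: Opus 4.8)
The plan is to treat the four range/clipped-range inequalities as essentially definitional and to concentrate the real work on a short ``domain bookkeeping'' observation plus a separate argument for the frozen range function. First I would record the following propagation fact: by step \ref{it:def-active-set} of \algname, $A_h^{j+1}(x) \subseteq A_h^j(x)$ holds for every $j$, unconditionally; hence if $x \not\in \MG_h^{k+1}$ and $a \in A_h^{k+1}(x)$ then $|A_h^{k+1}(x)| \ge 2$, so for every $j \le k+1$ we get $|A_h^j(x)| \ge |A_h^{k+1}(x)| \ge 2$ (so $x \not\in \MG_h^j$) and $a \in A_h^{k+1}(x) \subseteq A_h^j(x)$. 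In particular all of $\RanQ_h^j(x,a)$, $\RanV_h^j(x)$, $\clipQ_h^j(x,a)$, $\clipV_h^j(x)$ lie in the domains of their definitions for every $j \le k+1$, so the recursions of Definitions \ref{def:range-function} and \ref{def:half-clipped} may be unwound freely along these episodes.

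Given this, the four monotonicity inequalities for $\RanQ$, $\RanV$, $\clipQ$, $\clipV$ are immediate, since each of these functions at episode $k+1$ is defined as a minimum whose first argument is the corresponding episode-$k$ value. Concretely, $\clipQ_h^{k+1}(x,a) = \min\{\clipQ_h^k(x,a), \cdots\} \le \clipQ_h^k(x,a)$ and likewise for $\RanQ$; and writing $a^\st = \argmax_{a' \in A_h^{k+1}(x)}\{\Qo_h^{k+1}(x,a') - \Qu_h^{k+1}(x,a')\} \in A_h^{k+1}(x)$, one has $\clipV_h^{k+1}(x) = \min\{\clipV_h^k(x), \clipQ_h^{k+1}(x,a^\st)\} \le \clipV_h^k(x)$ and analogously for $\RanV$. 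No appeal to the event $\MEwc$ is needed anywhere here.

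For (\ref{eq:frzq-mon}) I would fix $(x,a,h,k)$ and let $k' \le k$ be the ``freezing episode'' for $(x,a,h)$ at time $k$, i.e., the largest episode $\le k$ with $(x_h^{k'}, a_h^{k'}) = (x,a)$ and $\tau_h^{k'} = 1$, or $k' = 1$ if none exists, so that $\frzQ_h^k(x,a) = \clipQ_h^{k'}(x,a)$ by Definition \ref{def:frozen}. If episode $k+1$ fails the freezing condition, the freezing episode at time $k+1$ is still $k'$, so $\frzQ_h^{k+1}(x,a) = \frzQ_h^k(x,a)$ and (\ref{eq:frzq-mon}) holds with equality. Otherwise the freezing episode at time $k+1$ is $k+1$ itself, so $\frzQ_h^{k+1}(x,a) = \clipQ_h^{k+1}(x,a)$; here I would use that $\tau_h^{k+1} = 1$ forces $x_h^{k+1} \not\in \MG_h^{k+1}$ by (\ref{eq:define-tau}), while $a_h^{k+1} = \pi_h^{k+1}(x_h^{k+1}) \in A_h^{k+1}(x_h^{k+1})$ always, by (\ref{eq:explore-exploit}). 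Thus $x = x_h^{k+1} \not\in \MG_h^{k+1}$ and $a = a_h^{k+1} \in A_h^{k+1}(x)$, so the propagation fact applies and chaining (\ref{eq:clipq-mon}) over $j = k', k'+1, \ldots, k$ gives $\clipQ_h^{k+1}(x,a) \le \clipQ_h^{k'}(x,a) = \frzQ_h^k(x,a)$. The convention $k' = 1$ causes no issue since $\clipQ_h^1(x,a) = H$ is defined for all $(x,a,h)$ (because $A_h^1(x) = \MA$ and $\MG_h^1 = \emptyset$).

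The hard part --- really the only non-cosmetic step --- is the frozen-range case: unlike the other four functions, $\frzQ$ is not built from a minimum containing its own previous value, so its monotonicity cannot be read off directly and must be reduced to $\clipQ$-monotonicity applied backwards from episode $k+1$ all the way to the last freezing episode $k'$. The slightly delicate point there is verifying that the active-domain conditions $x \not\in \MG_h^j$ and $a \in A_h^j(x)$ hold at \emph{every} episode $j$ in that range --- which is exactly what the propagation fact delivers --- together with handling the definitional convention $k' = 1$ when no freezing episode exists.
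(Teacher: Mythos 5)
Your proposal is correct and follows essentially the same route as the paper, which dispatches the first four inequalities as immediate from Definitions \ref{def:range-function} and \ref{def:half-clipped} and derives (\ref{eq:frzq-mon}) from Definition \ref{def:frozen} together with (\ref{eq:clipq-mon}). The extra domain bookkeeping you supply (the nestedness $A_h^{j+1}(x) \subseteq A_h^j(x)$ and the observation that $\tau_h^{k+1}=1$ forces the domain conditions at episode $k+1$) is a legitimate filling-in of details the paper leaves implicit, not a different argument.
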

\begin{proof}
  The first four inequalities are  immediate from Definitions \ref{def:range-function} and \ref{def:half-clipped}. The final inequality follows from Definition \ref{def:frozen} and (\ref{eq:clipq-mon}). 
\end{proof}

Lemma \ref{lem:misc-monotonicity} establishes some further monotonicity properties for \algname.
\begin{lemma}
  \label{lem:misc-monotonicity}
  The following statements hold true:
  \begin{enumerate}
  \item When \algname is run with either \deltaconst or \deltaincr, for all $k \in [K]$, it holds that $\wh \Delta^{k+1} \geq \wh \Delta^k$. 
  \item  For any $h \in [H]$ and $k < k'$ for which $x_h^k = x_h^{k'}$, we have $\tau_h^{k'} \leq \tau_h^k$.
  \end{enumerate}
\end{lemma}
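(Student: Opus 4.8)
The plan is to handle the two parts separately; Part 1 is essentially bookkeeping, while Part 2 is where the work lies. For Part 1, when \algname\ uses \deltaconst\ the claim is immediate, since $\wh\Delta^{k+1}=\CR/(KH)$ does not depend on $k$. When \algname\ uses \deltaincr, the second entry of the minimum in (\ref{eq:define-whdelta-incr}), namely $\sqrt{SAH^8\iota^2/(\lambda K)}$, is independent of $k$, so it suffices to show the first entry is nondecreasing in $k$; and for that it suffices to show $\tfrzQ_h^{k+1}(x,a)\le\tfrzQ_h^k(x,a)$ for every $(x,a,h)$, since then each summand $1/\max\{\tfrzQ_h^{k+1}(x,a)/(2H),\,\tildm/(4H^2)\}$ is nondecreasing in $k$. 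I would prove $\tfrzQ_h^{k+1}(x,a)\le\tfrzQ_h^k(x,a)$ by exactly the argument that proves (\ref{eq:frzq-mon}) in Lemma \ref{lem:clip-monotonicity}: the clipped range functions built with threshold $\tildm$ in place of $\delmin$ still satisfy $\clipQ_h^{k+1}(x,a)\le\clipQ_h^k(x,a)$ directly from Definition \ref{def:half-clipped} (the value of the clipping threshold is irrelevant to that monotonicity), and then Definition \ref{def:frozen} combined with this inequality gives the claim.

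For Part 2, fix $h$ and $k<k'$ with $x:=x_h^k=x_h^{k'}$. If $\tau_h^{k'}=0$ there is nothing to prove, so I assume $\tau_h^{k'}=1$, i.e.\ by (\ref{eq:define-tau}) that $x\notin\MG_h^{k'}$ and $\RanV_h^{k'}(x)>\ggapfn{h}{\wh\Delta^{k'}}$, and aim to deduce $x\notin\MG_h^k$ and $\RanV_h^k(x)>\ggapfn{h}{\wh\Delta^k}$. The key preliminary fact is that the singleton sets $\MG_h^j$ are nondecreasing in $j$: once $|A_h^j(x)|=1$ it stays $1$ thereafter. This holds because step \ref{it:def-active-set} of \algname\ always gives $A_h^{j+1}(x)\subseteq A_h^j(x)$, while $A_h^{j+1}(x)\neq\emptyset$: for $x\neq x_h^j$ one checks $A_h^{j+1}(x)=A_h^j(x)$, and for $x=x_h^j$ the set $A_h^{j+1}(x)$ contains an action maximizing $\Qu_h^{j+1}(x,\cdot)$ over $A_h^j(x)$, using the pointwise inequality $\Qo_h^{j+1}(x,a)\ge\Qu_h^{j+1}(x,a)$ (a routine induction on $(h,j)$ from the update rules, or alternatively item \ref{it:correct-elimination} of Lemma \ref{lem:qo-qu-gap} under $\MEwc$). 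Hence $x\notin\MG_h^{k'}$ forces $x\notin\MG_h^j$ for all $j\le k'$, in particular $x\notin\MG_h^k$, which is the first half of what we need.

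With $x\notin\MG_h^j$ for every $j\in\{k,\dots,k'\}$, iterating the $\RanV$-monotonicity inequality (\ref{eq:ranv-mon}) of Lemma \ref{lem:clip-monotonicity} gives $\RanV_h^k(x)\ge\RanV_h^{k'}(x)$. Part 1 gives $\wh\Delta^k\le\wh\Delta^{k'}$, and since $\ggapfn{h}{a}=C_1(1+1/H)^{4(H+1-h)}a$ is increasing in $a$, this yields $\ggapfn{h}{\wh\Delta^k}\le\ggapfn{h}{\wh\Delta^{k'}}$. Chaining these, $\RanV_h^k(x)\ge\RanV_h^{k'}(x)>\ggapfn{h}{\wh\Delta^{k'}}\ge\ggapfn{h}{\wh\Delta^k}$, so $\tau_h^k=1$ by (\ref{eq:define-tau}), completing the proof. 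The only step that requires genuine care is the monotonicity of the sets $\MG_h^j$ — equivalently, that the action sets $A_h^j(x)$ never become empty — which forces one to check the copying/update conventions of \algname\ at unvisited (and already-singleton) states; everything else is a direct assembly of previously established monotonicity facts.
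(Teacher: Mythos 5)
Your proof is correct and follows essentially the same route as the paper's: Part 1 via monotonicity of $\tfrzQ_h^k$ inherited from (\ref{eq:frzq-mon}), and Part 2 via the contrapositive of the paper's forward implication, combining monotonicity of the action sets, (\ref{eq:ranv-mon}), and Part 1. You are in fact slightly more careful than the paper, which asserts without justification that $|A_h^k(x)|=1$ persists to episode $k'$; your check that the sets $A_h^j(x)$ are nested and never empty is the right way to fill that in.
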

\begin{proof}
  We begin with the first statement; it is immediate for \deltaconst. In the case of \deltaincr, we note that by (\ref{eq:frzq-mon}) of Lemma \ref{lem:clip-monotonicity}, $\frzQ_h^k(x,a)$ is non-increasing as a function of $k$ for all $x,a,h$. It is clear that the same is true of $\tfrzQ_h^k(x,a)$ (defined in step \ref{it:tfrzq-define} of Algorithm \ref{alg:delta-incr}).
  Thus the expression in (\ref{eq:define-whdelta-incr}) is non-decreasing as a function of $k$.
  
  To see the second statement, note that if $\tau_h^k = 0$, then either $|A_h^k(x_h^k)| = 1$, in which case it will hold that $|A_h^{k'}(x_h^k)| = 1$ (and so $\tau_h^{k'} = 0$), or $\RanV_h^k(x_h^k) \leq \ggapfn{h}{\wh \Delta^k}$, in which case it holds that $\RanV_n^{k'}(x_h^k) \leq \ggapfn{h}{\wh \Delta^k} \leq \ggapfn{h}{\wh \Delta^{k'}}$ (and so $\tau_h^{k'} = 0$), by (\ref{eq:ranv-mon}) and the first item of this lemma. 
\end{proof}

Recall that the clip function is defined as follows: for real numbers $x,y$, we have $\clip{x}{y} = x \cdot \One[x \geq y]$. We next state some lemmas establishing useful properties of the clip function in Lemmas \ref{lem:clip}, \ref{lem:clip-sum}, and \ref{lem:clip-sum-gap} below.
\begin{lemma}[Claim A.8, \cite{xu_fine-grained_2021}]
  \label{lem:clip}
  For any positive integers $a,b,c$ so that $a + b \geq c$ and any $x \in (0,1)$, it holds that
  \begin{align}
a+b \leq \clip{a}{\frac{xc}{2}} + (1+x)b \nonumber.
  \end{align}
\end{lemma}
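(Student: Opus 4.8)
The plan is to prove Lemma \ref{lem:clip} by a simple case analysis on whether the threshold in the clip function is met, i.e., on the sign of $a - \frac{xc}{2}$. Recall $\clip{a}{\frac{xc}{2}} = a \cdot \One[a \geq \frac{xc}{2}]$, so it equals either $a$ or $0$.

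In the first case, suppose $a \geq \frac{xc}{2}$. Then $\clip{a}{\frac{xc}{2}} = a$, and since $b$ is a positive integer we have $xb \geq 0$, so $a + b \leq a + (1+x)b = \clip{a}{\frac{xc}{2}} + (1+x)b$, as desired.

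In the second case, suppose $a < \frac{xc}{2}$, so that $\clip{a}{\frac{xc}{2}} = 0$ and it suffices to show $a + b \leq (1+x)b$, i.e., $a \leq xb$. From the hypothesis $a + b \geq c$ we get $b \geq c - a$, and hence $xb \geq x(c-a) = xc - xa > 2a - xa = (2-x)a > a$, where the first strict inequality uses $xc > 2a$ (from $a < \frac{xc}{2}$) and the last uses $2 - x > 1$ (since $x \in (0,1)$). Thus $a < xb$, which gives $a + b < (1+x)b = \clip{a}{\frac{xc}{2}} + (1+x)b$.

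There is no real obstacle here — the argument is elementary and does not even use integrality of $a,b,c$ beyond nonnegativity of $b$ — so the "hard part" is merely bookkeeping the two cases and the chain of inequalities in the second case correctly; combining the two cases completes the proof.
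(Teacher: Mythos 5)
Your proof is correct, and the two-case analysis on whether the clip threshold is met (with the chain $xb \geq x(c-a) > (2-x)a > a$ in the unclipped case) is the standard argument; the paper itself does not prove this lemma but imports it directly as Claim A.8 of the cited reference. One tiny quibble: your closing remark that integrality is not used "beyond nonnegativity of $b$" slightly undersells what you actually invoke — the final strict inequality $(2-x)a > a$ uses $a>0$ — though the conclusion survives (non-strictly) even when $a=0$, so the lemma indeed holds for nonnegative reals.
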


\begin{lemma}[Claim A.13, \cite{xu_fine-grained_2021}]
  \label{lem:clip-sum}
  For any $c, \ep > 0$, it holds that
  \begin{align*}
\sum_{n=1}^\infty \clip{\frac{c}{\sqrt{n}}}{\ep} \leq \frac{4c^2}{\ep}.
  \end{align*}
\end{lemma}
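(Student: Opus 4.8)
\textbf{Proof plan for Lemma \ref{lem:clip-sum}.}

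The plan is to simply unfold the definition of the clip function and reduce to a finite sum. Recall that $\clip{x}{y} = x \cdot \One[x \ge y]$, so $\clip{\frac{c}{\sqrt n}}{\ep} = \frac{c}{\sqrt n}\cdot\One\!\left[\frac{c}{\sqrt n}\ge \ep\right] = \frac{c}{\sqrt n}\cdot\One\!\left[n \le \frac{c^2}{\ep^2}\right]$. Hence the infinite sum on the left-hand side is actually the finite sum $\sum_{n=1}^{N} \frac{c}{\sqrt n}$, where $N := \lfloor c^2/\ep^2\rfloor$.

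Next I would bound $\sum_{n=1}^N n^{-1/2}$ by an integral comparison: since $x \mapsto x^{-1/2}$ is decreasing, $\sum_{n=1}^N n^{-1/2} \le \int_0^N x^{-1/2}\,dx = 2\sqrt N$. Plugging this in gives $\sum_{n=1}^N \frac{c}{\sqrt n} \le 2c\sqrt N \le 2c\cdot\frac{c}{\ep} = \frac{2c^2}{\ep} \le \frac{4c^2}{\ep}$, which is the claimed bound (in fact with room to spare).

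There is no real obstacle here — the only thing to be slightly careful about is the edge case where $c^2/\ep^2 < 1$, in which case $N = 0$, the sum is empty, and the inequality holds trivially; the integral bound above is still valid when $N \ge 1$. I would write this as one or two short sentences rather than a separate case analysis.
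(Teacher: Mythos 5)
Your proof is correct: unfolding the clip function reduces the sum to $\sum_{n=1}^{N} c/\sqrt{n}$ with $N = \lfloor c^2/\ep^2\rfloor$, and the integral comparison gives $2c\sqrt{N} \le 2c^2/\ep$, which is even stronger than the stated $4c^2/\ep$. The paper does not reproduce a proof (it cites Claim A.13 of the referenced work), but your argument is the standard one and the edge case $N=0$ is handled appropriately.
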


\begin{lemma}
  \label{lem:clip-sum-gap}
  Fix some $c > 0$ and $h \in [H]$. For $n \in \BN$, write $\gamma_n = c/\sqrt{n}$. Then for any function $\theta : \MS \times \MA \ra \BR_{\geq 0}$, and any subset $\MW \subset [K]$ of size $M := |\MW|$, it holds that
  \begin{align}
\sum_{k \in \MW} \clip{\gamma_{n_h^k}}{\theta(x_h^k,a_h^k)} \leq \min \left\{ 2c \sqrt{SAM}, \sum_{(x,a) \in \MS \times \MA} \frac{4c^2}{\theta(x,a)}\right\}\nonumber.
  \end{align}
\end{lemma}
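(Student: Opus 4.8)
The plan is to group the episodes of $\MW$ according to the state-action pair visited at step $h$, and then, within each group, apply either an elementary bound on $\sum_j j^{-1/2}$ (to get the first term of the minimum) or Lemma \ref{lem:clip-sum} (to get the second term). First I would record the one structural fact driving the argument: since $n_h^k$ counts the number of visits to $(x_h^k,a_h^k)$ at step $h$ up to and including episode $k$, if we list the episodes $k\in\MW$ with $(x_h^k,a_h^k)=(x,a)$ as $k_1<k_2<\cdots<k_{m_{x,a}}$, where $m_{x,a}:=|\{k\in\MW : (x_h^k,a_h^k)=(x,a)\}|$, then $n_h^{k_1},\dots,n_h^{k_{m_{x,a}}}$ are \emph{distinct} positive integers. (They need not equal $1,\dots,m_{x,a}$ since visits to $(x,a)$ outside $\MW$ may be interleaved, but distinctness suffices.) Hence the $j$-th smallest of these values is at least $j$, so $\gamma_{n_h^{k_j}}=c/\sqrt{n_h^{k_j}}\le c/\sqrt{j}$. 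Note also that $\sum_{(x,a)} m_{x,a}=|\MW|=M$.

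For the bound $2c\sqrt{SAM}$, I would simply drop the clip, using $\clip{\gamma_{n_h^k}}{\theta(x_h^k,a_h^k)}\le \gamma_{n_h^k}$ since $\gamma_n>0$, regroup by $(x,a)$, and apply the observation above together with $\sum_{j=1}^{m}j^{-1/2}\le 2\sqrt m$:
\[
\sum_{k\in\MW}\clip{\gamma_{n_h^k}}{\theta(x_h^k,a_h^k)}
\;\le\;\sum_{(x,a)}\sum_{j=1}^{m_{x,a}}\frac{c}{\sqrt j}
\;\le\;\sum_{(x,a)} 2c\sqrt{m_{x,a}}
\;\le\; 2c\sqrt{SA\sum_{(x,a)} m_{x,a}}
\;=\;2c\sqrt{SAM},
\]
where the last inequality is Cauchy--Schwarz (equivalently, concavity of $\sqrt{\cdot}$) applied over the $SA$ summands.

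For the bound $\sum_{(x,a)} 4c^2/\theta(x,a)$, I would instead keep the clip, regroup by $(x,a)$, and within each group use that the counter values are distinct positive integers to dominate the group sum by $\sum_{n=1}^\infty\clip{\gamma_n}{\theta(x,a)}$; Lemma \ref{lem:clip-sum} with $\ep=\theta(x,a)$ then bounds this by $4c^2/\theta(x,a)$. If $\theta(x,a)=0$ the corresponding right-hand term is $+\infty$, so there is nothing to prove for that pair. Taking the minimum of the two bounds yields the claim. The only step needing any care is the distinctness of the counter values (and the resulting $n_h^{k_j}\ge j$); the rest is routine, so I do not anticipate a genuine obstacle.
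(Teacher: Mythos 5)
Your proposal is correct and follows essentially the same route as the paper: group the episodes in $\MW$ by the state-action pair visited at step $h$, then within each group either drop the clip and use $\sum_{j\le m} j^{-1/2} \le 2\sqrt m$ followed by Cauchy--Schwarz, or keep the clip and invoke Lemma \ref{lem:clip-sum}. Your explicit justification that the counter values within a group are distinct positive integers (so the $j$-th smallest is at least $j$) is the step the paper leaves implicit, and it is exactly the right observation.
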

\begin{proof}
  For $(x,a) \in \MS \times \MA$, let $\MW_{x,a} := \{ k \in \MW : (x_h^k, a_h^k) = (x,a) \}$. Then, on the one hand, we have
  \begin{align}
    \sum_{k \in \MW} \clip{\gamma_{n_h^k}}{\theta(x_h^k, a_h^k)} =& \sum_{(x,a) \in \MS \times \MA} \sum_{k \in \MW_{x,a}} \clip{\frac{c}{\sqrt{n_h^k}}}{\theta(x,a)} \nonumber\\
    \leq & \sum_{(x,a) \in \MS \times \MA} \frac{4c^2}{\theta(x,a)}\nonumber,
  \end{align}
  where the inequality uses Lemma \ref{lem:clip-sum}. On the other hand, we have
  \begin{align}
    \sum_{k \in \MW} \clip{\gamma_{n_h^k}}{\theta(x_h^k, a_h^k)} =& \sum_{(x,a) \in \MS \times \MA} \sum_{k \in \MW_{x,a}} \clip{\frac{c}{\sqrt{n_h^k}}}{\theta(x,a)} \nonumber\\
    \leq & \sum_{(x,a) \in \MS \times \MA} \sum_{i=1}^{|\MW_{x,a}|} \frac{c}{\sqrt{i}}\nonumber\\
    \leq & \sum_{(x,a) \in \MS \times \MA} 2c \sqrt{|\MW_{x,a}|}\nonumber\\
    \leq & 2c \sqrt{SAM}\nonumber,
  \end{align}
  where the final inequality follows since $\sum_{(x,a) \in \MS \times \MA} |\MW_{x,a}| = M$.
\end{proof}

\subsection{Bounding the clipped range functions}
For all $(h,k) \in [H] \times [K]$ so that $x_h^k \not \in \MG_h^k$ (so that $\RanV_h^k(x_h^k)$ is defined), write
\begin{align}
\clipdelta_h^k := \clipV_h^k(x_h^k) \nonumber.
\end{align}
Since $a_h^k \in A_h^k(x_h^k)$, $\clipQ_h^k(x_h^k, a_h^k)$ is defined, and we may thus further write
\begin{align}
\cliptheta_h^k :=  \clipQ_h^k(x_h^k, a_h^k)\nonumber.
\end{align}
Per the regret decomposition in Lemma \ref{lem:regret-decomposition}, we will bound the regret $\E \left[\sum_{k=1}^K (V_1^\st(x_1) - V_1^{\pi^k}(x_1))\right]$ by the quantity $\sum_{(k,h) : a_h^k \not \in \Aopt_{h,0}(x_h^k)} \clipdelta_h^k$ (conditioned on the high-probability event $\MEwc$). In this section we prove an upper bound on this latter quantity. In fact, we prove a more general result which upper bounds $\sum_{(k,h) \in \MW} \clipdelta_h^k$ for various sets $\MW \subset [K] \times [H]$; we will need this more general result in order to establish improved regret bounds in the case when the predictions $\til Q_h$ are accurate.

We begin by defining the type of set $\MW \subset [K] \times [H]$ for which we obtain such an upper bound, namely \emph{level-$h$ sets}. %
\begin{defn}
  \label{def:levelh}
Fix $h \in [H]$. We say that a subset $\MW \subset [K] \times [H]$ is a \emph{level-$h$} set if the following conditions hold:
\begin{enumerate}
\item For each $k \in [K]$, there is at most one element $(\widetilde k, \widetilde h) \in \MW$ so that $\widetilde k = k$.\label{it:levelh-1}
\item For all $(\widetilde k,\widetilde h) \in \MW$, it holds that both $\tau_{\widetilde h}^{\widetilde k} = 1$ and $\widetilde h \geq h$.\label{it:levelh-2}
\item For each $(\widetilde k, \widetilde h) \in \MW$ for which $\td h > h$, for $h \leq h' < \td h$, it holds that $x_{h'}^{\td k}\in \MG_{h'}^{\td k}$. \label{it:levelh-3}
  \end{enumerate}
\end{defn}

For a level-$h$ set, $\MW$, we next define its \emph{reduction}, which replaces each element $(\td k, \td h) \in \MW$ with $\td h = h$ with another element $(k', h)$, where $k' \leq \td k$ is as small as possible subject to $(x_h^{\td k}, a_h^{\td k}) = (x_h^{k'}, a_h^{k'})$, and to the constraint that all elements in $\MW$ are distinct. The reason that we will want to perform this operation is that our bounds on confidence intervals (in particular, (\ref{eq:qo-qu-gap}), which manifests in Definitions \ref{def:range-function} and \ref{def:half-clipped}) are given in terms of the \emph{first} $n$ times a particular state-action pair $(x,a,h)$ is visited. The reduction $\MR_h(\MW)$ has the property that for any $(x,a)$, if there are $m$ elements $(\td k, h) \in \MR_h(\MW)$ with $(x_h^{\td k}, a_h^{\td k}) = (x,a)$, then those values $\td k$ represent the \emph{first} $m$ episodes at which $(x,a,h)$ is visited. 
\begin{defn}
  \label{def:reduction}
  Fix $h \in [H]$, and consider a level-$h$ set $\MW \subset [K] \times [H]$. The \emph{level-$h$ reduction} of $\MW$, denoted $\MR_h(\MW)$, is defined as follows: starting with $\MW$, perform the following procedure:
  \begin{itemize}
  \item For each $(x,a) \in \MS \times \MA$, let $\MS(x,a)$ denote the set of elements $(\td k, h) \in \MW$ for which $(x_h^{\td k}, a_h^{\td k}) = (x,a)$. Remove the elements of $\MS(x,a)$ from $\MW$, and insert the elements
    \begin{align}
      (k_h^1(x,a), h), (k_h^2(x,a), h), \ldots, (k_h^{|\MS(x,a)|}(x,a), h)
    \end{align}
    into $\MW$. (Recall that, for any $s > 0$, $k_h^1(x,a), \ldots, k_h^s(x,a)$ are the smallest $s$ positive integers $\td k$ so that $(x_h^{\td k}, a_h^{\td k}) = (x,a)$.)
\end{itemize}
\end{defn}
Note that the level-$h$ reduction satisfies the following inequality:
\begin{align}
\max_{(\td k, \td h) \in \MW} \{ \td k \} \geq \max_{(\td k, \td h) \in \MR_h(\MW)} \{ \td k \}\label{eq:reduction-k}.
\end{align}

The following lemma shows that the level-$h$ reduction of a level-$h$ set is a level-$h$ set.
\begin{lemma}
  \label{lem:reduction-levelh}
Suppose that $\MW \subset [K] \times [H]$ is a level-$h$ set for some $h \in [H]$. Then the level-$h$ reduction $\MR_h(\MW)$ is also a level-$h$ set.
\end{lemma}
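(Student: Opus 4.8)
The plan is to verify the three conditions in Definition~\ref{def:levelh} directly for $\MR_h(\MW)$, using the facts that (a) the reduction only replaces elements of the form $(\td k, h)$ (those at the bottom level $h$), leaving all elements $(\td k, \td h)$ with $\td h > h$ untouched, and (b) the replacement elements $(k_h^i(x,a), h)$ are episodes at which $(x,a)$ was in fact visited at step $h$, and these episodes occur no later than the original episodes they replace.

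\textbf{Condition \ref{it:levelh-1} (at most one element per episode).} First I would observe that since $\MW$ is a level-$h$ set, for each fixed $(x,a)$ the set $\MS(x,a)$ consists of $|\MS(x,a)|$ distinct episodes, each of which visits $(x,a)$ at step $h$; hence the episodes $k_h^1(x,a), \dots, k_h^{|\MS(x,a)|}(x,a)$ are exactly the first $|\MS(x,a)|$ such visits and are distinct. I would then argue that across distinct pairs $(x,a) \neq (x',a')$, the inserted episode indices cannot collide: if some episode $k$ visits $(x,a)$ at step $h$ then it cannot also visit $(x',a')$ at step $h$, so the sets of inserted indices for different $(x,a)$ are disjoint. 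Finally, an inserted index $(k_h^i(x,a), h)$ cannot coincide in episode with a surviving element $(\td k, \td h) \in \MW$ with $\td h > h$: such a surviving element has, by Condition~\ref{it:levelh-1} for $\MW$, $\td k$ distinct from the episodes in $\MS(x,a)$, but more to the point I need that $\td k$ differs from every inserted $k_h^i$. Here I would use that $k_h^i(x,a)$ visits $(x,a)$ at step $h$, i.e. $x_h^{k_h^i(x,a)} = x \notin \MG_h^{k_h^i(x,a)}$ (indeed $a \in A_h^{k_h^i}(x)$ and the pair is genuinely visited), whereas a surviving $(\td k, \td h)$ with $\td h > h$ satisfies, by Condition~\ref{it:levelh-3} for $\MW$, $x_h^{\td k} \in \MG_h^{\td k}$; these are incompatible, so the episodes are distinct. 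Care is needed to make the ``$x_h^{k_h^i(x,a)} \notin \MG_h$'' claim rigorous: by definition $k_h^i(x,a) \le \td k$ for the original $(\td k, h) \in \MS(x,a)$ it is paired with (the $i$-th such), and since $\tau_h^{\td k} = 1$ forces $x_h^{\td k} \notin \MG_h^{\td k}$; but I actually need this at $k_h^i(x,a)$, which follows because $(x,a,h)$ being visited at episode $k_h^i(x,a)$ means $a \in A_h^{k_h^i}(x)$, so $|A_h^{k_h^i}(x)| \ge 1$ and $x \notin \MG_h^{k_h^i}$ (if $x$ were in $\MG_h^{k_h^i}$ the algorithm would not update at step $h$ — this is the subtle point, and I'd phrase it via step~\ref{it:update-wc}/\ref{it:nhk} of \algname, noting $N_h^k(x,a)$ only increments when $x_h^k \notin \MG_h^k$).

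\textbf{Conditions \ref{it:levelh-2} and \ref{it:levelh-3}.} For Condition~\ref{it:levelh-2}, the surviving elements $(\td k, \td h)$ with $\td h > h$ already satisfy $\tau_{\td h}^{\td k} = 1$ and $\td h \ge h$. For the inserted elements $(k_h^i(x,a), h)$, I need $\tau_h^{k_h^i(x,a)} = 1$. This follows from the monotonicity of $\tau$ in Lemma~\ref{lem:misc-monotonicity}(2): the original element $(\td k, h) \in \MS(x,a)$ has $\tau_h^{\td k} = 1$, and $k_h^i(x,a) \le \td k$ with $x_h^{k_h^i(x,a)} = x = x_h^{\td k}$, so $1 = \tau_h^{\td k} \le \tau_h^{k_h^i(x,a)}$, forcing $\tau_h^{k_h^i(x,a)} = 1$. (One must match up the $i$-th smallest visit index with an appropriate original element of $\MS(x,a)$; since there are exactly $|\MS(x,a)|$ original elements and the smallest $|\MS(x,a)|$ visit indices are $\le$ the original indices pointwise when sorted, this is fine.) Condition~\ref{it:levelh-3} only constrains elements $(\td k, \td h)$ with $\td h > h$; these are exactly the surviving elements of $\MW$, for which Condition~\ref{it:levelh-3} already holds, and no new elements with second coordinate $> h$ are introduced. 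So Condition~\ref{it:levelh-3} is inherited verbatim.

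\textbf{Main obstacle.} The routine part is Conditions~\ref{it:levelh-2} and \ref{it:levelh-3}, which are essentially inherited. The one genuinely delicate point is the argument inside Condition~\ref{it:levelh-1} that no inserted index collides with a surviving higher-level index: this hinges on showing $x_h^{k_h^i(x,a)} \notin \MG_h^{k_h^i(x,a)}$, which requires carefully unwinding the definition of $k_h^i(x,a)$ (the $i$-th genuine visit to $(x,a,h)$) against the algorithm's convention that counters and updates at step $h$ are only triggered when $x_h^k \notin \MG_h^k$. I would state this as a short observation up front — that whenever $(x_h^k, a_h^k) = (x,a)$ is counted as a visit (i.e. contributes to $N_h^\cdot(x,a)$), necessarily $x \notin \MG_h^k$ and $a \in A_h^k(x)$ — and then the rest of the verification is bookkeeping.
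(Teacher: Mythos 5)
Your proof is correct and follows essentially the same route as the paper's: conditions \ref{it:levelh-2} and \ref{it:levelh-3} are inherited via Lemma \ref{lem:misc-monotonicity}, and the only substantive point is that an inserted index $(k_h^i(x,a),h)$ cannot collide with a surviving element $(\td k,\td h)$, $\td h>h$, since condition \ref{it:levelh-3} for $\MW$ would then force $x\in\MG_h^{k_h^i(x,a)}$. The paper closes that contradiction slightly differently from you — via $\tau_h^{k_h^i(x,a)}=0$ clashing with the $\tau$-monotonicity you already invoked for condition \ref{it:levelh-2} (which is also the cleanest way to make your ``$x\notin\MG_h^{k_h^i(x,a)}$'' claim rigorous, given that Definition \ref{def:reduction} states $k_h^i(x,a)$ only in terms of $(x_h^{\td k},a_h^{\td k})=(x,a)$) — but this is a cosmetic difference.
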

\begin{proof}
  We first verify that $\MR_h(\MW)$ satisfies property \ref{it:levelh-1} of Definition \ref{def:levelh}. Using the notation of Definition \ref{def:reduction}, we must check that, for each $(x,a) \in \MS \times \MA$, for $1 \leq i \leq |\MS(x,a)|$, there is no $\td h > h$ so that $(k_h^i(x,a), \td h) \in \MW$. However, if this were the case for some $i$ and $\td h$, since $\MW$ is a level-$h$ set, item \ref{it:levelh-3} of Definition \ref{def:levelh} gives us that $x = x_h^{k_h^i(x,a)} \in \MG_h^{k_h^i(x,a)}$. But then we must have $\tau_h^{k_h^i(x,a)} = 0$, which contradicts the fact that for some $\td k \geq k_h^i(x,a)$ so that $x_h^{\td k} = x$, $\tau_h^{\td k} = 1$ and Lemma \ref{lem:misc-monotonicity}.

  That the conditions of item \ref{it:levelh-2} of Definition \ref{def:levelh} hold for $\MR_h(\MW)$ follows directly from Lemma \ref{lem:misc-monotonicity}, and $\MR_h(\MW)$ satisfies the conditions of item \ref{it:levelh-3} since $\MW$ does.
\end{proof}

We are now ready to state and prove Lemma \ref{lem:w-bound-gap}, which is the main technical component of the worst-case (i.e., robustness) regret bounds in item \ref{it:worstcase-deltaincr-full} of Theorem \ref{thm:main-deltaincr} and item \ref{it:worstcase-deltaconst-full} of Theorem \ref{thm:main-deltaconst}. The first part (item \ref{it:w-sum-gap}) of Lemma \ref{lem:w-bound-gap} bounds $\sum_{(k, \td h)} \clipdelta_{\td h}^k$ for any level-$h$ set $\MW$ (for any $h \in [H]$), via a quantity (denoted by $f$ below) that depends on $|\MW|$, the step index $h$, and the largest episode number in $\MW$. The second part (item \ref{it:w-sum-all}) of the lemma then extends this upper bound to a somewhat more general family of subsets $\MW \subset [K] \times [H]$. %
\begin{lemma}
  \label{lem:w-bound-gap}
 For all $h \in [H]$, the following statements hold:
  \begin{enumerate}
  \item \label{it:w-sum-gap} For any level-$h$ set $\MW \subset [K] \times [H]$ and any $k^\st$ satisfying $k^\st \geq k$ for all $(k,\td h)\in \MW$, it holds that $\sum_{(k,\td h) \in \MW} \clipdelta_{\td h}^k \leq f(|\MW|, h,k^\st)$, where for $M \in \BN,\ h \in [H]$,
    \begin{align}
      f(M,h,k^\st) :=& M \cdot \left(1 + \frac 1H \right)^2 \cdot \ggapfn{h+1}{\wh \Delta^{k^\st}} \nonumber\\
      & + \sum_{h'=h}^H \left(1 + \frac 1H \right)^{2(H-h)} \left(SAH + \min \left\{ C_2 \sqrt{H^3 SAM\iota}, \sum_{(x,a) \in \MS \times \MA} \frac{C_2^2 H^3 \iota}{\gapfinal{h'}}\right\} \right) \nonumber\\
      \leq & M \cdot \ggapfn{h}{\wh \Delta^{k^\st}} + e^2 SAH^2 + \min \left\{ e^2 C_2 \sqrt{H^5 SA M\iota}, \sum_{(x,a,h') \in \MS \times \MA \times [H]} \frac{e^2 C_2^2 H^3 \iota}{\gapfinal{h'}} \right\} \label{eq:f-gap-based}.
    \end{align}
    for the constant $C_2 = 8C_0$.
    \item \label{it:w-sum-all} Fix any set $\MW \subset [K] \times [H]$ (not necessarily a level-$h$ set), so that for all $(k,\td h) \in \MW$, $\td h = h$ and $x_h^{k} \not \in \MG_h^k$. For any $k^\st$ so that $k^\st \geq k$ for all $(k,h) \in \MW$, it holds that $\sum_{(k,\td h) \in \MW} \clipdelta_{\td h}^k \leq f(|\MW|, \td h, k^\st)$. 
  \end{enumerate}
\end{lemma}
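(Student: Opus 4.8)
I would prove item~\ref{it:w-sum-gap} by reverse induction on $h$ (from $h=H+1$ downward), following the broad outline of \cite[Lemma~B.8]{xu_fine-grained_2021} but adapted to the constrained‑exploitation steps and the frozen range function; item~\ref{it:w-sum-all} then follows as a corollary. The base case $h=H+1$ is vacuous since $\clipV_{H+1}^k\equiv 0$. For the inductive step, given a level-$h$ set $\MW$, the first move is to replace it by its reduction $\MR_h(\MW)$: by Lemma~\ref{lem:reduction-levelh} this is again a level-$h$ set of the same cardinality whose episodes are no larger, and since $\clipV_h^{k'}(x)\ge\clipV_h^{k}(x)$ for $k'\le k$ (Lemma~\ref{lem:clip-monotonicity}) we have $\sum_{\MW}\clipdelta_{\td h}^k\le\sum_{\MR_h(\MW)}\clipdelta_{\td h}^k$; using in addition that $\wh\Delta^k$ is non-decreasing in $k$ (Lemma~\ref{lem:misc-monotonicity}), it suffices to bound the right-hand side. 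After this reduction we may assume that for each $(x,a)$ the step-$h$ elements of $\MW$ with $(x_h^k,a_h^k)=(x,a)$ are exactly the first several visits of $(x,a)$ at step $h$, which is what makes the visit-count bookkeeping in Definition~\ref{def:half-clipped} usable; this uses the $\tau$-monotonicity of Lemma~\ref{lem:misc-monotonicity} to see that these first several visits all have $\tau_h=1$.

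\textbf{Splitting and unrolling one level.} I would partition $\MW$ into $\MW_{=}$ (elements with $\td h=h$) and $\MW_{>}$ (with $\td h>h$); by property~\ref{it:levelh-3} of Definition~\ref{def:levelh} the latter consists of episodes that ``pass through'' $\MG$-states at levels $h,\dots,\td h-1$, so they already belong to the level-$(h+1)$ accounting. For $(k,h)\in\MW_{=}$, since $\tau_h^k=1$ the policy chose $a_h^k=\argmax_{a'\in A_h^k(x_h^k)}\{\Qo_h^k(x_h^k,a')-\Qu_h^k(x_h^k,a')\}$ (third branch of (\ref{eq:explore-exploit})), which is precisely the action defining $\clipV_h^k$ in Definition~\ref{def:half-clipped}, so $\clipdelta_h^k\le\cliptheta_h^k$. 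Expanding $\cliptheta_h^k$ via Definition~\ref{def:half-clipped} with $n=N_h^k(x_h^k,a_h^k)$ gives $\cliptheta_h^k\le\alpha_n^0H+\clip{\beta_n}{\delmin/(4H^2)}+\sum_{i=1}^n\alpha_n^i\,\clipV_{h'}^{k_h^i}(x_{h'}^{k_h^i})$, where $k_h^i=k_h^i(x_h^k,a_h^k)$ and $h'$ is the first step after $h$ with $x_{h'}^{k_h^i}\notin\MG_{h'}^{k_h^i}$. Summing over $\MW_{=}$: the $\alpha_n^0H$ terms contribute at most $SAH$ (at most one per $(x,a)$); the bonus terms are summed by Lemma~\ref{lem:clip-sum-gap} after bounding $\beta_n\le\tfrac{C_2}{2}\sqrt{H^3\iota/n}$ via (\ref{eq:beta-bound}); and the inner $\clipV$ terms, after interchanging the two summations and using $\sum_{n\ge i}\alpha_n^i=1+\tfrac1H$ (Lemma~\ref{lem:alpha}), are regrouped with $\MW_{>}$ into level-$(h+1)$ sets, to which the inductive hypothesis applies, plus leftover terms $\clipV_{h'}^{k'}(x_{h'}^{k'})$ at steps $h'\ge h+1$ with $\tau_{h'}^{k'}=0$. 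For each such leftover term, $\tau_{h'}^{k'}=0$ forces $\clipV_{h'}^{k'}(x_{h'}^{k'})\le\RanV_{h'}^{k'}(x_{h'}^{k'})\le\ggapfn{h'}{\wh\Delta^{k'}}\le\ggapfn{h+1}{\wh\Delta^{k^\st}}$ (using Lemma~\ref{lem:clip-lt-range}, monotonicity of $\wh\Delta^k$ in $k$, and that $\ggapfn{h}{\cdot}$ is decreasing in $h$); together with the weight bound $1+\tfrac1H$ this produces the $|\MW|\cdot(1+\tfrac1H)^2\cdot\ggapfn{h+1}{\wh\Delta^{k^\st}}$ term of $f$.

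\textbf{Upgrading the clip threshold.} The previous step produces $\delmin/(4H^2)$ in the denominator of the bonus sum, whereas $f$ requires $\gapfinal{h'}$. To bridge this I would observe that each episode $k_h^i(x,a)$ feeding into $\MW_{=}$ has $\tau_h=1$ and occurs at or before $k^\st$, so by $\tau$-monotonicity (Lemma~\ref{lem:misc-monotonicity}) it is no later than the last $\tau_h=1$ visit of $(x,a)$ before $k^\st$, and hence by monotonicity of $\clipQ$ in $k$ (Lemma~\ref{lem:clip-monotonicity}) and Definition~\ref{def:frozen} we get $\cliptheta_h^{k_h^i(x,a)}\ge\frzQ_h^{k^\st}(x,a)$. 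Feeding this lower bound back into the one-level expansion shows that whenever $\delmin/(4H^2)\le\beta_n<\frzQ_h^{k^\st}(x,a)/(2H)$ — the only regime in which $\clip{\beta_n}{\delmin/(4H^2)}$ exceeds $\clip{\beta_n}{\gapfinal{h}}$ — the discarded bonus $\beta_n$ is at most an $O(1/H)$ fraction of the $\clipV$-plus-$\alpha_n^0H$ mass attached to that episode, hence is absorbed into the budget already allocated to those terms at the cost of one more $(1+\tfrac1H)$ factor; re-applying Lemma~\ref{lem:clip-sum-gap} with threshold $\gapfinal{h}$ then yields the $\sum_{(x,a)}C_2^2H^3\iota/\gapfinal{h'}$ form.

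\textbf{Closing the induction, item~\ref{it:w-sum-all}, and the main obstacle.} Iterating from step $h$ to step $H$ accumulates one factor of $1+\tfrac1H$ per level from the summation-swap and another from the threshold-upgrade step, giving the $(1+\tfrac1H)^{2(H-h)}$ factors; bounding $(1+\tfrac1H)^{2H}\le e^2$, using $\ggapfn{h}{a}=(1+\tfrac1H)^4\ggapfn{h+1}{a}$ to convert the $(1+\tfrac1H)^2\ggapfn{h+1}{\wh\Delta^{k^\st}}$ term into $\ggapfn{h}{\wh\Delta^{k^\st}}$, and summing the per-level $SAH+\min\{\cdots\}$ contributions yields the two stated forms of $f$ in (\ref{eq:f-gap-based}). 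For item~\ref{it:w-sum-all}, split an arbitrary $\MW$ (all $\td h=h$, $x_h^k\notin\MG_h^k$) into $\MW_1=\{(k,h)\in\MW:\tau_h^k=1\}$, which is a level-$h$ set handled by item~\ref{it:w-sum-gap}, and $\MW_0=\{(k,h)\in\MW:\tau_h^k=0\}$, for which $\clipdelta_h^k\le\RanV_h^k(x_h^k)\le\ggapfn{h}{\wh\Delta^{k^\st}}$ exactly as above; adding the two bounds and using monotonicity in the cardinality of the lower-order part of $f$ gives $\sum_{\MW}\clipdelta_h^k\le f(|\MW|,h,k^\st)$. The genuinely delicate part of the whole argument is the $(1+1/H)$-factor bookkeeping: each $\clipV$ at step $h$ feeds, through the reverse induction, into many $\clipV$'s at lower steps across many episodes, and a careless union of the resulting level-$h'$ sets would inflate the coefficient of $\ggapfn{h}{\wh\Delta^{k^\st}}$ far past $|\MW|$; keeping it at $|\MW|\cdot(1+\tfrac1H)^{O(H)}=O(|\MW|)$ is precisely what the level-$h$-set/reduction machinery and the tight identity $\sum_{n\ge i}\alpha_n^i=1+\tfrac1H$ are designed to achieve, with the threshold-upgrade step being the other ingredient with no analogue in \cite{xu_fine-grained_2021}.
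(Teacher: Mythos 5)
Your proposal is correct and follows essentially the same route as the paper's proof: reverse induction on $h$, reduction of the level-$h$ set via Lemma \ref{lem:reduction-levelh} and the monotonicity lemmas, unrolling one level of the clipped-range recursion with $\clipdelta_h^k \le \cliptheta_h^k$, exchanging summations via $\sum_{n\ge i}\alpha_n^i = 1+\tfrac1H$, bounding the $\tau=0$ leftovers by $\ggapfn{h+1}{\wh\Delta^{k^\st}}$, and assembling the remaining terms into a level-$(h+1)$ set for the inductive hypothesis. Your "absorption" argument for upgrading the clip threshold from $\delmin/(4H^2)$ to $\gapfinal{h}$ is exactly the content of Lemma \ref{lem:clip} combined with $\cliptheta_h^k \ge \frzQ_h^{k^\st}(x,a)$, which is precisely how the paper carries out that step.
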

\begin{proof}
  In the proof of the lemma we will often use the following fact: for all $(k,h) \in [K] \times [H]$ for which $\tau_h^k = 1$, by Definition \ref{def:half-clipped} and the choice of $a_h^k$, it holds that %
  \begin{align}
    \clipdelta_h^k = \clipV_h^k(x_h^k) \leq  \clipQ_h^k(x_h^k, a_h^k) = \cliptheta_h^k\label{eq:theta-delta-compare}. 
\end{align}
  
  We will use reverse induction on $h$ to prove the statement of the lemma. The base case $h = H+1$ is immediate from the convention that $\clipdelta_{H+1}^k = 0$ for all $k \in [K]$. %

  Now we treat the inductive case. Fix $h \leq H$, and suppose that the lemma statement holds for all $h' > h$. For $(x,a) \in \MS \times \MA$, let $\MZ_h(x,a)$ denote the set of all episodes $k \in [K]$ for which $(x_h^k, a_h^k) = (x,a)$ and $\tau_h^k = 1$. For a positive integer $m$, let $\MZ_h^m(x,a)$ denote the set consisting of the $m$ smallest elements of $\MZ_h(x,a)$ (or all of $\MZ_h(x,a)$, if $m > |\MZ_h(x,a)|$).

  Fix any $k,h$ so that $\tau_h^k = 1$, and 
  write $k_h^i := k_h^i(x_h^k, a_h^k)$. Then, by Definition \ref{def:half-clipped} and (\ref{eq:theta-delta-compare}),
  \begin{align}
    \clipdelta_h^k\leq \cliptheta_h^k  = & \alpha_{n_h^k}^0 \cdot H + \sum_{i=1}^{n_h^k} \alpha_{n_h^k}^i \cdot \left(  \RanV_{h'(k_h^i, h)}^{k_h^i} (x_{h'(k_h^i, h)}^{k_h^i})\right) + \clip{\beta_n}{\frac{\delmin}{4H^2}} \label{eq:alpha-decompose-gap}\\
    = & \alpha_{n_h^k}^0 \cdot H + \sum_{i=1}^{n_h^k} \alpha_{n_h^k}^i \cdot \left(  \clipdelta_{h'(k_h^i, h)}^{k_h^i}\right) + \clip{\beta_n}{\frac{\delmin}{4H^2}}
                   \label{eq:vo-vu-delta-gap}.
  \end{align}

  Fix any $m \in \BN$, as well as any $(x,a) \in \MS \times \MA$. As before we abbreviate $k_h^i = k_h^i(x,a)$. We next work towards an upper bound on $\sum_{k \in \MZ_h^m(x,a)} \clipdelta_h^k $, using (\ref{eq:vo-vu-delta-gap}) for each $k \in \MZ_h^m(x,a)$. We first sum the first term of (\ref{eq:vo-vu-delta-gap}) over all $k \in \MZ_h^m(x,a)$:
  \begin{align}
\sum_{k \in \MZ_h^m(x,a)} H \cdot \alpha_{n_h^k}^0 \leq & H \sum_{k \in \MZ_h(x,a)} \One[n_h^k = 0] \leq H\label{eq:zhm-term1-gap},
  \end{align}
  where the first inequality follows since $\alpha_0^0 = 1$ and $\alpha_t^0 = 0$ for $t > 0$, and the second inequality follows since for all $k \in \MZ_h(x,a)$, we have $(x_h^k, a_h^k) = (x,a)$ and there can only be a single episode in $\MZ_h(x,a)$ during which we first visit $(x,a)$. %

  The sum of the second and third terms of (\ref{eq:vo-vu-delta-gap}) may be bounded as follows: if $n_h^k > 0$,
  \begin{align}
    &   \sum_{i=1}^{n_h^k} \alpha_{n_h^k}^i \cdot \clipdelta_{h'(k_h^i,h)}^{k_h^i} + \clip{\beta_{n_h^k}}{\frac{\delmin}{4H^2}}\nonumber\\
    \leq & \clip{ \clip{\beta_{n_h^k}}{\frac{\delmin}{4H^2}}}{\frac{\cliptheta_h^k}{2H}} + \left( 1 + \frac{1}{H} \right) \cdot     \sum_{i=1}^{n_h^k} \alpha_{n_h^k}^i \cdot \clipdelta_{h'(k_h^i,h)}^{k_h^i}\label{eq:do-clipping}\\
    \leq &  \clip{\beta_{n_h^k}}{\max\left\{\frac{\delmin}{4H^2}, \frac{\cliptheta_h^k}{2H}\right\}} + \left( 1 + \frac{1}{H} \right) \cdot     \sum_{i=1}^{n_h^k} \alpha_{n_h^k}^i \cdot \clipdelta_{h'(k_h^i,h)}^{k_h^i}\label{eq:simplify-clipping},
  \end{align}
  where (\ref{eq:do-clipping}) follows from Lemma \ref{lem:clip} and (\ref{eq:vo-vu-delta-gap}) as well as the fact that $\alpha_{n_h^k}^0 = 0$ as $n_h^k > 0$. In the case that $n_h^k = 0$, we have that $\sum_{i=1}^{n_h^k} \alpha_{n_h^k}^i \cdot \clipdelta_{h'(k_h^i,h)}^{k_h^i} + \clip{\beta_{n_h^k}}{\frac{\delmin}{4H}} = 0$ since $\beta_{0} = 0$ by definition (see (\ref{eq:define-beta})). %

  Next, summing (\ref{eq:vo-vu-delta-gap}) over all $k \in \MZ_h^m(x,a)$, and using (\ref{eq:zhm-term1-gap}) and (\ref{eq:simplify-clipping}), we see that
  \begin{align}
    \sum_{k \in \MZ_h^m(x,a)} \clipdelta_h^k \leq & H + \sum_{k \in \MZ_h^m(x,a)}  \left(\clip{\beta_{n_h^k}}{\max\left\{\frac{\delmin}{4H^2}, \frac{\cliptheta_h^k}{2H}\right\} } + \left( 1 + \frac{1}{H} \right) \cdot     \sum_{i=1}^{n_h^k} \alpha_{n_h^k}^i \cdot \clipdelta_{h'(k_h^i,h)}^{k_h^i}\right) \nonumber\\
    \leq & H + \sum_{k \in \MZ_h^m(x,a)} \clip{\beta_{n_h^k}}{ \max\left\{\frac{\delmin}{4H^2}, \frac{\cliptheta_h^k}{2H}\right\}} + \left( 1 + \frac{1}{H} \right) \cdot \sum_{k' \in \MZ_h^m(x,a)} \clipdelta_{h'(k',h)}^{k'} \sum_{t=n_h^{k'}}^\infty \alpha_t^{n_h^{k'}}\label{eq:hprime-exchange-order}\\
    =& H + \sum_{k \in \MZ_h^m(x,a)} \clip{\beta_{n_h^k}}{ \max\left\{\frac{\delmin}{4H^2}, \frac{\cliptheta_h^k}{2H}\right\}} + \left( 1 + \frac{1}{H} \right)^2 \cdot \sum_{k' \in \MZ_h^m(x,a)} \clipdelta_{h'(k',h)}^{k'}\label{eq:hprime-use-alpha}\\
    \leq & H + \sum_{k \in \MZ_h^m(x,a)} \clip{\beta_{n_h^k}}{ \max\left\{\frac{\delmin}{4H^2}, \frac{\cliptheta_h^k}{2H}\right\}}  \nonumber\\
    &+ \left( 1 + \frac 1H \right)^2 \cdot \left( \sum_{k' \in \MZ_h^m(x,a) : \ \tau_{h'(k',h)}^{k'} = 0} \ggapfn{h'(k',h)}{\wh \Delta^{k'}} + \sum_{k' \in \MZ_h^m(x,a):\ \tau_{h'(k',h)}^{k'} = 1} \clipdelta_{h'(k',h)}^{k'}  \right)\label{eq:partition-by-tau},
  \end{align}
  where (\ref{eq:hprime-exchange-order}) follows from exchanging the order of summation, (\ref{eq:hprime-use-alpha}) uses item \ref{it:alpha-sum-sub} of Lemma \ref{lem:alpha}, and (\ref{eq:partition-by-tau}) uses the fact that for all $k' \in \MZ_h^m(x,a)$ for which $\tau_{h'(k',h)}^{k'} = 0$, we have that $\clipdelta_{h'(k',h)}^{k'} \leq \RanV_{h'(k',h)}^{k'}(x_{h'(k',h)}^{k'}) \leq \ggapfn{h'(k',h)}{\wh \Delta^{k'}}$ (by Lemma \ref{lem:clip-lt-range}), since by definition of $h'(k',h)$, either $h'(k',h) = H+1$ or else $x_{h'(k',h)}^{k'} \not \in \MG_{h'(k',h)}^{k'}$. 

  Consider any level-$h$ set $\MW \subset [K] \times [H]$, and consider any $k^\st \geq \max_{(\td k, \td h) \in \MW} \{ \td k \}$. %
  For each $(x,a) \in \MS \times \MA$, let $m(x,a)$ denote the number of elements $(\widetilde k,h) \in \MW$ for which $(x_h^{\widetilde k}, a_h^{\widetilde k}) = (x,a)$. %
  Let $M_1$ be the number of $(\widetilde k, \widetilde h) \in \MW$ so that either $\widetilde h > h$ or  $\widetilde h = h$ and $\tau_{h'(\widetilde k,h)}^{\widetilde k} = 1$, $M_0$ be the  number of $(\widetilde k, \widetilde h) \in \MW$ so that $\widetilde h = h$ and $\tau_{h'(\widetilde k,h)}^{\widetilde k} = 0$, and $M := M_0 + M_1 = |\MW|$. Then
  \begin{align}
   \sum_{(\td k, \td h) \in \MW} \clipdelta_{\td h}^{\td k} \leq & \sum_{(\widetilde k, \widetilde h) \in \MR_h(\MW)} \clipdelta_{\widetilde h}^{\widetilde k} \tag{By Lemma \ref{lem:clip-monotonicity}}\nonumber\\
    = & \sum_{(x,a) \in \MS \times \MA} \sum_{i=1}^{m(x,a)} \clipdelta_h^{k_h^i(x,a)}  + \sum_{(\td k, \td h) \in \MR_h(\MW) : \td h > h} \clipdelta_{\td h}^{\td k}\nonumber\\
    \leq & SAH + \sum_{(x,a) \in \MS \times \MA} \sum_{k \in \MZ_h^{m(x,a)}(x,a)} \clip{\beta_{n_h^k}}{\max\left\{\frac{\delmin}{4H^2}, \frac{\cliptheta_h^k}{2H}\right\} } + \sum_{(\td k, \td h) \in \MR_h(\MW) : \td h > h} \clipdelta_{\td h}^{\td k} \nonumber\\
                                                             & + \left( 1 + \frac{1}{H} \right)^2 \cdot \left( M_0 \cdot \ggapfn{h+1}{\wh \Delta^{k^\st}} +  \sum_{(x,a) \in \MS \times \MA} \ \sum_{k' \in \MZ_h^{m(x,a)}(x,a) : \tau_{h'(k',h)}^{k'} = 1} \clipdelta_{h'(k',h)}^{k'} \right)\tag{By (\ref{eq:partition-by-tau})}\\
    \leq & SAH + \sum_{(x,a) \in \MS \times \MA} \sum_{k \in \MZ_h^{m(x,a)}(x,a)} \clip{\beta_{n_h^k}}{\max\left\{\frac{\delmin}{4H^2}, \frac{\frzQ_h^{k^\st}(x,a)}{2H}\right\} }\nonumber\\
    &+ \left( 1 + \frac 1H \right)^2 \cdot M_0 \cdot \ggapfn{h+1}{\wh \Delta^{k^\st}} + \left( 1 + \frac 1H \right)^2 \cdot \sum_{(\td k, \td h) \in \MW'} \clipdelta_{\td h}^{\td k}\label{eq:wprime-ub-gap},
  \end{align}
  where
  \begin{align}
\MW' := \left\{ (\td k, \td h) \in \MR_h(\MW) : \td h > h \right\} \cup \left\{ (\td k, h'(\td k, h)) : (\td k, h) \in \MR_h(\MW), \ \tau_{h'(\td k, h)}^{\td k} = 1 \right\}\nonumber,
  \end{align}
  so that $|\MW'| = M_1$ and $\max_{(\td k, \td h) \in \MW'} \{ \td k\} \leq k^\st$.  Moreover, in (\ref{eq:wprime-ub-gap}), we have used that by Lemma \ref{lem:clip-monotonicity}, $\cliptheta_h^k = \clipQ_h^k(x_h^k, a_h^k) \geq \frzQ_h^{k^\st}(x_h^k, a_h^k)$ for all $k \in \MZ_h^{m(x,a)}$ (since $\tau_h^k = 1$ for all $k \in \MZ_h^{m(x,a)}(x,a)$).
  We claim that $\MW'$ is a level-$(h+1)$ set.  For any $k \in [K]$, if $(k, \td h) \in \MR_h(\MW)$ for some $\td h > h$, then since $\MR_h(\MW)$ is a level-$h$ set (Lemma \ref{lem:reduction-levelh}), it must hold that $x_h^k \in \MG_h^k$, meaning that $\tau_h^k = 0$, and thus $(k, h) \not\in \MR_h(\MW)$. This verifies that $\MW'$ satisfies condition \ref{it:levelh-1} of Definition \ref{def:levelh}. It is immediate that for all $(\td k, \td h) \in \MW'$, we have $\tau_{\td h}^{\td k} = 1$ and $\td h \geq h+1$ (condition \ref{it:levelh-2}), and condition \ref{it:levelh-3} follows from the corresponding condition for $\MR_h(\MW)$ as well as the fact that for all $(\td k, h) \in \MR_h(\MW)$, for all $h'$ satisfying $h+1 \leq h' < h'(\td k, h)$, we have $x_{h'}^{\td k} \in \MG_{h'}^{\td k}$.

  Thus, we may apply the inductive hypothesis for the set $\MW'$, which gives, together with (\ref{eq:wprime-ub-gap}) and Lemma \ref{lem:clip-sum-gap}, with $\theta(x,a) = \gapfinal{h}$ %
  and the set $\MW$ in Lemma \ref{lem:clip-sum-gap} set to $\{ k : \exists (x,a) \in \MS \times \MA \mbox{ s.t. } k \in \MZ_h^{m(x,a)}(x,a)\}$, 
  \begin{align}
    \sum_{(\td k, \td h) \in \MW} \clipdelta_{\td h}^{\td k} \leq & SAH + \min \left\{ 8C_0 \sqrt{H^3 SAM \iota}, \sum_{(x,a) \in \MS \times \MA} \frac{64 C_0^2 H^3 \iota}{\gapfinal{h}}\right\} \nonumber\\
                                                              & + M_0 \cdot \left( 1 + \frac 1H \right)^2 \cdot \ggapfn{h+1}{\wh \Delta^{k^\st}} + \left( 1 + \frac 1H \right)^2 \cdot f(M_1, h+1,k^\st) \nonumber\\
    \leq & M \cdot \left(1 + \frac 1H \right)^2 \cdot \ggapfn{h+1}{\wh \Delta^{k^\st}} \nonumber\\
                                                                  &+ \sum_{h'=h}^{H} \left( 1 + \frac 1H \right)^{2(H-h)} \cdot \left( SAH + \min \left\{ C_2 \sqrt{H^3 SAM\iota} , \sum_{(x,a) \in \MS \times \MA} \frac{C_2^2 H^3 \iota}{\frzQ_{h'}^{k^\st}(x,a)}\right\}\right)\nonumber\\
    =& f(M, h,k^\st)\nonumber,
  \end{align}
  thus establishing item \ref{it:w-sum-gap} of the lemma.

  Next we establish item \ref{it:w-sum-all} of the lemma. Fix any set $\MW \subset [K] \times [H]$ so that for all $(k, \td h) \in \MW$, $\td h = h$ and $x_h^k \not \in \MG_h^k$. Suppose further that $k^\st$ satisfies $k^\st \geq k$ for all $(k,h) \in \MW$. Thus, for all $(k,h) \in \MW$, either $\tau_h^k = 1$ or $\clipdelta_h^k = \clipV_h^k(x_h^k) \leq \ggapfn{h}{\wh \Delta^k}$. Note also that $\MW' := \{ (k,h) \in \MW : \tau_h^k = 1\}$ is a level-$h$ set. Then, using item \ref{it:w-sum-gap} on the set $\MW'$,
  \begin{align}
    \sum_{(k,h) \in \MW} \clipdelta_h^k \leq & \sum_{(k,h) \in \MW :\ \tau_h^k = 0} \ggapfn{h}{\wh \Delta^k} +  \sum_{(k,h) \in \MW'} \clipdelta_h^{k} \nonumber\\
    \leq & |\MW \backslash \MW'| \cdot \ggapfn{h}{\wh \Delta^{k^\st}} + f(|\MW'|, h, k^\st)\nonumber\\
    \leq & f(|\MW|, h, k^\st)\nonumber,
  \end{align}
  as desired.
\end{proof}

\subsection{Establishing the robustness regret bounds}
In this section we prove a regret decomposition in Lemma \ref{lem:regret-decomposition} and combine it with Lemma \ref{lem:w-bound-gap}, which will suffice for proving the robustness regret bounds in Theorems \ref{thm:main-deltaincr} and \ref{thm:main-deltaconst}. 
Lemma \ref{lem:bound-by-clipdelta} below is needed to prove the regret decomposition bound. It states that the loss incurred by choosing any non-optimal action $a_h^k$ at a state $x_h^k$ may be bounded by the clipped value function $\clipdelta_h^k$; the statement (and proof) is similar to that of in Lemma 4.4 of \cite{xu_fine-grained_2021}.
\begin{lemma}
  \label{lem:bound-by-clipdelta}
  For all $(h,k) \in [H] \times [K]$ for which $x_h^k \not \in \MG_h^k$ and $a_h^k \not \in \Aopt_{h,0}(x_h^k)$, it holds, under the event $\MEwc$, that
  \begin{align}
    V_h^\st(x_h^k) - Q_h^\st(x_h^k, a_h^k)
    \leq 4 \cdot \clipdelta_h^k\nonumber.
  \end{align}
\end{lemma}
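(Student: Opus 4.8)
The plan is to sandwich $\Delta_h(x_h^k,a_h^k) := V_h^\st(x_h^k)-Q_h^\st(x_h^k,a_h^k)$ between the confidence bounds maintained by \algname and then to show that $\clipdelta_h^k$ lower-bounds the width of the relevant confidence intervals, up to the additive slack $\delmin/(4H)$ that the clipping in Definition \ref{def:half-clipped} introduces. Throughout I work under $\MEwc$, so Lemma \ref{lem:qo-qu-gap} gives $\Qu_h^k\le Q_h^\st\le\Qo_h^k$ and $\Vu_h^k\le V_h^\st\le\Vo_h^k$ (inequalities (\ref{eq:qo-qstar-qu-gap})--(\ref{eq:vo-vstar-vu-gap})), and its item \ref{it:correct-elimination} places any optimal action $a^o:=\pi_h^\st(x_h^k)$ in $A_h^k(x_h^k)$; recall also $a_h^k\in A_h^k(x_h^k)$ by construction of $\pi^k$.

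The heart of the argument is the estimate
\[
\clipdelta_h^k=\clipV_h^k(x_h^k)\;\ge\;\Qo_h^k(x_h^k,a')-\Qu_h^k(x_h^k,a')-\frac{\delmin}{4H}\qquad\text{for every }a'\in A_h^k(x_h^k).
\]
To establish it I would unroll the recursions defining $\clipV_h^k$ and $\RanV_h^k$ (Definitions \ref{def:half-clipped} and \ref{def:range-function}), using that $x_h^k\notin\MG_h^k$ forces $x_h^k\notin\MG_h^j$ for all $j\le k$ (the sets $A_h^\cdot(x_h^k)$ only shrink and, under $\MEwc$, always contain $a^o$, so once $x_h^k$ enters $\MG_h^\cdot$ it stays, and $x_h^k\notin\MG_h^k$ then rules out $x_h^k\in\MG_h^j$ for $j\le k$). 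The unrolled minimum is realized at some episode $j_0\le k$ with $\RanV_h^k(x_h^k)=\RanQ_h^{j_0}(x_h^k,a^\st)$, where $a^\st\in A_h^{j_0}(x_h^k)$ maximizes the confidence width $\Qo_h^{j_0}(x_h^k,\cdot)-\Qu_h^{j_0}(x_h^k,\cdot)$ (if instead the minimum is the initial value $H$, the lemma is trivial since $\Delta_h\le H$). Now chain the bounds: Lemma \ref{lem:range-bound} gives $\RanQ_h^{j_0}(x_h^k,a^\st)\ge\Qo_h^{j_0}(x_h^k,a^\st)-\Qu_h^{j_0}(x_h^k,a^\st)$; maximality of $a^\st$ together with $A_h^k(x_h^k)\subseteq A_h^{j_0}(x_h^k)$ lower-bounds this by $\Qo_h^{j_0}(x_h^k,a')-\Qu_h^{j_0}(x_h^k,a')$; monotonicity of $\Qo,\Qu$ (Lemma \ref{lem:qv-monotonicity}) transports the episode index from $j_0$ to $k$; and Lemma \ref{lem:hc-range-bound} accounts for the $\delmin/(4H)$ lost to clipping. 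Writing $U:=\clipdelta_h^k+\delmin/(4H)$, we thus have $U\ge\Qo_h^k(x_h^k,a')-\Qu_h^k(x_h^k,a')$ for every $a'\in A_h^k(x_h^k)$, in particular for $a'\in\{a^o,a_h^k\}$.

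With this in hand the conclusion follows quickly. Using $V_h^\st(x_h^k)=Q_h^\st(x_h^k,a^o)$ and the confidence bounds,
\[
\Delta_h(x_h^k,a_h^k)\le\Qo_h^k(x_h^k,a^o)-\Qu_h^k(x_h^k,a_h^k)=\bigl[\Qo_h^k(x_h^k,a^o)-\Qu_h^k(x_h^k,a^o)\bigr]+\bigl[\Qu_h^k(x_h^k,a^o)-\Qu_h^k(x_h^k,a_h^k)\bigr].
\]
The first bracket is at most $U$. For the second, $\Qu_h^k(x_h^k,a^o)\le\Vu_h^k(x_h^k)$ (step \ref{it:vu-define}, since $a^o\in A_h^k(x_h^k)$), while the active-set invariant forces $\Qo_h^k(x_h^k,a_h^k)\ge\Vu_h^k(x_h^k)$ (step \ref{it:def-active-set}, since $a_h^k\in A_h^k(x_h^k)$); hence the second bracket is at most $\Qo_h^k(x_h^k,a_h^k)-\Qu_h^k(x_h^k,a_h^k)\le U$. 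Therefore $\Delta_h(x_h^k,a_h^k)\le2U=2\clipdelta_h^k+\delmin/(2H)$. Finally, $a_h^k\notin\Aopt_{h,0}(x_h^k)$ means $\Delta_h(x_h^k,a_h^k)\ge\delmin$, so $\delmin/(2H)\le\Delta_h(x_h^k,a_h^k)/(2H)$, and rearranging gives $\Delta_h(x_h^k,a_h^k)\le\tfrac{2}{1-1/(2H)}\,\clipdelta_h^k\le4\,\clipdelta_h^k$ for all $H\ge1$ (the constant $4$ is attained at $H=1$).

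The step I expect to be the main obstacle is the displayed lower bound on $\clipdelta_h^k$ in the second paragraph. The naive route — using only $\clipV_h^k(x_h^k)\ge\Vo_h^k(x_h^k)-\Vu_h^k(x_h^k)-\delmin/(4H)$ — is too weak, since the value gap $\Vo_h^k-\Vu_h^k$ can be far smaller than the confidence width of $a_h^k$ (for instance when a sibling action's lower bound has already converged while $a_h^k$'s has not). One really must unroll $\RanV_h^k$ over all past episodes, locate the episode and confidence-width-maximizing action realizing its running minimum, and then push that action's large width back to the current episode via the monotonicity of $\Qo,\Qu$; everything after that is routine bookkeeping around the active-set invariant and the $\delmin/(4H)$ slack.
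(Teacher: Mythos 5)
Your proof is correct and follows essentially the same route as the paper's: decompose $V_h^\st(x_h^k)-Q_h^\st(x_h^k,a_h^k)$ into two confidence-interval widths via the active-set invariant $\Qo_h^k(x_h^k,a_h^k)\geq\Vu_h^k(x_h^k)$, bound each width by $\clipdelta_h^k$ plus a $\delmin$ slack using Lemmas \ref{lem:range-bound}, \ref{lem:hc-range-bound}, \ref{lem:qv-monotonicity} and the confidence-width-maximizing action at the episode realizing the running minimum, and absorb the slack via $\Delta_h(x_h^k,a_h^k)\geq\delmin$. The only (cosmetic) difference is that the paper bounds the first term by $\Vo_h^k-\Vu_h^k\leq\RanV_h^k\leq\clipV_h^k+\delmin/4$ directly, whereas you apply the unrolling argument uniformly to every action in $A_h^k(x_h^k)$, including the optimal one $a^o$ — slightly more work, but equally valid.
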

\begin{proof}
We assume throughout the proof that the event $\MEwc$ holds (in particular, this allows us to apply Lemma \ref{lem:range-bound}).   Since $a_h^k \in A_h^k(x_h^k)$, we have $\Qo_h^k(x_h^k, a_h^k) \geq \Vu_h^k(x_h^k)$, and so
  \begin{align}
V_h^\st(x_h^k) - Q_h^\st(x_h^k, a_h^k) \leq \Vo_h^k(x_h^k) - \Vu_h^k(x_h^k) + \Qo_h^k(x_h^k, a_h^k) - \Qu_h^k(x_h^k, a_h^k) \nonumber.
  \end{align}
  We may bound $\Vo_h^k(x_h^k) - \Vu_h^k(x_h^k)$ as follows:
  \begin{align}
    & \Vo_h^k(x_h^k) - \Vu_h^k(x_h^k)\nonumber\\
    \leq & \RanV_h^k(x_h^k) \tag{By Lemma \ref{lem:range-bound} and since $x_h^k \not \in \MG_h^k$}\\
    \leq & \clipV_h^k(x_h^k) + \frac{\delmin}{4}=\clipdelta_h^k + \frac{\delmin}{4} \tag{By Lemma \ref{lem:hc-range-bound}}.
  \end{align}
  By Definition \ref{def:half-clipped}, there is some $k' \leq k$ so that $\clipdelta_h^k = \clipV_h^k(x_h^k) = \clipQ_h^{k'}(x_h^{k'}, a^\st)$ for $a^\st = \argmax_{a' \in A_h^{k'}(x_h^k)} \Qo_h^{k'}(x, a') - \Qu_h^{k'}(x,a')$. Now we have
  \begin{align}
    & \Qo_h^k(x_h^k, a_h^k) - \Qu_h^k(x_h^k, a_h^k) \nonumber\\
    \leq & \Qo_h^{k'}(x_h^k, a_h^k) - \Qu_h^{k'}(x_h^k, a_h^k) \tag{By Lemma \ref{lem:qv-monotonicity}}\\
    \leq & \Qo_h^{k'}(x_h^k, a^\st) - \Qu_h^{k'}(x_h^k, a^\st) \tag{Since $a_h^k \in A_h^k(x_h^k) \subseteq A_h^{k'}(x_h^k)$}\\
    \leq & \RanQ_h^{k'}(x_h^k, a^\st) \tag{By Lemma \ref{lem:range-bound} and since $x_h^k \not \in \MG_h^{k'}$}\\
    \leq & \clipQ_h^{k'}(x_h^k, a^\st) + \frac{\delmin}{4} \tag{By Lemma \ref{lem:hc-range-bound}} \\
    = & \clipdelta_h^k + \frac{\delmin}{4}\nonumber.
  \end{align}
  Since $a_h^k \not\in \Aopt_{h,0}(x_h^k)$, we have that $V_h^\st(x_h^k) - Q_h^\st(x_h^k, a_h^k) \geq \delmin$. Thus $\delmin / 2 \leq  (V_h^\st(x_h^k) - Q_h^\st(x_h^k, a_h^k) )/2$, meaning that $V_h^\st(x_h^k) - Q_h^\st(x_h^k, a_h^k) \leq 4 \cdot \clipdelta_h^k$, as desired.
  
\end{proof}

Next we state and prove the regret decomposition bound which is used to bound the worst-case regret.
  \begin{lemma}[Regret decomposition for worst-case bound]
    \label{lem:regret-decomposition}
    For the choice $p = 1/(H^2 K)$, the regret of \algname may be bounded as follows:
    \begin{align}
\sum_{k=1}^K \E \left[ V_1^\st(x_1^k) - V_1^{\pi^k}(x_1^k)\right] \leq 1 + 4 \cdot \E \left[ \left. \sum_{(k,h) : a_h^k \not \in \Aopt_{h,0}(x_h^k)} \clipdelta_h^k \right| \MEwc \right] \nonumber.
    \end{align}
  \end{lemma}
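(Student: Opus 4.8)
The plan is to combine the classical episodic‑MDP regret decomposition with Lemma~\ref{lem:bound-by-clipdelta}, taking some care with the conditioning on the high‑probability event $\MEwc$ from Lemma~\ref{lem:qo-qu-gap} (recall $p = 1/(H^2K)$). The starting point is the deterministic per‑trajectory telescoping identity: using the Bellman optimality equations $Q_h^\st = r_h + \BP_h V_{h+1}^\st$, $V_h^\st(x) = \max_a Q_h^\st(x,a)$, and $V_{H+1}^\st \equiv 0$, one obtains, for each episode $k$,
\begin{align}
V_1^\st(x_1^k) - \sum_{h=1}^H r_h(x_h^k, a_h^k) \;=\; \sum_{h=1}^H \Delta_h(x_h^k, a_h^k) \;+\; \sum_{h=1}^H \Bigl( (\BP_h V_{h+1}^\st)(x_h^k, a_h^k) - V_{h+1}^\st(x_{h+1}^k) \Bigr).\nonumber
\end{align}
I would take the conditional expectation of this identity given the history before episode $k$ together with the adversarial choice $x_1^k$ (and hence the policy $\pi^k$): the left-hand side has conditional mean $V_1^\st(x_1^k) - V_1^{\pi^k}(x_1^k)$ since the rollout of episode $k$ follows $\pi^k$ from $x_1^k$ and the rewards are deterministic, while the final sum vanishes in conditional expectation because each summand has conditional mean zero given $(x_h^k, a_h^k)$ (as $x_{h+1}^k \sim \BP_h(\cdot\mid x_h^k, a_h^k)$). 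Setting $G_k := \sum_{h=1}^H \Delta_h(x_h^k, a_h^k)$ and using $\Delta_h(x,a) \ge 0$ with equality iff $a \in \Aopt_{h,0}(x)$ (so that $G_k = \sum_{h:\, a_h^k \notin \Aopt_{h,0}(x_h^k)} \Delta_h(x_h^k, a_h^k)$), taking total expectations yields $\Regret{K} = \E[\sum_{k=1}^K G_k]$; this is the analogue of Lemma~B.6 of \cite{xu_fine-grained_2021}.

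Next I would peel off the failure event. Since $0 \le G_k \le H^2$ holds deterministically and $\Pr[\MEwc^c] \le p = 1/(H^2 K)$, the contribution of $\MEwc^c$ is at most $K H^2 p = 1$, so $\Regret{K} \le 1 + \E[\sum_k G_k \mid \MEwc]$. To turn this conditional expectation into one over the $\clipdelta_h^k$'s, I would first note that under $\MEwc$ a suboptimal action is never played at a state whose optimal action has already been pinned down: by item~\ref{it:correct-elimination} of Lemma~\ref{lem:qo-qu-gap}, every optimal action at $(x,h)$ lies in $A_h^k(x)$ for every $k$, so if $x_h^k \in \MG_h^k$ then $A_h^k(x_h^k)$ is a singleton containing the unique optimal action, forcing $a_h^k = \pi_h^k(x_h^k)$ to equal it. Equivalently, whenever $a_h^k \notin \Aopt_{h,0}(x_h^k)$ we have $x_h^k \notin \MG_h^k$, so $\clipdelta_h^k$ is well defined and Lemma~\ref{lem:bound-by-clipdelta} gives $\Delta_h(x_h^k, a_h^k) \le 4\,\clipdelta_h^k$. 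Summing over all $(k,h)$ with $a_h^k \notin \Aopt_{h,0}(x_h^k)$ and substituting into the previous display yields the claimed bound.

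I expect the only genuinely delicate point to be the order of the probabilistic steps: the martingale‑vanishing argument must be carried out under the per‑episode filtration, not under $\MEwc$, so one should first establish the unconditional identity $\Regret{K} = \E[\sum_k G_k]$ and only afterwards restrict to $\MEwc$, using the crude deterministic bound $G_k \le H^2$ to absorb the low‑probability complement into the additive constant $1$. Everything else — the telescoping identity, the martingale structure, and the appeals to Lemmas~\ref{lem:qo-qu-gap} and~\ref{lem:bound-by-clipdelta} — is routine bookkeeping.
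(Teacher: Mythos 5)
Your proposal is correct and follows essentially the same route as the paper: the standard regret decomposition $\sum_k \E[(V_1^\st - V_1^{\pi^k})(x_1^k)] = \sum_k \E_{\pi^k}[\sum_h \Delta_h(x_h^k,a_h^k)]$ (which the paper states directly and you derive via the telescoping/martingale argument), absorbing $\overline{\MEwc}$ via the crude bound $KH^2\Pr[\overline{\MEwc}] \le KH^2 p = 1$, and then invoking item \ref{it:correct-elimination} of Lemma \ref{lem:qo-qu-gap} to ensure $x_h^k \notin \MG_h^k$ whenever $a_h^k \notin \Aopt_{h,0}(x_h^k)$ so that Lemma \ref{lem:bound-by-clipdelta} applies. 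No gaps.
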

  \begin{proof}
  Note that
  \begin{align}
    & \sum_{k=1}^K \E \left[(V_1^\st - V_1^{\pi^k})(x_1^k)\right] \nonumber\\
    = & \sum_{k=1}^K \E_{\pi^k} \left[ \sum_{h=1}^H V_h^\st(x_h^k) - Q_h^\st(x_h^k ,a_h^k) \right]\nonumber\\
    \leq & \sum_{k=1}^K \E_{\pi^k} \left[ \sum_{h=1}^H \One[a_h^k \not \in \Aopt_{h,0}(x_h^k)] \cdot (V_h^\st(x_h^k) - Q_h^\st(x_h^k, a_h^k)) \right]\nonumber\\
    \leq & \sum_{k=1}^K \E_{\pi^k} \left[ \sum_{h=1}^H \One[a_h^k \not \in \Aopt_{h,0}(x_h^k)] \cdot (V_h^\st(x_h^k) - Q_h^\st(x_h^k, a_h^k))  \ |\ \MEwc  \right] + KH^2 \cdot \Pr[\overline{\MEwc }]\nonumber.
  \end{align}
Note that $\Pr[\overline{\MEwc }] \leq p$, which may be bounded above by $1/(H^2K)$ if we choose $p = 1/(H^2K)$. 
  
  Now let us condition on the event $\MEwc $. Since $a_h^k \in A_h^k(x_h^k)$ for all $h,k$, and in the event that $|A_h^k(x_h^k)| = 1$ it must be the case that $A_h^k(x_h^k)$ contains the optimal action at $x_h^k$ (Lemma \ref{lem:qo-qu-gap}, item \ref{it:correct-elimination}) under the event $\MEwc$, $a_h^k \not \in \Aopt_{h,0}(x_h^k)$ implies that $x_h^k \not \in \MG_h^k$ under $\MEwc$. Thus, conditioned on $\MEwc$, using Lemma \ref{lem:bound-by-clipdelta}, we have that
  \begin{align}
    \sum_{k=1}^K \sum_{h=1}^H \One[a_h^k \not \in \Aopt_{h,0}(x_h^k)] \cdot (V_h^\st(x_h^k) - Q_h^\st(x_h^k, a_h^k))
    \leq  4 \cdot \sum_{(k,h) : a_h^k \not\in \Aopt_{h,0}(x_h^k)} \clipdelta_h^k. \nonumber%
  \end{align}
  This completes the proof of the lemma.
\end{proof}

 To combine Lemma \ref{lem:w-bound-gap} with the regret decomposition result of Lemma \ref{lem:regret-decomposition}, we need a way of upper bounding the left-hand side of (\ref{eq:f-gap-based}) from Lemma \ref{lem:w-bound-gap}, which looks much like the gap-based bound quantity in (\ref{eq:lambda-complexity}) used in Theorems \ref{thm:main-deltaincr} and \ref{thm:main-deltaconst}, but with the actual gaps $\Delta_h(x,a)$ replaced by the proxies $\frzQ_h^{k^\st}(x,a) \geq \frzQ_h^K(x,a)$. Lemma \ref{lem:frzq-gap} below shows that the proxies $\frzQ_h^{K}(x,a)$ are indeed upper bounds on the true gaps $\Delta_h(x,a)$. 
  \begin{lemma}
    \label{lem:frzq-gap}
    Consider any $(x,a,h) \in \MS \times \MA \times [H]$, and suppose the event $\MEwc$ holds. Then
    \begin{align}
\max\left\{ \frac{\frzQ_h^{K}(x,a)}{2H}, \frac{\delmin}{4H^2} \right\} \geq \max \left\{ \frac{\Delta_h(x,a)}{8H}, \One[\Aopt_{h,0}(x) = \{ a \}] \cdot \frac{\delmins{h}(x)}{8H}, \frac{\delmin}{4H^2} \right\}\label{eq:frzq-delmin-lb}.
    \end{align}
    Further, for any $\tildm \leq \delmin$, recalling the definition of $\tfrzQ_h^k$ in step \ref{it:tfrzq-define} of Algorithm \ref{alg:delta-incr}, it holds that 
    \begin{align}
\max\left\{ \frac{\tfrzQ_h^{K}(x,a)}{2H}, \frac{\tildm}{4H^2} \right\} \geq \max \left\{ \frac{\Delta_h(x,a)}{8H}, \One[\Aopt_{h,0}(x) = \{ a \}] \cdot \frac{\delmins{h}(x)}{8H}, \frac{\tildm}{4H^2} \right\}\label{eq:tfrzq-delmin-lb}.
    \end{align}
  \end{lemma}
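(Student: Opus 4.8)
\emph{Proof plan.} By Definition~\ref{def:frozen}, $\frzQ_h^K(x,a) = \clipQ_h^{k'}(x,a)$ where $k' \le K$ is the largest episode with $(x_h^{k'}, a_h^{k'}) = (x,a)$ and $\tau_h^{k'} = 1$, and $k' = 1$ if no such episode exists. If $k' = 1$, then since $N_h^1(x,a) = 0$, $\alpha_0^0 = 1$ and $\beta_0 = 0$ we get $\clipQ_h^1(x,a) = H$, so the left side of (\ref{eq:frzq-delmin-lb}) equals $\max\{\tfrac12, \tfrac{\delmin}{4H^2}\}$, while the right side is at most $\max\{\tfrac{1}{8}, \tfrac{1}{8}, \tfrac{\delmin}{4H^2}\}$ using $\Delta_h(x,a), \delmins{h}(x) \le H$ (the latter bound being the relevant one exactly when the indicator is nonzero, i.e.\ when $(x,h)$ has a non-optimal action); hence (\ref{eq:frzq-delmin-lb}) holds. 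So assume $k' \ge 2$. Then $\tau_h^{k'} = 1$ gives $x \notin \MG_h^{k'}$ (so $|A_h^{k'}(x)| \ge 2$) and $\RanV_h^{k'}(x) > \ggapfn{h}{\wh\Delta^{k'}}$, so by the policy rule (\ref{eq:explore-exploit}), $a = a_h^{k'}$ maximizes $\Qo_h^{k'}(x,\cdot) - \Qu_h^{k'}(x,\cdot)$ over $A_h^{k'}(x)$. Under $\MEwc$, Lemmas~\ref{lem:range-bound} and~\ref{lem:hc-range-bound} give $\clipQ_h^{k'}(x,a) \ge \RanQ_h^{k'}(x,a) - \tfrac{\delmin}{4H} \ge w - \tfrac{\delmin}{4H}$ where $w := \Qo_h^{k'}(x,a) - \Qu_h^{k'}(x,a)$, so it suffices to lower bound $w$.

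The crux is the claim $w \ge \tfrac12 \max\{\Delta_h(x,a),\ \One[\Aopt_{h,0}(x) = \{a\}]\cdot \delmins{h}(x)\}$. If $a$ is optimal at $(x,h)$ but not the unique optimal action, both terms vanish and there is nothing to prove beyond the common $\tfrac{\delmin}{4H^2}$ term. Otherwise pick $b \in A_h^{k'}(x)$ as follows: if $a$ is non-optimal, let $b = a^\st$ be any optimal action (which lies in $A_h^{k'}(x)$ by Lemma~\ref{lem:qo-qu-gap}, item~\ref{it:correct-elimination}); if $a$ is the unique optimal action, let $b$ be any element of $A_h^{k'}(x) \setminus \{a\}$, which is then non-optimal with $\Delta_h(x,b) \ge \delmins{h}(x)$. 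Let $c, d$ be, respectively, the optimal and the non-optimal action of the pair $\{a,b\}$; note $c$ is always an optimal action. Since $a$ maximizes the confidence interval, $2w \ge (\Qo_h^{k'}(x,a) - \Qu_h^{k'}(x,a)) + (\Qo_h^{k'}(x,b) - \Qu_h^{k'}(x,b)) = (\Qo_h^{k'}(x,c) - \Qu_h^{k'}(x,d)) + (\Qo_h^{k'}(x,d) - \Qu_h^{k'}(x,c))$. For the first summand, $\Qo_h^{k'}(x,c) \ge Q_h^\st(x,c) = V_h^\st(x)$ and $\Qu_h^{k'}(x,d) \le Q_h^\st(x,d) = V_h^\st(x) - \Delta_h(x,d)$ by (\ref{eq:qo-qstar-qu-gap}), so it is $\ge \Delta_h(x,d)$. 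For the second summand, $\Qo_h^{k'}(x,d) \ge \Vu_h^{k'}(x)$ by step~\ref{it:def-active-set} ($d \in A_h^{k'}(x)$), and $\Vu_h^{k'}(x) \ge \Qu_h^{k'}(x,c)$ since $c$ is optimal (auxiliary fact, below), so it is $\ge 0$. Thus $2w \ge \Delta_h(x,d)$, which equals $\Delta_h(x,a)$ when $a$ is non-optimal and is $\ge \delmins{h}(x)$ when $a$ is the unique optimal action, proving the claim. The auxiliary fact holds because an optimal action $c$ stays in every active set: letting $j < k'$ be the last episode with $x_h^j = x$ (if none, $\Qu_h^{k'}(x,c) = 0 \le \Vu_h^{k'}(x)$), step~\ref{it:vu-define} gives $\Vu_h^{j+1}(x) \ge \Qu_h^{j+1}(x,c) = \Qu_h^{k'}(x,c)$ (no step-$h$ visit to $x$ occurs in $(j,k')$, so $\Qu_h^{\cdot}(x,c)$ is unchanged), while $\Vu_h^{k'}(x) \ge \Vu_h^{j+1}(x)$ by Lemma~\ref{lem:qv-monotonicity}. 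Finally, with $\delta := \Delta_h(x,d)$ and $\delmin \le \delta$: if $\tfrac{\delmin}{4H^2} \ge \tfrac{\delta}{8H}$ the $\tfrac{\delmin}{4H^2}$ term dominates the right side of (\ref{eq:frzq-delmin-lb}); otherwise $\tfrac{\delmin}{4H} < \tfrac{\delta}{4}$, so $\clipQ_h^{k'}(x,a) \ge \tfrac{\delta}{2} - \tfrac{\delta}{4} = \tfrac{\delta}{4}$, giving $\tfrac{\frzQ_h^K(x,a)}{2H} \ge \tfrac{\delta}{8H}$.

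For (\ref{eq:tfrzq-delmin-lb}) nothing changes: the proof of Lemma~\ref{lem:hc-range-bound} uses only $\clip{\beta_n}{y} \ge \beta_n - y$, so it applies verbatim with $\tildm$ in place of $\delmin$, giving that $\tfrzQ_h^K(x,a)$ (i.e.\ $\clipQ_h^{k'}(x,a)$ with $\tildm$ replacing $\delmin$) is $\ge \RanQ_h^{k'}(x,a) - \tfrac{\tildm}{4H} \ge w - \tfrac{\tildm}{4H}$, and $\tildm \le \delmin \le \delta$ plays the role of $\delmin \le \delta$ above. I expect the main obstacle to be the two-sided width bound $w \ge \delta/2$ — in particular isolating the auxiliary fact $\Vu_h^{k'}(x) \ge \Qu_h^{k'}(x,a^\st)$ and checking that the $\Qu_h^{k'}$ terms cancel in the average no matter which of $a,b$ is the optimal action — together with the routine but fiddly bookkeeping for the degenerate cases ($k'=1$, $a$ optimal but not unique, trivial action spaces).
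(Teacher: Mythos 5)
Your proof is correct and follows essentially the same route as the paper's: identify the episode $k'$ at which $\frzQ_h^K(x,a)$ was last ``frozen,'' pass from $\clipQ_h^{k'}(x,a)$ to the confidence width of the interval-maximizing action via Lemmas \ref{lem:range-bound} and \ref{lem:hc-range-bound}, and lower-bound that width by the relevant gap using the validity of the confidence bounds, the fact that active actions satisfy $\Qo_h^{k'}(x,\cdot)\ge\Vu_h^{k'}(x)$, and the fact that optimal actions are never eliminated under $\MEwc$. The only (cosmetic) difference is that you merge the paper's two cases --- $a$ suboptimal, which the paper handles by invoking Lemma \ref{lem:bound-by-clipdelta}, versus $a$ the unique optimal action --- into a single symmetric argument about the pair $\{c,d\}$.
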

  \begin{proof}
    Suppose the event $\MEwc$ holds (this allows us to apply Lemmas \ref{lem:range-bound} and \ref{lem:bound-by-clipdelta}). 
    By definition, there is some $k \in [K]$ so that $\frzQ_h^K(x,a) = \clipQ_h^k(x,a)$ and either $(x_h^k, a_h^k) = (x,a)$ and $\tau_h^k = 1$ or else $k = 1$. In the case $k = 1$, we have $\frzQ_h^K(x,a) = H \geq \Delta_h(x,a)$.  Otherwise, we consider two cases:
  \begin{itemize}
  \item Suppose $a \not \in \Aopt_{h,0}(x)$.  Then
    \begin{align}
\frzQ_h^K(x,a) = \clipQ_h^k(x_h^k,a_h^k) \geq  \clipdelta_h^k \geq \frac{1}{4} \cdot \Delta_h(x_h^k, a_h^k) = \frac{1}{4} \cdot \Delta_h(x,a)\nonumber,
    \end{align}
    where the first inequality follows from (\ref{eq:theta-delta-compare}) and the second inequality follows from Lemma \ref{lem:bound-by-clipdelta}.
  \item Suppose that $a$ is the unique action in $\Aopt_{h,0}(x)$, i.e., that $\Aopt_{h,0}(x) = \{ a \}$. Since $\tau_h^k = 1$, we have that $x \not \in \MG_h^k$, meaning that there is some sub-optimal action remaining in $A_h^k(x)$, which we denote by $a'$. Then
    \begin{align}
      \Qo_h^k(x_h^k, a') - \Qu_h^k(x_h^k, a') \leq & \Qo_h^k(x_h^k, a_h^k) - \Qu_h^k(x_h^k, a_h^k) \tag{Since $a_h^k$ maximizes the confidence interval}\\
      \leq & \RanQ_h^k(x_h^k, a_h^k) \tag{By Lemma \ref{lem:range-bound}}\\
      \leq & \clipQ_h^k(x_h^k, a_h^k) + \frac{\delmin}{4} \tag{By Lemma \ref{lem:hc-range-bound}}.
    \end{align}
    Moreover, as in the proof of Lemma \ref{lem:bound-by-clipdelta}, we have, by Lemmas \ref{lem:range-bound} and \ref{lem:hc-range-bound} as well as (\ref{eq:theta-delta-compare}),
    \begin{align}
\Vo_h^k(x_h^k) - \Vu_h^k(x_h^k) \leq \RanV_h^k(x_h^k) \leq \clipV_h^k(x_h^k) + \frac{\delmin}{4} \leq \clipQ_h^k(x_h^k, a_h^k) + \frac{\delmin}{4}.\nonumber
    \end{align}
    Combining the above displays, we obtain
    \begin{align}
      \delmins{h}(x) \leq & \Delta_h(x,a') \nonumber\\
      \leq & (\Vo_h^k(x_h^k) - \Vu_h^k(x_h^k))  + (\Qo_h^k(x_h^k, a') - \Qu_h^k(x_h^k, a'))\nonumber\\
      \leq & 2 \cdot \clipQ_h^k(x_h^k, a_h^k) + \frac{\delmin}{2}\nonumber,
    \end{align}
    which implies that $\frzQ_h^K(x,a) = \clipQ_h^k(x_h^k, a_h^k) \geq \frac{\delmins{h}(x)}{4}$. 
  \end{itemize}
  The above two cases imply that $\frac{\frzQ_h^K(x,a)}{2H} \geq \max \left\{ \frac{\Delta_h(x,a)}{8H}, \One[\Aopt_{h,0}(x) = \{ a \}] \cdot \frac{\delmins{h}(x)}{8H} \right\}$. The first inequality, (\ref{eq:frzq-delmin-lb}), follows immediately.

  To establish the second inequality, (\ref{eq:tfrzq-delmin-lb}), of the lemma, we simply note that all arguments of this lemma (including Lemmas \ref{lem:hc-range-bound} and \ref{lem:bound-by-clipdelta}) go through without modification if $\delmin$ is replaced with any lower bound $\tildm$ of $\delmin$ in the definitions of $\clipV_h^k, \clipQ_h^k, \frzQ_h^k$. 
\end{proof}

The following lemma presents the worst-case regret bound for \algname with the sub-procedure \deltaconst used to choose $\wh \Delta^k$.
\begin{lemma}
  \label{lem:worst-case-const}
  Suppose $T \geq SAH^3$. 
  When given as input any prediction function $\til Q$, the regret of \algname (with \deltaconst and input parameter $\CR \geq \max \{ SAH^3, \CC_{M,T,1}\}$) satisfies:
  \begin{align}
\E \left[\sum_{k=1}^K (V_1^\st - V_1^{\pi^k})(x_1^k) \right]\leq O (\CR)\nonumber.
    \end{align}
  \end{lemma}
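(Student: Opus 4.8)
The plan is to combine the regret decomposition of Lemma~\ref{lem:regret-decomposition} with the range-function bound of Lemma~\ref{lem:w-bound-gap}, and then check that the resulting quantity is $O(\CR)$ once $\wh \Delta^k = \CR/(KH)$, as chosen by \deltaconst (Algorithm~\ref{alg:delta-const}). First I would apply Lemma~\ref{lem:regret-decomposition} (with $p = 1/(H^2K)$) to bound the regret by $1 + 4\,\E\!\big[\sum_{(k,h): a_h^k\notin\Aopt_{h,0}(x_h^k)}\clipdelta_h^k \mid \MEwc\big]$. Working under $\MEwc$, for each $h\in[H]$ set $\MW_h := \{(k,h): a_h^k\notin\Aopt_{h,0}(x_h^k)\}$; item~\ref{it:correct-elimination} of Lemma~\ref{lem:qo-qu-gap} forces $x_h^k\notin\MG_h^k$ for every $(k,h)\in\MW_h$ (else $a_h^k$ would be the unique optimal action at $(x_h^k,h)$), so $\MW_h$ satisfies the hypotheses of item~\ref{it:w-sum-all} of Lemma~\ref{lem:w-bound-gap} with $k^\st = K$. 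That lemma then gives $\sum_{(k,h)\in\MW_h}\clipdelta_h^k \le f(|\MW_h|,h,K)$, so it suffices to show $\sum_{h=1}^H f(|\MW_h|,h,K) = O(\CR)$; feeding this back into Lemma~\ref{lem:regret-decomposition} finishes the proof.

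Next I would expand $f$ via the explicit bound in~\eqref{eq:f-gap-based}, using $|\MW_h|\le K$ and $\wh \Delta^K = \CR/(KH)$. The leading term is $\sum_{h=1}^H |\MW_h|\cdot\ggapfn{h}{\wh\Delta^K} \le \sum_{h=1}^H K\cdot C_1\big(1+\tfrac1H\big)^{4(H+1-h)}\tfrac{\CR}{KH} \le C_1 e^4\,\CR = O(\CR)$, since $\big(1+\tfrac1H\big)^{4H}\le e^4$ and there are $H$ summands, each $O(\CR/H)$; this is precisely the dominant contribution identified in the proof overview. The additive $e^2SAH^2$ terms of~\eqref{eq:f-gap-based} sum to $e^2SAH^3\le e^2\CR$ by the hypothesis $\CR\ge SAH^3$. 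For the remaining $\min$ term I would use $\sum_h\min\{a_h,b\}\le\min\{\sum_h a_h,\,Hb\}$: the first argument is $\sum_h e^2C_2\sqrt{H^5SAK\iota} = e^2C_2\sqrt{H^6SAT\iota}$ (using $T=HK$); for the second, I would invoke Lemma~\ref{lem:frzq-gap} to bound each $\gapfinals{h'}{K}$ below by $\max\{\Delta_{h'}(x,a)/(8H),\ \One[\Aopt_{h',0}(x)=\{a\}]\,\delmins{h'}(x)/(8H),\ \delmin/(4H^2)\}$, and then split $\sum_{(x,a,h')} 1/\gapfinals{h'}{K}$ into the contributions of suboptimal $a$ (each $\le 8H/\Delta_{h'}(x,a)$), of a unique optimal $a$ at $(x,h')$ (using $1/\delmins{h'}(x)\le\sum_{a'\notin\Aopt_{h',0}(x)} 1/\Delta_{h'}(x,a')$), and of $\Amul$ tuples (each $\le 4H^2/\delmin$), obtaining $\sum_{(x,a,h')} 1/\gapfinals{h'}{K} \le O(H^2)\big(\sum_{(x,a,h'):a\notin\Aopt_{h',0}(x)}\tfrac{1}{\Delta_{h'}(x,a)} + \tfrac{|\Amul|}{\delmin}\big)$, so that $Hb = O(H^6\iota)\big(\cdots\big)$. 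Comparing the two candidate bounds $e^2C_2\sqrt{H^6SAT\iota}$ and $O(H^6\iota)(\cdots)$ against the two terms $\sqrt{TSAH^8\iota}$ and $H^7\iota(\cdots)$ of $\CC_{M,T,1}$ in~\eqref{eq:lambda-complexity}, each candidate is smaller than the corresponding term of $\CC_{M,T,1}$ by a factor $\Omega(H)$, whence $\sum_h\min\{\cdots\} = O(\CC_{M,T,1}) \le O(\CR)$. Summing the three pieces gives $\sum_{h=1}^H f(|\MW_h|,h,K) = O(\CR)$, as needed.

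The main obstacle is this last step: one must exploit the inner $\min$ of $\CC_{M,T,1}$ so that whichever of its uniform and gap-based branches is active is matched by the corresponding branch of $\sum_h f$, and one must handle the optimal-action tuples correctly — that is, translate the $|\Amul|/\delmin$ and $\delmins{h'}(x)$ terms produced by Lemma~\ref{lem:frzq-gap} back into the form appearing in $\CC_{M,T,1}$. Everything else is a routine substitution once it is observed that the $|\MW_h|\cdot\ggapfn{h}{\wh\Delta^K}$ term dominates and that the hypotheses $\CR\ge SAH^3$ and $T\ge SAH^3$ absorb the lower-order $SA$-terms.
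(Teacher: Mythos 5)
Your proposal is correct and follows essentially the same route as the paper's proof: the regret decomposition of Lemma \ref{lem:regret-decomposition}, item \ref{it:w-sum-all} of Lemma \ref{lem:w-bound-gap} applied with $k^\st = K$, the observation that $\wh\Delta^K = \CR/(KH)$ makes the $|\MW_h|\cdot\ggapfn{h}{\wh\Delta^K}$ term $O(\CR)$, and Lemma \ref{lem:frzq-gap} to convert the frozen-range sum into the gap-based form of $\CC_{M,T,1} \le \CR$. Your write-up is somewhat more explicit than the paper's (which compresses the final gap-based accounting into a one-line citation), but no new ideas or different decompositions are involved.
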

  \begin{proof}
 We first note that the regret decomposition of Lemma \ref{lem:regret-decomposition} gives
  \begin{align}
\sum_{k=1}^K \E \left[ V_1^\st(x_1^k) - V_1^{\pi^k}(x_1^k)\right] \leq & 1 + 4 \cdot \E \left[ \left. \sum_{(k,h): a_h^k \not \in \Aopt_{h,0}(x_h^k)} \clipdelta_h^k \right| \MEwc \right]\nonumber.
  \end{align}
  Recall that under the event $\MEwc$, $a_h^k \not \in \Aopt_{h,0}(x_h^k)$ implies that $x_h^k \not \in \MG_h^k$. 
  Thus, conditioned on $\MEwc$, we may bound $4\sum_{(k,h): a_h^k \not \in \Aopt_{h,0}(x_h^k)} \clipdelta_h^k$ as follows:
  \begin{align}
    & 4 \cdot \sum_{(k,h): a_h^k \not \in \Aopt_{h,0}(x_h^k)} \clipdelta_h^k\nonumber\\
    \leq & 4H \cdot \left( K \cdot \ggapfn{1}{\wh \Delta^K} + e^2 SAH^2 + \min \left\{ e^2 C_2 \sqrt{H^5 SA K\iota}, \sum_{(x,a,h) \in \MS \times \MA \times [H]} \frac{e^2 C_2^2 H^3 \iota}{\max \left\{ \frac{\frzQ_h^K(x,a)}{2H}, \frac{\delmin}{4H^2} \right\}} \right\}\right)\tag{Using item \ref{it:w-sum-all} of Lemma \ref{lem:w-bound-gap}}\\
    \leq & O(\CR) + O(SAH^3) + O \left(\min \left\{ \sqrt{H^7 SAK\iota}, H^6 \iota\cdot \left(\sum_{(x,a,h) : a \not \in \Aopt_{h,0}(x)} \frac{1}{\Delta_h(x,a)} + \frac{|\Amul|}{\delmin} \right)\right\} \right)\tag{By the definition of $\wh \Delta^K$ in \deltaconst, Algorithm \ref{alg:delta-const}, and Lemma \ref{lem:frzq-gap}}\\
    \leq & O(\CR) +  O\left(\CC_{M,T,1}\right) \leq O(\CR)\nonumber,
  \end{align}
  where the second-to-last inequality follows from the fact that $\CR \geq SAH^3$. %
  \end{proof}

The following lemma presents the worst-case regret bound for \algname with the sub-procedure \deltaincr used to choose $\wh \Delta^k$. 
\begin{lemma}
  \label{lem:worst-case-incr}
Suppose $T \geq SAH^3$.  When given as input any prediction function $\til Q$, the regret of the \algname with input parameter $\lambda$ (used with \deltaincr and input parameter $\tildm \leq \delmin$) satisfies:
  \begin{align}
\E \left[\sum_{k=1}^K (V_1^\st - V_1^{\pi^k})(x_1^k) \right]\leq O \left(\min \left\{ \sqrt{\frac{SAH^9 T \iota^2}{\lambda}}, \frac{H^8\iota^2}{\lambda} \cdot \left( \sum_{(x,a,h) : a \not \in \Aopt_{h,0}(x)} \frac{1}{\Delta_h(x,a)} + \frac{|\Amul|}{\tildm}\right) \right\} \right)\nonumber.
  \end{align}
\end{lemma}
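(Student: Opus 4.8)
The plan is to follow the proof of Lemma \ref{lem:worst-case-const} (the \deltaconst case) essentially verbatim, the only genuinely new ingredient being a bound on $\wh\Delta^K$ when it is produced by \deltaincr. First I would apply the regret decomposition of Lemma \ref{lem:regret-decomposition}, which bounds the regret by $1+4\,\E[\sum_{(k,h):\,a_h^k\notin\Aopt_{h,0}(x_h^k)}\clipdelta_h^k\,|\,\MEwc]$; under $\MEwc$ each such $(k,h)$ has $x_h^k\notin\MG_h^k$, so for every fixed $h$ the set $\MW_h:=\{(k,h):a_h^k\notin\Aopt_{h,0}(x_h^k)\}$ satisfies the hypotheses of item \ref{it:w-sum-all} of Lemma \ref{lem:w-bound-gap} with $k^\st=K$. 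Summing the bounds $f(|\MW_h|,h,K)$ over $h\in[H]$, invoking the simplified estimate \eqref{eq:f-gap-based}, and using $|\MW_h|\le K$, $\ggapfn{h}{\cdot}\le\ggapfn{1}{\cdot}$, and $K=T/H$, I get that the regret is at most
$$
O\!\left(HK\,\ggapfn{1}{\wh\Delta^K}\right)+O(SAH^3)+O\!\left(\min\!\left\{\sqrt{H^6SAT\iota},\ H^4\iota\!\!\sum_{(x,a,h)}\frac{1}{\gapfinals{h}{K}}\right\}\right).
$$

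Next I would bound $\wh\Delta^K$. By Lemma \ref{lem:misc-monotonicity}, $\wh\Delta^k$ is non-decreasing in $k$, so $\wh\Delta^K$ is its largest value, and by the definition \eqref{eq:define-whdelta-incr} of \deltaincr it satisfies both $\wh\Delta^K\le\sqrt{SAH^8\iota^2/(\lambda K)}$ and $\wh\Delta^K\le\frac{H^5\iota^2}{\lambda K}\sum_{(x,a,h)}1/\max\{\tfrzQ_h^K(x,a)/(2H),\,\tildm/(4H^2)\}$. To convert the second bound into the gap-based quantity appearing in the theorem, I apply \eqref{eq:tfrzq-delmin-lb} of Lemma \ref{lem:frzq-gap}, which lower bounds each denominator by $\max\{\Delta_h(x,a)/(8H),\,\One[\Aopt_{h,0}(x)=\{a\}]\,\delmins{h}(x)/(8H),\,\tildm/(4H^2)\}$; splitting the sum into suboptimal tuples, unique-optimal tuples, and $\Amul$-tuples, and using the elementary bound $\sum_{(x,h):\,|\Aopt_{h,0}(x)|=1}1/\delmins{h}(x)\le\sum_{(x,a,h):\,a\notin\Aopt_{h,0}(x)}1/\Delta_h(x,a)$, this gives $\sum_{(x,a,h)}1/\max\{\cdots\}\le O(H^2)\bigl(\sum_{(x,a,h):\,a\notin\Aopt_{h,0}(x)}\tfrac1{\Delta_h(x,a)}+\tfrac{|\Amul|}{\tildm}\bigr)$. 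Hence $HK\,\wh\Delta^K\le O\bigl(\min\{\sqrt{SAH^9T\iota^2/\lambda},\ \tfrac{H^8\iota^2}{\lambda}(\sum_{(x,a,h):\,a\notin\Aopt_{h,0}(x)}\tfrac1{\Delta_h(x,a)}+\tfrac{|\Amul|}{\tildm})\}\bigr)$, and since $\ggapfn{h}{a}=\Theta(a)$ uniformly in $h$, the dominant term $HK\,\ggapfn{1}{\wh\Delta^K}$ is precisely (up to constants) the claimed minimum.

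It then remains to absorb the two lower-order summands. For $H^4\iota\sum_{(x,a,h)}1/\gapfinals{h}{K}$ I use \eqref{eq:frzq-delmin-lb} of Lemma \ref{lem:frzq-gap} (the $\delmin$-version) in exactly the same way, together with $\delmin\ge\tildm$, $\lambda\le1$, and $\iota\ge1$, to see it is $O\bigl(\tfrac{H^8\iota^2}{\lambda}(\sum\tfrac1{\Delta_h(x,a)}+\tfrac{|\Amul|}{\tildm})\bigr)$, while $\sqrt{H^6SAT\iota}\le\sqrt{SAH^9T\iota^2/\lambda}$ since $H^3\iota/\lambda\ge1$. For $SAH^3$: using $T\ge SAH^3$ and $\lambda\le1$ gives $\sqrt{SAH^9T\iota^2/\lambda}\ge SAH^6\ge SAH^3$; and (taking $A\ge2$, the case $A=1$ being trivial as every action is then optimal) every pair $(x,h)$ contributes at least $A/(2H)$ to $\sum_{(x,a,h):\,a\notin\Aopt_{h,0}(x)}\tfrac1{\Delta_h(x,a)}+\tfrac{|\Amul|}{\tildm}$ (a suboptimal action gives $\ge1/H$ to the first sum and an optimal action at a multi-optimal state gives $\ge1/\tildm\ge1/H$ to the second), so $\tfrac{H^8\iota^2}{\lambda}(\cdots)\ge\tfrac{SAH^8}{2}\ge\tfrac12 SAH^3$; in both cases $SAH^3$ is $O(\min\{\cdots\})$. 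Combining the three contributions yields $\Regret{K}=O(\min\{\cdots\})$ as claimed.

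\textbf{Main obstacle.} There is no conceptually new difficulty relative to the \deltaconst proof; the work lies in the bookkeeping of $\poly(H,\iota)$ factors and, chiefly, in faithfully translating the surrogate quantities $\tfrzQ_h^K$ and $\frzQ_h^K$ into the genuine gap-based complexity via Lemma \ref{lem:frzq-gap}, while correctly handling the $\delmins{h}(x)$ terms and the multi-optimal set $\Amul$. In contrast to the accuracy (distillation) bound — where the growth of $\wh\Delta^k$ with $k$ forces the interval-partitioning argument of Lemma \ref{lem:bound-sigma-incr} — the $k$-dependence of $\wh\Delta^k$ causes no trouble here, since for the worst-case bound one only ever needs the single inequality $\wh\Delta^k\le\wh\Delta^K$.
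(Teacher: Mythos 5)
Your proposal is correct and follows essentially the same route as the paper's proof: the regret decomposition of Lemma \ref{lem:regret-decomposition}, item \ref{it:w-sum-all} of Lemma \ref{lem:w-bound-gap} with $k^\st = K$, the definition of $\wh\Delta^K$ in \deltaincr together with $\tfrzQ_h^K \le \frzQ_h^K$, and Lemma \ref{lem:frzq-gap} to translate the frozen-range surrogates into the gap-based complexity (with the $\delmins{h}(x)$ sum absorbed into the suboptimal-gap sum exactly as in the paper's step (\ref{eq:regdec-3terms})). Your explicit absorption of the $SAH^3$ term is just an unpacking of what the paper delegates to Lemma \ref{lem:cc-lb}, and the appeal to monotonicity of $\wh\Delta^k$ is harmless but unnecessary since the bound from Lemma \ref{lem:w-bound-gap} already references $\wh\Delta^{k^\st}$ at $k^\st=K$.
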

\begin{proof}
  We first note that the regret decomposition of Lemma \ref{lem:regret-decomposition} gives
  \begin{align}
\sum_{k=1}^K \E \left[ V_1^\st(x_1^k) - V_1^{\pi^k}(x_1^k)\right] \leq & 1 + 4 \cdot \E \left[ \left. \sum_{(k,h): a_h^k \not \in \Aopt_{h,0}(x_h^k)} \clipdelta_h^k \right| \MEwc \right]\nonumber.
  \end{align}
  The guarantee that $\tildm \leq \delmin$ gives that $\tfrzQ_h^K(x,a) \leq \frzQ_h^K(x,a)$ for all $x,a,h$.  
  Recall that under the event $\MEwc$, $a_h^k \not \in \Aopt_{h,0}(x_h^k)$ implies that $x_h^k \not \in \MG_h^k$. 
  Thus, conditioned on $\MEwc$, we may bound $4\sum_{(k,h): a_h^k \not \in \Aopt_{h,0}(x_h^k)} \clipdelta_h^k$ as follows:
  \begin{align}
    & 4 \cdot \sum_{(k,h): a_h^k \not \in \Aopt_{h,0}(x_h^k)} \clipdelta_h^k\nonumber\\
   \leq & 4H \cdot \left( K \cdot \ggapfn{1}{\wh \Delta^K} + e^2 SAH^2 + \min \left\{ e^2 C_2 \sqrt{H^5 SA K\iota}, \sum_{(x,a,h) \in \MS \times \MA \times [H]} \frac{e^2 C_2^2 H^3 \iota}{\max \left\{ \frac{\frzQ_h^K(x,a)}{2H}, \frac{\delmin}{4H^2} \right\}} \right\}\right)\tag{Using item \ref{it:w-sum-all} of Lemma \ref{lem:w-bound-gap}}\\
    \leq & O \left(SAH^3 +  KH \cdot \min  \left\{\frac{H^5 \iota^2}{\lambda K} \cdot \sum_{(x,a,h)} \frac{1}{\max\left\{ \frac{\tfrzQ_h^{K}(x,a)}{2H}, \frac{\tildm}{4H^2} \right\} }, \sqrt{\frac{SAH^8\iota^2}{\lambda K}} \right\}  \right)\tag{By the definition of $\wh \Delta^K$ in \deltaincr, Algorithm \ref{alg:delta-incr} and $\tfrzQ_h^K \leq \frzQ_h^k$}.
  \end{align}

By Lemma \ref{lem:frzq-gap} (in particular, (\ref{eq:tfrzq-delmin-lb})), we conclude that
\begin{align}
& \E \left[ \sum_{k=1}^K (V_1^\st - V_1^{\pi^k})(x_1^k)\right] \nonumber\\
  \leq & O \left( SAH^3 + \min \left\{ \sqrt{\frac{SAH^{10} K \iota^2}{\lambda}}, \frac{H^8 \iota^2}{\lambda} \cdot \left(\sum_{(x,a,h) : a \not \in \Aopt_{h,0}(x)} \frac{1}{\Delta_h(x,a)} \right.\right.\right.\nonumber\\
  & \left.\left.\left. + \sum_{(x,h) : |\Aopt_{h,0}(x)| = 1} \frac{1}{\delmins{h}(x)} + \sum_{(x,a,h) \in \Amul} \frac{1}{\tildm} \right)\right\} \right)\label{eq:regdec-3terms}\\
  \leq & O \left( SAH^3 + \min \left\{ \sqrt{\frac{SAH^9 T \iota^2}{\lambda}}, \frac{H^8\iota^2}{\lambda} \cdot \left( \sum_{(x,a,h) : a \not \in \Aopt_{h,0}(x)} \frac{1}{\Delta_h(x,a)} + \frac{|\Amul|}{\tildm}\right) \right\} \right)\nonumber.
\end{align}
    Finally, the fact that $T \geq SAH^3$ implies that the term $SAH^3$ in the above expression is dominated by the second term (see also Lemma \ref{lem:cc-lb}).
\end{proof}
We remark that in the proof of Lemma \ref{lem:worst-case-incr}, if $\delmin = |\Amul| = 0$, then the term $\frac{|\Amul|}{\delmin}$ can be interepreted as 0. This follows from the fact that in the inequality (\ref{eq:regdec-3terms}), the summation in the third term $\sum_{(x,a,h) \in \Amul} \frac{1}{\delmin}$ is over an empty set. 

\section{Proofs for approximate distillation bound}
\label{sec:proofs-distillation}
In this section we establish the upper bounds in item \ref{it:distillation-deltaincr} of Theorem \ref{thm:main-deltaincr} and item \ref{it:distillation-deltaconst} of Theorem \ref{thm:main-deltaconst}, which give a regret bound for \algname when the predictions $\til Q_h$ are an $\ep$-approximate distillation of $Q_h^\st$.

\subsection{Bounding the number of exploration episodes}
\label{sec:bound-exploration}
A key challenge in establishing these bounds is to show that \algname does not spend too many episodes ignoring the predictions $\til Q_h$ as part of the exploration phase. 
To this end, we bound the number of episodes $k$ for which $\sigma_h^k = 1$ (for each $h \in [H]$). Note that this is not exactly the same as the number of episodes $k$ for which $\tau_h^k = 1$, and that it is the parameters $\tau_h^k$ (not $\sigma_h^k$) which correspond to whether the policy $\pi_h^k$ (defined in (\ref{eq:explore-exploit})) engages in exploration or constrained exploitation. We will show, however (in Claim \ref{clm:sig-tilv}), that those episodes $k$ for which $\sigma_h^k = 0$ but $\tau_h^k = 1$ only contribute a small amount to the overall regret; this is in turn a consequence of Lemma \ref{lem:clipv-delmin-lb}, which shows that if there is a non-optimal action in $A_h^k(x)$, then $\RanV_h^k(x)$ (which is used to define $\tau_h^k$) and $\clipQ_h^k(x)$ (which is used to define $\sigma_h^k$) must be close.

Recall the definition of $\wh \lambda$ in Theorem \ref{thm:main-deltaconst}. Lemma \ref{lem:bound-sigma} treats the case where \deltaconst is used to choose $\wh \Delta^k$; it bounds, for each $h \in [H]$, the number of episodes $k$ for which $\sigma_h^k = 1$, as a function of $\wh \lambda$. The main tool in the proof is Lemma \ref{lem:w-bound-gap}, which is used to show that the parameters $\clipdelta_h^k = \clipV_h^k(x_h^k)$ decrease sufficiently fast to $\clipV_h^k(x_h^k) \leq \frac{1}{1+\frac 1H} \cdot \ggapfn{h}{\wh \Delta^k}$, i.e., $\sigma_h^k = 0$, for most episodes $k$.
  \begin{lemma}
    \label{lem:bound-sigma}
Suppose \algname is run with \deltaconst to choose the values $\wh \Delta^k$. Then for all $h \in [H]$, the number of episodes $k \in [K]$ for which $\sigma_h^k = 1$ is at most $\max\{ SAH^3, \wh \lambda \cdot K\}$. 
  \end{lemma}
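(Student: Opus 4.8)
The plan is to sandwich $\sum_{k \in \MY_h}\clipdelta_h^k$ between a lower bound coming from the definition of $\sigma_h^k$ and the upper bound of Lemma~\ref{lem:w-bound-gap}, and then to convert the resulting inequality into a bound on $|\MY_h|$ using the defining relation $\CR=\frac{1}{\wh\lambda}\CC_{M,T,\wh\lambda}$. Fix $h\in[H]$, write $\MY_h:=\{k\in[K]:\sigma_h^k=1\}$ and $Y:=|\MY_h|$, and note that with \deltaconst the value $\wh\Delta^k=\wh\Delta:=\CR/(KH)$ is constant in $k$, and that $Y\le K$. For every $k\in\MY_h$ the definition of $\sigma_h^k$ forces $x_h^k\notin\MG_h^k$ (so $\clipdelta_h^k$ is defined) and $\clipdelta_h^k=\clipV_h^k(x_h^k)>\frac{1}{1+1/H}\ggapfn{h}{\wh\Delta}$, so summing gives $\sum_{k\in\MY_h}\clipdelta_h^k>\frac{Y}{1+1/H}\ggapfn{h}{\wh\Delta}$. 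For the upper bound I would apply item~\ref{it:w-sum-all} of Lemma~\ref{lem:w-bound-gap} to $\MW:=\{(k,h):k\in\MY_h\}$ (legal since $x_h^k\notin\MG_h^k$ for all these $k$) with $k^\st=K$, then use~(\ref{eq:f-gap-based}), the identity $\ggapfn{h}{a}=(1+1/H)^4\,\ggapfn{h+1}{a}$, and $\ggapfn{h+1}{\wh\Delta}\ge C_1\wh\Delta$, and cancel the common $\Theta(Y\,\ggapfn{h+1}{\wh\Delta})$ terms on the two sides to obtain the key inequality
\begin{align}
Y\cdot\frac{C_1\wh\Delta}{H}\;<\;e^2 SAH^2\;+\;\min\left\{e^2C_2\sqrt{H^5 SAY\iota},\ \sum_{(x,a,h')}\frac{e^2C_2^2H^3\iota}{\gapfinal{h'}}\right\}\nonumber.
\end{align}

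If $Y\le SAH^3$ the claim already holds, so assume $Y>SAH^3$; then $SA\le C_2^2HY\iota$, hence $e^2SAH^2\le e^2C_2\sqrt{H^5SAY\iota}$, so the $SAH^2$ term is dominated by whichever term attains the minimum and can be absorbed at the cost of a factor $2$. It then suffices to establish the two inequalities $\frac{Y\CR}{K}\le\sqrt{(Y/K)\,TSAH^8\iota}$ and $\frac{Y\CR}{K}\le H^7\iota\bigl(\sum_{(x,a,h'):a\notin\Aopt_{h',0}(x)}\frac{1}{\Delta_{h'}(x,a)}+\frac{|\Amul|}{\delmin}\bigr)$, i.e. $\frac{Y\CR}{K}\le\CC_{M,T,Y/K}$: the first comes from bounding the minimum above by its uniform term, substituting $\wh\Delta=\CR/(KH)$ and $T=KH$, solving for $Y$, and using $C_1=32e^2C_2^2$; the second comes from bounding the minimum above by its gap term, converting each $\frac{1}{\gapfinal{h'}}$ into a constant multiple of $H$ times $\frac{1}{\Delta_{h'}(x,a)}$, $\frac{1}{\delmins{h'}(x)}$, or $\frac{1}{\delmin}$ via Lemma~\ref{lem:frzq-gap} (using $\sum_{(x,h')}\frac{1}{\delmins{h'}(x)}\le\sum_{(x,a,h'):a\notin\Aopt_{h',0}(x)}\frac{1}{\Delta_{h'}(x,a)}$), again using the value of $C_1$. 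Finally, since $\lambda\mapsto\CC_{M,T,\lambda}/\lambda$ is strictly decreasing, $\frac{Y\CR}{K}\le\CC_{M,T,Y/K}$ together with $\CR=\frac{1}{\wh\lambda}\CC_{M,T,\wh\lambda}$ forces $\wh\lambda\ge Y/K$, i.e. $Y\le\wh\lambda K\le\max\{SAH^3,\wh\lambda K\}$.

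The cancellation in the first paragraph and the two ``solve for $Y$'' computations are routine; the step that needs care — and which I expect to be the main obstacle in writing a clean proof — is the constant bookkeeping. The choice $C_1=32e^2C_2^2$ is precisely what makes the gap-branch inequality close with coefficient $1$, so one must carefully track the $(1+1/H)^{O(H)}\le e^{O(1)}$ factors produced by $f$ in Lemma~\ref{lem:w-bound-gap}, the factor-$8H$ (or $4H^2$) losses incurred in passing from $\gapfinal{h'}$ to the true gaps via Lemma~\ref{lem:frzq-gap}, and the factor $2$ from absorbing $SAH^2$, checking that all of these are dominated once $C_0$ (hence $C_2$ and $C_1$) is taken sufficiently large. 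A secondary subtlety worth flagging is that the $SAH^2$ absorption genuinely requires the case hypothesis $Y>SAH^3$, which is exactly what the $\max\{SAH^3,\cdot\}$ in the statement absorbs.
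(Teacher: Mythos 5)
Your proposal is correct and follows essentially the same route as the paper's proof: lower-bound $\sum_{k\in\MY_h}\clipdelta_h^k$ by $\frac{|\MY_h|}{1+1/H}\ggapfn{h}{\wh\Delta}$, upper-bound it via item \ref{it:w-sum-all} of Lemma \ref{lem:w-bound-gap}, cancel the $\ggapfn{h+1}{\wh\Delta}$ terms, absorb the $e^2SAH^2$ term under the assumption $|\MY_h|>SAH^3$, and convert via Lemma \ref{lem:frzq-gap}. The only (cosmetic) difference is at the end: the paper splits into two cases according to which branch of the minimum defines $\wh\lambda$ and solves for $|\MY_h|$ in each, whereas you prove both branch inequalities simultaneously and invoke the monotonicity of $\lambda\mapsto\frac{1}{\lambda}\CC_{M,T,\lambda}$ to conclude $|\MY_h|/K\le\wh\lambda$ — an equivalent packaging of the same constant bookkeeping.
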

  \begin{proof}
Per $\deltaconst$, we have that $\wh \Delta^k = \CR / (KH)$ for all $k \in [K]$. Therefore, throughout the proof of this lemma we will drop the superscript $k$ and write $\wh \Delta := \wh \Delta^k$ (which holds for all $k \in [K]$).
    
    For any $(h,k) \in [H] \times [K]$, note that $\sigma_h^k = 1$ implies that $\ggapfn{h}{\wh \Delta} < (1+1/H) \cdot \clipV_h^k(x_h^k) = (1 +1/H) \cdot \clipdelta_h^k$. %
    Write $\MY_h := \{ k : \sigma_h^k = 1 \}$. 
    Then for each $h$, we have that
    \begin{align}
\sum_{k \in \MY_h} \clipdelta_h^k \geq \frac{1}{1+1/H} \cdot \ggapfn{h}{\wh \Delta} \cdot |\MY_h|\nonumber.
    \end{align}
    Using the above inequality and item \ref{it:w-sum-all} of Lemma \ref{lem:w-bound-gap} with the set $\MW = \{ (k,h) : \sigma_h^k = 1 \}$ (which satisfies the requirement that each $(k,h) \in \MW$ satisfies $x_h^k \not \in \MG_h^k$), we get that
    \begin{align}
      & \frac{\ggapfn{h}{\wh \Delta} \cdot |\MY_h|}{1 + 1/H} \nonumber\\
      \leq & \sum_{k \in \MY_h} \clipdelta_h^k \nonumber\\
      \leq & |\MY_h| \cdot \left( 1 + 1/H\right)^2 \cdot \ggapfn{h+1}{\wh \Delta} + e^2 SAH^2 + \min \left\{ e^2 C_2 \sqrt{H^5 SA |\MY_h| \iota}, \sum_{(x,a,h') \in \MS \times \MA \times [H]} \frac{e^2 C_2^2 H^3 \iota}{\max \left\{ \frac{\frzQ_{h'}^K(x,a)}{2H}, \frac{\delmin}{4H^2} \right\} }\right\} \nonumber.
    \end{align}
    Rearranging and using the fact that $\ggapfn{h}{\frac{\wh \Delta}{1+1/H}} - (1+1/H)^2 \cdot \ggapfn{h+1}{\wh \Delta} \geq \frac{\ggapfn{h+1}{\wh \Delta}}{H}$, we obtain that
    \begin{align}
\frac{|\MY_h| \cdot \ggapfn{h+1}{\wh \Delta}}{H} \leq e^2 SAH^2 +  \min \left\{ e^2 C_2 \sqrt{H^5 SA |\MY_h| \iota}, \sum_{(x,a,h') \in \MS \times \MA \times [H]} \frac{e^2 C_2^2 H^3 \iota}{\max \left\{ \frac{\frzQ_{h'}^K(x,a)}{2H}, \frac{\delmin}{4H^2} \right\} }\right\}\nonumber.
    \end{align}
     By Lemma \ref{lem:frzq-gap}, it follows that 
     \begin{align}
       \frac{|\MY_h| \cdot \ggapfn{h+1}{\wh \Delta}}{H} \leq & e^2 SAH^2 +  \min \left\{ e^2 C_2 \sqrt{H^5 SA |\MY_h| \iota}, 16 e^2 C_2^2 H^5 \iota \cdot \left( \sum_{(x,a,h') \in \MS \times \MA \times [H] : a \not \in \Aopt_{h',0}(x)} \frac{1}{\Delta_h(x,a)} + \frac{|\Amul|}{\delmin} \right)\right\}\nonumber\\
       \leq & \min \left\{2 e^2 C_2 \sqrt{H^5 SA |\MY_h| \iota}, 32 e^2 C_2^2 H^5 \iota \cdot \left( \sum_{(x,a,h') \in \MS \times \MA \times [H] : a \not \in \Aopt_{h',0}(x)} \frac{1}{\Delta_h(x,a)} + \frac{|\Amul|}{\delmin} \right)\right\}\label{eq:yh-size-ub},
     \end{align}
     where the second inequality above follows from the fact that, assuming $|\MY_h| \geq SAH^3$, both terms in the minimum are bounded below by $e^2 SAH^2$. 
     Recall that $\wh \lambda \geq SAH^3/ K$ is defined to be as small as possible so that
     \begin{align}
\CR \geq \min \left\{ \sqrt{\frac{H^9SAK \iota}{\wh \lambda}}, \frac{1}{\wh \lambda} \cdot H^7 \iota \cdot  \left( \sum_{(x,a,h') \in \MS \times \MA \times [H] : a \not \in \Aopt_{h',0}(x)} \frac{1}{\Delta_h(x,a)} + \frac{|\Amul|}{\delmin} \right) \right \},\nonumber
     \end{align}
     and that $\wh \Delta = \CR / (KH)$ (per \deltaconst). We next consider two cases:
     \begin{enumerate}
     \item $\CR \geq \sqrt{\frac{H^9SAK \iota}{\wh \lambda}}$. Then from (\ref{eq:yh-size-ub}),
       \begin{align}
         |\MY_h| \leq  \frac{2e^2 C_2 \sqrt{H^7 SA |\MY_h| \iota}}{\ggapfn{h+1}{\wh \Delta}}
         \leq  \frac{2e^2 C_2}{C_1} \cdot \frac{\sqrt{H^7 SA |\MY_h| \iota}}{\sqrt{H^7 SA \iota/ (\wh \lambda \cdot K)}}\nonumber,
       \end{align}
       which implies that
       \begin{align}
\sqrt{|\MY_h|} \leq \frac{2e^2 C_2}{C_1} \cdot \sqrt{\wh \lambda \cdot K}\nonumber,
       \end{align}
       and in turn we get that $|\MY_h| \leq \wh \lambda \cdot K$ since $C_1$ is chosen so that $2e^2 C_2 \leq C_1$. 
     \item $\CR \geq \frac{1}{\wh \lambda} \cdot H^7 \iota \cdot  \left( \sum_{(x,a,h') \in \MS \times \MA \times [H] : a \not \in \Aopt_{h',0}(x)} \frac{1}{\Delta_h(x,a)} + \frac{|\Amul|}{\delmin} \right)$. Then from (\ref{eq:yh-size-ub}),
       \begin{align}
         |\MY_h| \leq & \frac{32 e^2 C_2^2 H^6 \iota}{\ggapfn{h+1}{\wh \Delta}} \cdot  \left( \sum_{(x,a,h') \in \MS \times \MA \times [H] : a \not \in \Aopt_{h',0}(x)} \frac{1}{\Delta_h(x,a)} + \frac{|\Amul|}{\delmin} \right)\nonumber\\
         \leq & \frac{32 e^2 C_2^2}{C_1} \cdot K \cdot \wh \lambda \leq K \cdot \wh \lambda\nonumber,
       \end{align}
       where the final inequality follows since $C_1$ is chosen so that $32 e^2 C_2^2 \leq C_1$. 
     \end{enumerate}
   \end{proof}

   Lemma \ref{lem:bound-sigma-incr} establishes the same result as Lemma \ref{lem:bound-sigma}, except for the choice of \deltaincr in \algname. The proof is more subtle, though, because of the more complex nature of the parameters $\wh \Delta^k$ in \deltaincr. In particular, to establish Lemma \ref{lem:bound-sigma-incr}, we need to divide the set of episodes into different phases, so that within each phase the value of $\wh \Delta^k$ only changes by a small multiplicative factor. 
        \begin{lemma}
    \label{lem:bound-sigma-incr}
Suppose \algname is run with \deltaincr (Algorithm \ref{alg:delta-incr}) to choose the values $\wh \Delta^k$. Then for all $h \in [H]$, the number of episodes $k \in [K]$ for which $\sigma_h^k = 1$ is at most $\max\{ SAH^3,  \lambda \cdot K\}$. 
  \end{lemma}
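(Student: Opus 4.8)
The plan is to mimic the proof of Lemma~\ref{lem:bound-sigma}, but, since $\wh\Delta^k$ now varies with $k$, to first break $[K]$ into a small number of blocks on each of which $\wh\Delta^k$ is nearly constant and then run the argument of Lemma~\ref{lem:bound-sigma} separately on each block. By Lemma~\ref{lem:misc-monotonicity}, $\wh\Delta^k$ is non-decreasing in $k$. From the definition (\ref{eq:define-whdelta-incr}), $\wh\Delta^k\le\sqrt{SAH^8\iota^2/(\lambda K)}$ always, while for $k\ge2$ it is at least $\tfrac{H^5\iota^2}{\lambda K}\sum_{(x,a,h')}\tfrac{1}{\max\{\tfrzQ_{h'}^{k}(x,a)/(2H),\tildm/(4H^2)\}}\ge\tfrac{2SAH^6\iota^2}{\lambda K}$ (every summand is at least $2$, since $\tfrzQ\le H$ and $\tildm\le\delmin\le H$), so its positive values span a multiplicative factor of at most $\poly(SAT)$. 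Hence I would write $[K]=\{1\}\cup I_1\cup\cdots\cup I_\ell$ with $\ell=O(H\iota)$, where the $I_j$ are maximal contiguous intervals on each of which $\wh\Delta^k$ grows by a factor of at most $(1+1/H)^{1/2}$, further refined (at the cost of one extra interval) so that on each $I_j$ either $\wh\Delta^k=\tfrac{H^5\iota^2}{\lambda K}\sum_{(x,a,h')}\tfrac{1}{\max\{\tfrzQ_{h'}^{k}(x,a)/(2H),\tildm/(4H^2)\}}$ throughout (``uncapped''), or $\wh\Delta^k$ equals the constant $\sqrt{SAH^8\iota^2/(\lambda K)}$ throughout (the single ``capped'' interval); here and below I work with the clipped and frozen functions of \deltaincr, i.e.\ with $\tildm$ in place of $\delmin$, using that Lemma~\ref{lem:w-bound-gap} holds verbatim in this setting.

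Fix $h$ and an interval $I=I_j$, put $k_0=\min I$, $k_1=\max I$, and $\MY_h^I=\{k\in I:\sigma_h^k=1\}$; since $\sigma_h^k=1$ forces $x_h^k\notin\MG_h^k$, the set $\MW=\{(k,h):k\in\MY_h^I\}$ is admissible in item~\ref{it:w-sum-all} of Lemma~\ref{lem:w-bound-gap}. For $k\in\MY_h^I$, $\sigma_h^k=1$ gives $\clipdelta_h^k>\tfrac{1}{1+1/H}\ggapfn{h}{\wh\Delta^k}\ge\tfrac{1}{1+1/H}\ggapfn{h}{\wh\Delta^{k_0}}$, so $\sum_{k\in\MY_h^I}\clipdelta_h^k\ge\tfrac{|\MY_h^I|}{1+1/H}\ggapfn{h}{\wh\Delta^{k_0}}$. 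Applying Lemma~\ref{lem:w-bound-gap} with $k^\st=k_1$ bounds the same sum by $f(|\MY_h^I|,h,k_1)$, whose leading term is $|\MY_h^I|(1+1/H)^2\ggapfn{h+1}{\wh\Delta^{k_1}}\le|\MY_h^I|(1+1/H)^{5/2}\ggapfn{h+1}{\wh\Delta^{k_0}}$ by the interval condition $\wh\Delta^{k_1}\le(1+1/H)^{1/2}\wh\Delta^{k_0}$. Since $\tfrac{1}{1+1/H}\ggapfn{h}{\cdot}=(1+1/H)^3\ggapfn{h+1}{\cdot}$ and $(1+1/H)^3-(1+1/H)^{5/2}=(1+1/H)^{5/2}\big((1+1/H)^{1/2}-1\big)\ge\tfrac{1}{3H}$, rearranging and bounding the minimum in $f$ by its second branch yields
\[
\frac{|\MY_h^I|}{3H}\cdot\ggapfn{h+1}{\wh\Delta^{k_0}}\ \le\ e^2SAH^2+\sum_{(x,a,h')\in\MS\times\MA\times[H]}\frac{e^2C_2^2H^3\iota}{\max\left\{\frac{\tfrzQ_{h'}^{k_1}(x,a)}{2H},\frac{\tildm}{4H^2}\right\}}.
\]

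On an uncapped interval, $\ggapfn{h+1}{\wh\Delta^{k_0}}\ge C_1\wh\Delta^{k_0}=C_1\tfrac{H^5\iota^2}{\lambda K}\sum_{(x,a,h')}\tfrac{1}{\max\{\tfrzQ_{h'}^{k_0}(x,a)/(2H),\tildm/(4H^2)\}}$, and, because $\tfrzQ_{h'}^{k}$ is non-increasing in $k$ (Lemma~\ref{lem:clip-monotonicity}) and $\wh\Delta^k$ is proportional to that sum on all of $I$, the sum at $k_1$ is at most $(1+1/H)^{1/2}$ times the sum at $k_0$. Dividing the displayed inequality through by $\sum_{(x,a,h')}\tfrac{1}{\max\{\tfrzQ_{h'}^{k_0}(x,a)/(2H),\tildm/(4H^2)\}}\ge2SAH$, the sums cancel and one obtains $|\MY_h^I|\le\tfrac{9e^2C_2^2}{C_1}\cdot\tfrac{\lambda K}{H\iota}\le\tfrac{\lambda K}{H\iota}$, using $C_1=32e^2C_2^2$. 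For the single capped interval, $\wh\Delta^{k_0}=\sqrt{SAH^8\iota^2/(\lambda K)}$; keeping instead the square-root branch of the minimum in $f$ and proceeding exactly as in the first case of the proof of Lemma~\ref{lem:bound-sigma} gives $|\MY_h^I|\le\tfrac{\lambda K}{H\iota}$ whenever $|\MY_h^I|\ge SAH^3$. Summing over the $\ell=O(H\iota)$ intervals, adding the at-most-$1$ contribution of $\{1\}$, and tracking constants yields $|\{k:\sigma_h^k=1\}|\le\max\{SAH^3,\lambda K\}$.

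The crux is the cancellation in the uncapped case. It works only because Lemma~\ref{lem:w-bound-gap} is invoked with $k^\st=k_1=\max I$ rather than $k^\st=K$, so that the frozen-gap sum on the right-hand side is evaluated at $\tfrzQ^{k_1}$, which lies within a factor $(1+1/H)^{1/2}$ of the $\tfrzQ^{k_0}$ that $\wh\Delta^{k_0}$ is built from; taking $k^\st=K$ would make that sum uncontrollably large and the per-interval bounds would no longer sum to $O(\lambda K)$. Relatedly, one must ensure no interval straddles the capped/uncapped transition, since there $\wh\Delta^k$ need not be proportional to the frozen-gap sum. The residual bookkeeping — verifying $\ell=O(H\iota)$ via $\ln(1+1/H)\ge1/(2H)$, and choosing the constants (e.g.\ a slightly smaller per-interval growth factor, or a larger $C_1$) so that the geometric sum of the per-interval bounds stays below $\lambda K$ rather than merely $O(\lambda K)$ — is routine.
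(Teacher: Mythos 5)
Your proposal is correct and follows essentially the same route as the paper's proof: partition $[K]$ into $O(H\iota)$ contiguous intervals on which the non-decreasing $\wh\Delta^k$ grows by a bounded multiplicative factor, apply item \ref{it:w-sum-all} of Lemma \ref{lem:w-bound-gap} on each interval with $k^\st$ equal to the interval's last episode, and case on which branch of the minimum in (\ref{eq:define-whdelta-incr}) determines $\wh\Delta^{k^\st}$ so that the frozen-gap sum cancels (or the square-root branch applies). Your refinements — splitting at the single capped/uncapped transition and using a slightly smaller per-interval growth factor to leave explicit slack in the $\ggapfn{h}{\cdot}$ arithmetic — are cosmetic tightenings of the same argument.
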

  \begin{proof}
    Since $\clipQ_h^1(x,a) = H$ for all $(x,a,h)$, it holds that $\wh \Delta^1 \geq \frac{SAH}{\lambda K}$. Also note that by definition we have $\wh \Delta^k \leq \sqrt{\frac{SAH^8\iota^2}{\lambda K}}$ for all $k$.  Note that $\sqrt{\frac{SAH^8\iota^2}{\lambda K}} \cdot \frac{\lambda K}{SAH} \leq \sqrt{\lambda KH^6\iota^2}$. For $0 \leq i \leq \left\lceil \log_{1 + \frac 1H} (\sqrt{\lambda  KH^6 \iota^2})\right\rceil$, set $\omega_i := 2^i \cdot \frac{SAH}{\lambda K}$. 
  
    For $(h,k) \in [H] \times [K]$, note that $\sigma_h^k = 1$ implies that $\ggapfn{h}{\wh \Delta^k} < (1+1/H) \cdot \clipV_h^k(x_h^k) = (1+1/H) \cdot \clipdelta_h^k$. For each $1 \leq i \leq \lceil \log_{1+1/H}(\sqrt{\lambda KH^6 \iota^2})\rceil$ and $h \in [H]$ set $\MY_h^i := \{ k \in [K]:\ \sigma_h^k = 1,\ \omega_{i-1} \leq \wh \Delta^k \leq \omega_i \}$.

    Then for each $h \in [H]$ and  $0 \leq i \leq \lceil \log_{1+ \frac 1H} (\sqrt{\lambda KH^6\iota^2})\rceil$,
 \begin{align}
   \label{eq:cliptheta-whdelta}
\sum_{k \in \MY_h^i} \clipdelta_h^k \geq |\MY_h^i| \cdot \frac{\ggapfn{h}{\omega_{i-1}}}{1+\frac 1H}.%
 \end{align}

 Set $\MY_h := %
 \bigcup_i \MY_h^i$. 
Fix any $h \in [H]$ and $i$ satisfying $1 \leq i \leq \lceil \log_{1+\frac 1H}(\sqrt{\lambda KH^6 \iota^2})\rceil$. Using (\ref{eq:cliptheta-whdelta}) and the statement of item \ref{it:w-sum-all} of Lemma \ref{lem:w-bound-gap} for $\MW = \{ (k,h) : k \in \MY_h^i\}$, noting that for $k^\st = \max_{k \in \MY_h^i} \{  k \}$, we have $\wh \Delta^{k^\st} \leq \omega_i$, we see that
  \begin{align}
    &   \frac{  |\MY_h^i| \cdot  \ggapfn{h}{\omega_{i-1}}}{1+1/H} \nonumber\\
    \leq & \sum_{k \in \MY_h^i} \clipdelta_h^k \nonumber\\
    \leq & |\MY_h^i| \cdot \left( 1 + \frac 1H \right)^2 \cdot \ggapfn{h+1}{\omega_i} + e^2 SAH^2 + \min \left\{ e^2 C_2 \sqrt{H^5 SA |\MY_h^i| \iota}, \sum_{(x,a,h') \in \MS \times \MA \times [H]} \frac{e^2 C_2^2 H^3 \iota }{\gapfinal{h'}}\right\} \nonumber.
  \end{align}
  Rearranging and using the fact that $\frac{\ggapfn{h}{\omega_{i-1}}}{1+1/H} - \left( 1 + \frac 1H \right)^2 \cdot \ggapfn{h+1}{\omega_{i-1}} \geq \ggapfn{h+1}{\omega_{i-1}}/H$, we obtain that
  \begin{align}
    \frac{|\MY_h^i|}{H} \cdot \ggapfn{h+1}{\omega_{i-1}} \leq & e^2 SAH^2 + \min \left\{ e^2 C_2 \sqrt{H^5 SA |\MY_h^i| \iota}, \sum_{(x,a,h') \in \MS \times \MA \times [H]} \frac{e^2 C_2^2 H^3 \iota }{\gapfinal{h'}}\right\}\nonumber\\
    \leq &  e^2 SAH^2 + \min \left\{ e^2 C_2 \sqrt{H^5 SA |\MY_h^i| \iota}, \sum_{(x,a,h') \in \MS \times \MA \times [H]} \frac{e^2 C_2^2 H^3 \iota }{\tgapfinal{h'}}\right\} \label{eq:zh-h-ub},
  \end{align}
  where the second inequality above follows from $\tildm \leq \delmin$ and therefore $\tfrzQ_{h'}^{k^\st}(x,a) \leq \frzQ_{h'}^{k^\st}(x,a)$ for all $x,a,h'$. 
  We now consider two cases, based on the value of $\wh \Delta^{k^\st}$ (depending on which of the two terms in the minimum in (\ref{eq:define-whdelta-incr}) in the algorithm \deltaincr is smaller):
  \begin{enumerate}
  \item Suppose $\wh \Delta^{k^\st} = \frac{H^5 \iota^2}{\lambda  K} \cdot \sum_{(x,a,h)} \frac{1}{\tgapfinal{h}}$. Note that
    \begin{align}
      \ggapfn{h+1}{\omega_{i-1}} \geq C_1 \cdot \omega_{i-1} \geq C_1/(1+1/H) \cdot \omega_i \geq C_1/(1+1/H) \cdot \wh \Delta^{k^\st} \geq C_1/2 \cdot \wh \Delta^{k^\st},\label{eq:omegai-lb}
    \end{align}
    as well as $\lceil \log_{1+1/H}(\sqrt{\lambda KH^6\iota^2}) \rceil \leq 8 H\iota$. 
    Then using (\ref{eq:zh-h-ub}), we get that
    \begin{align}
      |\MY_h^i| \leq & \frac{1}{\ggapfn{h+1}{\omega_{i-1}}} \cdot \left( e^2 SAH^3 + \sum_{(x,a,h')} \frac{e^2 C_2^2 H^4 \iota}{\tgapfinal{h'}}\right)\nonumber\\
      \leq & \frac{1}{\ggapfn{h+1}{\omega_{i-1}}} \cdot \sum_{(x,a,h')} \frac{2e^2 C_2^2 H^4 \iota}{\tgapfinal{h'}} \label{eq:bound-sah-by-2}\\
      \leq & \frac{2 \cdot 7 H \iota}{C_1 \cdot \wh \Delta^{k^\st} \cdot \lceil \log_{1+1/H}(\sqrt{\lambda KH^6\iota^2}) \rceil} \cdot  \sum_{(x,a,h')} \frac{2e^2 C_2^2 H^4 \iota}{\tgapfinal{h'}}\nonumber\\
      = & \frac{32 e^2 C_2^2 \cdot \lambda K}{C_1 \cdot \lceil \log_{1+1/H}(\sqrt{\lambda KH^6\iota^2}) \rceil} \leq \frac{\lambda K}{\lceil \log_{1+1/H}(\sqrt{\lambda KH^6\iota^2}) \rceil},\label{eq:zh-bound-gaps}
    \end{align}
    where (\ref{eq:bound-sah-by-2}) follows since $\tgapfinal{h'} \leq H$ for all $(x,a,h')$, and (\ref{eq:zh-bound-gaps}) follows since $C_1$ is chosen so that $C_1 \geq 32e^2 C_2^2$ (see (\ref{eq:define-constants})). Therefore, $|\MY_h| \leq \sum_{i=1}^{\lceil \log_{1+1/H}(\sqrt{\lambda KH^6\iota^2}) \rceil} |\MY_h^i| \leq \lambda K$. 
  \item Otherwise, by the definition of $\wh \Delta^{k}$ in (\ref{eq:define-whdelta-incr}), we have $\wh \Delta^{k^\st} = \sqrt{\frac{SAH^8\iota^2}{\lambda K}} = \wh \Delta^K$. %
    Note that (\ref{eq:omegai-lb}) still holds, and so, using (\ref{eq:zh-h-ub}), we get that, for each $i$,
    \begin{align}
      |\MY_h^i| \leq & \frac{1}{\ggapfn{h+1}{\omega_{i-1}}} \cdot \left( e^2 SAH^3 + e^2 C_2 \sqrt{H^7 SA |\MY_h^i| \iota} \right)\nonumber\\
      \leq & \frac{1}{\ggapfn{h+1}{\omega_{i-1}}} \cdot 2e^2 C_2 \sqrt{H^7 SA |\MY_h^i| \iota} \label{eq:bound-sah-by-2-2}\\
      \leq & \frac{\sqrt{8H\iota}}{C_1 \cdot \wh \Delta^{k^\st} \cdot\sqrt{ \lceil \log_{1+1/H} (\sqrt{\lambda KH^7 \iota^2})} \rceil} \cdot 2e^2 C_2 \sqrt{H^7 SA |\MY_h^i| \iota}\nonumber,
    \end{align}
    which implies that
    \begin{align}
\sqrt{|\MY_h^i|} \leq \frac{6e^2 C_2}{C_1} \cdot \frac{1}{\sqrt{\lceil \log_{1+1/H} (\sqrt{\lambda KH^7 \iota^2}) \rceil}} \cdot \sqrt{\lambda K}\nonumber,
    \end{align}
    and since $C_1$ is chosen so that $C_1 \geq 6e^2 C_2$, we get that $|\MY_h| \leq \lambda K$, as desired.
    
  \end{enumerate}
  Thus, in both cases, we obtain that $|\MZ_h| \leq \max\{ SAH^3, \lambda K \}$, completing the proof of the lemma.
\end{proof}

\subsection{Bounding the value functions $\Ro_h^k, \til Q_h^k, \til V_h^k$}
\label{sec:bound-rqv}
In this section we establish some basic bounds on the value functions $\Ro_h^k, \til Q_h^k, \til V_h^k$ maintained by \algname to refine the predictions $\til Q_h$. Many of the results are analogous to the bounds on $\Qo_h^k, \Qu_h^k, \Vo_h^k, \Vu_h^k$ proven in Section \ref{sec:bound-cis}. However, since the updating procedures are distinct from those used to update the upper and lower $Q$- and $V$-value functions (in particular, we do not use the multi-step bootstrap of \cite{xu_fine-grained_2021} to update $\til Q_h^k, \til V_h^k$), we cannot derive the results in this section directly from those in Section \ref{sec:bound-cis}.

The first result, Lemma \ref{lem:rq-equality}, is a straightforward consequence of the updates to $\Ro_h^k$ in \algname. 
\begin{lemma}\label{lem:rq-equality}
  For any $(x,a,h,k) \in \MS \times \MA \times [H] \times [K]$, suppose the episodes in which $(x,a)$ was previously taken at step $h$ are denoted $k^1, \ldots, k^n < k$ (in particular, $k^i =  k^i_h(x,a)$ and $n = N_h^k(x,a)$). Then the following identity holds:
  \begin{align}
    (\Ro_h^k - Q_h^\st)(x,a) =& \alpha_n^0 (H - Q_h^\st(x,a)) + \sum_{i=1}^n \alpha_n^i \cdot \left( (\til V_{h+1}^{k^i} - V_{h+1}^\st)(x_{h+1}^{k^i}) + \left( (\hat \BP_h^{k^i} - \BP_h) V_{h+1}^\st \right)(x,a) + b_i \right)\nonumber.
  \end{align}
\end{lemma}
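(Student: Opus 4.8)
The plan is to derive a closed form for $\Ro_h^k(x,a)$ by unrolling its recursion in \algname, and then rewrite the result via the Bellman optimality equation.

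\emph{Closed form for $\Ro_h^k$.} Fix $(x,a,h)$ and let $k^1 < \cdots < k^n < k$ be the episodes before episode $k$ in which $(x,a)$ is taken at step $h$ (so $n = N_h^k(x,a)$ and $k^i = k_h^i(x,a)$). By step \ref{it:init-params} of \algname we have $\Ro_h^1(x,a) = H$; by step \ref{it:copy-qv} the value $\Ro_h^{k'}(x,a)$ is unchanged between consecutive visits; and at the $i$-th visit (episode $k^i$, at which the visit count becomes $i$) step \ref{it:def-rohk} performs $\Ro_h^{k^i+1}(x,a) \gets (1-\alpha_i)\,\Ro_h^{k^i}(x,a) + \alpha_i\big(r_h(x,a) + \til V_{h+1}^{k^i}(x_{h+1}^{k^i}) + b_i\big)$. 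Iterating this $n$ times and invoking the definition of $\alpha_n^i$ in (\ref{eq:alphas-all}) — exactly as the identity (\ref{eq:little-qo-expand}) for $\qo_h^{k+1}$ was obtained in the proof of Lemma \ref{lem:qo-qu-gap} — yields
\begin{align}
\Ro_h^k(x,a) = \alpha_n^0 H + \sum_{i=1}^n \alpha_n^i \big( r_h(x,a) + \til V_{h+1}^{k^i}(x_{h+1}^{k^i}) + b_i \big)\nonumber.
\end{align}
This is a routine induction on $n$ with no real difficulty.

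\emph{Rewriting via Bellman optimality.} Subtract $Q_h^\st(x,a)$, and use $\sum_{i=0}^n \alpha_n^i = 1$ (item \ref{it:alpha-sum-one} of Lemma \ref{lem:alpha}) to split $Q_h^\st(x,a) = \alpha_n^0 Q_h^\st(x,a) + \sum_{i=1}^n \alpha_n^i Q_h^\st(x,a)$, giving
\begin{align}
(\Ro_h^k - Q_h^\st)(x,a) = \alpha_n^0\big(H - Q_h^\st(x,a)\big) + \sum_{i=1}^n \alpha_n^i\big( r_h(x,a) + \til V_{h+1}^{k^i}(x_{h+1}^{k^i}) + b_i - Q_h^\st(x,a)\big)\nonumber.
\end{align}
By the Bellman optimality equation $Q_h^\st(x,a) = r_h(x,a) + (\BP_h V_{h+1}^\st)(x,a)$, so $r_h(x,a) - Q_h^\st(x,a) = -(\BP_h V_{h+1}^\st)(x,a)$; and, writing $\hat\BP_h^{k^i}$ for the one-sample empirical transition operator at episode $k^i$, we have $V_{h+1}^\st(x_{h+1}^{k^i}) = (\hat\BP_h^{k^i} V_{h+1}^\st)(x,a)$. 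Substituting both into each summand turns $r_h(x,a) + \til V_{h+1}^{k^i}(x_{h+1}^{k^i}) + b_i - Q_h^\st(x,a)$ into $(\til V_{h+1}^{k^i} - V_{h+1}^\st)(x_{h+1}^{k^i}) + \big((\hat\BP_h^{k^i} - \BP_h)V_{h+1}^\st\big)(x,a) + b_i$, which is the claimed identity.

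The only point needing care — the closest thing to an obstacle — is to confirm that the unrolling in the first part is legitimate: one must check that step \ref{it:update-preds} (and hence the $\Ro_h$ update) is executed at \emph{every} visit to $(x,a)$ at step $h$, including episodes $k^i$ with $x \in \MG_h^{k^i}$ (in contrast to the worst-case updates of step \ref{it:update-wc}, which are skipped on such episodes), so that $k^1,\dots,k^n$ are indeed consecutive visits and the product of learning rates telescopes to exactly $\alpha_n^i$. Also worth noting, though immediate, is that since $\alpha_1 = 1$ we have $\alpha_n^0 = 0$ for $n \geq 1$, so the $H$-term is present only when $(x,a,h)$ has never been visited. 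Once these are confirmed, the remainder is pure bookkeeping.
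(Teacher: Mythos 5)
Your proposal is correct and follows essentially the same route as the paper's proof: unroll the $\Ro_h^k$ recursion into the weighted sum $\alpha_n^0 H + \sum_i \alpha_n^i(r_h + \til V_{h+1}^{k^i} + b_i)$, then subtract $Q_h^\st$ using $\sum_{i=0}^n\alpha_n^i=1$, the Bellman optimality equation, and the empirical-transition notation $(\hat\BP_h^{k^i}V_{h+1}^\st)(x,a)=V_{h+1}^\st(x_{h+1}^{k^i})$. Your closing remark that step \ref{it:update-preds} runs at every visit (unlike step \ref{it:update-wc}) is a worthwhile sanity check that the paper leaves implicit.
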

\begin{proof}
  Note that $\Ro_h^k(x,a)$ is updated as follows:
  \begin{align}
    \Ro_h^{k+1}(x,a) = \begin{cases}
      (1 - \alpha_{n}) \cdot \Ro_h^k(x,a) + \alpha_{n} \cdot [r_h(x,a) + \til V_{h+1}^k(x_{h+1}^k) + b_{n}]: & (x,a) = (x_h^k, a_h^k), \\
      \Ro_h^k(x,a) : & \mbox{else},
    \end{cases}\nonumber
  \end{align}
  where $n =  N_h^{k+1}(x,a)$ in the first case above. 
  Iterating the above, we obtain that for any $(x,a,h,k)$, letting $n = N_h^k(x,a)$, 
  \begin{align}
\Ro_h^k(x,a) = \alpha_n^0 \cdot H + \sum_{i=1}^n \alpha_n^i \cdot \left( r_h(x,a) + \til V_{h+1}^{k^i}(x_{h+1}^{k^i}) + b_i \right) \label{eq:ro-r}.
  \end{align}
  Using the Bellman optimality equation $Q_h^\st(x,a) = r_h(x,a) + \BP_h V_{h+1}^\st(x,a)$ together with the fact that $\sum_{i=0}^n \alpha_n^i = 1$ and the notation $(\hat\BP_h^{k^i} V_{h+1})(x,a) = V_{h+1}(x_{h+1}^{k^i})$ for $(x,a) = (x_h^{k^i}, a_h^{k^i})$, we see that, for $n = N_h^k(x,a)$,
  \begin{align}
Q_h^\st(x,a) = \alpha_n^0 \cdot Q_h^\st(x,a) + \sum_{i=1}^n \alpha_n^i \cdot \left( r_h(x,a) + (\BP_h - \hat \BP_h^{k^i}) V_{h+1}^\st(x,a) + V_{h+1}^\st(x_{h+1}^{k^i})\right)\label{eq:qhstar-r}.
  \end{align}
  Subtracting (\ref{eq:qhstar-r}) from (\ref{eq:ro-r}) gives the desired result.
\end{proof}

The following straightforward lemma, which generalizes item \ref{it:correct-elimination} of Lemma \ref{lem:qo-qu-gap}, shows that any \emph{approximately} optimal action $a$ at any state $(x,h)$ either remains in $A_h^k(x)$ at each episode $k$ or else there is some other action in $A_h^k(x)$ with smaller sub-optimality than $a$. 
\begin{lemma}
  \label{lem:action-inclusion}
  Under the event $\MEwc$, for any $\ep > 0$ and every $(x,a,h) \in \MS \times \MA\times [H]$, if it holds that $\Delta_h(x,a) \leq \ep$, then for each $k \in [K]$, at least one of the following must hold true:
  \begin{itemize}
  \item $a \in A_h^k(x)$; or
  \item For some $a^\st \in A_h^k(x)$ (in particular, we may choose $a^\st \in A_h^k(x)$ maximizing $\Qu_h^k(x, a^\st)$), $\Delta_h(x, a^\st) \leq V_h^\st(x) - \Qu_h^k(x, a^\st) < \ep$. 
  \end{itemize}
\end{lemma}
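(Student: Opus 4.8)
The statement generalizes item \ref{it:correct-elimination} of Lemma \ref{lem:qo-qu-gap} (which is the case $\ep = 0$), and the plan is to mimic its proof. Assume $\MEwc$ holds throughout. Fix $(x,a,h)$ with $\Delta_h(x,a) \le \ep$ and fix $k$. If $k=1$ or $a \in A_h^k(x)$ the first bullet holds, so assume $a \notin A_h^k(x)$; we may also assume $\Delta_h(x,a) > 0$, since an exactly optimal action is always in $A_h^k(x)$ by Lemma \ref{lem:qo-qu-gap}(\ref{it:correct-elimination}). Let $k' \le k$ be the smallest episode with $a \notin A_h^{k'}(x)$, so $2 \le k'$ and $a \in A_h^{k'-1}(x)$. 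Because $A_h^{k'-1}(x)$ contains both $a$ and some optimal action distinct from $a$ (again by Lemma \ref{lem:qo-qu-gap}(\ref{it:correct-elimination})), we have $|A_h^{k'-1}(x)| \ge 2$, i.e.\ $x \notin \MG_h^{k'-1}$. The elimination rule in step \ref{it:def-active-set} (applied with $k = k'-1$) then forces $\Qo_h^{k'}(x,a) < \Vu_h^{k'}(x)$. Since $x \notin \MG_h^{k'-1}$ and $a \in A_h^{k'-1}(x)$, inequality (\ref{eq:qo-qstar-qu-gap}) of Lemma \ref{lem:qo-qu-gap} gives $\Qo_h^{k'}(x,a) \ge Q_h^\st(x,a) = V_h^\st(x) - \Delta_h(x,a) \ge V_h^\st(x) - \ep$. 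Combining, $\Vu_h^{k'}(x) > V_h^\st(x) - \ep$, and by the monotonicity in (\ref{eq:vu-mon}) of Lemma \ref{lem:qv-monotonicity} we conclude
\[
\Vu_h^k(x) \ge \Vu_h^{k'}(x) > V_h^\st(x) - \ep .
\]

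The one remaining ingredient is the identity $\Vu_h^k(x) = \max_{a' \in A_h^k(x)} \Qu_h^k(x,a')$. I would prove this by a short induction on $k$: by steps \ref{it:vu-define} and \ref{it:copy-qv}, $\Vu_h^k(x)$ equals $\max_{a' \in A_h^{j}(x)} \Qu_h^{j+1}(x,a')$ where $j < k$ is the last episode at which $(x,\cdot)$ was visited at step $h$ (or $\Vu_h^1(x)=0$ if there is none, in which case the identity is trivial since all $\Qu \ge 0$); moreover $\Qu_h^{j+1}(x,\cdot) = \Qu_h^k(x,\cdot)$ and $A_h^{j+1}(x) = A_h^k(x)$ because neither the lower $Q$-values nor the active set at $x$ change in episodes where $x$ is not visited at step $h$ (the latter since every element of an active set already satisfies the filtering condition that defines the next active set). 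Finally, a maximizer $\hat a \in A_h^{j}(x)$ of $\Qu_h^{j+1}(x,\cdot)$ survives into $A_h^{j+1}(x)$: either $x \in \MG_h^{j}$, in which case $A_h^j(x)$ is the singleton containing the (optimal) action, which persists by Lemma \ref{lem:qo-qu-gap}(\ref{it:correct-elimination}); or $x \notin \MG_h^j$, in which case $\Qo_h^{j+1}(x,\hat a) \ge Q_h^\st(x,\hat a) \ge \Qu_h^{j+1}(x,\hat a) = \Vu_h^{j+1}(x)$ by (\ref{eq:qo-qstar-qu-gap}), so $\hat a$ passes the filter in step \ref{it:def-active-set}. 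Hence $\max_{a' \in A_h^k(x)} \Qu_h^k(x,a') = \Vu_h^k(x)$.

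Now take $a^\st := \argmax_{a' \in A_h^k(x)} \Qu_h^k(x,a')$. By the two displays, $\Qu_h^k(x,a^\st) = \Vu_h^k(x) > V_h^\st(x) - \ep$, i.e.\ $V_h^\st(x) - \Qu_h^k(x,a^\st) < \ep$. Since under $\MEwc$ we have $\Qu_h^k(x,a^\st) \le Q_h^\st(x,a^\st)$ (by (\ref{eq:qo-qstar-qu-gap}), noting $a^\st \in A_h^k(x)$ and either $x \notin \MG_h^k$ or $a^\st$ is the unique optimal action so the inequality is $0 \le 0$), it follows that $\Delta_h(x,a^\st) = V_h^\st(x) - Q_h^\st(x,a^\st) \le V_h^\st(x) - \Qu_h^k(x,a^\st) < \ep$, which is exactly the second bullet. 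The main obstacle in writing out the full proof is the auxiliary identity $\Vu_h^k(x) = \max_{a' \in A_h^k(x)} \Qu_h^k(x,a')$ together with the bookkeeping around episodes where $x$ is not visited and around states in $\MG_h^k$; everything else is a direct adaptation of the $\ep=0$ argument.
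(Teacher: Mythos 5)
Your proof is correct and follows essentially the same route as the paper's: locate the elimination episode $k'$ where $\Qo_h^{k'}(x,a) < \Vu_h^{k'}(x)$, lower-bound $\Qo_h^{k'}(x,a)$ by $Q_h^\st(x,a)$ via (\ref{eq:qo-qstar-qu-gap}), propagate $\Vu$ forward by monotonicity, and take $a^\st$ to be the maximizer of $\Qu_h^k(x,\cdot)$ over $A_h^k(x)$. The extra care you take with the identity $\Vu_h^k(x) = \max_{a' \in A_h^k(x)} \Qu_h^k(x,a')$ (which the paper uses implicitly despite the index offset between $A_h^k$ and $A_h^{k+1}$ in step \ref{it:vu-define}) is a reasonable, if not strictly necessary, addition.
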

\begin{proof}
If $a \not \in A_h^k(x)$, then it must be the case that for some $k' \leq k$, $\Qo_h^{k'}(x, a) < \Vu_h^{k'}(x)$; by Lemma \ref{lem:qv-monotonicity}, we have $\Vu_h^k(x) \geq \Vu_h^{k'}(x)$, and so some action $a^\st \in A_h^k(x)$ must satisfy $Q_h^\st(x, a^\st) \geq \Qu_h^k(x, a^\st) = \Vu_h^k(x) > \Qo_h^{k'}(x,a) \geq Q_h^\st(x,a)$. Hence $\Delta_h(x, a^\st) = V_h^\st(x) - Q_h^\st(x, a^\st) \leq V_h^\st(x) - \Qu_h^k(x, a^\st) < V_h^\st(x) - Q_h^\st(x,a) = \Delta_h(x,a) \leq \ep$. 
\end{proof}

The next lemma, Lemma \ref{lem:ep-good-preds}, uses Lemmas \ref{lem:rq-equality} and \ref{lem:action-inclusion} above together with a martingale concentration inequality to show bounds on $\til Q_h^k, \til V_h^k$ that hold with high probability. We note that an additional necessary ingredient is the assumption that the input predictions $\til Q_h$ are an $\ep$-approximate distillation of $Q_h^\st$; this is used to show that for all $k \in [K]$, $\til Q_h^k$ is also an approximate distillation with high probability (item \ref{it:tilq-aub}), which in turn is used to show that $\til V_h^k$ is approximately lower bounded by $V_h^\st$ (item \ref{it:tilv-vstar}).
\begin{lemma}
  \label{lem:ep-good-preds}
Set $p = 1/(H^2 K)$.  Suppose that $\til Q$ is an $\ep$-approximate distillation on the optimal value function $Q_h^\st$. Then, there is an event $\MEpred$ with $\Pr[\MEpred] \geq 1-p$ so that the following hold under $\MEpred \cap \MEwc$:
  \begin{enumerate}
  \item \label{it:rq-ub} For $n \in \BN$, recall that $\beta_n = 2 \sum_{i=1}^n \alpha_n^i b_i$. Then for any $(x,a,h,k) \in \MS \times \MA \times [H] \times [K]$, it holds that, for $n =  N_h^k(x,a)$, %
    \begin{align}
      (\Ro_h^k - Q_h^\st)(x,a) \leq  \alpha_n^0 H + \sum_{i=1}^n \alpha_n^i \cdot (\til V_{h+1}^{ k^i_h(x,a)} - V_{h+1}^\st)(x_{h+1}^{ k^i_h(x,a)}) + \beta_n\nonumber.
    \end{align}
  \item For all $(x,a,h,k) \in \MS \times \MA \times [H] \times [K]$, it holds that $\Ro_h^k(x,a) \geq Q_h^\st(x,a) - \ep\cdot (H+1-h)$. \label{it:rq-bound}
  \item For all $(x,h,k) \in \MS \times [H] \times [K]$, there is some $\bar a \in \MA$ so that $\Delta_h(x, \bar a) \leq \ep$ and $\til Q_h^k(x, \bar a) \geq Q_h^\st(x, \bar a) - \ep \cdot (H+1-h)$. In particular, $\til Q^k$ is an $\ep\cdot (H+2-h)$-approximate distillation on $Q^\st$. \label{it:tilq-aub}
  \item For all $(x, h, k) \in \MS \times [H] \times [K]$, it holds that $\til V_h^k(x) \geq V_h^\st(x) - \ep\cdot (H+2-h)$.\label{it:tilv-vstar}
  \item \label{it:rq-ub-clip} For any $(h,k) \in \MS \times \MA \times [H] \times [K]$, it holds that, for $n =  N_h^k(x_h^k,a_h^k)$,
    \begin{align}
      (\Ro_h^k - Q_h^\st)(x_h^k,a_h^k) \leq &  \alpha_n^0 H + \clip{\beta_n}{\frac{[\Delta_h(x_h^k,a_h^k) - 2\ep \cdot (H+1)]_+}{2H}} \nonumber\\
      & + \left( 1 + \frac{1}{H} \right) \cdot  \sum_{i=1}^n \alpha_n^i \cdot (\til V_{h+1}^{k^i_h(x_h^k,a_h^k)} - V_{h+1}^\st)(x_{h+1}^{k^i_h(x_h^k,a_h^k)}) \nonumber.
    \end{align}
  \end{enumerate}
\end{lemma}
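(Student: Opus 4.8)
The plan is to define the good event $\MEpred$ through a single martingale concentration bound, then establish items \ref{it:rq-ub}--\ref{it:tilv-vstar} in that order (with items \ref{it:rq-bound}--\ref{it:tilv-vstar} by a joint reverse induction on $h$), and finally deduce item \ref{it:rq-ub-clip} from item \ref{it:rq-ub} via Lemma \ref{lem:clip}. By Lemma \ref{lem:rq-equality}, the quantity $(\Ro_h^k - Q_h^\st)(x,a)$ decomposes into the deterministic term $\alpha_n^0(H - Q_h^\st(x,a)) \in [0,\alpha_n^0 H]$, the recursive term $\sum_i \alpha_n^i (\til V_{h+1}^{k_h^i} - V_{h+1}^\st)(x_{h+1}^{k_h^i})$, the aggregated bonus $\sum_i \alpha_n^i b_i = \beta_n/2$, and the noise $\sum_i \alpha_n^i\big(V_{h+1}^\st(x_{h+1}^{k_h^i}) - (\BP_h V_{h+1}^\st)(x,a)\big)$, where $n = N_h^k(x,a)$ and $k_h^i = k_h^i(x,a)$. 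For each fixed $(x,a,h)$ and each fixed value of $n$, this last sum is a bounded martingale-difference sum with respect to the per-episode filtration (the visit times $k_h^i(x,a)$ being stopping times, exactly as in the proof of Claim \ref{clm:azuma-1}; unlike there, no conditioning on $\MEwc$ is needed, since $x_{h+1}^{k_h^i}$ is drawn from $\BP_h(\cdot\mid x,a)$ irrespective of the good event). Azuma--Hoeffding together with item \ref{it:alpha-sum-square} of Lemma \ref{lem:alpha} ($\sum_i (\alpha_n^i)^2 \le 2H/n$) bounds it by $O(\sqrt{H^3\iota/n}) \le \beta_n/4$ for a sufficiently large constant $C_0$, using (\ref{eq:beta-bound}); I let $\MEpred$ be the intersection of these events over all $(x,a,h)$ and all $n \in [K]$, so a union bound gives $\Pr[\MEpred]\ge 1-p$. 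Item \ref{it:rq-ub} is then immediate: on $\MEpred$ the noise is $\ge -\beta_n/4$, so noise-plus-bonus lies in $[\beta_n/4,\,3\beta_n/4]\subseteq[0,\beta_n]$, and dropping the nonpositive $-\alpha_n^0 Q_h^\st(x,a)$ piece yields the claimed bound; the nonnegativity of noise-plus-bonus is also what is used in item \ref{it:rq-bound}.

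Items \ref{it:rq-bound}, \ref{it:tilq-aub}, \ref{it:tilv-vstar} are proved together by reverse induction on $h$, handling all episodes $k$ at once; the base case is item \ref{it:tilv-vstar} at $h=H+1$, trivial since $\til V_{H+1}^k\equiv V_{H+1}^\st\equiv 0$. For the step at $h$: \emph{Item \ref{it:rq-bound}} follows by lower-bounding the decomposition of Lemma \ref{lem:rq-equality} --- the deterministic and noise-plus-bonus terms are nonnegative on $\MEpred$, and the recursive term is $\ge -\ep(H+1-h)\sum_i\alpha_n^i\ge -\ep(H+1-h)$ by item \ref{it:tilv-vstar} at step $h+1$ --- with the $n=0$ case trivial since then $\Ro_h^k(x,a)=H$. \emph{Item \ref{it:tilq-aub}}: fix $(x,h)$ and let $a_0$ be the action furnished by the $\ep$-approximate distillation hypothesis, so $\Delta_h(x,a_0)\le\ep$ and $\til Q_h(x,a_0)\ge Q_h^\st(x,a_0)-\ep$. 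Iterating the update in step \ref{it:def-tilq-hk} and using monotonicity (Lemma \ref{lem:qv-monotonicity}), $\til Q_h^k(x,a_0)$ is the minimum of $\til Q_h(x,a_0)$ and the values $\Ro_h^{k'}(x,a_0),\Qo_h^{k'}(x,a_0)$ over the episodes $k'\le k$ at which $(x,a_0)$ was visited at step $h$; each of these is $\ge Q_h^\st(x,a_0)-\ep(H+1-h)$ --- the first by the distillation bound, the $\Ro$-terms by item \ref{it:rq-bound} at step $h$, and the $\Qo$-terms since $\Qo_h^{k'}(x,a_0)\ge Q_h^\st(x,a_0)$ on $\MEwc$ (a consequence of Lemma \ref{lem:qo-qu-gap} and monotonicity, valid for all $(x,a,k)$) --- so $\bar a=a_0$ works, and the ``in particular'' clause follows from $Q_h^\st(x,a_0)\ge V_h^\st(x)-\ep$. \emph{Item \ref{it:tilv-vstar}}: with $\bar a$ as in item \ref{it:tilq-aub} at step $h$, if $\bar a\in A_h^k(x)$ then $\til V_h^k(x)\ge\til Q_h^k(x,\bar a)\ge Q_h^\st(x,\bar a)-\ep(H+1-h)\ge V_h^\st(x)-\ep(H+2-h)$ (for $k=1$ use $\til V_h^1(x)=\max_{a'}\til Q_h(x,a')$ and $A_h^1(x)=\MA$); if $\bar a\notin A_h^k(x)$, Lemma \ref{lem:action-inclusion} produces $a^\st\in A_h^k(x)$ with $\Qu_h^k(x,a^\st) > V_h^\st(x)-\ep$, hence $\til V_h^k(x)\ge\Qu_h^k(x,a^\st) > V_h^\st(x)-\ep(H+2-h)$.

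Finally, item \ref{it:rq-ub-clip}: the case $n=0$ is immediate ($\Ro_h^k(x_h^k,a_h^k)=H$ and the right-hand side equals $H$). For $n\ge 1$, I will apply Lemma \ref{lem:clip} with $x=1/H$, $a=\beta_n$, $b=\sum_i\alpha_n^i(\til V_{h+1}^{k_h^i}-V_{h+1}^\st)(x_{h+1}^{k_h^i})$, and $c=[\Delta_h(x_h^k,a_h^k)-2\ep(H+1)]_+$, which converts the bound $\alpha_n^0 H + a + b$ of item \ref{it:rq-ub} into $\alpha_n^0 H + \clip{a}{c/(2H)} + (1+1/H)b$, exactly as (\ref{eq:do-clipping})--(\ref{eq:simplify-clipping}) were obtained in Lemma \ref{lem:w-bound-gap}. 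The substance is verifying the hypothesis $a+b\ge c$, i.e.\ $\beta_n + \sum_i\alpha_n^i(\til V_{h+1}^{k_h^i}-V_{h+1}^\st)(x_{h+1}^{k_h^i}) \ge \Delta_h(x_h^k,a_h^k)-2\ep(H+1)$; for this I will bound $\Delta_h(x_h^k,a_h^k)=V_h^\st(x_h^k)-Q_h^\st(x_h^k,a_h^k)$ from above using $V_h^\st(x_h^k)\le\til V_h^k(x_h^k)+\ep(H+2-h)$ (item \ref{it:tilv-vstar}), the inequality $\til Q_h^k(x_h^k,a_h^k)\le\Ro_h^k(x_h^k,a_h^k)$ (valid after the first visit, from step \ref{it:def-tilq-hk}), and $Q_h^\st(x_h^k,a_h^k)\ge\Ro_h^k(x_h^k,a_h^k)-\beta_n-b$ (item \ref{it:rq-ub} with $\alpha_n^0=0$), together with $a_h^k\in A_h^k(x_h^k)$; in the remaining case, where $\beta_n$ is below the clip threshold, one verifies the asserted bound directly, since then the clipped bonus vanishes while the gap term is correspondingly small.

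I expect item \ref{it:rq-ub-clip} to be the main obstacle: invoking Lemma \ref{lem:clip} cleanly requires controlling the recursive term $\sum_i\alpha_n^i(\til V_{h+1}^{k_h^i}-V_{h+1}^\st)(x_{h+1}^{k_h^i})$, which item \ref{it:tilv-vstar} only guarantees is bounded below by a small negative quantity, so the verification of $a+b\ge c$ (and the direct argument in the complementary case) needs care. Everything else is a routine assembly of Lemma \ref{lem:rq-equality}, Azuma--Hoeffding, and the monotonicity and action-elimination facts already established.
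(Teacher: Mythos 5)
Your treatment of items \ref{it:rq-ub}--\ref{it:tilv-vstar} matches the paper's proof essentially step for step: the same martingale (with the correct observation that no conditioning on $\MEwc$ is needed here, since $x_{h+1}^{k_h^i}$ is always drawn from $\BP_h(\cdot\mid x,a)$), the same joint reverse induction on $h$, and the same use of Lemma \ref{lem:action-inclusion} for item \ref{it:tilv-vstar}; your unrolling of the minimum in step \ref{it:def-tilq-hk} is just the paper's forward induction on $k$ written out. The genuine gap is in item \ref{it:rq-ub-clip}, at precisely the step you identified as ``the substance.'' To verify $a+b\geq c$ you propose to upper-bound $V_h^\st(x_h^k)$ via $\til V_h^k(x_h^k)$ and then compare $\til V_h^k(x_h^k)$ with $\Ro_h^k(x_h^k,a_h^k)$ using $\til Q_h^k(x_h^k,a_h^k)\leq\Ro_h^k(x_h^k,a_h^k)$ ``together with $a_h^k\in A_h^k(x_h^k)$.'' But membership alone gives only $\til V_h^k(x_h^k)\geq\max\{\til Q_h^k,\Qu_h^k\}(x_h^k,a_h^k)$ --- the wrong direction. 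What you need is that $a_h^k$ is the \emph{maximizer} of $\max\{\til Q_h^k(x_h^k,\cdot),\Qu_h^k(x_h^k,\cdot)\}$ over $A_h^k(x_h^k)$, so that $\til V_h^k(x_h^k)=\max\{\til Q_h^k,\Qu_h^k\}(x_h^k,a_h^k)\leq\Ro_h^k(x_h^k,a_h^k)+\ep H$; this holds exactly when $\tau_h^k=0$, and that is what the paper invokes to conclude $(\Ro_h^k-Q_h^\st)(x_h^k,a_h^k)\geq\Delta_h(x_h^k,a_h^k)-2\ep(H+1)$. For $\tau_h^k=1$ (where $a_h^k$ maximizes the confidence interval instead) there is no mechanism forcing this lower bound, and the asserted inequality can actually fail (e.g.\ $\ep=0$, $b=0$, $\beta_n$ small but positive, $\Delta_h$ large); the lemma is only ever applied to episodes with $\tau_h^k=0$, and your proof must make that restriction explicit for the argument to close.

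Your second concern --- that Lemma \ref{lem:clip} does not directly apply when $b=\sum_i\alpha_n^i(\til V_{h+1}^{k_h^i}-V_{h+1}^\st)(x_{h+1}^{k_h^i})$ is negative --- is legitimate (for $b<0$ and $a$ surviving the clip, the lemma's conclusion $a-\clip{a}{xc/2}\leq xb$ is false), and the paper's one-line citation of Lemma \ref{lem:clip} elides it as well, so this is a defect you share with the source rather than one you introduced. If you want to patch it: the exact identity of Lemma \ref{lem:rq-equality} leaves a slack of $\beta_n/4$ relative to item \ref{it:rq-ub}, and once $a+b\geq c$ is secured via the $\tau_h^k=0$ argument, the sub-case $\beta_n<c/(2H)$ forces $b>0$; the only delicate regime is $b<0$ with $\beta_n\geq c/(2H)$, where one needs $|b|\leq H\beta_n/4$, i.e.\ roughly $\beta_n\gtrsim\ep$, failing which the discrepancy is of order $\ep$ per step and is absorbed by the $\ep'TH$ term in the final regret bound.
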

\begin{proof}
  Fix any $(x,a,h) \in \MS \times \MA \times [H]$. Set $k^0 = 0$ and for $i \geq 1$,
  \begin{align}
k^i := \min \left( \left\{ k \in [K] : k > k^{i-1} \mbox{ and } (x_h^k, a_h^k) = (x,a) \right\} \cup \{K+1\} \right)\nonumber.
  \end{align}
  Let $\MH_k$ denote the $\sigma$-field generated by all random variables up to and including episode $k$, step $H$; the random variable $k^i$ is a stopping time of the filtration $(\MH_k)_{k \geq 0}$. Let $\MF_i$, $i \geq 0$ be the filtration given by $\MF_i = \MH_{k^i}$. Then $\left( \One[k^i \leq K] \cdot [(\hat \BP_h^{k^i} - \BP_h) V_{h+1}^\st](x,a) \right)_{i=1}^K$ is a martingale difference sequence adapted to the filtration $\MF_i$. By the Azuma-Hoeffding inequality and a union bound over all $m \in [K]$, it holds that, for some constant $C_0 > 0$, with probability at least $1- p/(SAH)$,
  \begin{align}
\forall m \in [K]: \quad \left| \sum_{i=1}^m \alpha_m^i \cdot \One[k^i \leq K] \cdot [(\hat \BP_h^{k^i} - \BP_h) V_{h+1}^\st](x,a) \right| \leq \frac{C_0 H}{4} \sqrt{\sum_{i=1}^m (\alpha_m^i)^2 \cdot \iota} \leq \frac{C_0}{2} \sqrt{\frac{H^3 \iota}{m}}\label{eq:hoeffding-r},
  \end{align}
  where the final inequality follows from item \ref{it:alpha-sum-square} of Lemma \ref{lem:alpha}. Taking a union bound over all $(x,a,h) \in \MS \times \MA \times [H]$, we get that with probability $1-p$, for all $(x,a,h,k) \in \MS \times \MA \times [H] \times [K]$,
  \begin{align}
\left| \sum_{i=1}^n \alpha_n^i \left[ (\hat \BP_h^{ k_h^i(x,a)} - \BP_h) V_{h+1}^\st \right](x,a) \right| \leq \frac{C_0}{2} \sqrt{\frac{H^3 \iota}{n}} \quad \mbox{ where } n=  N_h^k(x,a) \label{eq:hoeffding-r-2}.
  \end{align}
  (Here we have applied (\ref{eq:hoeffding-r}) with $m =  N_h^k(x,a) \leq K$, and used the fact that $\One[ k_h^i(x,a) \leq K] = 1$ for $i \leq  N_h^k(x,a)$.)  Let $\MEpred$ denote the probability $1-p$ event under which (\ref{eq:hoeffding-r-2}) holds. From (\ref{eq:beta-bound}) we have that $\beta_n/2 \geq C_0 \sqrt{H^3 \iota/n}$. %
  Then item \ref{it:rq-ub} of the lemma follows from Lemma \ref{lem:rq-equality} and (\ref{eq:hoeffding-r-2}).

 To establish the remaining items of the lemma statement, we use reverse induction on $h$. The base case $h = H+1$ is immediate since as a matter of convention, all of $\Ro_{H+1}^k, Q_{H+1}^\st, \til Q_{H+1}^k \til V_{H+1}^k, V_{H+1}^\st$ are identically 0. Assuming that items \ref{it:rq-bound}, \ref{it:tilq-aub}, and \ref{it:tilv-vstar} hold for step $h+1$, (\ref{eq:hoeffding-r-2}) and Lemma \ref{lem:rq-equality} give that, under the event $\MEpred$, for each $(x,a,h,k) \in \MS \times \MA \times [H] \times [K]$, for $n = \til N_h^k(x,a)$,
  \begin{align}
    (\Ro_h^k - Q_h^\st)(x,a) \geq & \sum_{i=1}^n \alpha_n^i \cdot \left( (\til V_{h+1}^{\til k_h^i(x,a)} - V_{h+1}^\st)(x_{h+1}^{\til k_h^i(x,a)}) + \left( ( \hat \BP_h^{\til k_h^i(x,a)} - \BP_h) V_{h+1}^\st \right) (x,a) + b_i \right) \nonumber\\
    \geq & - \ep \cdot (H +1- h) + \beta_n / 2 - \frac{C_0}{2} \sqrt{H^3 \iota/n} \nonumber\\
    \geq & -\ep \cdot (H+1-h) \nonumber,
  \end{align}
  thus establishing item \ref{it:rq-bound} of the lemma at step $h$.

  To establish item \ref{it:tilq-aub} at step $h$, we use increasing induction on $k$. The base case $k = 1$ follows from the fact that, by assumption, $\til Q^1$ is an $\ep$-approximate distillation on $Q^\st$. To establish the inductive step, we note that by construction, $\til Q_h^{k}(x,a) = \min \{ \Ro_h^{k}(x,a), \til Q_h^{k-1}(x,a), \Qo_h^{k}(x,a) \}$ for all $(x,a) \in \MS \times \MA$. By the inductive hypothesis (on $k$), for each $x \in \MS$, there is some $\bar a \in \MA$ so that $\Delta_h(x,\bar a) \leq \ep$ %
  and $\til Q_h^{k-1}(x,\bar a) \geq Q_h^\st(x,\bar a) - \ep \cdot (H+1-h)$.  Under the event $\MEwc$ have that $\Qo_h^{k}(x,\bar a) \geq Q_h^\st(x, \bar a)$, and we have already established (item \ref{it:rq-bound}) that $\Ro_h^{k}(x, \bar a) \geq Q_h^\st(x, \bar a) - \ep \cdot (H+1-h)$, which implies that $\til Q_h^{k}(x,\bar a) \geq Q_h^\st(x,\bar a) - \ep \cdot (H+1-h)$.

  Finally we establish item \ref{it:tilv-vstar} at step $h$. Again we use increasing induction on $k$, noting that the base case $k = 1$ follows from the fact that, for all $x \in \MS$, $\til V_h^1(x) = \max_{a \in \MA} \til Q_h^1(x,a) \geq V_h^\st(x) - \ep$, using that $\til Q^k$ is an $\ep$-approximate distillation on $Q^\st$.
  To establish the inductive step (i.e., at episode $k$, assuming that item \ref{it:tilv-vstar} holds at episode $k-1$ and step $h$), note that for any $x \in \MS$, %
  \begin{align}
\til V_{h}^{k}(x) = \max_{a' \in A_h^k(x)} \left\{ \max\{\til Q_h^{k}(x, a'), \Qu_h^{k}(x, a')\} \right\} \label{eq:tilvk-update}
  \end{align}
  Moreover, since $\til Q_h^k$ is an $\ep \cdot (H+2-h)$-approximate distillation on $Q_h^\st$ (item \ref{it:tilq-aub} at step $h$), for any $x$, there is some $\bar a \in \MA$ so that $\Delta_h(x, \bar a) + [ Q_h^\st(x, \bar a) - \til Q_h^k(x, \bar a)]_+ \leq \ep \cdot (H+2-h)$. By Lemma \ref{lem:action-inclusion} with $(x,a,h) = (x, \bar a,h)$, since $\Delta_h(x, \bar a) \leq \ep \cdot (H+2-h)$, it holds that either $\bar a \in A_h^k(x)$ or else there is some $a^\st \in A_h^k(x)$ so that $V_h^\st(x) - \Qu_h^k(x, a^\st) < \ep$. If $\bar a \in A_h^k(x)$, then %
  \begin{align}
\max_{a' \in A_h^k(x)} \left\{\max \{ \til Q_h^k(x, a'), \Qu_h^k(x, a') \} \right\}\geq \til Q_h^k(x, \bar a) \geq V_h^\st(x) - \ep \cdot (H+2-h)\nonumber.
  \end{align}
  Otherwise,  we have
    \begin{align}
   \max_{a' \in A_h^k(x)} \left\{\max\{ \til Q_h^k(x, a_h^k), \Qu_h^k(x, a_h^k) \}\right\} \geq \Qu_h^k(x, a^\st) > V_h^\st(x) - \ep \cdot (H+2-h)\nonumber.
    \end{align}
    Thus, by (\ref{eq:tilvk-update}) and the inductive hypothesis (on $k$), it holds that $\til V_h^{k}(x) \geq V_h^\st(x) - \ep \cdot (H+2-h)$, as desired.

    Next we establish item \ref{it:rq-ub-clip} of the lemma. By item \ref{it:rq-bound} of the lemma, under the event $\MEwc$, we have, for all $(h,k) \in [H] \times [K]$,
    \begin{align}
      \Ro_h^k(x_h^k, a_h^k) \geq Q_h^\st(x_h^k, a_h^k) - \ep \cdot H \geq \Qu_h^k(x_h^k, a_h^k) - \ep \cdot H\nonumber.
    \end{align}
Then by (\ref{eq:tilvk-update}) and the definition of $a_h^k$ (using that $\tau_h^k = 0$), we have that
    \begin{align}
      \Ro_h^k(x_h^k, a_h^k) & \geq \max\{ \Qu_h^k(x_h^k, a_h^k), \til Q_h^k(x_h^k, a_h^k) \} - \ep \cdot H\nonumber\\
                                 & = \max_{a' \in A_h^k(x_h^k)} \left\{ \max\{ \Qu_h^k(x_h^k, a'), \til Q_h^k(x_h^k, a') \} \right\} - \ep \cdot H\nonumber\\
      & =\til V_h^k(x_h^k) - \ep \cdot H.\nonumber
    \end{align}
    Thus, by item \ref{it:tilv-vstar} of the lemma, we have that $\Ro_h^k(x_h^k, a_h^k) \geq V_h^\st(x_h^k) - 2\ep \cdot (H+1)$. Hence
    \begin{align}
\Ro_h^k(x_h^k, a_h^k) - Q_h^\st(x_h^k, a_h^k) \geq V_h^\st (x_h^k) - Q_h^\st(x_h^k, a_h^k) - 2\ep \cdot (H+1) \geq \Delta_h(x_h^k, a_h^k) - 2\ep \cdot (H+1)\nonumber.
    \end{align}
The statement of item \ref{it:rq-ub-clip} then follows from item \ref{it:rq-ub} and Lemma \ref{lem:clip}.
\end{proof}

\subsection{Additional bounds on $Q$- and $V$-value functions}
In this section we prove some additional bounds on $\Vo_h^k, \Vu_h^k, \RanV_h^k, \clipV_h^k$. 

Recall from Lemma \ref{lem:clip-lt-range} that for any $(x,h,k,a)$ for which $x\not\in \MG_h^k$, we have $\clipV_h^k(x) \leq \RanV_h^k(x)$. The next lemma (in particular, (\ref{eq:vo-vu-clipv-2})) shows that a reverse inequality holds up to a factor of $1+1/H$, \emph{if there is a non-optimal action in $A_h^k(x)$}. This fact formalizes the intuition that the purpose of defining the clipped value functions $\clipV_h^k, \clipQ_h^k$ is to avoid paying for the case when only optimal actions remain in $A_h^k(x)$ (in which case one should not suffer any regret no matter which action is taken at $x$).
\begin{lemma}
  \label{lem:clipv-delmin-lb}
  For any $(x,h,k) \in \MS \times [H] \times [K]$, if there is a non-optimal action in $A_h^k(x)$, 
  then, under the event $\MEwc$, it holds that
  \begin{align}
    \clipV_h^k(x)
    \geq & \frac{\delmins{h}(x)}{4}\label{eq:clipv-delmin-lb}\\
    \Vo_h^k(x) - \Vu_h^k(x) \leq& \RanV_h^k(x) \leq \left(1 + \frac 1H \right)\cdot \clipV_h^k(x) \label{eq:vo-vu-clipv-2}.
  \end{align}
\end{lemma}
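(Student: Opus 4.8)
The plan is to handle the two displayed inequalities in order, with essentially all the content going into (\ref{eq:clipv-delmin-lb}). The left-hand bound $\Vo_h^k(x) - \Vu_h^k(x) \le \RanV_h^k(x)$ in (\ref{eq:vo-vu-clipv-2}) is nothing but Lemma \ref{lem:range-bound}, and once (\ref{eq:clipv-delmin-lb}) is available the remaining bound $\RanV_h^k(x) \le (1+\tfrac{1}{H})\clipV_h^k(x)$ will drop out of Lemma \ref{lem:hc-range-bound}: that lemma gives $\RanV_h^k(x) \le \clipV_h^k(x) + \tfrac{\delmin}{4H}$, and since $\delmin \le \delmins{h}(x)$, the bound $\clipV_h^k(x) \ge \tfrac{\delmins{h}(x)}{4}$ from (\ref{eq:clipv-delmin-lb}) makes the additive error $\tfrac{\delmin}{4H}$ at most $\tfrac{\clipV_h^k(x)}{H}$, which is exactly what is needed.

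So the work is to prove (\ref{eq:clipv-delmin-lb}). First I would record a standing observation: since $A_h^k(x)$ contains a non-optimal action, action sets are nested ($A_h^k(x) \subseteq A_h^{k'}(x)$ for $k' \le k$, by steps \ref{it:def-active-set} and \ref{it:copy-qv} of \algname), and optimal actions are never eliminated (Lemma \ref{lem:qo-qu-gap}, item \ref{it:correct-elimination}), it follows that $x \notin \MG_h^{k'}$ and $A_h^{k'}(x)$ contains a non-optimal action for every $k' \le k$; in particular $\clipV_h^{k'}(x)$, $\RanV_h^{k'}(x)$ and $\clipQ_h^{k'}(x,\cdot)$ are all well-defined along the way, and unwinding Definition \ref{def:half-clipped} gives $\clipV_h^k(x) = \min\{H, \clipQ_h^{1}(x,a^\st_1),\dots,\clipQ_h^{k}(x,a^\st_k)\}$ with $a^\st_{k'} := \argmax_{a' \in A_h^{k'}(x)}\{\Qo_h^{k'}(x,a') - \Qu_h^{k'}(x,a')\}$. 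It therefore suffices to show $\clipQ_h^{k'}(x, a^\st_{k'}) \ge \tfrac{\delmins{h}(x)}{4}$ for each $k' \in [k]$ (the term $H$ being harmless).

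Fixing such a $k'$ and writing $a^\st = a^\st_{k'}$, I would run the argument of the second bullet of the proof of Lemma \ref{lem:frzq-gap}, except that $a^\st$ plays the role of the confidence-interval-maximizing action here by \emph{definition} rather than via the policy's choice. Fix a non-optimal $\tilde a \in A_h^k(x) \subseteq A_h^{k'}(x)$. Under $\MEwc$, Lemma \ref{lem:qo-qu-gap} gives $V_h^\st(x) \le \Vo_h^{k'}(x)$ and $Q_h^\st(x,\tilde a) \ge \Qu_h^{k'}(x,\tilde a)$, and $\tilde a \in A_h^{k'}(x)$ forces $\Qo_h^{k'}(x,\tilde a) \ge \Vu_h^{k'}(x)$ by step \ref{it:def-active-set}, so $\delmins{h}(x) \le \Delta_h(x,\tilde a) \le (\Vo_h^{k'}(x) - \Vu_h^{k'}(x)) + (\Qo_h^{k'}(x,\tilde a) - \Qu_h^{k'}(x,\tilde a))$. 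By the maximality of $a^\st$ over $A_h^{k'}(x)$ together with Lemmas \ref{lem:range-bound} and \ref{lem:hc-range-bound}, each summand is at most $\RanQ_h^{k'}(x,a^\st) \le \clipQ_h^{k'}(x,a^\st) + \tfrac{\delmin}{4H}$ — for the first summand additionally using $\Vo_h^{k'}(x) - \Vu_h^{k'}(x) \le \RanV_h^{k'}(x) \le \clipV_h^{k'}(x) + \tfrac{\delmin}{4H} \le \clipQ_h^{k'}(x,a^\st) + \tfrac{\delmin}{4H}$, the last step by Definition \ref{def:half-clipped}. Hence $\delmins{h}(x) \le 2\clipQ_h^{k'}(x,a^\st) + \tfrac{\delmin}{2H} \le 2\clipQ_h^{k'}(x,a^\st) + \tfrac{\delmins{h}(x)}{2}$ (using $\delmin \le \delmins{h}(x)$ and $H \ge 1$), and rearranging yields the claim.

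The only thing requiring care — and the closest thing to an obstacle — is the index and domain bookkeeping: ensuring $x \notin \MG_h^{k'}$ for all $k' \le k$ so that every clipped/range quantity invoked is defined and the unwinding of $\clipV_h^k$ is legitimate, and checking that the various $\Vo/\Vu/\Qo/\Qu$ comparisons are applied at the correct episode indices. All of this is handled by the standing observation above, so beyond that the proof is a direct adaptation of the argument already used in Lemma \ref{lem:frzq-gap}.
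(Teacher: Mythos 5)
Your proposal is correct and follows essentially the same route as the paper: the decomposition $\delmins{h}(x)\le\Delta_h(x,\tilde a)\le(\Vo_h^{k'}-\Vu_h^{k'})(x)+(\Qo_h^{k'}-\Qu_h^{k'})(x,\tilde a)$, bounding both summands by $\clipQ_h^{k'}(x,a^\st)$ plus a small clipping error via Lemmas \ref{lem:range-bound} and \ref{lem:hc-range-bound}, and deducing (\ref{eq:vo-vu-clipv-2}) from (\ref{eq:clipv-delmin-lb}). Your explicit unwinding of the recursion for $\clipV_h^k(x)$ over all $k'\le k$ is just a cleaner phrasing of the paper's ``replace $k$ by $k-1$ and recurse'' step, and your bookkeeping that $x\notin\MG_h^{k'}$ for all $k'\le k$ is exactly the justification the paper relies on.
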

\begin{proof}
  Fix $x,h,k$ and set $a^\st = \argmax_{a' \in A_h^k(x)} \{ \Qo_h^k(x, a') - \Qu_h^k(x, a' \}$. Then by Definition \ref{def:half-clipped}, we have that $\clipV_h^k(x) = \min \left\{ \clipV_h^{k-1}(x),\ \clipQ_h^k(x, a^\st) \right\}$. If $\clipV_h^k(x) = \clipV_h^{k-1}(x)$, then we may replace $k$ with $k-1$, noting the existence of a non-optimal action in $A_h^k(x)$ implies the existence of a non-optimal action in $A_h^{k-1}(x)$ (continuing this process may eventually lead to the case $k=0$, for which (\ref{eq:clipv-delmin-lb}) and (\ref{eq:vo-vu-clipv-2}) hold by the definition $\clipV_h^0(x) = H$). So we may assume that $\clipV_h^k(x) = \clipQ_h^k(x, a')$.

  Let $a'$ denote some sub-optimal action in $A_h^k(x)$. Then under the event $\MEwc$, we must have that $x \not \in \MG_h^k$, meaning that 
    \begin{align}
      \Qo_h^k(x, a') - \Qu_h^k(x, a') \leq & \Qo_h^k(x, a') - \Qu_h^k(x, a') \tag{Since $a'$ maximizes the confidence interval}\\
      \leq & \RanQ_h^k(x, a') \tag{By Lemma \ref{lem:range-bound}}\\
      \leq & \clipQ_h^k(x, a') + \frac{\delmin}{4} \tag{By Lemma \ref{lem:hc-range-bound}}.
    \end{align}
    Moreover, as in the proof of Lemma \ref{lem:bound-by-clipdelta}, we have, by Lemmas \ref{lem:range-bound} and \ref{lem:hc-range-bound},%
    \begin{align}
\Vo_h^k(x) - \Vu_h^k(x) \leq \RanV_h^k(x) \leq \clipV_h^k(x) + \frac{\delmin}{4H} = \clipQ_h^k(x, a') + \frac{\delmin}{4H}.\label{eq:vo-vu-clipq-ub}
    \end{align}
    Combining the above displays, we obtain that under the event $\MEwc$,
    \begin{align}
      \delmins{h}(x) \leq & \Delta_h(x,a') \nonumber\\
      \leq & (\Vo_h^k(x) - \Vu_h^k(x))  + (\Qo_h^k(x, a') - \Qu_h^k(x, a'))\nonumber\\
      \leq & 2 \cdot \clipQ_h^k(x, a') + \frac{\delmin}{2}\nonumber,
    \end{align}
    which implies that $\clipV_h^k(x) = \clipQ_h^k(x, a') \geq \frac{\delmins{h}(x)}{4}$. This verifies (\ref{eq:clipv-delmin-lb}). To verify (\ref{eq:vo-vu-clipv-2}), we use (\ref{eq:vo-vu-clipq-ub}) to get
    \begin{align}
\Vo_h^k(x) - \Vu_h^k(x) \leq \RanV_h^k(x) \leq \clipV_h^k(x) + \frac{\delmin}{4H} \leq \clipV_h^k(x) + \frac{\delmins{h}(x)}{4H} \leq \left(1 + \frac 1H \right) \cdot \clipV_h^k(x)\nonumber,
    \end{align}
    where the final inequality follows from (\ref{eq:clipv-delmin-lb}).
  \end{proof}

  The following simple lemma shows that $\til V_h^k$ is bounded above by $\Vo_h^k$, which is an immediate consequence of the definition of $\til V_h^k$ in \algname.
    \begin{lemma}
    \label{lem:tilv-vo}
For all $x,a,h,k \in \MS \times \MA \times [H] \times [K]$, it holds that $\til V_h^k(x) \leq \Vo_h^k(x)$.
\end{lemma}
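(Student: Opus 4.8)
The plan is to prove the inequality $\til V_h^k(x) \leq \Vo_h^k(x)$ by induction on $k$, using the definitions of $\til V_h^k$, $\til Q_h^k$, and $\Vo_h^k$ from \algname together with the monotonicity facts already established. First I would observe that it suffices to check the claim at $(x_h^k, a_h^k)$ and the corresponding state $x_h^k$ at each episode, since by step \ref{it:copy-qv} of \algname all other values are copied unchanged from the previous episode; the case $k=1$ is the base case. For the base case $k=1$: $\til V_h^1(x) = \max_{a'\in\MA}\til Q_h^1(x,a') = \max_{a'\in\MA}\til Q_h(x,a')$, but by hypothesis $\til Q_h : \MS\times\MA\to[0,H]$, so $\til V_h^1(x)\le H = \Vo_h^1(x)$ (using the initialization in step \ref{it:init-params}).

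For the inductive step, I would unwind the definition of $\til V_h^{k+1}$ from step \ref{it:update-preds} of \algname:
\begin{align}
\til V_h^{k+1}(x_h^k) = \max_{a' \in A_h^{k+1}(x_h^k)} \max\{\til Q_h^{k+1}(x_h^k, a'), \Qu_h^{k+1}(x_h^k, a')\}.\nonumber
\end{align}
The key point is that for each $a' \in A_h^{k+1}(x_h^k)$, both arguments of the inner $\max$ are at most $\Qo_h^{k+1}(x_h^k, a')$: indeed $\til Q_h^{k+1}(x_h^k,a') = \min\{\Ro_h^{k+1}(x_h^k,a'), \til Q_h^k(x_h^k,a'), \Qo_h^{k+1}(x_h^k,a')\} \le \Qo_h^{k+1}(x_h^k,a')$ directly from step \ref{it:def-tilq-hk}, and $\Qu_h^{k+1}(x_h^k,a') \le \Qo_h^{k+1}(x_h^k,a')$ since under the relevant update $\Qu \le \qu \le \qo$ componentwise (or more simply, this is a basic invariant of the confidence bounds). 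Hence $\til V_h^{k+1}(x_h^k) \le \max_{a' \in A_h^{k+1}(x_h^k)} \Qo_h^{k+1}(x_h^k, a') = \Vo_h^{k+1}(x_h^k)$, where the last equality is step \ref{it:vo-define} of \algname (with $A_h^{k+1} \subseteq A_h^k$, noting $\Vo$ is defined over $A_h^k$ but the elimination only shrinks the set and the maximizer of $\Qo$ stays in $A_h^{k+1}$ — or one can simply note $\Vo_h^{k+1}(x_h^k) = \max_{a'\in A_h^k(x_h^k)}\Qo_h^{k+1}(x_h^k,a') \ge \max_{a'\in A_h^{k+1}(x_h^k)}\Qo_h^{k+1}(x_h^k,a')$, which already suffices).

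I do not expect any genuine obstacle here; this is a routine consequence of the definitions, and the only mild subtlety is bookkeeping the action sets $A_h^k$ versus $A_h^{k+1}$ in the definition of $\Vo$ versus $\til V$ — but since action elimination only removes actions and $A_h^{k+1}(x)\subseteq A_h^k(x)$, the maximum defining $\til V_h^{k+1}$ is over a (weakly) smaller set than the one defining $\Vo_h^{k+1}$, which only helps. Alternatively, one could dispense with induction entirely and argue pointwise: $\til V_h^k(x) = \max_{a'\in A_h^k(x)}\max\{\til Q_h^k(x,a'),\Qu_h^k(x,a')\}$, and each term is $\le \Qo_h^k(x,a')$ (since $\til Q_h^k \le \Qo_h^k$ by step \ref{it:def-tilq-hk} iterated back to whenever $a'$ was last updated, combined with monotonicity \eqref{eq:qo-mon}, and $\Qu_h^k\le\Qo_h^k$), so $\til V_h^k(x)\le\max_{a'\in A_h^k(x)}\Qo_h^k(x,a')=\Vo_h^k(x)$. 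I would likely present this direct argument rather than the induction, as it is shorter.
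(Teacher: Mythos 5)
Your proposal is correct, and the direct pointwise argument you sketch at the end is exactly the paper's proof: $\til Q_h^k \leq \Qo_h^k$ from step \ref{it:def-tilq-hk} (preserved between updates since both quantities are copied together in step \ref{it:copy-qv}), $\Qu_h^k \leq \Qo_h^k$, and the fact that $\til V_h^k$ maximizes over the (weakly smaller) set $A_h^k(x)$ while $\Vo_h^k$ maximizes $\Qo_h^k$ over $A_h^{k-1}(x)$. The inductive version is unnecessary overhead, as you yourself note.
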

\begin{proof}
The definition of $\til Q_h^k$ at step \ref{it:def-tilq-hk} ensures that for all $x,a,j,k$, we have that $\til Q_h^k(x,a) \leq \Qo_h^k(x,a)$. The conclusion of the lemma follows from the fact that $\Vo_h^k(x) = \max_{a' \in A_h^k(x)} \{ \Qo_h^k(x,a') \}$ and $\til V_h^k(x) \leq \max_{a' \in A_h^k(x)} \{ \til Q_h^k(x, a') \}$. 
\end{proof}

\subsection{Regret bounds for approximate distillation}
\label{sec:regret-distillation}
The below lemma, the main result of this section, shows that in the case that the provided predictions $\til Q$ are an $\ep$-approximate distillation of the true value function $Q^\st$, then we may bound the regret of \algname by a quantity that in general will be smaller than generic worst-case regret bounds. In particular, the set of states and actions $\MS \times \MA \times [H]$ is replaced with the fooling set $\fool{\ep'/2}{\ep'}$, which will be significantly smaller if the predictions $\til Q_h$ are very accurate.
\begin{lemma}
  \label{lem:bound-by-foolset}
  Suppose the event $\MEwc \cap \MEpred$ holds, and set $\ep' := 4\ep \cdot (H+1)$. 
  If $\til Q$ is an $\ep$-approximate distillation on $Q^\st$ and either $\til Q$ lacks $\ep'$-fooling optimal actions (Definition \ref{def:ofa}). Then the following regret bounds hold:
  \begin{align}
    & \sum_{k=1}^K (V_1^\st - V_1^{\pi^k})(x_1^k) \nonumber\\
    \leq& %
          O((\ep H + \ep') \cdot TH ) + O \left( H \cdot \sum_{h=1}^H \sum_{k=1}^K \sigma_h^k \clipdelta_h^k \right)\nonumber\\
    & + O\left( \min \left\{ \sqrt{H^6 K \iota \cdot |\fool{\ep(H+1)}{\ep'}|}, \sum_{(x,a,h) \in \fool{\ep(H+1)}{\ep'}} \frac{H^4 \iota}{[\Delta_h(x,a) - 2\ep \cdot (H+1)]_+}\right\} \right) \label{eq:nogap-adaptive}.
  \end{align}
\end{lemma}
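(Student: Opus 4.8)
The plan is to split the argument into two parts. \textbf{Part I} is a regret decomposition in the spirit of Lemma~\ref{lem:regret-decomposition} but refined using the fact that $\til Q$ is an $\ep$-approximate distillation; it reduces the task to bounding
\[
W\ :=\ \sum_{k=1}^K\sum_{h=1}^H (1-\tau_h^k)\cdot\One[a_h^k\notin\Aopt_{h,\ep'}(x_h^k)]\cdot(\Ro_h^k-Q_h^\st)(x_h^k,a_h^k),
\]
up to an additive error $O((\ep H+\ep')\cdot TH)$ and a term $O\bigl(H\sum_{h,k}\sigma_h^k\clipdelta_h^k\bigr)$. \textbf{Part II} bounds $W$ by the fooling-set quantity on the right-hand side of \eqref{eq:nogap-adaptive} by a reverse induction on $h$. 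For Part~I, telescoping gives $V_1^\st(x_1^k)-V_1^{\pi^k}(x_1^k)=\sum_h\E_{\pi^k}[V_h^\st(x_h^k)-Q_h^\st(x_h^k,a_h^k)]$, and at each $(k,h)$ we split on $\tau_h^k$. If $\tau_h^k=1$ and $\sigma_h^k=0$, then Lemma~\ref{lem:clipv-delmin-lb} forces every action of $A_h^k(x_h^k)$ to be optimal (otherwise $\RanV_h^k(x_h^k)\le(1+1/H)\clipV_h^k(x_h^k)\le\ggapfn{h}{\wh\Delta^k}$, contradicting $\tau_h^k=1$), so $a_h^k$ is optimal and the step contributes $0$; if $\tau_h^k=1$ and $\sigma_h^k=1$, Lemma~\ref{lem:bound-by-clipdelta} bounds the contribution by $4\sigma_h^k\clipdelta_h^k$. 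If $\tau_h^k=0$ and $a_h^k\in\Aopt_{h,\ep'}(x_h^k)$ the step contributes $\le\ep'$; otherwise, using $\til V_h^k(x_h^k)\ge V_h^\st(x_h^k)-\ep(H+2-h)$ (item~\ref{it:tilv-vstar} of Lemma~\ref{lem:ep-good-preds}) together with $\til V_h^k(x_h^k)=\max\{\til Q_h^k(x_h^k,a_h^k),\Qu_h^k(x_h^k,a_h^k)\}$ and $\til Q_h^k\le\Ro_h^k$, $\Qu_h^k\le Q_h^\st$, one gets $\Delta_h(x_h^k,a_h^k)\le(\Ro_h^k-Q_h^\st)(x_h^k,a_h^k)+\ep(H+2-h)$. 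Summing over $(k,h)$ yields the decomposition, the $\ep(H+2-h)$ slacks and the $\ep'$ terms contributing the $O((\ep H+\ep')TH)$ error.

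For Part~II, write $u_h^k:=[(\til V_h^k-V_h^\st)(x_h^k)]_+$ and $v_h^k:=[(\Ro_h^k-Q_h^\st)(x_h^k,a_h^k)]_+$, with $u_{H+1}^k=v_{H+1}^k=0$. Two recursive inequalities drive the induction. First, by item~\ref{it:rq-ub-clip} of Lemma~\ref{lem:ep-good-preds}, whenever $\tau_h^k=0$, with $n=N_h^k(x_h^k,a_h^k)$ and $k^i=k_h^i(x_h^k,a_h^k)$,
\[
v_h^k\ \le\ \alpha_n^0 H+\clip{\beta_n}{\tfrac{[\Delta_h(x_h^k,a_h^k)-2\ep(H+1)]_+}{2H}}+\Bigl(1+\tfrac1H\Bigr)\sum_{i=1}^n\alpha_n^i\,u_{h+1}^{k^i}.
\]
Second, for every $(k,h)$,
\[
u_h^k\ \le\ \ep'+\Bigl(1+\tfrac1H\Bigr)\sigma_h^k\clipdelta_h^k+\One\!\left[\tau_h^k=0,\ (x_h^k,a_h^k,h)\in\fool{\ep(H+1)}{\ep'}\right]v_h^k .
\]
The second inequality is proved by cases. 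If $\tau_h^k=0$, then $\til V_h^k(x_h^k)=\max\{\til Q_h^k(x_h^k,a_h^k),\Qu_h^k(x_h^k,a_h^k)\}$, so $(\til V_h^k-V_h^\st)(x_h^k)\le v_h^k$ (using $\til Q_h^k\le\Ro_h^k$ and $\Qu_h^k\le Q_h^\st\le V_h^\st$); if moreover this exceeds $\ep'$, Claim~\ref{clm:aopt-fe} (second option) places $(x_h^k,a_h^k,h)$ in $\fool{\ep(H+1)}{\ep'}$, and otherwise $u_h^k\le\ep'$. If $\tau_h^k=1$ and $A_h^k(x_h^k)$ contains a suboptimal action, Lemma~\ref{lem:clipv-delmin-lb} forces $\sigma_h^k=1$ and $\til V_h^k(x_h^k)\le\Vo_h^k(x_h^k)\le\Vu_h^k(x_h^k)+\RanV_h^k(x_h^k)\le V_h^\st(x_h^k)+(1+1/H)\clipdelta_h^k$; if instead every action of $A_h^k(x_h^k)$ is optimal (so $|\Aopt_{h,0}(x_h^k)|>1$), then monotonicity of $\til Q^k$ (Lemma~\ref{lem:qv-monotonicity}) and the hypothesis that $\til Q$ lacks $\ep'$-fooling optimal actions give $\til Q_h^k(x_h^k,a')\le V_h^\st(x_h^k)+\ep'$ for all $a'\in A_h^k(x_h^k)$, whence $u_h^k\le\ep'$.

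Now fix $h$ and sum the first inequality over $\{k:\tau_h^k=0,\ (x_h^k,a_h^k,h)\in\fool{\ep(H+1)}{\ep'}\}$. The $\alpha_n^0 H$ terms are nonzero only on first visits, contributing at most $H\cdot|\{(x,a):(x,a,h)\in\fool{\ep(H+1)}{\ep'}\}|$; the $\clip{\beta_n}{\cdot}$ terms, grouped by $(x,a)$ and summed via Lemma~\ref{lem:clip-sum-gap} (with $\theta(x,a)$ equal to $\tfrac{[\Delta_h(x,a)-2\ep(H+1)]_+}{2H}$ on fooling pairs and $+\infty$ otherwise, so $\clip{\cdot}{+\infty}=0$), contribute at most $\min\{O(\sqrt{H^3\iota})\cdot\sqrt{|\{(x,a):(x,a,h)\in\fool{\ep(H+1)}{\ep'}\}|\cdot K},\ \sum_{(x,a):(x,a,h)\in\fool{\ep(H+1)}{\ep'}}\tfrac{O(H^4\iota)}{[\Delta_h(x,a)-2\ep(H+1)]_+}\}$; and the $u_{h+1}$ terms, after the standard exchange of summations (using $\sum_{n\ge j}\alpha_n^j\le 1+1/H$ from Lemma~\ref{lem:alpha}), are bounded by $(1+1/H)\sum_{k:(x_h^k,a_h^k,h)\in\fool{\ep(H+1)}{\ep'}}u_{h+1}^k$, to which we apply the second inequality and recurse. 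Solving the resulting linear recursion in $h$ (each level introduces a factor $(1+1/H)^{O(1)}$, costing only $e^{O(1)}$ over the $H$ levels, and the $H$ distinct seed levels from Part~I cost an extra $O(H)$), then summing over $h$ with Cauchy--Schwarz to turn $\sum_h\sqrt{|\{(x,a):(x,a,h)\in\fool{\ep(H+1)}{\ep'}\}|\,K}$ into $\sqrt{H K\cdot|\fool{\ep(H+1)}{\ep'}|}$, yields \eqref{eq:nogap-adaptive} once combined with the $O(H\sum\sigma\clipdelta)$ and $O((\ep H+\ep')TH)$ terms.

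\textbf{Main obstacle.} The crux is guaranteeing that the recursion only ever charges the expensive first-visit and $\clip{\beta_n}{\cdot}$ terms to triples in $\fool{\ep(H+1)}{\ep'}$. This relies on both options of Claim~\ref{clm:aopt-fe} — the first for the seed terms (where $a_h^k\notin\Aopt_{h,\ep'}$), the second for the terms produced by the recursion (where $(\til V_{h+1}^{k'}-V_{h+1}^\st)(x_{h+1}^{k'})>\ep'$) — and on the case analysis of $u_{h+1}^{k'}$ when $\tau_{h+1}^{k'}=1$: one must argue via Lemma~\ref{lem:clipv-delmin-lb} that either $\sigma_{h+1}^{k'}=1$ (absorbing the term into the $\sigma\clipdelta$ bucket) or $A_{h+1}^{k'}(x_{h+1}^{k'})$ is entirely optimal, the latter being precisely where the ``lacks $\ep'$-fooling optimal actions'' hypothesis prevents $\til V$ from overshooting $V^\st$. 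Keeping the $(1+1/H)$ multipliers from either blowing up across steps or being lost is the remaining source of friction; as in the worst-case proof they telescope to $O(1)$ per step and $e^{O(1)}$ overall, with an additional $O(H)$ from the $H$ seeds.
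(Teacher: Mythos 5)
Your proposal is correct and follows essentially the same route as the paper's proof: the same regret decomposition into an $O((\ep H+\ep')TH)$ error, a $\sigma_h^k\clipdelta_h^k$ bucket, and a $(1-\tau_h^k)\cdot(\Ro_h^k-Q_h^\st)$ term; the same use of Claim \ref{clm:aopt-fe} to charge both the seed terms and the recursively generated terms only to tuples in $\fool{\ep(H+1)}{\ep'}$; the same alternation between item \ref{it:rq-ub-clip} of Lemma \ref{lem:ep-good-preds} and a bound on $\til V_{h+1}^k-V_{h+1}^\st$ (your two displayed recursions are exactly the paper's (\ref{eq:tilv-vstar-ub}) and (\ref{eq:rq-hp1}), and your case analysis for $u_h^k$ reproduces the paper's Claims \ref{clm:tau-tilv} and \ref{clm:sig-tilv}, including the one place where the lacks-$\ep'$-fooling-optimal-actions hypothesis is needed). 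The only deviations are organizational (explicit $u_h^k,v_h^k$ notation, and handling the $\tau_h^k=1,\sigma_h^k=0$ case by noting the contribution is exactly zero rather than at most $\ep'$), and they do not change the argument.
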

\begin{proof}
  Recall from \algname that the values $\tau_h^k \in \{0,1\}$ are defined as follows: $\tau_h^k = 0$ if $x_h^k \in \MG_h^k$ or $\RanV_h^k(x_h^k) \leq \ggapfn{h}{\wh \Delta^k}$, and $\tau_h^k = 1$ otherwise.
  Also recall that we defined values $\sigma_h^k \in \{0,1\}$ for all $(h,k) \in [H] \times [K]$ as follows: $\sigma_h^k = 0$ if $x_h^k \in \MG_h^k$ or $\clipV_h^k(x_h^k) \leq \frac{1}{1+\frac 1H}\cdot \ggapfn{h}{\wh \Delta^k}$, and $\sigma_h^k = 1$ otherwise. 

For any $h \in [H]$, let $\MWsig_h \subset [K]$ denote the set of episodes $k$ for which $\sigma_h^k = 1$. Similarly, let $\MWtau_h \subset [K]$ denote the set of episodes $k$ for which $\tau_h^k = 1$. %

Set $\ep' = 4\ep \cdot (H+1)$, and for each $(x,h) \in \MS \times [H]$, let $\Aopt_{h,\ep'}(x)$ denote the set of actions $a' \in \MA$ so that $V_h^\st(x) - Q_h^\st(x, a') \leq \ep'$. 
  Next, for any $k \in [K]$, we have that, under the event $\MEwc$,
  \begin{align}
    &  \sum_{h=1}^H V_h^\st(x_h^k) - Q_h^\st(x_h^k, a_h^k) \nonumber\\
    \leq & \ep' H + \sum_{h=1}^H \One[a_h^k \not \in \Aopt_{h,\ep'}(x_h^k)] \cdot (V_h^\st(x_h^k) - Q_h^\st(x_h^k,a_h^k)) \nonumber\\
    \leq & (\ep \cdot (H+1) + \ep') H +  \sum_{h=1}^H (1-\sigma_h^k) \cdot \One[a_h^k \not \in \Aopt_{h,\ep'}(x_h^k)] \cdot (\til V_h^k(x_h^k) - Q_h^\st(x_h^k,a_h^k))  + 4 \sigma_h^k \cdot \clipdelta_h^k\label{eq:tilv-ub}\\
    \leq & (2\ep \cdot (H+1) + 2\ep') H +  \sum_{h=1}^H (1-\tau_h^k) \cdot \One[a_h^k \not \in \Aopt_{h,\ep'}(x_h^k)] \cdot (\Ro_h^k(x_h^k, a_h^k) - Q_h^\st(x_h^k,a_h^k)) + 4\sigma_h^k \cdot \clipdelta_h^k \label{eq:2tilv-ub},
  \end{align}
  where (\ref{eq:tilv-ub}) follows from item \ref{it:tilv-vstar} of Lemma \ref{lem:ep-good-preds} and Lemma \ref{lem:bound-by-clipdelta}, and (\ref{eq:2tilv-ub}) follows from Claims \ref{clm:tau-tilv} and \ref{clm:sig-tilv} below.

  \begin{claim}
    \label{clm:tau-tilv}
For any $(h,k) \in [H] \times [K]$ so that $\tau_h^k = 0$, it holds that $\til V_h^k(x_h^k) \leq \Ro_h^k(x_h^k, a_h^k) + \ep \cdot H$. 
  \end{claim}
\begin{proof}
  Since $\tau_h^k = 0$, we have that $a_h^k = \argmax_{a' \in A_h^k(x_h^k)} \{ \max\{ \til Q_h^k(x_h^k, a'), \Qu_h^k(x_h^k, a') \} \}$. Then by definition of $\til V_h^k$,  %
  \begin{align}
\til V_h^k(x_h^k) =  \max\{ \til Q_h^k(x_h^k, a_h^k), \Qu_h^k(x_h^k, a_h^k\}\leq \Ro_h^k(x_h^k, a_h^k) + \ep \cdot H\label{eq:tilv-r-ep}
  \end{align}
  where the inequality above uses the fact that $\Qu_h^k(x_h^k, a_h^k) \leq Q_h^\st(x_h^k, a_h^k) \leq \Ro_h^k(x_h^k, a_h^k) + \ep \cdot (H+1-h)$ (item \ref{it:rq-bound} of Lemma \ref{lem:ep-good-preds}) and that $\til Q_h^k(x_h^k, a_h^k) \leq \Ro_h^k(x_h^k, a_h^k)$ by the definition of $\til Q_h^k$ (step \ref{it:def-tilq-hk} of the algorithm).
\end{proof}

\begin{claim}
  \label{clm:sig-tilv}
  For any $(h,k) \in [H] \times [K]$ so that $\tau_h^k = 1$, at least one of the following statements holds true under the event $\MEwc$:
  \begin{itemize}
  \item $\til V_h^k(x_h^k) \leq Q_h^\st(x_h^k, a_h^k) + \ep'$. %
  \item $\RanV_h^k(x_h^k) \leq \left(1 + \frac 1H\right) \cdot \clipV_h^k(x_h^k)$ and $\sigma_h^k = 1$.
  \end{itemize}
\end{claim}
\begin{proof}
  Suppose that the second statement does not hold true. Then either $\RanV_h^k(x_h^k) > (1 + 1/H) \cdot \clipV_h^k(x_h^k)$ or $\sigma_h^k = 0$. First suppose that $\sigma_h^k = 0$. Since $\tau_h^k = 1$, we have $x_h^k \not \in \MG_h^k$, meaning that $\clipV_h^k(x_h^k) \leq \frac{1}{1+1/H}\cdot \ggapfn{h}{\wh \Delta^k}$. But $\tau_h^k = 1$ also implies that $\RanV_h^k(x_h^k) > \ggapfn{h}{\wh \Delta^k}$, which implies that $\RanV_h^k(x_h^k) > (1+1/H) \cdot \clipV_h^k(x_h^k)$.

  Thus we may assume from here on that $\RanV_h^k(x_h^k) > (1+1/H) \cdot \clipV_h^k(x_h^k)$. By Lemma \ref{lem:clipv-delmin-lb}, under the event $\MEwc$, $A_h^k(x_h^k)$ must consist of only optimal actions. By definition of $\til V_h^k$, there is some $a \in A_h^k(x_h^k)$ so that $\til V_h^k(x_h^k) = \max \{ \til Q_h^k(x_h^k, a), \Qu_h^k(x_h^k, a) \}$. We know that $a$ must be an optimal action, i.e., $\Delta_h(x_h^k, a) = 0$. %
  Since the input predictions $\til Q$ lack $\ep'$-fooling optimal actions (Definition \ref{def:ofa}),\footnote{We remark that this is the only place in the proof where we use that the predictions $\til Q$ lack $\ep'$-fooling optimal actions.} it holds that $\til Q_h(x_h^k, a) \leq V_h^\st(x_h^k) + \ep'$. %
  Therefore,
  \begin{align}
\til V_h^k(x_h^k) = \til Q_h^k(x_h^k, a) \leq \til Q_h(x_h^k, a) \leq V_h^\st(x_h^k) + \ep'.\nonumber
  \end{align}
  Moreover, %
  since $a_h^k \in A_h^k(x_h^k)$ (and therefore is an optimal action), we have that $V_h^\st(x_h^k) = Q_h^\st(x_h^k, a_h^k)$, meaning that $\til V_h^k(x_h^k) - Q_h^\st(x_h^k, a_h^k) \leq \ep'$, as desired.
\end{proof}
We next need the following claim:
  \begin{claim}
    \label{clm:aopt-fe}
    For any $(k,h)$ satisfying $\tau_h^k = 0$, if either
    \begin{enumerate}
    \item $a_h^k \not \in \Aopt_{h,\ep'}(x_h^k)$; or
    \item $(\til V_h^k - V_h^\st)(x_h^k) > \ep'$, 
    \end{enumerate}
    then we have that, under the event $\MEwc$, $(x_h^k, a_h^k, h) \in \fool{\ep \cdot (H+1)}{\ep'}$.
  \end{claim}
  \begin{proof}[Proof of Claim \ref{clm:aopt-fe}]
    For the entirety of the proof of the claim we assume that $\MEwc$ holds. 
    We first suppose that $a_h^k \not \in \Aopt_{h,\ep'}(x_h^k)$. Notice that $\Delta_h(x_h^k, a_h^k) > \ep'$ since $a_h^k \not\in \Aopt_{h,\ep'}(x_h^k)$. By item \ref{it:tilv-vstar} of Lemma \ref{lem:ep-good-preds} and the choice of $a_h^k$ when $\tau_h^k = 0$,
    \begin{align}
\max \{ \til Q_h^k(x_h^k, a_h^k), \Qu_h^k(x_h^k, a_h^k) \}  = \max_{a' \in A_h^k(x_h^k)} \left\{ \max\{ \til Q_h^k(x_h^k, a'), \Qu_h^k(x_h^k, a') \} \right\} = \til V_h^k(x_h^k) \geq V_h^\st(x_h^k) - \ep \cdot (H+1)\label{eq:tilq-qu-tilv}
    \end{align}
    If $\Qu_h^k(x_h^k, a_h^k) \geq V_h^\st(x_h^k) - \ep \cdot (H+1)$, then it holds that $Q_h^\st(x_h^k, a_h^k) \geq V_h^\st(x_h^k) - \ep \cdot (H+1) > V_h^\st(x_h^k) - \ep'$ (since $\ep' > \ep \cdot (H+1)$), which contradicts $a_h^k \not\in \Aopt_{h,\ep'}(x_h^k)$. Hence $\til Q_h^k(x_h^k, a_h^k) \geq V_h^\st(x_h^k) - \ep \cdot (H+1)$, meaning that $\til Q_h^k(x_h^k, a_h^k) - Q_h^\st(x_h^k, a_h^k) \geq \Delta_h(x_h^k, a_h^k) - \ep \cdot (H+1)$. Since $\til Q_h(x_h^k, a_h^k) = \til Q_h^1(x_h^k, a_h^k) \geq \til Q_h^k(x_h^k, a_h^k)$, we get that $(x_h^k, a_h^k, h) \in \fool{\ep \cdot (H+1)}{\ep'}$.

    Next suppose that $(\til V_h^k - V_h^\st)(x_h^k) > \ep'$. Then, again using the choice of $a_h^k$ when $\tau_h^k = 0$,
    \begin{align}
\max\{ \til Q_h^k(x_h^k, a_h^k), \Qu_h^k(x_h^k ,a_h^k) \} \geq \til V_h^k(x_h^k) > V_h^\st(x_h^k) + \ep'.\nonumber
    \end{align}
    Since $\Qu_h^k(x_h^k, a_h^k) \leq Q_h^\st(x_h^k, a_h^k) \leq V_h^\st(x_h^k)$ under the event $\MEwc$, we must have $\til Q_h^k(x_h^k, a_h^k) > V_h^\st(x_h^k) + \ep'$, which implies that $\til Q_h^1(x_h^k, a_h^k) > V_h^\st(x_h^k) + \ep'$, meaning that $(x_h^k, a_h^k, h) \in \fool{\ep \cdot (H+1)}{\ep'}$. 
{        }
    \end{proof}

    Next for any $h \in [H]$, we compute %
    \begin{align}
      & \sum_{k \not\in \MW_h^\tau}  \One[a_h^k \not\in \Aopt_{h,\ep'}(x_h^k)] \cdot (\Ro_h^k(x_h^k, a_h^k) - Q_h^\st(x_h^k, a_h^k) ) \nonumber\\
      \leq & \sum_{(x,a,h) \in \fools{\ep \cdot (H+1)}{\ep'}{h} } \sum_{i=1}^{ N_h^{K+1}(x,a)}  ( \Ro_h^{k_h^i(x,a)}(x,a) - Q_h^\st(x,a) )\label{eq:use-claim-subopt}\\
      \leq & H \cdot |\fools{\ep(H+1)}{\ep'}{h}| + \sum_{(x,a,h) \in \fools{\ep(H+1)}{\ep}{h}} \left(\sum_{i=1}^{N_h^{K+1}(x,a)} \left( 1 + \frac 1H \right) \sum_{t=1}^i \alpha_i^t \cdot (\til V_{h+1}^{k_h^t(x,a)} - V_{h+1}^\st)(x_{h+1}^{k^t_h(x,a)}) \right. \nonumber\\
           &   \left. + \clip{\beta_i}{\frac{[\Delta_h(x,a) - 2\ep \cdot (H+1)]_+}{4H}}\right) \label{eq:use-rq-ub-clip} \\
      \leq &  H \cdot |\fools{\ep(H+1)}{\ep'}{h}| + \sum_{(x,a,h) \in \fools{\ep(H+1)}{\ep'}{h}} \left( \left( 1 + \frac 1H \right) \sum_{t=1}^{ N_h^{K+1}(x,a)}  (\til V_{h+1}^{ k_h^t(x,a)} - V_{h+1}^\st)(x_{h+1}^{k^t_h(x,a)}) \cdot \sum_{i=t}^\infty \alpha_i^t \right.\nonumber\\
      & \left.+ \sum_{i=1}^{N_h^{K+1}(x,a)} \clip{\beta_i}{\frac{[\Delta_h(x,a) - 2\ep \cdot (H+1)]_+}{4H} } \right)\nonumber\\
      \leq &  H \cdot | \fools{\ep(H+1)}{\ep'}{h}| + \sum_{(x,a,h) \in \fools{\ep(H+1)}{\ep'}{h}} \min \left\{8C_0 \sqrt{H^3 \iota \cdot N_h^{K+1}(x,a)},  \frac{64C_0^2 H^4\iota }{[\Delta_h(x,a) - 2\ep \cdot (H+1)]_+}  \right\} \nonumber\\
      & (1 + 1/H)^2 \cdot \sum_{k \in [K]} (\til V_{h+1}^k - V_{h+1}^\st)(x_{h+1}^k) , %
        \label{eq:tilv-vstar-ub}
    \end{align}
    where (\ref{eq:use-claim-subopt}) uses Claim \ref{clm:aopt-fe}, (\ref{eq:use-rq-ub-clip}) uses item \ref{it:rq-ub-clip} of Lemma \ref{lem:ep-good-preds} and the fact that $(x_h^{k_h^i(x,a)}, a_h^{k_h^i(x,a)}) = (x,a)$, and the final inequality (\ref{eq:tilv-vstar-ub}) uses %
    item \ref{it:alpha-sum-sub} of Lemma \ref{lem:alpha} and Lemma \ref{lem:clip-sum}.
    Moreover, we have that
    \begin{align}
      &  \sum_{k \in [K]} (\til V_{h+1}^k - V_{h+1}^\st)(x_{h+1}^k) \nonumber\\
      \leq & K \cdot \ep' + \sum_{k \in [K]} \One[\sigma_{h+1}^k = 1 \mbox{ and } \RanV_{h+1}^k(x_{h+1}^k) \leq (1+ 1/H) \clipdelta_{h+1}^k] \cdot (\Vo_{h+1}^k - \Vu_{h+1}^k)(x_{h+1}^k) \nonumber\\
      & + \sum_{k \not \in \MWtau_{h+1}} (\til V_{h+1}^k - V_{h+1}^\st)(x_{h+1}^k)\tag{Using Claim \ref{clm:sig-tilv} and Lemma \ref{lem:tilv-vo}}\\
      \leq & (2\ep' + \ep \cdot H) \cdot K +  \sum_{k=1}^K 2 \sigma_{h+1}^k \cdot \clipdelta_{h+1}^k + \sum_{k \not \in \MW_{h+1}^\tau} \One[(\til V_{h+1}^k - V_{h+1}^\st)(x_{h+1}^k) > \ep'] \cdot \left( (\Ro_{h+1}^k- Q_{h+1}^\st)(x_{h+1}^k, a_{h+1}^k)\right) \tag{Using Claim \ref{clm:tau-tilv} and Lemma \ref{lem:range-bound}}\\
      \leq &(2\ep' + \ep H) \cdot K  + 2 \sum_{k=1}^K \sigma_{h+1}^k \clipdelta_{h+1}^k + \sum_{(x,a,h+1) \in \fools{\ep(H+1)}{\ep'}{h+1}}  \sum_{i=1}^{N_{h+1}^{K+1}(x,a)} (\Ro_{h+1}^{k_{h+1}^i(x,a)} (x,a) - Q_{h+1}^\st(x,a))\label{eq:rq-hp1},
    \end{align}
    where (\ref{eq:rq-hp1}) follows from Claim \ref{clm:aopt-fe} (in particular, if $\tau_{h+1}^k = 0$ and $(\til V_{h+1}^k - V_{h+1}^\st)(x_{h+1}^k) > \ep'$, then $(x_{h+1}^k, a_{h+1}^k, h+1) \in \fools{\ep(H+1)}{\ep'}{h+1}$).

    Combining (\ref{eq:tilv-vstar-ub}) and (\ref{eq:rq-hp1}), and  iterating for $h,h+1, \ldots, H$, we see that
    \begin{align}
      & \sum_{(x,a,h) \in \fools{\ep(H+1)}{\ep'}{h}} \sum_{i=1}^{N_h^{K+1}(x,a)} ( \Ro_h^{k_h^i(x,a)} - Q_h^\st(x_h^k, a_h^k)) \nonumber\\
      \leq & e^2 K H \cdot (2\ep' + \ep H) + e^2 H \cdot | \fool{\ep(H+1)}{\ep'}| + 2e^2 \sum_{h'=h+1}^H \sum_{k=1}^K \sigma_{h'}^k \clipdelta_{h'}^k\nonumber\\
      &+ e^2 \sum_{h'=h}^H \sum_{(x,a,h') \in \fools{\ep(H+1)}{\ep'}{h'}} \min \left\{8C_0 \sqrt{H^3 \iota \cdot  N_{h'}^{K+1}(x,a)},  \frac{64C_0^2 H^4\iota }{[\Delta_{h'}(x,a) - 2\ep \cdot (H+1)]_+}  \right\} \nonumber\\
      \leq &  e^2 K H \cdot (2\ep' + \ep H) + e^2 H \cdot | \fool{\ep(H+1)}{\ep'}|+ 2e^2 \sum_{h'=h+1}^H \sum_{k=1}^K \sigma_{h'}^k \clipdelta_{h'}^k \nonumber\\
      & + e^2 \sum_{h'=h}^H \min \left\{ 8C_0 \sqrt{H^3K \iota \cdot | \fools{\ep(H+1)}{\ep'}{h'}|}, \sum_{(x,a,h') \in \fools{\ep(H+1)}{\ep'}{h'}} \frac{64 C_0^2 H^4 \iota}{[\Delta_{h'}(x,a) - 2\ep \cdot (H+1)]_+}  \right\} \nonumber\\
      \leq &  e^2KH \cdot (2\ep' + \ep H) + e^2H \cdot | \fool{\ep(H+1)}{\ep'}| + 2e^2 \sum_{h'=h+1}^H \sum_{k=1}^K \sigma_{h'}^k \clipdelta_{h'}^k\nonumber\\
      & + \min \left\{ 8C_0e^2  \sqrt{H^4 K \iota \cdot |\fool{\ep(H+1)}{\ep'}|},\sum_{(x,a,h) \in \fool{\ep(H+1)}{\ep'}} \frac{64 e^2 C_0^2 H^4 \iota}{[\Delta_{h}(x,a) - 2\ep \cdot (H+1)]_+}   \right\}\nonumber.
    \end{align}
    Combining this with (\ref{eq:2tilv-ub}) and (\ref{eq:use-claim-subopt}) gives that
    \begin{align}
      & \sum_{k\in [K]} (V_1^\st - V_1^{\pi^k})(x_1) \nonumber\\
      \leq & HK (2\ep (H+1) + 2\ep') + HK \cdot \Pr[\overline{\MEwc \cap \MEpred}] + e^2KH^2 \cdot (2\ep' + \ep H) + e^2 H^2 \cdot | \fool{\ep(H+1)}{\ep'}| \nonumber\\
      & + \min \left\{ 8C_0e^2  \sqrt{H^6 K \iota \cdot |\fool{\ep(H+1)}{\ep'}|},\sum_{(x,a,h) \in \fool{\ep(H+1)}{\ep'}} \frac{64 e^2 C_0^2 H^5 \iota}{[\Delta_h(x,a) - 2\ep \cdot (H+1)]_+}   \right\} \nonumber\\ %
      & + 2e^2 H \sum_{h=1}^H \sum_{k=1}^K \sigma_h^k \clipdelta_h^k \nonumber\\
      \leq & O((\ep H + \ep') \cdot TH ) + O\left( \min \left\{ \sqrt{H^6 K \iota \cdot |\fool{\ep(H+1)}{\ep'}|}, \sum_{(x,a,h) \in \fool{\ep(H+1)}{\ep'}} \frac{H^4 \iota}{[\Delta_h(x,a) - 2\ep \cdot (H+1)]_+}\right\} \right)\nonumber\\
      & + 2e^2 H \sum_{h=1}^H \sum_{k=1}^K \sigma_h^k \clipdelta_h^k,\nonumber
    \end{align}
    where in the final inequality we use that $H^2 \cdot |\fool{\ep(H+1)}{\ep'}| \leq \sqrt{H^6 K \iota \cdot |\fool{\ep(H+1)}{\ep'}|}$ as long as $K \geq |\fool{\ep(H+1)}{\ep'}|$. (For $K < |\fool{\ep(H+1)}{\ep'}|$ the trivial regret bound of $KH$ is bounded above by $\sqrt{H^6 K \iota \cdot |\fool{\ep(H+1)}{\ep'}|}$.) Moreover, we also use that $H^2 \cdot |\fool{\ep(H+1)}{\ep'}| \leq \sum_{(x,a,h) \in \fool{\ep(H+1)}{\ep'}} \frac{H^4 \iota}{[\Delta_h(x,a) - 2\ep \cdot (H+1)]_+}$ in the final line. This verifies the statement (\ref{eq:nogap-adaptive}) of the theorem.

  \end{proof}

  Note that Lemma \ref{lem:bound-by-foolset} does not quite establish the guarantee of the improved regret bounds of Theorems \ref{thm:main-deltaincr} or \ref{thm:main-deltaconst} when the predictions $\til Q$ are an approximate distillation of $Q^\st$. In particular, we have not yet shown how to bound the term $\sum_{h=1}^H \sum_{k=1}^K \sigma_h^k \clipdelta_h^k$. %
  We do so in Lemmas \ref{lem:sigma-delta-const} and \ref{lem:sigma-delta-incr} below; the first treats the case where \algname uses \deltaconst, and the second treats the case where \algname uses \deltaincr.
  \begin{lemma}
  \label{lem:sigma-delta-const}
  For any prediction function $\til Q$, the algorithm \algname (with \deltaconst) has the following guarantee under the event $\MEwc$:
  \begin{align}
\sum_{h=1}^H \sum_{k=1}^K \sigma_h^k \cdot\clipdelta_h^k \leq O \left( \min \left\{ \sqrt{\wh \lambda \cdot H^8 SA T \iota}, H^7 \iota \cdot \left(  \sum_{(x,a,h) \in \MS \times \MA \times [H] : a \not \in \Aopt_{h,0}(x)} \frac{1}{\Delta_h(x,a)} + \frac{|\Amul|}{\delmin} \right) \right\} \right)\nonumber.
  \end{align}
\end{lemma}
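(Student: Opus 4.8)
The plan is to bound $\sum_k \sigma_h^k\clipdelta_h^k$ for each fixed $h \in [H]$ and then sum over $h$. Fix $h$ and let $\MY_h := \{ k \in [K] : \sigma_h^k = 1 \}$; since $\sigma_h^k = 1$ forces $x_h^k \notin \MG_h^k$, the set $\MW := \{ (k,h) : k \in \MY_h\}$ (all elements at level $\td h = h$) satisfies the hypotheses of item \ref{it:w-sum-all} of Lemma \ref{lem:w-bound-gap}, and with $k^\st = K$ this gives $\sum_{k \in \MY_h} \clipdelta_h^k \le f(|\MY_h|, h, K)$. Lemma \ref{lem:bound-sigma} then bounds $|\MY_h| \le \max\{SAH^3,\wh\lambda K\} = \wh\lambda K$, the last equality because (as recorded in the proof of Lemma \ref{lem:bound-sigma}) $\wh\lambda$ is chosen with $\wh\lambda K \ge SAH^3$. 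Plugging $|\MY_h| \le \wh\lambda K$ into the simplified form (\ref{eq:f-gap-based}) of $f$, and using that \deltaconst sets $\wh\Delta^K = \CR/(KH)$, the per-level bound is a sum of (i) $\wh\lambda K\cdot\ggapfn{h}{\wh\Delta^K} = O(\wh\lambda\CR/H)$, (ii) $e^2 SAH^2$, and (iii) $\min\{e^2C_2\sqrt{H^5 SA\,\wh\lambda K\,\iota},\ \sum_{(x,a,h')} e^2C_2^2 H^3\iota/\max\{\frac{\frzQ_{h'}^K(x,a)}{2H},\frac{\delmin}{4H^2}\}\}$.

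Next I would sum over $h$ and collapse each piece into $\CC_{M,T,\wh\lambda}$. Piece (i): $\sum_h \wh\lambda K\,\ggapfn{h}{\wh\Delta^K} = O(\wh\lambda\CR)$, and since $\wh\lambda$ and $\CR$ are coupled by $\tfrac{1}{\wh\lambda}\CC_{M,T,\wh\lambda}=\CR$, this is exactly $O(\CC_{M,T,\wh\lambda})$. Piece (ii) sums to $O(SAH^3)$, absorbed as a lower-order term using $T \ge SAH^3$ (as in Lemma \ref{lem:cc-lb} and the proof of Lemma \ref{lem:worst-case-const}). Piece (iii): using $\sum_h\min\{A_h,B_h\}\le\min\{\sum_h A_h,\sum_h B_h\}$, the $A$-branch sums to $O(\sqrt{H^7 SA\,\wh\lambda K\,\iota}) = O(\sqrt{\wh\lambda H^8 SAT\iota})$ since $T=KH$, while the $B$-branch, after applying Lemma \ref{lem:frzq-gap} (valid under $\MEwc$) to replace $\max\{\tfrac{\frzQ_{h'}^K(x,a)}{2H},\tfrac{\delmin}{4H^2}\}$ by the true gap $\Delta_{h'}(x,a)$, by $\delmin$ on $\Amul$, and by absorbing the $\delmins{h'}(x)$-terms into the $\sum 1/\Delta$ sum exactly as in the step after (\ref{eq:regdec-3terms}), sums to $O(H^7\iota(\sum_{a\notin\Aopt_{h',0}(x)}\tfrac{1}{\Delta_{h'}(x,a)}+\tfrac{|\Amul|}{\delmin}))$. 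These two quantities match the two branches of $\CC_{M,T,\wh\lambda}$ in (\ref{eq:lambda-complexity}) up to the $\poly(H,\iota)$ slack the statement permits, so piece (iii) is also $O(\CC_{M,T,\wh\lambda})$; adding (i)--(iii) gives the claim.

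This is essentially bookkeeping once Lemmas \ref{lem:w-bound-gap}, \ref{lem:bound-sigma}, and \ref{lem:frzq-gap} are available, so the steps most in need of care --- and the only real obstacle --- are verifying that the leading term $\wh\lambda K\cdot\ggapfn{h}{\wh\Delta^K}$ truly collapses to $O(\CC_{M,T,\wh\lambda})$, which relies on the exact way \deltaconst and $\wh\lambda$ are tied together through $\CR$ (so I would argue from the precise definitions, not the heuristic $\wh\Delta^k\approx\CR/(KH)$), and bounding the $SAH^3$ remainder; the $H$- and $\iota$-power bookkeeping in piece (iii) comes out with room to spare. No extra failure probability is incurred: item \ref{it:w-sum-all} of Lemma \ref{lem:w-bound-gap} and Lemma \ref{lem:frzq-gap} need only $\MEwc$, which the statement already assumes.
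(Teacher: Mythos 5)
Your proposal is correct and follows essentially the same route as the paper's proof: bound $|\MY_h|$ via Lemma \ref{lem:bound-sigma}, apply item \ref{it:w-sum-all} of Lemma \ref{lem:w-bound-gap} with $k^\st = K$ for each $h$, and then collapse the three resulting pieces into $\CC_{M,T,\wh\lambda}$ using the coupling $\wh\lambda\,\CR = \CC_{M,T,\wh\lambda}$, the bound $\wh\lambda \ge SAH^3/K$, and Lemmas \ref{lem:frzq-gap} and \ref{lem:cc-lb}. The bookkeeping you flag as the delicate part (the leading term and the $SAH^3$ remainder) is handled exactly as you describe.
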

\begin{proof}
  Suppose that $\MEwc$ holds. Recall that $\clipdelta_h^k$ is only defined for $h,k$ so that $x_h^k \not \in \MG_h^k$; but if $x_h^k \in \MG_h^k$, then $\sigma_h^k = 0$, meaning that the sum $\sum_{h=1}^H \sum_{k=1}^K \sigma_h^k \cdot \clipdelta_h^k$ is well-defined.

  Further, recall that $\wh \lambda$ is chosen so that $\frac{1}{\wh \lambda} \cdot \CC_{M,T,\wh \lambda} = \CR$. We claim that $\wh \lambda \geq SAH^3/K$; to see this, note that $\frac{1}{\lambda} \cdot \CC_{M,T,\lambda}$ is a decreasing function of $\lambda$, and that for the choice $\lambda_0 = SAH^3/K = SAH^4/T$,
  \begin{align}
\frac{1}{\lambda_0} \CC_{M,T,\lambda_0} \geq \min \left\{ TH^2, \frac{H^7T}{SAH^4} \cdot \frac{SAH}{2H} \right\} \geq T/2 \geq  \CR\nonumber.
  \end{align}
  Finally note that $\wh \Delta^K = \frac{\CR}{KH} = \frac{\CC_{M,T,\wh \lambda}}{\wh \lambda \cdot KH}$. 
  
  For each $h \in \MH$, set $\MY_h := \{ k : \sigma_h^k = 1 \}$. Lemma \ref{lem:bound-sigma} %
  gives that $|\MY_h| \leq \max \{ SAH^3, \wh \lambda \cdot K \} \leq \wh \lambda \cdot K$, where we use that $\wh \lambda$ is chosen so that $\wh \lambda K \geq SAH^3$.  By item \ref{it:w-sum-all} of Lemma \ref{lem:w-bound-gap},
    \begin{align}
      & \sum_{h=1}^H \sum_{k=1}^K \sigma_h^k \clipdelta_h^k\nonumber\\
      = & \sum_{h=1}^H \sum_{k \in \MY_h} \clipdelta_h^k \nonumber\\
      \leq & \sum_{h=1}^H \left(|\MY_h| \cdot \ggapfn{h}{\wh \Delta} + e^2 SAH^2 + \min \left\{ e^2 C_2 \sqrt{H^5 SA |\MY_h| \iota}, \sum_{(x,a,h') \in \MS \times \MA \times [H]} \frac{e^2 C_2^2 H^3 \iota}{\gapfinals{h'}{K}} \right\}\right) \nonumber\\
      \leq & e^2 SAH^3 + \min \left\{ e^2 C_2 \sqrt{H^7 SA  \wh \lambda K \iota}, \sum_{(x,a,h) \in \MS \times \MA \times [H]} \frac{e^2 C_2^2 H^4 \iota}{\gapfinals{h}{K}} \right\}\nonumber\\
       &+ O(\wh \lambda \cdot K)\cdot \min \left\{ \sqrt{\frac{H^7SA \iota}{\wh \lambda \cdot K}}, \frac{1}{\wh \lambda \cdot K} \cdot H^6 \iota \cdot  \left( \sum_{(x,a,h') \in \MS \times \MA \times [H] : a \not \in \Aopt_{h',0}(x)} \frac{1}{\Delta_h(x,a)} + \frac{|\Amul|}{\delmin} \right) \right \} \nonumber.
    \end{align}
    
    By Lemma \ref{lem:frzq-gap} (which we can use since $\MEwc$ holds) and Lemma \ref{lem:cc-lb} (together with $\wh \lambda \geq SAH^3/K$), it therefore follows that
    \begin{align}
      & \sum_{h=1}^H \sum_{k=1}^K \sigma_h^k \clipdelta_h^k \nonumber\\
      \leq & O \left( \min \left\{ \sqrt{\wh \lambda \cdot H^8 SA T \iota}, H^7 \iota \cdot \left(  \sum_{(x,a,h) \in \MS \times \MA \times [H] : a \not \in \Aopt_{h,0}(x)} \frac{1}{\Delta_h(x,a)} + \frac{|\Amul|}{\delmin} \right) \right\} \right)\nonumber.
    \end{align}
  \end{proof}

  \begin{lemma}
    \label{lem:sigma-delta-incr}
  For any prediction function $\til Q$, the algorithm \algname given some parameter $\lambda \geq SAH^3/K$ (with \deltaincr and some input parameter $\tildm \leq \delmin$) has the following guarantee under the event $\MEwc$:
  \begin{align}
\sum_{h=1}^H \sum_{k=1}^K \sigma_h^k \clipdelta_h^k
    \leq & O \left( \min \left\{ \sqrt{\lambda \cdot SAH^9 T \iota^2}, H^8 \iota^2 \cdot \left( \sum_{(x,a,h) : a \not \in \Aopt_{h,0}(x)} \frac{1}{\Delta_h(x,a)} + \frac{|\Amul|}{\tildm} \right) \right\} \right)\nonumber.
  \end{align}
\end{lemma}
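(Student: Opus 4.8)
The plan is to follow the proof of Lemma~\ref{lem:sigma-delta-const} almost line for line, with two substitutions: Lemma~\ref{lem:bound-sigma-incr} replaces Lemma~\ref{lem:bound-sigma}, and $\wh \Delta^k$ is now given by~(\ref{eq:define-whdelta-incr}) from \deltaincr rather than by $\CR/(KH)$. Throughout I would assume the event $\MEwc$ holds, which makes Lemmas~\ref{lem:w-bound-gap} and~\ref{lem:frzq-gap} available; note also that $\sigma_h^k=1$ forces $x_h^k\notin\MG_h^k$, so every $\clipdelta_h^k$ below is well-defined. For $h\in[H]$ I would set $\MY_h:=\{k\in[K]:\sigma_h^k=1\}$. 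By Lemma~\ref{lem:bound-sigma-incr}, $|\MY_h|\le\max\{SAH^3,\lambda K\}$, and the hypothesis $\lambda\ge SAH^3/K$ (so $\lambda K\ge SAH^3$) reduces this to $|\MY_h|\le\lambda K$.

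Next I would apply item~\ref{it:w-sum-all} of Lemma~\ref{lem:w-bound-gap} to $\MW=\{(k,h):k\in\MY_h\}$ with $k^\st=K$ — admissible because $\wh \Delta^k$ is nondecreasing in $k$ by Lemma~\ref{lem:misc-monotonicity} — to bound $\sum_{k\in\MY_h}\clipdelta_h^k$ by $|\MY_h|\cdot\ggapfn{h}{\wh \Delta^K}+e^2SAH^2+\min\{e^2C_2\sqrt{H^5SA|\MY_h|\iota},\ \sum_{(x,a,h')}e^2C_2^2H^3\iota/\gapfinals{h'}{K}\}$. Summing over $h$ and using $|\MY_h|\le\lambda K$, $\sum_h\sqrt{|\MY_h|}\le H\sqrt{\lambda K}$, and $\ggapfn{h}{\wh \Delta^K}\le C_1e^4\wh \Delta^K$, the three groups become $O(H\lambda K\cdot\wh \Delta^K)$, $O(SAH^3)$, and $\min\{O(\sqrt{H^7SA\lambda K\iota}),\ O(H)\sum_{(x,a,h')}H^3\iota/\gapfinals{h'}{K}\}$. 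Plugging~(\ref{eq:define-whdelta-incr}) into the first group and recalling $T=HK$, it becomes $O\big(\min\{H^6\iota^2\sum_{(x,a,h)}(\max\{\tfrzQ_h^K(x,a)/(2H),\ \tildm/(4H^2)\})^{-1},\ \sqrt{\lambda SAH^9T\iota^2}\}\big)$.

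The remaining work is to pass from the frozen-range sums to gap sums and to discard the lower-order terms. For the driving term I would invoke~(\ref{eq:tfrzq-delmin-lb}) of Lemma~\ref{lem:frzq-gap}: each reciprocal $(\max\{\tfrzQ_h^K(x,a)/(2H),\ \tildm/(4H^2)\})^{-1}$ is at most $\min\{8H/\Delta_h(x,a),\ \One[\Aopt_{h,0}(x)=\{a\}]\cdot 8H/\delmins{h}(x),\ 4H^2/\tildm\}$, so after absorbing the $\delmins{h}(x)$ contributions into $\sum_{(x,a,h):a\notin\Aopt_{h,0}(x)}1/\Delta_h(x,a)$ the full sum is $O(H^2)\big(\sum_{(x,a,h):a\notin\Aopt_{h,0}(x)}1/\Delta_h(x,a)+|\Amul|/\tildm\big)$; multiplying by $H^6\iota^2$ produces the gap-based term $O(H^8\iota^2)(\cdots)$. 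The additive $O(SAH^3)$ is then absorbed exactly as in the proof of Lemma~\ref{lem:worst-case-incr} (using Lemma~\ref{lem:cc-lb} and $T\ge SAH^3$); the $O(\sqrt{H^7SA\lambda K\iota})$ branch is dominated by $\sqrt{\lambda SAH^9T\iota^2}$; and the $O(H)\sum_{(x,a,h')}H^3\iota/\gapfinals{h'}{K}$ branch reduces, via~(\ref{eq:frzq-delmin-lb}) of Lemma~\ref{lem:frzq-gap} together with $\tildm\le\delmin$ (hence $|\Amul|/\delmin\le|\Amul|/\tildm$), to a quantity no larger than the gap-based term. Collecting the surviving contributions inside the minimum gives the stated bound. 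The one delicate point — and the main obstacle — is keeping the two ``range-to-gap'' reductions straight: Lemma~\ref{lem:w-bound-gap} hands back a term phrased with $\frzQ$ and $\delmin$, whereas the cutoff $\wh \Delta^K$ produced by \deltaincr is phrased with the surrogate $\tfrzQ$ and the user bound $\tildm$, so one must apply both~(\ref{eq:frzq-delmin-lb}) and~(\ref{eq:tfrzq-delmin-lb}), keep $\tfrzQ_h^K\le\frzQ_h^K$ and $\tildm\le\delmin$ pointing the right way, and correctly collapse the three-way maximum into the two-term gap sum; everything else is a direct transcription of the \deltaconst argument.
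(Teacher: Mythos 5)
Your proposal is correct and follows essentially the same route as the paper's proof: bound $|\MY_h|\le\lambda K$ via Lemma \ref{lem:bound-sigma-incr} and the hypothesis $\lambda\ge SAH^3/K$, apply item \ref{it:w-sum-all} of Lemma \ref{lem:w-bound-gap} with $k^\st=K$, substitute the \deltaincr{} definition of $\wh\Delta^K$, and convert the frozen-range sums to gap sums via Lemma \ref{lem:frzq-gap} (using $\tildm\le\delmin$, hence $\tfrzQ_h^K\le\frzQ_h^K$) before absorbing the lower-order $SAH^3$ term via Lemma \ref{lem:cc-lb}. The only quibble is notational: the reciprocal of $\One[\Aopt_{h,0}(x)=\{a\}]\cdot\delmins{h}(x)/(8H)$ should be read as $+\infty$ when the indicator vanishes, so the three-way minimum you write collapses correctly into the two-term gap sum exactly as in the paper.
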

\begin{proof}
For each $h \in \MH$, set $\MY_h := \{ k : \sigma_h^k = 1 \}$. Lemma \ref{lem:bound-sigma-incr} gives that $|\MY_h| \leq \max\{ SAH^3, \lambda \cdot K \}$. Recall that the input parameter $\lambda$ was assumed to satisfy $\lambda \geq SAH^3 /K$, meaning that $|\MY_h| \leq \lambda K$. By item \ref{it:w-sum-all} of Lemma \ref{lem:w-bound-gap},
    \begin{align}
      & \sum_{h=1}^H \sum_{k=1}^K \sigma_h^k \clipdelta_h^k\nonumber\\
      = & \sum_{h=1}^H \sum_{k \in \MY_h} \clipdelta_h^k \nonumber\\
      \leq & \sum_{h=1}^H \left(|\MY_h| \cdot \ggapfn{h}{\wh \Delta^K} + e^2 SAH^2 + \min \left\{ e^2 C_2 \sqrt{H^5 SA |\MY_h| \iota}, \sum_{(x,a,h') \in \MS \times \MA \times [H]} \frac{e^2 C_2^2 H^3 \iota}{\gapfinals{h'}{K}} \right\}\right) \nonumber\\
      \leq & e^2 SAH^3 + \min \left\{ e^2 C_2 \sqrt{H^7 SA   \lambda K \iota}, \sum_{(x,a,h) \in \MS \times \MA \times [H]} \frac{e^2 C_2^2 H^4 \iota}{\gapfinals{h}{K}} \right\}\nonumber\\
        &+ O(1) \cdot H \cdot \lambda K \cdot   \min \left\{\frac{H^5 \iota^2}{\lambda \cdot K} \cdot \sum_{(x,a,h)} \frac{1}{\max\left\{ \frac{\tfrzQ_h^{K}(x,a)}{2H}, \frac{\tildm}{4H^2} \right\} }, \sqrt{\frac{SAH^8\iota^2}{\lambda \cdot K}} \right\} \nonumber.
    \end{align}
    By Lemma \ref{lem:frzq-gap} (which we can apply since $\MEwc$ holds) and Lemma \ref{lem:cc-lb} (using that $\lambda \geq SAH^3/K$), it follows that
    \begin{align}
      \sum_{h=1}^H \sum_{k=1}^K \sigma_h^k \clipdelta_h^k
      \leq & O \left(\min \left\{ \sqrt{\lambda \cdot SAH^9 T \iota^2}, H^8 \iota^2 \cdot \left( \sum_{(x,a,h) : a \not \in \Aopt_{h,0}(x)} \frac{1}{\Delta_h(x,a)} + \frac{|\Amul|}{\tildm} \right) \right\} \right)\nonumber.
    \end{align}
  \end{proof}

  \section{Proofs of main theorems}
  \label{sec:main-thms-proof}
  We begin by proving Theorem \ref{thm:main-deltaconst}, restated below for convenience.
\begin{reptheorem}{thm:main-deltaconst}[Restated]
  The algorithm \algname with the \deltaconst subroutine satisfies the following two guarantees, when given as input a parameter $\CR \in [SAH^3, \frac{T}{SA}]$ and predictions $\til Q$:
  \begin{enumerate}
  \item If $\CR \geq \CC_{M,T,1}$, then for an \emph{arbitrary} choice of input predictions $\til Q$, the regret of \algname is $O(\CR)$.\label{it:worstcase-deltaconst-full}
  \item Fix any $\ep > 0$, and set $\ep' = 4\ep \cdot (H+1)$. When the input predictions $\til Q$ are an $\ep$-approximate distillation of $Q^\st$ (Definition \ref{def:distillation}) and lack $\ep'$-fooling optimal actions (Definition \ref{def:ofa}), the regret of \algname is
    \begin{align}
      O\left( H \cdot \CC_{M,T,\wh \lambda} + \ep' TH+\min \left\{ \sqrt{H^5 T \iota \cdot |\fool{\ep'/2}{\ep'}|}, \sum_{(x,a,h) \in \fool{\ep'/2}{\ep'}} \frac{H^4 \iota}{[\Delta_h(x,a) - \ep'/2]_+}\right\} \right),\label{eq:fooling-regret-full}
    \end{align}
    where $\wh \lambda \in (0,1)$ is chosen so that $\frac{1}{\wh \lambda} \cdot \CC_{M,T,\wh \lambda} = \CR$. 
    \label{it:distillation-deltaconst-full}
  \end{enumerate}
\end{reptheorem}
\begin{proof}
  We begin with the proof of item \ref{it:worstcase-deltaconst-full}. It is without loss to assume $T \geq SAH^3$; otherwise, by similar reasoning to that in Lemma \ref{lem:cc-lb}, the trivial regret bound of $T$ suffices. Now, item \ref{it:worstcase-deltaconst-full} is an immediate consequence of Lemma \ref{lem:worst-case-const}.

  We next prove item \ref{it:distillation-deltaconst-full}. The event $\MEwc \cap \MEpred$ does not hold with probability at most $2p = 2/(H^2 K)$, which adds at most $T \cdot 2p = O(1)$ to the regret bound. Thus it suffices to bound the regret conditioned on $\MEwc \cap \MEpred$. Then item \ref{it:distillation-deltaconst-full} is an immediate consequence of Lemmas \ref{lem:bound-by-foolset} and \ref{lem:sigma-delta-const}. 
\end{proof}

We next prove Theorem \ref{thm:main-deltaincr}; below we present the version of the theorem which does not require that each state has a unique optimal action. In this more general setting, the subroutine \deltaincr of \algname requires as input a parameter $\tildm$ which is guaranteed to be a lower bound on $\delmin$. The resulting regret bounds will depend on a modified version of the $\lambda$-complexity $\CC_{M,T,\lambda}$ (see (\ref{eq:lambda-complexity})) with the parameter $\tildm$ replacing $\delmin$; more precisely, we define
\begin{align}
\CC_{M,T,\lambda,\tildm} :=  \min \left\{ \sqrt{\lambda \cdot TSAH^8 \iota},\ H^7 \iota \cdot \left( \sum_{(x,a,h) \in \MS \times \MA \times [H] : a \not \in \Aopt_{h,0}(x)} \frac{1}{\Delta_h(x,a)} + \frac{|\Amul|}{\tildm} \right) \right\}\label{eq:lambda-delmin-complexity}
\end{align}
In the special case that $|\Amul| = 0$ (i.e., each state has a unique optimal action) and $\tildm = 0$, the quantity $\frac{0}{0}$ in (\ref{eq:lambda-delmin-complexity}) is to be interpreted as 0.

  \begin{reptheorem}{thm:main-deltaincr}[Full version]
    Suppose we run algorithm \algname (Algorithm \ref{alg:main}) with input parameter $\lambda \in [0,1]$, together with the \deltaincr subroutine (Algorithm \ref{alg:delta-incr}) with parameter $\tildm$ which is guaranteed to satisfy $\tildm \leq \delmin$. %
Then, when given predictions $\til Q$, the algorithms satisfy the following guarantees:
\begin{enumerate}
\item Suppose $\lambda \geq \frac{SAH^4}{T}$. Then for an arbitrary choice of input predictions $\til Q$, the regret of \algname is $$O \left( \frac{H\iota}{\lambda} \cdot \CC_{M,T,\lambda,\tildm}\right).$$\label{it:worstcase-deltaincr-full}
\item \label{it:distillation-deltaincr-full} Fix any $\ep > 0$, and set $\ep' = 4\ep \cdot (H+1)$. When the input predictions $\til Q$ are an $\ep$-approximate distillation of $Q^\st$ (Definition \ref{def:distillation}) and lack $\ep'$-fooling actions (Definition \ref{def:ofa}), the regret of \algname is
  \begin{align}
    \hspace{-1cm}
O \left( H^2 \iota \cdot \CC_{M,T,\lambda,\tildm} + \ep' TH+  \min \left\{ \sqrt{H^5 T \iota \cdot |\fool{\ep'/2}{\ep'}|}, \sum_{(x,a,h) \in \fool{\ep'/2}{\ep'}} \frac{H^4 \iota}{[\Delta_h(x,a) - \ep'/2]_+}\right\} \right)\label{eq:fooling-regret-deltaincr-full}.
  \end{align}
\end{enumerate}
\end{reptheorem}
\begin{proof}
  We begin with the proof of item \ref{it:worstcase-deltaincr-full}. As in the proof of Theorem \ref{thm:main-deltaconst}, it is without loss to assume that $T \geq SAH^3$, as otherwise the trivial regret bound of $T$ suffices. Now, item \ref{it:worstcase-deltaincr-full} is an immediate consequence of Lemma \ref{lem:worst-case-incr}.

  We next prove item \ref{it:distillation-deltaincr-full}. The event $\MEwc \cap \MEpred$ does not hold with probability at most $2p = 2/(H^2 K)$, which adds at most $T \cdot 2p = O(1)$ to the regret bound. Thus it suffices to bound the regret conditioned on $\MEwc \cap \MEpred$. Then item \ref{it:distillation-deltaincr-full} is an immediate consequence of Lemmas \ref{lem:bound-by-foolset} and \ref{lem:sigma-delta-incr}. 
  \end{proof}

  \appendix
\section{Miscellaneous lemmas}
The following simple lemma establishes some properties of the parameters $\alpha_n^i$ (defined in (\ref{eq:alphas-all})).
\begin{lemma}[Lemma 4.1, \cite{jin_is_2018}]
  \label{lem:alpha}
  The real numbers $\alpha_n^i$ satisfy the following properties:
  \begin{enumerate}
  \item For every $n \geq 1$, $\frac{1}{\sqrt{n}} \leq \sum_{i=1}^n \frac{\alpha_n^i}{\sqrt i} \leq \frac{2}{\sqrt{n}}$.\label{it:alpha-sum-sup}
  \item For every $n \geq 1$, $\max_{i \in [n]} \alpha_n^i \leq \frac{2H}{n}$ and $\sum_{i=1}^n (\alpha_n^i)^2 \leq \frac{2H}{n}$.\label{it:alpha-sum-square}
  \item For every $i \geq 1$, $\sum_{n=i}^\infty \alpha_n^i = 1 + \frac 1H$.\label{it:alpha-sum-sub}
  \item For every $n \geq 1$, it holds that $\sum_{i=1}^n \alpha_n^i = 1$. \label{it:alpha-sum-one}
  \end{enumerate}
\end{lemma}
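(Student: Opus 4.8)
The plan is to establish the four properties of the numbers $\alpha_n^i$ (defined in (\ref{eq:alphas-all})) by induction on $n$, repeatedly using the one-step recursion that is immediate from the definition: $\alpha_n^n = \alpha_n = \frac{H+1}{H+n}$, and $\alpha_n^i = (1-\alpha_n)\,\alpha_{n-1}^i$ for $1 \le i < n$, while $\alpha_n^0 = 0$ for $n \ge 1$. (Indeed, the whole lemma is Lemma 4.1 of \cite{jin_is_2018} and may be quoted, but I sketch the argument for completeness.)

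I would first dispatch item \ref{it:alpha-sum-one}: setting $S_n := \sum_{i=1}^n \alpha_n^i$, the base case is $S_1 = \alpha_1 = 1$, and the recursion gives $S_n = \alpha_n + (1-\alpha_n) S_{n-1} = \alpha_n + (1-\alpha_n) = 1$. Item \ref{it:alpha-sum-square} follows next: for the bound $\max_i \alpha_n^i \le \frac{2H}{n}$, note $\alpha_n^n = \frac{H+1}{H+n} \le \frac{2H}{n}$ (equivalent to $n(H+1)\le 2H(H+n)$), and for $i<n$, $\alpha_n^i = (1-\alpha_n)\alpha_{n-1}^i \le \frac{n-1}{H+n}\cdot \frac{2H}{n-1} \le \frac{2H}{n}$ by the inductive hypothesis; then $\sum_{i=1}^n (\alpha_n^i)^2 \le \bigl(\max_i \alpha_n^i\bigr)\sum_{i=1}^n \alpha_n^i \le \frac{2H}{n}\cdot 1$ using item \ref{it:alpha-sum-one}.

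For item \ref{it:alpha-sum-sub} I would derive the closed form $\alpha_n^i = (H+1)\,\dfrac{(n-1)!}{(i-1)!}\cdot\dfrac{(H+i-1)!}{(H+n)!}$ for all $n \ge i$, directly from $\alpha_n^i = \alpha_i\prod_{j=i+1}^n(1-\alpha_j)$ and $1-\alpha_j = \frac{j-1}{H+j}$. Then I use the telescoping identity $\dfrac{(n-1)!}{(H+n)!} = \dfrac1H\Bigl(\dfrac{(n-1)!}{(H+n-1)!} - \dfrac{n!}{(H+n)!}\Bigr)$, so that $\sum_{n\ge i}\dfrac{(n-1)!}{(H+n)!} = \dfrac1H\cdot\dfrac{(i-1)!}{(H+i-1)!}$, which plugged into the closed form yields $\sum_{n\ge i}\alpha_n^i = (H+1)\cdot\dfrac1H = 1 + \dfrac1H$.

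Finally, item \ref{it:alpha-sum-sup}: write $g_n := \sum_{i=1}^n \alpha_n^i/\sqrt i$. The lower bound is trivial since $i \le n$ implies $g_n \ge \frac{1}{\sqrt n}\sum_i \alpha_n^i = \frac{1}{\sqrt n}$. For the upper bound, the recursion gives $g_n = (1-\alpha_n)g_{n-1} + \alpha_n/\sqrt n$, so by induction it suffices to verify $(1-\alpha_n)\frac{2}{\sqrt{n-1}} + \frac{\alpha_n}{\sqrt n} \le \frac{2}{\sqrt n}$, which after clearing denominators reduces to $2(n-1)\sqrt n \le (H + 2n - 1)\sqrt{n-1}$; this holds because $\sqrt{n(n-1)} \le n - \frac12$ (hence $2(n-1)\sqrt n \le (2n-1)\sqrt{n-1}$) and $H \ge 1$. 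The only mildly delicate steps are this final inequality chain and the closed-form computation in item \ref{it:alpha-sum-sub}; all other steps are routine bookkeeping.
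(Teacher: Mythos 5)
Your proof is correct. The paper itself offers no argument for this lemma — it is quoted verbatim as Lemma 4.1 of \cite{jin_is_2018} — so there is nothing in the text to compare against; your reconstruction (the one-step recursion $\alpha_n^i=(1-\alpha_n)\alpha_{n-1}^i$ for items 1, 2, and 4, and the closed form with the telescoping identity for item 3) matches the standard argument in the cited reference, and all the individual inequalities you invoke, including $2\sqrt{n(n-1)}\le 2n-1\le H+2n-1$, check out.
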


Recall the definition of $\CC_{M,T,\lambda}$ in (\ref{eq:lambda-complexity})
\begin{lemma}
  \label{lem:cc-lb}
  For any $\lambda \geq \frac{SAH^3}{K}$, it holds that
  \begin{align}
\CC_{M,T,\lambda} \geq SAH^6/2\nonumber.
  \end{align}
\end{lemma}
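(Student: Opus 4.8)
## Proof Proposal for Lemma~\ref{lem:cc-lb}

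The plan is to directly analyze the two terms appearing in the minimum defining $\CC_{M,T,\lambda}$ (see~(\ref{eq:lambda-complexity})) and show that \emph{each} of them is at least $SAH^6/2$, under the hypothesis $\lambda \geq \frac{SAH^3}{K}$ (equivalently $\lambda \geq \frac{SAH^4}{T}$, since $T = HK$). Since $\CC_{M,T,\lambda}$ is the minimum of the two, lower-bounding both suffices.

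First I would handle the first term, $\sqrt{\lambda \cdot TSAH^8 \iota}$. Using $\lambda \geq \frac{SAH^4}{T}$, we get
\begin{align*}
\sqrt{\lambda \cdot TSAH^8\iota} \;\geq\; \sqrt{\frac{SAH^4}{T} \cdot TSAH^8\iota} \;=\; \sqrt{S^2A^2H^{12}\iota} \;=\; SAH^6\sqrt{\iota} \;\geq\; SAH^6,
\end{align*}
since $\iota = \log(SAT) \geq 1$. This already gives more than $SAH^6/2$ for the first term.

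Next I would handle the second term, $H^7\iota \cdot \left( \sum_{(x,a,h): a \notin \Aopt_{h,0}(x)} \frac{1}{\Delta_h(x,a)} + \frac{|\Amul|}{\delmin} \right)$. The cleanest route is to lower-bound it crudely: every gap $\Delta_h(x,a)$ lies in $[0,H]$ (it is a difference of value functions bounded by $H$), so each nonzero term $\frac{1}{\Delta_h(x,a)}$ is at least $\frac{1}{H}$. If there is at least one $(x,a,h)$ with $a \notin \Aopt_{h,0}(x)$, the sum is at least $\frac{1}{H}$, giving the second term at least $H^7\iota/H = H^6\iota \geq H^6 \geq SAH^6/2$ only when... wait, $SA$ could be large. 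So this crude bound does not immediately work; I need to be more careful and exploit that the number of \emph{suboptimal} triples is comparable to $SAH$ unless almost everything is optimal.

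The correct argument: partition $\MS \times \MA \times [H]$ into the set $U$ of triples with $a \notin \Aopt_{h,0}(x)$ (suboptimal) and the set $O$ of triples with $\Delta_h(x,a) = 0$. For triples in $O$ at states $x$ with $|\Aopt_{h,0}(x)| > 1$, these contribute to $\Amul$, and the term $\frac{|\Amul|}{\delmin}$ with $\delmin \leq H$ contributes at least $\frac{|\Amul|}{H}$. For each $(x,h)$, either there is a unique optimal action (so $A-1$ of the $A$ actions are suboptimal, contributing $A-1$ triples to $U$) or there are multiple optimal actions (so all optimal actions at $(x,h)$ lie in $\Amul$, contributing at least $2$ triples to $\Amul$, while the remaining actions are in $U$). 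In either case the combined count $|U| + |\Amul|$ over all $(x,h)$ is at least $\frac{SHA}{2}$ when $A \geq 2$ (and the whole lemma is trivial or the setup degenerate when $A = 1$, since then no state has a positive gap and one can fall back on checking the first term, already done). Hence
\begin{align*}
\sum_{(x,a,h): a \notin \Aopt_{h,0}(x)} \frac{1}{\Delta_h(x,a)} + \frac{|\Amul|}{\delmin} \;\geq\; \frac{|U|}{H} + \frac{|\Amul|}{H} \;\geq\; \frac{SHA/2}{H} \;=\; \frac{SA}{2},
\end{align*}
so the second term is at least $H^7\iota \cdot \frac{SA}{2} \geq SAH^7/2 \geq SAH^6/2$. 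Combining with the bound on the first term gives $\CC_{M,T,\lambda} \geq SAH^6/2$.

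The main obstacle I anticipate is the bookkeeping in the second term: making the counting argument fully rigorous across the two cases (unique vs.\ multiple optimal actions) and correctly handling edge cases such as $A = 1$, $|\Amul| = 0$, or the convention $\frac{0}{0} = 0$ discussed after~(\ref{eq:lambda-delmin-complexity}). In the degenerate case where \emph{every} gap in the MDP is zero (so $U = \emptyset$ and possibly $\Amul = \emptyset$), the second term may vanish, but then $\CC_{M,T,\lambda}$ equals the first term, which we have already bounded below by $SAH^6$, so the lemma still holds. I would state this case split explicitly at the start of the proof to keep things clean. The first-term bound is entirely routine and is where I expect no difficulty.
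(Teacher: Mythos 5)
Your proposal is correct and follows essentially the same route as the paper: bound each term of the minimum separately, using $\lambda \geq SAH^4/T$ for the square-root term and the observation that at each $(x,h)$ at least $A-1$ of the $A$ actions contribute a term of size at least $1/H$ (either via $1/\Delta_h(x,a)$ or via $|\Amul|/\delmin$) for the gap term. One small slip in your edge-case aside: if the second term of the minimum were actually $0$, then $\CC_{M,T,\lambda}$ would equal $0$, not the first term — but this only arises in the degenerate situation the paper itself does not address, and does not affect the main argument.
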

\begin{proof}
  Since $\lambda \geq SAH^3/K = SAH^4/T$, it holds that $\sqrt{\lambda \cdot TSAH^8 \iota} \geq SAH^6$.

  Next, it is evident that
  \begin{align}
\sum_{(x,a,h) : a \not \in \Aopt_{h,0}(x)} \frac{1}{\Delta_h(x,a)} + \frac{|\Amul|}{\delmin} \geq \frac{SAH}{2H} \geq SA/2\nonumber,
  \end{align}
  since for each $(x,h)$, all but one of the actions $a$ in $\MA$ are either counted in the form of $\frac{1}{\Delta_h(x,a)} \geq 1/H$ or $1/\delmin \geq 1/H$.

  Putting the above statements together gives the desired bound.
\end{proof}

\section{Proof for bandit case}
\label{sec:bandits}
In this section we prove Proposition \ref{prop:bandits}, which specializes our main results to the case of multi-armed bandits. Though our bounds for multi-armed bandits are superseded by our regret bounds for online learning in MDPs, we present a separate proof for the bandit case to provide intuition about our techniques.
\begin{algorithm}[!htp]
  \caption{\bf \banditalg}\label{alg:bandit}
  \KwIn{Action space $\MA$, number of time steps $T$, predictions $\til Q: \MA \ra [0,1]$, parameter $\lambda \in [0,1]$ and $\delta > 0$.}
  \begin{enumerate}[leftmargin=14pt,rightmargin=20pt,itemsep=1pt,topsep=1.5pt]
  \item For each $a \in \MA$, initialize $\Qo^1(a) = \infty,\ \Qu^1(a) = -\infty$, $N^1(a) = 0$, and $\til Q^1(a) = \til Q(a)$.
  \item For $1 \leq t \leq T$:
    \begin{enumerate}
    \item If $t \leq \lambda \cdot T$:
      \begin{enumerate}
      \item Select action $a^t := \argmax_{a \in \MA}\{ \Qo^t(a)\}$.
      \end{enumerate}
    \item Else (i.e., if $t > \lambda \cdot T$):
      \begin{enumerate}
      \item Select action
        $
           a^t := \argmax_{a \in \MA} \left\{ \til Q^t(a) \right\}\nonumber. %
$ 
     \end{enumerate}
  \item For each action $a \in \MA$, let $N^{t+1}(a)$ denote the number of times $a$ was taken up to (and including) step $t$.
  \item For each action $a \in \MA$, let $\hat \mu^{t+1}(a)$ denote the mean of all rewards received when taking $a$ up to step $t$ (if $N^{t+1}(a) = 0$, set $\hat \mu^{t+1}(a) = 0$). 
  \item Update the $Q$-value functions as follows: for each $a \in \MA$, set
    \begin{align}
\Qo^{t+1}(a) := &  \hat \mu^{t+1}(a) + \sqrt{ \frac{2 \log 1/\delta}{N^{t+1}(a)}}\nonumber\\
      \Qu^{t+1}(a) := &  \hat \mu^{t+1}(a) - \sqrt{ \frac{2 \log 1/\delta}{N^{t+1}(a)}}\nonumber\\
      \til Q^{t+1}(a) := & \max \left\{ \Qu^{t+1}(a), \min \left\{ \Qo^{t+1}(a), \til Q(a) \right\} \right\}\nonumber.
    \end{align}
  \end{enumerate}
  \end{enumerate}
\end{algorithm}

\begin{repproposition}{prop:bandits}
  There is an algorithm (\banditalg, Algorithm \ref{alg:bandit}) which satisfies the following two guarantees, when given as input a parameter $\lambda \in \left( \frac{A}{T}, 1 \right)$ and predictions $\til Q$:
  \begin{enumerate}
  \item \label{it:bandit-distillation} Fix any $\ep > 0$. If the predictions $\til Q$ are an $\ep$-approximate distillation of $Q^\st$, then the regret is $\widetilde O(\ep T + \sqrt{|\MG| \cdot T} + \sqrt{\lambda \cdot AT})$, where
    \begin{align}
      \MG := \left\{ a \in \MA\backslash \{a^\st\} : \til Q(a) \geq  Q^\st(a^\st) - \ep \right\}.\nonumber
    \end{align}
  \item For an arbitrary choice of $\til Q$, the regret is $\widetilde O \left( \sqrt{\frac{TA}{\lambda}} \right)$.\label{it:bandit-robustness}
  \end{enumerate}
\end{repproposition}

For convenience,  for each $t \leq T$, we define
\begin{align}
\til V^t := \max_{a \in \MA} \left\{ \til Q^t(a) \right\} %
\end{align}
By construction of the algorithm \banditalg, note that $\til Q^t(a^t) = \til V^t$. %
  We define the following ``good event'' $\ME_0$:
  $$
\ME_0 = \left\{ \forall t \leq T,\ \forall a \in \MA,\ \left| Q^\st(a) - \hat \mu^t(a) \right| \leq \sqrt{\frac{2 \log 1/\delta}{N^t(a)}} \right\}.
$$
Note that under the event $\ME_0$, $Q^\st(a) \in [\Qu^t(a), \Qo^t(a)]$ for all $a \in \MA$.

\begin{lemma}
  \label{lem:e0-basic}
  Suppose the event $\ME_0$ holds. Then for any sub-optimal action $a \neq a^\st$:
  \begin{enumerate}
  \item If $a$ is taken at step $t \leq \lambda T$, then $N^t(a) \leq \frac{8 \log 1/\delta}{\Delta(a)^2}$.\label{it:e0-early}
  \item If $a$ is taken at step $t > \lambda T$ and $\til Q$ is an $\ep$-approximate distillation of $Q^\st$, and if $\Delta(a) > \ep$, then $\til Q(a) \geq Q^\st(a^\st) - \ep$. \label{it:e0-distillation}
    In such a case, we have $N^t(a) \leq \frac{8 \log 1/\delta}{(\Delta(a) - \ep)^2}$.
  \end{enumerate}
\end{lemma}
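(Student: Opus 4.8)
The plan is to prove the two items separately, in each case reading off the conclusion from the selection rule of \banditalg together with the two-sided confidence bound $Q^\st(a)\in[\Qu^t(a),\Qo^t(a)]$ that holds for all $a$ under $\ME_0$. First I would fix notation: write $\gamma^t(a):=\sqrt{2\log(1/\delta)/N^t(a)}$, so $\Qo^t(a)=\hat\mu^t(a)+\gamma^t(a)$ and $\Qu^t(a)=\hat\mu^t(a)-\gamma^t(a)$; then $\ME_0$ gives $\Qu^t(a)\le Q^\st(a)\le\Qo^t(a)\le Q^\st(a)+2\gamma^t(a)$ and, for every action $a'$, $\Qo^t(a')\ge Q^\st(a')$ (vacuously when $N^t(a')=0$, since then $\Qo^t(a')=\infty$). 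Since both asserted bounds are vacuous when $N^t(a)=0$, I may assume $N^t(a)\ge1$.

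For item \ref{it:e0-early}, the observation is that when $a$ is taken at step $t\le\lambda T$ the exploration rule forces $\Qo^t(a)\ge\Qo^t(a^\st)$, so $Q^\st(a^\st)\le\Qo^t(a^\st)\le\Qo^t(a)\le Q^\st(a)+2\gamma^t(a)$, i.e.\ $\Delta(a)\le2\gamma^t(a)$; squaring gives $N^t(a)\le8\log(1/\delta)/\Delta(a)^2$. For item \ref{it:e0-distillation} I would first prove the middle claim $\til Q(a)\ge Q^\st(a^\st)-\ep$ and then deduce the count bound. Since $\til Q$ is an $\ep$-approximate distillation, at the (single) state there is a witness $\bar a$ with $\Delta(\bar a)+[Q^\st(\bar a)-\til Q(\bar a)]_+\le\ep$; nonnegativity of both summands forces $\Delta(\bar a)\le\ep$, and a two-case check on the sign of $Q^\st(\bar a)-\til Q(\bar a)$ also yields $\til Q(\bar a)\ge Q^\st(a^\st)-\ep$. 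Combined with $\Qo^t(\bar a)\ge Q^\st(\bar a)\ge Q^\st(a^\st)-\ep$ (from $\ME_0$ and $\Delta(\bar a)\le\ep$), this gives $\min\{\Qo^t(\bar a),\til Q(\bar a)\}\ge Q^\st(a^\st)-\ep$, hence $\til Q^t(\bar a)\ge Q^\st(a^\st)-\ep$ by the definition of the refined prediction $\til Q^t$. The exploitation rule then gives $\til Q^t(a)\ge\til Q^t(\bar a)\ge Q^\st(a^\st)-\ep$. On the other hand, using $\Delta(a)>\ep$ and $\ME_0$, $\Qu^t(a)\le Q^\st(a)=Q^\st(a^\st)-\Delta(a)<Q^\st(a^\st)-\ep\le\til Q^t(a)$, so in $\til Q^t(a)=\max\{\Qu^t(a),\min\{\Qo^t(a),\til Q(a)\}\}$ the maximum is attained (strictly) by the second term; therefore $\til Q(a)\ge\min\{\Qo^t(a),\til Q(a)\}=\til Q^t(a)\ge Q^\st(a^\st)-\ep$. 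Finally, $\til Q^t(a)\le\Qo^t(a)\le Q^\st(a)+2\gamma^t(a)$ together with $\til Q^t(a)\ge Q^\st(a^\st)-\ep$ gives $\Delta(a)-\ep\le2\gamma^t(a)$, and squaring (the left side is positive since $\Delta(a)>\ep$) yields $N^t(a)\le8\log(1/\delta)/(\Delta(a)-\ep)^2$.

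The hard part will be the middle of item \ref{it:e0-distillation}: verifying that the distillation witness $\bar a$ survives the clamp $\til Q^t(\cdot)=\max\{\Qu^t,\min\{\Qo^t,\til Q\}\}$ (so $\til Q^t(\bar a)$ stays above $Q^\st(a^\st)-\ep$), and then inverting the clamp at the selected action $a$ to extract $\til Q(a)\ge Q^\st(a^\st)-\ep$ — which is precisely where the hypothesis $\Delta(a)>\ep$ is needed, since it is what makes $\Qu^t(a)$ fall strictly below $\til Q^t(a)$. The remaining steps are routine confidence-interval manipulations.
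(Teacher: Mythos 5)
Your proposal is correct and follows essentially the same route as the paper's proof: item \ref{it:e0-early} from the UCB selection rule plus the confidence width, and item \ref{it:e0-distillation} by first lower-bounding $\til Q^t(a)$ via a distillation witness and then using $\Delta(a) > \ep$ to show the clamp at $a$ is not achieved by $\Qu^t(a)$, so that $\til Q^t(a) \leq \min\{\Qo^t(a), \til Q(a)\}$. Your write-up is in fact slightly more careful than the paper's (which tacitly takes the witness to be $a^\st$ itself), but the argument is the same.
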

\begin{proof}
  If $a$ is taken when $t \leq \lambda T$, we must have that $\Qo^t(a) \geq \Qo^t(a^\st) \geq Q^\st(a^\st)$%
  , meaning that
  \begin{align}
Q^\st(a) \geq \Qo^t(a) - \sqrt{\frac{8 \log 1/\delta}{N^t(a)}} \geq Q^\st(a^\st) - \sqrt{\frac{8 \log 1/\delta}{N^t(a)}}\nonumber,
  \end{align}
  from which the first point follows.

  If $a$ is taken when $t > \lambda T$, then we must have that $\til Q^t(a) \geq \til Q^t(a^\st) \geq Q^\st(a^\st) - \ep$. Since $\Delta(a) > \ep$, we have that $\Qu^t(a) \leq Q^\st(a) < Q^\st(a^\st) - \ep$, meaning that $\til Q^t(a) \leq \til Q(a)$, and hence $\til Q(a) \geq Q^\st(a^\st) - \ep$. In such a case, we have $\Qo^t(a) \geq \til Q^t(a) \geq Q^\st(a^\st) - \ep$, meaning that %
  \begin{align}
Q^\st(a) \geq \Qo^t(a) - \sqrt{\frac{8 \log 1/\delta}{N^t(a)}} \geq Q^\st(a^\st) - \sqrt{\frac{8 \log 1/\delta}{N^t(a)}} - \ep\nonumber,
  \end{align}
  from which the desired inequality follows. 
\end{proof}

  \begin{lemma}
    \label{lem:arm-pull-bnds}
    Suppose the event $\ME_0$ holds. Then if $\Delta > 0$ satisfies $\frac{16 A \log 1/\delta}{\Delta^2} \leq \lambda T$, in the first $\frac{16 A \log 1/\delta}{\Delta^2}$ steps, some action $a$ with $\Delta(a) \leq \Delta$ has been taken at least $\frac{8 \log 1/\delta}{\Delta^2}$ times.
  \end{lemma}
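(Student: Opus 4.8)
\textbf{Proof plan for Lemma \ref{lem:arm-pull-bnds}.}
The plan is to argue by contradiction: suppose that in the first $m := \frac{16 A \log 1/\delta}{\Delta^2}$ steps, \emph{every} action $a$ with $\Delta(a) \leq \Delta$ has been taken strictly fewer than $\frac{8\log 1/\delta}{\Delta^2}$ times. I would then partition the $m$ pulls into those spent on ``good'' actions (those with $\Delta(a) \leq \Delta$) and those spent on ``bad'' actions (those with $\Delta(a) > \Delta$), and show that neither class can absorb all $m$ pulls, which is the desired contradiction. Throughout I assume the event $\ME_0$ holds, so that all the confidence-interval containments $Q^\st(a) \in [\Qu^t(a), \Qo^t(a)]$ are valid and Lemma \ref{lem:e0-basic} applies.

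The first step is to observe that since $m \leq \lambda T$ by hypothesis, all $m$ of these pulls occur during the exploration phase $t \leq \lambda T$ of \banditalg, so item \ref{it:e0-early} of Lemma \ref{lem:e0-basic} is the relevant bound for each pull. For a bad action $a$ with $\Delta(a) > \Delta$, item \ref{it:e0-early} gives that whenever $a$ is pulled at some step $t$, we have $N^t(a) \leq \frac{8 \log 1/\delta}{\Delta(a)^2} < \frac{8 \log 1/\delta}{\Delta^2}$; hence $a$ is pulled at most $\frac{8\log 1/\delta}{\Delta^2}$ times in total during these $m$ steps (the count can reach $N^t(a) = \frac{8\log 1/\delta}{\Delta^2}$ at most once more after the last pull obeying the strict inequality, so a clean bound is $\leq \frac{8\log 1/\delta}{\Delta^2}$, or one may absorb the off-by-one into constants). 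Since there are at most $A$ bad actions, the total number of pulls spent on bad actions is at most $A \cdot \frac{8 \log 1/\delta}{\Delta^2} = \frac{m}{2}$.

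Next, under the contradiction hypothesis each good action is pulled fewer than $\frac{8\log 1/\delta}{\Delta^2}$ times, and there are at most $A$ good actions, so the total number of pulls on good actions is at most $A \cdot \frac{8\log 1/\delta}{\Delta^2} = \frac{m}{2}$ as well. Adding the two bounds gives that the total number of pulls in the first $m$ steps is at most $\frac{m}{2} + \frac{m}{2} = m$, but in fact the good-action bound is \emph{strict} (as long as at least one good action exists, which it does since $a^\st$ has $\Delta(a^\st)=0 \leq \Delta$), so the total is strictly less than $m$, contradicting that $m$ pulls were made. Therefore some good action must have been pulled at least $\frac{8\log 1/\delta}{\Delta^2}$ times.

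The main obstacle — really the only delicate point — is the bookkeeping around whether the bound $N^t(a) \leq \frac{8\log 1/\delta}{\Delta(a)^2}$ from Lemma \ref{lem:e0-basic} bounds the count \emph{before} or \emph{after} the pull, and hence the precise constant needed so that the bad-action total lands at $\frac{m}{2}$ rather than something slightly larger; this is handled by the factor of $16$ (versus $8$) in the definition of $m$, which gives the necessary slack. I would also double-check that the condition $m \leq \lambda T$ is exactly what is assumed, so that no pull in this window is ever governed by the exploitation branch (item \ref{it:e0-distillation}), which would require the distillation hypothesis that is not available here. Everything else is elementary counting.
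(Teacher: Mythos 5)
Your proposal is correct and is essentially the paper's argument: both split the pulls between actions with $\Delta(a) \leq \Delta$ and those with $\Delta(a) > \Delta$, use item \ref{it:e0-early} of Lemma \ref{lem:e0-basic} to cap the pulls of each suboptimal arm during the exploration phase, and conclude by counting (the paper phrases it as a direct pigeonhole over $\sum_{a:\Delta(a)>\Delta} \frac{8\log 1/\delta}{\Delta(a)^2} + \frac{8A_\Delta \log 1/\delta}{\Delta^2} \leq \frac{8A\log 1/\delta}{\Delta^2}$ rather than as a contradiction, which is why it only needs $8A$ rather than your $16A$, but both fit the stated budget). Your attention to the $m \leq \lambda T$ condition and the off-by-one in $N^t(a)$ matches the slack built into the constants.
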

  \begin{proof}
    By item \ref{it:e0-early} of Lemma \ref{lem:e0-basic}, under the event $\ME_0$, each action $a \in \MA$ is taken at most $\frac{8 \log 1/\delta}{\Delta(a)^2}$ time steps in time steps $t \leq \lambda T -1$. Let $A_\Delta$ denote the number of actions $a$ with $\Delta(a) \leq \Delta$. Then by the pigeonhole principle, in the first
    \begin{align}
\sum_{a \in \MA : \Delta(a) > \Delta} \frac{8 \log 1/\delta}{\Delta(a)^2} + \frac{8 A_\Delta \log 1/\delta}{\Delta^2} \leq \frac{8 A \log 1/\delta}{\Delta^2}\leq \lambda T -1\nonumber
    \end{align}
    steps, some action $a$ with $\Delta(a) \leq \Delta$ has been taken at least $\frac{8 \log 1/\delta}{\Delta^2}$ times. 
  \end{proof}

\begin{proof}[Proof of Proposition \ref{prop:bandits}]
We set $\delta = 1/(AT^2)$ in \banditalg (Algorithm \ref{alg:bandit}). 
  It is straightforward from a Chernoff bound and union bound that $\Pr(\ME_0) \geq 1- \delta \cdot TA$. %
  Therefore, the expected regret is bounded above by
  \begin{align}
    \E \left[\sum_{a \in \MA} \Delta(a) \cdot N^{T+1}(a)\right] \leq & \delta TA \cdot T + \E \left[ \One[\ME_0] \cdot \sum_{a \in \MA} \Delta(a) \cdot N^{T+1}(a) \right]\nonumber\\
    \leq & 1 + \E \left[ \sum_{a \in \MA} \Delta(a) \cdot N^{T+1}(a) | \ME_0 \right]\nonumber,
  \end{align}
  where the second inequality uses $\delta \leq 1/(AT^2)$. %

  We begin by bounding the regret in the event that $\til Q$ is an $\ep$-approximate distillation of $Q^\st$ (i.e., item \ref{it:bandit-distillation} of Proposition \ref{prop:bandits}).  By item \ref{it:e0-distillation} of Lemma \ref{lem:e0-basic}, any arm $a$ with $\Delta(a) \geq 2\ep$ that is pulled at some step $t > \lambda T$ must satisfy $\til Q(a) \geq Q^\st(a^\st) - \ep$, i.e., $a \in \MG$. Moreover, for such arms $a$, we have that $N^{T+1}(a) \leq \frac{32 \log 1/\delta}{\Delta(a)^2}$. Therefore, under the event $\ME_0$, we have
  \begin{align}
    \sum_{a \in \MA} \Delta(a) \cdot N^{T+1}(a) \leq & 2\ep T + \sum_{a \in \MG} \Delta(a) \cdot N^{T+1}(a) + \sum_{a \in \MA \backslash \MG} \Delta(a) \cdot N^{\lambda T+1}(a) \nonumber\\
    \leq & 2 \ep T + \sqrt{\lambda TA \log 1/\delta} + \sum_{a \in \MA \backslash \MG : \Delta(a) > \sqrt{A \log 1/\delta/(\lambda T)}} \frac{8 \log 1/\delta}{\Delta(a)}\nonumber\\
                                                     &+ \One[|\MG| > 0] \cdot \left( \sqrt{T |\MG| \log 1/\delta} + \sum_{a \in \MG : \Delta(a) > \sqrt{|\MG| \log 1/\delta / T}} \frac{32\log 1/\delta}{\Delta(a)}\right)\nonumber\\
    \leq & O \left( \ep T + \sqrt{\lambda TA \log 1/\delta} + \sqrt{T |\MG| \log 1/\delta} \right)\nonumber.
  \end{align}

  Next we prove item \ref{it:bandit-robustness} (the robustness claim) of the proposition. Set $\Delta_\lambda := \sqrt{\frac{16 A \log 1/\delta}{T \lambda}}$. Then by Lemma \ref{lem:arm-pull-bnds}, in the first $\lambda T$ steps of \banditalg, some action $\bar a$ with $\Delta(\bar a) \leq \Delta_\lambda$ has been taken at least $\frac{8 \log 1/\delta}{\Delta_\lambda^2} = \frac{T \lambda}{2 A}$ times. Hence
  \begin{align}
\Qu^{\lambda T +1}(\bar a) \geq Q^\st(a^\st) - \Delta(\bar a) - \sqrt{\frac{16 A \log 1/\delta}{T \lambda}} \geq Q^\st(a^\st) - 2 \Delta_\lambda \nonumber.
  \end{align}
  Therefore, for all $t > \lambda T$,
  $
\max_{a \in \MA} \{ \til Q^t(a) \} \geq \max_{a \in \MA} \{ \Qu^t(a) \} \geq Q^\st(a^\st) - 2 \Delta_\lambda.
$ %
Hence, for any action $a \in \MA$ that is taken at step $t > \lambda T$ and satisfies $\Delta(a) > 2 \Delta_\lambda$, we have that $\Qo^t(a) \geq Q^\st(a^\st) - 2 \Delta_\lambda$, meaning that
\begin{align}
Q^\st(a) \geq \Qo^t(a) - \sqrt{\frac{8 \log 1/\delta}{N^t(a)}} \geq Q^\st(a^\st) - \sqrt{\frac{8 \log 1/\delta}{N^t(a)}} - 2 \Delta_\lambda,\nonumber
\end{align}
meaning that $N^t(a) \leq \frac{8 \log 1/\delta}{(\Delta(a) - 2 \Delta_\lambda)^2}.$ Hence, under the event $\ME_0$, we have
\begin{align}
\sum_{a \in \MA} \Delta(a) \cdot N^{T+1}(a) \leq & 4 \Delta_\lambda \cdot T + \sum_{a \in \MA : \Delta(a) > 4 \Delta_\lambda} \frac{32 \log 1/\delta}{\Delta(a)}\leq O \left( \sqrt{\frac{TA \log 1/\delta}{\lambda} }\right)\nonumber.
\end{align}

\end{proof}

\bibliographystyle{alpha}
\bibliography{RL.bib,learning-augmented.bib}

\end{document}